\renewcommand*{\backrefalt}[4]{%
    \ifcase #1 \footnotesize{(Not cited.)}%
    \or        \footnotesize{(Cited on page~#2.)}%
    \else      \footnotesize{(Cited on pages~#2.)}%
    \fi}
\newcommand{\neutralize}[1]{\expandafter\let\csname c@#1\endcsname\count@}
\declaretheorem[name=Theorem]{theorem}
\declaretheorem[name=Lemma]{lemma}
\declaretheorem[name=Assumption]{assumption}
\declaretheorem[name=Condition]{condition}
\declaretheorem[name=Proposition]{proposition}
  \renewenvironment{proof}[1][Proof]%
  {%
   \par\noindent{\bfseries\upshape {#1.}\ }%
  }%
  {\qed\newline}
\theoremstyle{definition}  %
\newtheorem{corollary}{Corollary}
\theoremstyle{plain}
\newtheorem{definition}{Definition}
\xpatchcmd{\proof}{\itshape}{\normalfont\proofnameformat}{}{}
\newcommand{\proofnameformat}{\bfseries}
\newcommand{\pref}[1]{\cref{#1}}
\newcommand{\pfref}[1]{Proof of \pref{#1}}
\Crefname{assumption}{Assumption}{Assumptions}
    \let\Cref\crtCref
    \let\cref\crtcref
\DeclareDocumentCommand{\XDeclarePairedDelimiter}{mm}
 {
  \__egreg_delimiter_clear_keys: %
  \keys_set:nn { egreg/delimiters } { #2 }
  \use:x %
   {
    \exp_not:n {\NewDocumentCommand{#1}{sO{}m} }
     {
      \exp_not:n { \IfBooleanTF{##1} }
       {
        \exp_not:N \egreg_paired_delimiter_expand:nnnn
         { \exp_not:V \l_egreg_delimiter_left_tl }
         { \exp_not:V \l_egreg_delimiter_right_tl }
         { \exp_not:n { ##3 } }
         { \exp_not:V \l_egreg_delimiter_subscript_tl }
       }
       {
        \exp_not:N \egreg_paired_delimiter_fixed:nnnnn 
         { \exp_not:n { ##2 } }
         { \exp_not:V \l_egreg_delimiter_left_tl }
         { \exp_not:V \l_egreg_delimiter_right_tl }
         { \exp_not:n { ##3 } }
         { \exp_not:V \l_egreg_delimiter_subscript_tl }
       }
     }
   }
 }
\XDeclarePairedDelimiter{\supnorm}{
  left=\lVert,
  right=\rVert,
  subscript=\infty
  }
\newcommand{\cD}{\mathcal{D}}
\newcommand{\cG}{\mathcal{G}}
\newcommand{\cM}{\mathcal{M}}
\newcommand{\cN}{\mathcal{N}}
\newcommand{\cR}{\mathcal{R}}
\newcommand{\cS}{\mathcal{S}}
\newcommand{\bN}{\mathbb{N}}
\newcommand{\bP}{\mathbb{P}}
\newcommand{\bR}{\mathbb{R}}
\newcommand{\En}{\mathbb{E}}
\newcommand{\wh}[1]{\widehat{#1}}
\newcommand{\ind}[1]{^{{(#1)}}}
\newcommand{\Var}{\textnormal{Var}}
\DeclareMathOperator*{\argmin}{arg\,min}
\newcommand{\ldef}{\vcentcolon=}
\newcommand{\supp}{\mathrm{supp}}
\DeclarePairedDelimiter{\brk}{[}{]}
\DeclarePairedDelimiter{\prn}{(}{)}
\DeclarePairedDelimiter{\set}{\{}{\}}
\def\medskip{\vskip 10 pt}
\def\bigskip{\vskip 15 pt}
\def\texitem#1{\par\vspace{5pt}
\noindent\hangindent 20pt
\hbox to 20pt {\hss #1 ~}\ignorespaces}
\newcommand{\revindent}[1][1]{\hspace{#1in}&\hspace{-#1in}}
\newcommand{\Cov}{\mathrm{Cov}}
\newcommand{\lp}{\mathrm{loop}}
\newcommand{\out}{\mathrm{out}}
\newcommand{\stepsize}{\eta}
\newcommand{\real}{\ensuremath{\mathbb{R}}}
\newcommand{\thetastar}{\ensuremath{\theta^*}}
\newcommand{\abss}[1]{\left| #1 \right |}
\newcommand{\law}{\ensuremath{\mathcal{L}}}
\newcommand{\mydefn}{\ensuremath{:=}}
\newcommand{\matsnorm}[2]{|\!|\!| #1 | \! | \!|_{{#2}}}
\newcommand{\vecnorm}[2]{\| #1\|_{#2}}
\newcommand{\opnorm}[1]{\ensuremath{\matsnorm{#1}{\tiny{\mbox{op}}}}}
\newcommand{\inprod}[2]{\ensuremath{\langle #1 , \, #2 \rangle}}
\newcommand{\kull}[2]{\ensuremath{D_{\text{KL}}(#1\; \| \; #2)}}
\newcommand{\Exs}{\ensuremath{{\mathbb{E}}}}
\newcommand{\Prob}{\ensuremath{{\mathbb{P}}}}
\DeclareMathOperator{\diag}{diag}
\DeclareMathOperator{\var}{var}
\DeclareMathOperator{\cov}{cov}
\newtheoremstyle{named}{}{}{\itshape}{}{\bfseries}{.}{.5em}{\thmnote{#3's }#1}
\theoremstyle{named}
\long\def\@makecaption#1#2{
        \vskip 0.8ex
        \setbox\@tempboxa\hbox{\small {\bf #1:} #2}
        \parindent 1.5em  %
        \dimen0=\hsize
        \advance\dimen0 by -3em
        \ifdim \wd\@tempboxa >\dimen0
                \hbox to \hsize{
                        \parindent 0em
                        \hfil
                        \parbox{\dimen0}{\def\baselinestretch{0.96}\small
                                {\bf #1.} #2
                                }
                        \hfil}
        \else \hbox to \hsize{\hfil \box\@tempboxa \hfil}
        \fi
        }
\long\def\comment#1{}
\definecolor{battleshipgrey}{rgb}{0.52, 0.52, 0.51}
\definecolor{darkgray}{rgb}{0.66, 0.66, 0.66}
\definecolor{darkgreen}{rgb}{0.0, 0.2, 0.13}
\definecolor{darkspringgreen}{rgb}{0.09, 0.45, 0.27}
\definecolor{dukeblue}{rgb}{0.0, 0.0, 0.61}
\definecolor{olivedrab7}{rgb}{0.24, 0.2, 0.12}
\definecolor{darkblue}{rgb}{0.0, 0.0, 0.55}
\definecolor{darkscarlet}{rgb}{0.34, 0.01, 0.1}
\definecolor{candyapplered}{rgb}{1.0, 0.03, 0.0}
\definecolor{ao(english)}{rgb}{0.0, 0.5, 0.0}
\definecolor{applegreen}{rgb}{0.55, 0.71, 0.0}
\newcommand{\Vhatmc}{\Vhat_{\MC}}
\newcommand{\Vhattd}{\Vhat_{\TD}}
\newcommand{\myV}{u}
\newcommand{\burnin}{{B_0}}
\newcommand{\numaux}{\numobs_0}
\newcommand{\totalvarition}{d_{\mathrm{TV}}}
\newcommand{\maxkldiv}[2]{D_{\infty} \left( #1 || #2 \right)}
\newcommand{\DelhatG}{\widehat{\Delta}_\subG}
\newcommand{\Dset}{\cD}
\newcommand{\traj}{\tau}
\newcommand{\SSpace}{\cS}
\newcommand{\RSpace}{\cR}
\newcommand{\State}{S}
\newcommand{\state}{s}
\newcommand{\Reward}{R}
\newcommand{\reward}{r}
\newcommand{\dis}{\gamma}
\newcommand{\optVal}{\Vstar}
\newcommand{\Vstar}{V^\star}
\newcommand{\trans}{P}
\newcommand{\Mparams}{\cM}
\newcommand{\termState}{\emptyset}
\newcommand{\initDist}{\mu}
\newcommand{\Vhat}{\wh{V}}
\newcommand{\subG}{\cG}
\newcommandx{\estVal}[1][1=\subG]{\Vhat_{#1}}
\newcommand{\transMat}{P}
\newcommand{\idx}[1]{_{#1}}
\newcommand{\indic}[1]{\bm{1}_{#1}}
\newcommand{\Phat}{\wh{P}}
\newcommand{\transhat}{\Phat}
\newcommandx{\Vout}[1][1=\subG]{\Vstar_{#1,\mathrm{out}}}
\newcommand{\rhat}{\wh{r}}
\newcommandx{\estValout}{\Vhat_{\subG,\mathrm{out}}}
\newcommand{\VstarG}{\Vstar\idx{\subG}}
\newcommand{\rG}{r\idx{\subG}}
\newcommand{\transG}{\trans\idx{\subG}}
\newcommand{\transGhat}{\transhat\idx{\subG}}
\newcommand{\transGtilde}{\widetilde{\trans}_{\subG}}
\newcommand{\transGcheck}{\widecheck{\trans}_{\subG}}
\newcommand{\avec}{a_0}
\newcommand{\hpop}{f}
\newcommand{\Hstoch}{F}
\newcommand{\usedim}{d}
\newcommand{\restarts}{K_{\mathrm{restart}}}
\newcommand{\rGhat}{\wh{r}\idx{\subG}}
\newcommand{\tarstt}{s_0}
\renewcommand{\star}{*}
\newcommand{\occunorm}[1]{\vecnorm{#1}{\occupmsr (\subG)}}
\newcommand{\SigStar}{\Sigma^*}
\newcommand{\valuefunc}{v}
\newcommand{\discount}{\dis}
\newcommand{\transition}{\trans}
\newcommand{\numobs}{n}
\newcommand{\compo}{k}
\newcommand{\neighborhood}{\mathfrak{N}}
\newcommand{\ROOTSA}{\texttt{ROOT-SA} }
\newcommand{\MDPclass}{\mathfrak{C}}
\renewcommand{\preceq}{\preccurlyeq}
\renewcommand{\succeq}{\succcurlyeq}
\newcommand{\occupmsr}{\nu}
\newcommand{\TD}{\mathrm{TD}}
\newcommand{\MC}{\mathrm{MC}}
\newcommand{\Event}{\mathscr{E}}
\newcommand{\conVar}{\Lambda}
\newcommand{\batchsize}{m}
\newcommand{\ltwospace}{\mathbb{L}^2}
\newcommand{\myframe}[1]{
\begin{mdframed}[backgroundcolor=black!1, roundcorner=5pt]
  #1
\end{mdframed}
}
\newcommand{\myassumption}[4]{
  \setlist[enumerate,1]{leftmargin=#4}
\myframe{
    \begin{enumerate}[label={ \bf{{{(#1)}}}}]
        \item \label{#2} {#3}
    \end{enumerate}
        }
}
\newcommand{\effhorizon}{h}
\begin{document}

\begin{center}
{\bf{\LARGE{To bootstrap or to rollout? An optimal and adaptive interpolation}}}

\vspace*{.2in}
{\large{
 \begin{tabular}{cc}
  Wenlong Mou$^{ \diamond, \star}$ & Jian Qian$^{ \dagger, \star}$ 
 \end{tabular}

}

\vspace*{.2in}

 \begin{tabular}{c}
 Department of Statistical Sciences, University of Toronto$^{\diamond}$
 \end{tabular}
 
  \begin{tabular}{c}
  Department of EECS, Massachusetts Institute of Technology$^{ \dagger}$
  \end{tabular}

}

\end{center}
\begin{abstract}
    Bootstrapping and rollout are two fundamental principles for value function estimation in reinforcement learning (RL). We introduce a novel class of Bellman operators, called subgraph Bellman operators, that interpolate between bootstrapping and rollout methods. Our estimator, derived by solving the fixed point of the empirical subgraph Bellman operator, combines the strengths of the bootstrapping-based temporal difference (TD) estimator and the rollout-based Monte Carlo (MC) methods. Specifically, the error upper bound of our estimator approaches the optimal variance achieved by TD, with an additional term depending on the exit probability of a selected subset of the state space. At the same time, the estimator exhibits the finite-sample adaptivity of MC, with sample complexity depending only on the occupancy measure of this subset. We complement the upper bound with an information-theoretic lower bound, showing that the additional term is unavoidable given a reasonable sample size. Together, these results establish subgraph Bellman estimators as an optimal and adaptive framework for reconciling TD and MC methods in policy evaluation.
  \let\thefootnote\relax\footnote{$^{\star}$ WM and JQ contributed equally to this work.}
\end{abstract}

\section{Introduction}
The key feature that distinguishes reinforcement learning (RL) from statistical learning and bandit problems is the dynamic nature of the decision-making environment. To bridge RL with traditional machine learning algorithms, the value functions play a central role. In Markov decision processes, value functions can be defined in two equivalent ways -- as the unique fixed point of a Bellman operator; or as the expected reward-to-go function from a certain state. For practical data-driven RL applications, the value functions need to be estimated from empirical data. The two definitions motivate two fundamental ideas that prevail in value learning literature.
\begin{itemize}
    \item \emph{Bootstrapping methods} use the fixed-point representation, and estimate the value function by finding the fixed-point of the (projected) empirical Bellman operator.
    \item \emph{Rollout methods} start from the reward-to-go representation, and estimate the value function by directly averaging (and optimizing) the rollout reward in observed trajectories.
\end{itemize}
The two ideas represent different perspectives towards the dynamic nature of reinforcement learning: bootstrap methods focus on the recursive aspects of the value function, and aim to capture structures in the one-step transition dynamics; by way of contrast, rollout methods ignore Markovian structures, but exploit the information from entire trajectories of the processes.

Bootstrapping and rollout methods are so central in the design of RL algorithms, that most practical algorithms are developed based on one of the approaches -- or a combination of both. When applied to policy evaluation problems, bootstrapping corresponds to \emph{Temporal difference} (TD) methods~\citep{sutton1988learning,bradtke1996linear}, and rollout corresponds to \emph{Monte Carlo} (MC) methods~\citep{curtiss1954theoretical,barto1993monte}. Both methods allow flexible choice of function approximations~\citep{bradtke1996linear,boyan1994generalization,maei2009convergent}, making them applicable to large or infinite state-action spaces. When only off-policy observational data are available, a long line of research extends TD and MC methods to facilitate efficient estimation~\citep{precup2000eligibility,precup2001off,sutton2008convergent,sutton2016emphatic}. Moving from policy evaluation to policy optimization, several algorithms have been developed as instantiations of the two basic ideas. Building upon the bootstrapping idea, $Q$-learning~\citep{watkins1992q} and SARSA~\citep{Rummery1994line} solve empirical versions of the non-linear Bellman optimality equations. On the other hand, REINFORCE algorithm and policy gradient methods~\citep{williams1992simple,sutton1999policy} directly optimize Monte Carlo rollout rewards. Additionally, temporal difference methods are often used in conjunction with actor-critic methods~\citep{konda2000actor} to optimize the estimated value functions. A large class of RL algorithms have been developed by interpolating between bootstrapping and rollout methods, including multi-step algorithms~\citep{watkins1989learning,van2016effective,de2018multi}, as well as their weighted average versions, the $\lambda$-return methods~\citep{sutton1988learning,peng1994incremental}. These algorithms lay the foundation of modern applied RL research; see the monograph by~\cite{Sutton1998} for a comprehensive survey.

Despite the encouraging progress, when faced with a zoo of RL algorithms, a practioner could easily be confused about the optimal choices. A folklore belief is that rollout methods have small or no biases and require less data and computation; whereas bootstrapping methods could significantly reduce the variance, albeit being biased and more expensive both statistically and computationally. The number of bootstrapping steps and the $\lambda$ parameter can be used to address the trade-off, yet the choices can be ad hoc. This practical consideration gives rise to a theoretical question:
\begin{quote}
    \emph{What is the statistically optimal way of interpolating between bootstrapping and rollout?}
\end{quote}
To address this question, in this paper, we focus on a basic on-policy evaluation problem in a tabular Markov reward process (MRP), with trajectory observations. Under such a setup, both TD and MC achieves worst-case $\sqrt{n}$-rate,
while the  performances can be drastically different.
\begin{itemize}
    \item For every state in the MRP, the TD method achieves a smaller asymptotic variance when the sample size goes to infinity. Indeed, it is optimal in the sense of local asymptotic minimax~\citep{lecam1953some,hajek1972local}.
    \item As long as a state is likely to be visited (i.e., the sample size exceeds its inverse occupancy measure), MC outputs an estimator with $\sqrt{\numobs}$-rate for the value of this particular state. This requires much smaller sample size for frequently-visited states, and therefore making MC adaptive to the occupancy measure of different states.
\end{itemize}
The gap between TD's and MC's variances can be large in practice. In particular,~\cite{cheikhi2023statistical} uses a trajectory pooling coefficient to illustrate the benefit of TD: unlike MC, it is capable of combining information of trajectories from different sources, a property important in online advertising applications. However, TD may break down when the sample size is smaller than the cardinality of the state space. The general question of optimal interpolation therefore turns to a concrete one --- to construct a finite-sample best-of-both-worlds estimator that achieves near-optimal variance while adapting to the occupancy measure of each states.

The main contribution of this paper is an affirmative answer to this question. Our main contribution is summarized as follows.
\begin{itemize}
    \item We construct a new class of Bellman operators, \emph{subgraph Bellman operators}, indexed by a subset of the state space, which interpolates between TD and MC method. For a given subset, the fixed-point to the subgraph Bellman operator can be solved through sample averaging or stochastic approximation, with an easily implementable algorithm.
    \item We establish non-asymptotic error guarantees for the subgraph Bellman algorithm. The guarantee only requires a sample size larger than the inverse occupancy measure for states in the relevant subset. In addition to the optimal TD variance, the error depends on a term governed by the local structure of the Markov transition kernel. In many cases, the additional term is much smaller than the variance of MC.
    \item We provide data-driven approaches to choose the indexing subset in subgraph Bellman algorithm. A key ingredient in such a methodology is a variance estimator for our point estimator, with sharp non-asymptotic guarantees.
    \item We also establish an instance-dependent minimax lower bound to complement the upper bound for subgraph Bellman algorithm. The lower bound shows that the structure-dependent term is information-theoretically unavoidable, unless the sample size grows polynomially with the state space.
\end{itemize}

The rest of this paper is organized as follows: we first introduce notations and discuss additional related work. The formal problem setup and observation models are described in Section~\ref{sec:problem-setup}. Section~\ref{sec:preliminary-estimators} reviews the TD and MC estimators by demonstrating their theoretical guarantees as well as failure modes. The main estimator and its theoretical analysis are presented in Section~\ref{sec:main-upper}, while the complementing lower bounds are in Section~\ref{sec:lower-bounds}. Section~\ref{sec:proofs} collects the proofs, and we conclude the paper with a discussion of future directions in Section~\ref{sec:discussion}.

\paragraph{Notations:} 
For any transition matrix $P$ over the augmented state space $\SSpace\cup\termState$ and any two sets $\SSpace', \SSpace'' \subset \SSpace$, we denote the partial transition of $P$ from $\SSpace'$ to $\SSpace''$ by $P\idx{\SSpace',\SSpace''}(s''|s') = P(s''|s') \indic{s'\in \SSpace', s''\in\SSpace''}$. We also adopt the shorthand $P\idx{\SSpace'} = P\idx{\SSpace',\SSpace'}$
For any reward function $r:\SSpace\to \RSpace$, we denote the partial reward function $r:\SSpace'\to\RSpace$ of $r$ on any subset $\SSpace'$ by $r_{\SSpace'} = r|_{\SSpace'}$. Given a finite collection $\mathcal{A}$, we denote the empirical expectation operator
\begin{align*}
    \widehat{\Exs}_{\mathcal{A}} [f (X)] \mydefn \frac{1}{\abss{\mathcal{A}}} \sum_{x \in \mathcal{A}} f (x).
\end{align*}
We also use $X \sim \mathcal{A}$ as a shorthand notation for the uniform distribution $X \sim \mathrm{Unif} (\mathcal{A})$. Given a probability distribution $\mu$ on a countable index set $\mathcal{I}$, we use $\vecnorm{\cdot}{\mu}$ as a shorthand notation for $\vecnorm{\cdot}{\ell^2 (\mu; \mathcal{I})}$, i.e., $\inprod{x}{y}_{\mu} \mydefn \sum_{i \in \mathcal{I}} \mu_i x_i y_i$ and $\vecnorm{x}{\mu} \mydefn \sqrt{\inprod{x}{x}_{\mu}}$. Furthermore, for any linear operator $A$ that maps $\ell^2 (\mu; \mathcal{I})$ to itself, we use $\matsnorm{A}{\mu}$ to denote its operator norm under $\ell^2 (\mu; \mathcal{I})$, i.e.,
\begin{align*}
    \matsnorm{A}{\mu} \mydefn \sup_{x \neq 0} \frac{\vecnorm{A x}{\mu}}{\vecnorm{x}{\mu}}.
\end{align*}
For a pair $A, B$ of real symmetric matrices, we use $A \preceq B$ (and equivalently $B \succeq A$) to denote the domination relation in positive semidefinite ordering, i.e., $A \preceq B$ indicates the fact that $B - A$ is a positive semidefinite matrix.

Given a pair $(\mu, \nu)$ of probability distributions, we use $\totalvarition (\mu,\nu)$ to denotes their total variation distance, and use $\kull{\mu}{\nu}$ to denote their Kullback--Leibler divergence. We also use $\maxkldiv{\mu}{\nu} \mydefn \max \abss{\log \frac{d \mu}{d \nu}}$ to denote the max divergence. For an integer $n > 0$, we use $\mu^{\otimes n}$ to denote the $n$-fold product of $\mu$.

\subsection{Additional related work}
Let us summarize some existing literature and their connection to the current paper.

\paragraph{Theoretical analysis of TD and fixed-point estimation:} Our work involves computing the fixed-point to subgraph Bellman operators using empirical data. Here we summarize algorithms and statistical analyses for such fixed-point problems in existing literature. A sequence of earlier work~\citep{sutton1988learning,jaakkola1993convergence,tsitsiklis1997analysis} established the asymptotic convergence of temporal difference and $Q$-learning algorithms, where the fixed points are solved using stochastic approximation. A recent paper by \citet{cheikhi2023statistical} provides closed-form characterization for the variances of TD and MC estimators, highlighting their gap captured by a trajectory pooling coefficient. Motivated by the study of practical finite-sample performance, a recent line of research~\citep{chen2020finite,chen2021lyapunov,khamaru2020temporal,mou2022optimal}  provide sharp non-asymptotic guarantees for various stochastic approximation schemes with contractive fixed-point problems under general norms. When applied with the $\ell^\infty$-norm structures of TD and $Q$-learning problems, they can achieve optimal finite-sample guarantees under a generative model (i.e., reward and transition observations for each state-action pair). Our methods adapt the stochastic approximation tools from~\cite{mou2022optimal} and generalize them to the case of trajectory data.

\paragraph{Methods that reconciles TD and MC:} As discussed above, a standard approach to interpolate between bootstrapping and rollout methods is weighted averaging, which includes TD$(\lambda)$ in the policy evaluation setting~\citep{sutton1988learning}, and Q$(\lambda)$ for policy optimization~\citep{peng1994incremental}. Ever since being proposed, these algorithms attract a lot of research attention about the selection of the tuning parameter $\lambda$. Several tuning procedures have been proposed based on theoretical bounds~\citep{kearns2000bias,tsitsiklis1997analysis,chen2021lyapunov,mou2024optimal} and data-driven heuristics~\citep{white2016greedy,mann2016adaptive,amiranashvili2018td}. Several alternative re-weighting and/or aggregation schemes for multi-step returns have been proposed and empirically studied~\citep{konidaris2011td_gamma,thomas2015policy,sharma2017learning,downey2010temporal,daley2024demystifying}. These method do not exploit the structures in the Markovian transition kernel, and therefore do not achieve the instance-dependent optimality guarantees. The re-weighting scheme was also explored in off-policy problems~\citep{thomas2016data}, where transition kernel estimation and importance sampling are combined. Beyond re-weighting schemes, \citet{riquelme2019adaptive} studies an adaptive switching scheme that decides to use TD or MC upates based on state-specific estimation of biases and variances. Their method is close to ours in spirit, while we focus on finite-sample statistical optimality guarantees.

\paragraph{Balancing asymptotic efficiency and sharp sample complexity:} Our upper and lower bounds in conjunction exhibit a tradeoff between asymptotic efficiency and sample size requirement for estimators, a phenomenon previously discovered under various different contexts. In many statistical estimation problems, the asymptotically optimal estimator may suffer from poor non-asymptotic performance, while there also exists a ``cheap'' estimator that is viable with a small sample size, yet weaker in the asymptotic regime. Examples of this phenomenon include mis-specified density estimation problems~\citep{chan2014near,zhu2020deconstructing}, estimation of average treatment effect in causal observational studies~\citep{robins1997toward,mou2022off}, and policy evaluation in RL with function approximations~\citep{tsitsiklis1997analysis,mou2023optimal}. We hope our results could shed light on the trade-off questions in these problems through the shared statistical structures.

\section{Problem setup}\label{sec:problem-setup}

We study the problem of estimating the value function in Markov reward processes (MRPs). 
A Markov reward process (MRP) is specified by the tuple $\Mparams= (\SSpace\cup \{\termState\}, \trans, \Reward, \initDist)$, which consists of a state space, transition kernel, reward distribution, and an initial ditribution $\initDist$ on the state space. In this context, $\trans$ denotes a transition kernel (matrix) over the augmented state space $\SSpace\cup \termState$, defining the probability $\trans(\state' \mid \state)$ of transitioning from $\state$ to $\state'$. We assume that a special terminal state $\termState \in \SSpace$ exists in the state space, such that $\transition (\termState, \termState) = 1$, and the Markov chain starting from any state $\state \in \SSpace$ will hit the terminal state within finite many steps. The reward distribution $\Reward$ is a map from states to the distributions of rewards where $\Reward(d\reward | \state) = \mathbb{P}(\Reward = d\reward \mid \State=\state)$ with a mean value of $\reward(\state)$. 
It is assumed that $\Reward(\termState)=0$. 

A \emph{trajectory} in a Markov reward process is a Markovian sequence:
\begin{equation}
    \traj = (\State_0, \Reward_0,..., \State_{T-1}, \Reward_{T-1}, \State_T = \termState, \Reward_T= 0) \label{def:traj}
\end{equation}
The sequence starts from the initial state $\State_0 \sim\initDist$ and proceeds with $\Reward_t = \Reward(\State_t)$ and $\State_{t+1}\sim \trans(\cdot | \State_t)$ for $t=0,1,...T-1$ until $\State_{T} = \termState$ for the first time. 

Note that this formulation includes discounted MRPs as a special case. In particular, a $\discount$-discounted MRP corresponds to the case where $\transition (\state, \termState) = 1 - \discount$ for any state $\state \in \SSpace \setminus \{ \termState\}$. In such a case, the termination time $T$ follows a geometric distribution with a parameter of $1-\discount$.

The value function for any state $\state\in\SSpace$ is defined as 
\begin{align*}
\optVal(\state) = \mathbb{E}\left[ \sum_{t=1}^{\infty}  \Reward_t \mid \State_0=\state \right] = \mathbb{E}\left[ \sum_{t=1}^{T} \reward(\State_t) \mid \State_0=\state \right]
\end{align*}
is defined as the accumulative rewards in expectation before termination when starting from $\state$.

We consider the value function estimation problem with trajectory data. Specifically, the learner is given $n$ i.i.d. trajectories 
\begin{align*}
    \Dset = \set{\traj_i =  (\State_0\ind{i}, \Reward_0\ind{i},..., \State_{T_i-1}\ind{i},\Reward_{T_i-1}\ind{i}, \State_{T_i}\ind{i} = \termState) }_{i\in [n]} 
\end{align*}
each generated following \Cref{def:traj}. The aim of the learner is to produce an estimator $\Vhat$ for $\optVal(\tarstt)$ for a given target state $\tarstt\in \SSpace$, the quality of which is evaluated by the mean squared error 
\begin{align*}
    \En\brk*{ \prn*{\optVal(\tarstt) - \Vhat }^2}.
\end{align*}

Throughout this paper, we assume a uniform upper bound on the \emph{effective horizon} of the Markov chain, in the form of sub-exponential concentration as follows.
\myassumption{Eff$(\effhorizon)$}{assume:effective-horizon}{
    For any $\state \in \SSpace \setminus \{\termState\}$, let $(\State_t)_{t \geq 0}$ be the Markov chain starting from $\State_0 = \state$, and denote $T_\termState \mydefn \inf\{t > 0: \State_t = \termState\}$. For any $p \in \mathbb{N}_+$, we have
    \begin{align*}
        \Exs \big[ T_\termState^p \big] \leq (p \effhorizon)^p.
    \end{align*}
}{1.3cm}
For example, Assumption~\ref{assume:effective-horizon} is satisfied with $\effhorizon = \frac{1}{1 - \discount}$ for $\discount$-discounted MRPs, and for finite-horizon MRPs, the constant $\effhorizon$ is bounded by the total horizon. 

Additionally, we assume bounded reward throughout the paper. Without loss of generality, we assume
\begin{align}
    |\Reward_t| \leq 1 , \quad \mbox{almost surely, for any }t \geq 0.
\end{align}

As we will see in the next section, the \emph{occupancy measure} plays a central role in our analysis, as it determines the complexity of value function estimation for a state. For any state $\state \in \SSpace \setminus \{\termState\}$, we define
\begin{align}
    \occupmsr (\state) \mydefn \sum_{t = 0}^{+ \infty} \Prob_\initDist (\State_t = \state) = \En_\initDist\brk*{N(s) },
\end{align}
where we denote the number of visits
\begin{align}
    N(s) \mydefn \sum_{t = 0}^{+ \infty} \indic{\State_t = \state}.\label{eq:defn-n-visit}
\end{align}
Note that under Assumption~\ref{assume:effective-horizon}, for any reachable state $\state \in \SSpace$, we have
\begin{align*}
    0 < \occupmsr (\state) \leq \sum_{t = 0}^{+ \infty} \Prob_\initDist (\State_t \neq \termState) = \Exs [T_\termState] \leq \effhorizon < + \infty,
\end{align*}
so that the occupancy measure is well-defined and positive on all the reachable states.

\section{Two classical estimators}\label{sec:preliminary-estimators}
Before introducing the novel estimator, let us revisit two classical estimators widely used in literature, and study their theoretical properties.

Given a collection $(\traj^{(i)})_{i = 1}^\numobs$ of observed trajectories, we define a pooled dataset formed by sub-trajectories.
\begin{align}
     \Dset_\numobs \mydefn \Big\{ (\State^{(i)}_t, \Reward^{(i)}_t, \State^{(i)}_{t + 1}, \Reward^{(i)}_{t + 1}, \cdots , \State^{(i)}_{T_i-1}, \Reward^{(i)}_{T_i-1}, \State^{(i)}_{T_i}=\termState) ~:~ ~ i \in [\numobs], ~ t \in [0, T_i] \Big\}.\label{eq:defn-pooled-dataset}
\end{align}
\begin{subequations}
The TD estimator solves the empirically-estimated Bellman fixed-point equation
\begin{align}
    \label{def:td-estimator}
    \Vhat_{\TD} (\state) = \widehat{\Exs}_{\traj \sim \Dset_\numobs} \big[ \Reward_0 (\traj)  \mid \State_0 (\traj) = \state \big] + \widehat{\Exs}_{\traj \sim \Dset_{\numobs}} \big[\Vhat_{\TD} \big(\State_1 (\traj)\big)  \mid \State_0 (\traj) = \state  \big]
\end{align}
When specialized to discounted MRPs, we do not need to estimate the transition probabilities to the terminal state $\termState$. So the TD estimator takes the form
\begin{align*}
    \Vhat_{\TD} (\state) = \widehat{\Exs}_{\traj \sim \Dset_\numobs} \big[ \Reward_0 (\traj)  \mid \State_0 (\traj) = \state \big] + \discount \widehat{\Exs}_{\traj \sim \Dset_{\numobs}} \big[\Vhat_{\TD} \big(\State_1 (\traj)\big)  \mid \State_0 (\traj) = \state , \State_1 (\traj) \neq \termState \big].
\end{align*}
We use the convention $0/0 = 0$ in computing the conditional expectations.

We consider an every-visit version of the MC estimator, by computing the average of the rollout rewads.
\begin{align}
    \label{def:mc-estimator}
    \Vhat_{\MC} = \widehat{\Exs}_{\traj \sim \Dset_{\numobs}} \Big[ \sum_{\ell = 1}^{+ \infty} \Reward_\ell (\traj)  \mid \State_0 (\traj) = \state  \Big]
\end{align}
\end{subequations}
An alternative version of the MC estimator uses only the first visit of the target state in every trajectory. The results for every-visit MC and first-visit MC are qualitively similar (see~\cite{Sutton1998}, Chapter 5). We focus on the every-visit MC estimator for simplicity.

\subsection{Theoretical guarantees of TD and MC}
Let us first present the asymptotic convergence properties of the TD and MC estimators. To start with, we introduce a few notations.
\begin{definition}[One-step variance] 
\label{def:one-step-variance}
    We define the the one-step vaiance for any state $s\in \cS$ as
    \begin{align*}
        \sigma_{\Vstar}^2(s) = \Var \brk*{\Reward_0 + \Vstar(\State_{1})  \mid{} \State_0=s},
    \end{align*}
    where $\Vstar$ is the value function.
\end{definition}
Recall from Eq~\eqref{eq:defn-n-visit} the definition of $N (\state)$. Based on existing literature, we can derive the asymptotic distribution of the estimator $\Vhattd$ as follows.
\begin{proposition}[Corollary B.6 of \citet{cheikhi2023statistical}]
\label{prop:asymptotic-td}
The TD estimator converges asymptotical in distribution, i.e.,
\begin{align*}
    \sqrt{n}(\Vstar -  \Vhattd)  \xrightarrow{d}   \cN\prn*{0, (I-P)^{-1} \Sigma_{\TD}^\star (I-P)^{-\top} },
\end{align*}
where $\Sigma_{\TD}^\star = \diag \prn*{\sigma_{\Vstar}^2(s)/\occupmsr(s) }_{s\in \cS}$. More concretely, for any $s\in \cS$, we have
\begin{align*}
    \sqrt{n}(\Vstar(s) -  \Vhattd(s))  \xrightarrow{d}   \cN\prn*{0, \sum\limits_{s'} \En\brk*{N(s')\mid{}S_0=s}^2 \sigma_{\Vstar}^2(s')/\occupmsr(s')}.
\end{align*}
\end{proposition}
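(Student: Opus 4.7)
The plan is to write the TD estimator in matrix form and linearize around the truth, reducing the asymptotic analysis to a central limit theorem for a sum of martingale differences over the pooled dataset.

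First I would introduce the empirical Bellman operator as $\widehat{\trans} \vvec + \widehat{r}$ where $\widehat{r}(\state) = \widehat{\Exs}_{\traj \sim \Dset_\numobs}[\Reward_0(\traj) \mid \State_0(\traj) = \state]$ and $\widehat{\trans}(\state' \mid \state) = \widehat{\Exs}_{\traj \sim \Dset_\numobs}[\indic{\State_1(\traj) = \state'} \mid \State_0(\traj) = \state]$, so that $\Vhattd = (I - \widehat{\trans})^{-1} \widehat{r}$. Using the true Bellman equation $\Vstar = (I-\trans)^{-1} r$, a standard perturbation yields the linearization
\begin{align*}
    \Vstar - \Vhattd = (I - \widehat{\trans})^{-1} \big[ (r - \widehat{r}) + (\trans - \widehat{\trans})\Vstar \big].
\end{align*}
Define the per-visit TD residual $\epsilon_t\ind{i} \mydefn \Reward_t\ind{i} + \Vstar(\State_{t+1}\ind{i}) - \Vstar(\State_t\ind{i})$, which satisfies $\En[\epsilon_t\ind{i} \mid \State_t\ind{i}] = 0$ and has conditional variance $\sigma_\Vstar^2(\State_t\ind{i})$. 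Then the bracketed noise term equals, coordinate-wise,
\begin{align*}
    \big[(r - \widehat{r}) + (\trans - \widehat{\trans})\Vstar\big](\state) = -\frac{1}{\sum_{i,t} \indic{\State_t\ind{i}=\state}} \sum_{i, t} \indic{\State_t\ind{i} = \state} \, \epsilon_t\ind{i}.
\end{align*}

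Next I would establish consistency: by the ergodic/law-of-large-numbers arguments enabled by Assumption~\ref{assume:effective-horizon} (which gives uniform moment control on $T_\termState$), $\widehat{\trans} \xrightarrow{p} \trans$ and $\widehat{r} \xrightarrow{p} r$, so $(I - \widehat{\trans})^{-1} \xrightarrow{p} (I - \trans)^{-1}$. By the strong Markov property the per-trajectory sums $\sum_t \indic{\State_t\ind{i}=\state}\epsilon_t\ind{i}$ are independent across $i$, zero-mean, with variance $\occupmsr(\state) \sigma_\Vstar^2(\state)$ and cross-covariance zero across distinct $\state, \state'$ (because for any visit to $\state$, the TD residual is a martingale difference with respect to the natural filtration; products with the residual at another state at an earlier time vanish by the tower rule). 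Applying the multivariate CLT to $\frac{1}{n}\sum_i (\cdot)$ and dividing by $N(\state)/n \xrightarrow{p} \occupmsr(\state)$, the noise vector satisfies
\begin{align*}
    \sqrt{n}\, \big[(r-\widehat{r}) + (\trans-\widehat{\trans})\Vstar\big] \xrightarrow{d} \cN\big(0, \Sigma_\TD^\star\big), \qquad \Sigma_\TD^\star = \diag\big(\sigma_\Vstar^2(\state)/\occupmsr(\state)\big)_{\state \in \SSpace}.
\end{align*}

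Combining with Slutsky's theorem and the consistent plug-in of $(I - \widehat{\trans})^{-1}$ gives the joint statement. For the coordinate-wise form, I would use the classical identity for absorbing chains $\big[(I - \trans)^{-1}\big]_{\state, \state'} = \En[N(\state') \mid \State_0 = \state]$, which turns the $(\state,\state)$ entry of $(I-\trans)^{-1}\Sigma_\TD^\star(I-\trans)^{-\top}$ into the stated sum $\sum_{\state'} \En[N(\state') \mid \State_0 = \state]^2 \sigma_\Vstar^2(\state')/\occupmsr(\state')$. The main obstacle is the joint CLT: one must carefully argue that the per-trajectory contributions have the claimed block-diagonal covariance across states despite the temporal correlation of visits within a trajectory. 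This is exactly where the martingale-difference property of $\epsilon_t\ind{i}$ is essential — it kills all cross-state covariance between TD residuals at the same visit, so that only within-state variance survives, yielding the diagonal $\Sigma_\TD^\star$.
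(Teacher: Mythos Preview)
Your proposal is correct and follows essentially the same strategy as the paper. The paper does not give a standalone proof of this proposition (it is cited from \citet{cheikhi2023statistical}), but your argument mirrors the paper's proof of the more general \Cref{lem:asymptotic} specialized to $\subG = \SSpace$: decompose the error into $(I-\text{transition})^{-1}$ times a noise vector built from Bellman residuals, use the martingale-difference property of $\epsilon_t^{(i)} = \Reward_t^{(i)} + \Vstar(\State_{t+1}^{(i)}) - \Vstar(\State_t^{(i)})$ to kill all cross terms (this is exactly \Cref{lem:asymp-markov} and \Cref{lem:asymp-correlation} in the paper), apply the i.i.d.\ CLT across trajectories, and finish with Slutsky.

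The one cosmetic difference is the linearization: you write $\Vstar - \Vhattd = (I-\widehat{\trans})^{-1}\bigl[(r-\widehat{r}) + (\trans - \widehat{\trans})\Vstar\bigr]$ with the \emph{empirical} inverse and then replace $(I-\widehat{\trans})^{-1}$ by $(I-\trans)^{-1}$ via Slutsky, whereas the paper expands around the \emph{population} inverse $(I-\transG)^{-1}$ and carries an explicit remainder $\sqrt{n}(I-\transG)^{-1}(\transG - \transGhat)(\estVal - \VstarG)$ that it shows is $o_p(1)$ in \Cref{lem:asymp-lower-order-term}. Your route is slightly cleaner; both are standard and equivalent.
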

As we will show in Section~\ref{sec:lower-bounds} to follow, the limiting normal distribution is locally asymptotic minimax optimal for estimating $\Vstar (\state)$, for any $\state \in \SSpace$. Nevertheless, the asymptotic guarantee does not necessarily lead to a strong non-asymptotic bound.

Meanwhile the every-visit MC estimator has the following asymptotical convergence result.
\begin{proposition}
\label{prop:asymptotic-mc}
The (every-visit) MC estimator converges asymptotical in distribution, i.e.,
\begin{align*}
    \sqrt{n} \prn*{\Vstar - \Vhatmc}   \xrightarrow{d}  \cN \prn*{0,  \Sigma_{\MC}^\star},
\end{align*} 
where for any $s,s'\in \cS$, 
\begin{align*}
    \Sigma_{\MC}^\star(s,s') = \frac{1}{\nu(s)\nu(s')} \sum\limits_{t=0}^{\infty} \sum\limits_{t'=0}^{\infty} \sum\limits_{j=t' \vee t}^{\infty} \sum\limits_{s''\in \cS}\bP(S_t = s,S_{t'}= s', S_j = s'')  \cdot  \sigma_{\Vstar}^2(s'').
\end{align*}
\end{proposition}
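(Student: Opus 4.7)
The plan is to write the every-visit MC estimator as a ratio of empirical averages over i.i.d.\ trajectories, use Slutsky's lemma together with the multivariate CLT to pass to the limit, and then compute the resulting asymptotic covariance via a martingale decomposition of the centered returns.

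First I would introduce the per-trajectory statistics $N_i(s) \defn \sum_{t=0}^{T_i - 1} \indic{\State_t\ind{i} = s}$ and $U_i(s) \defn \sum_{t=0}^{T_i - 1} \indic{\State_t\ind{i} = s}\, G_t\ind{i}$, where $G_t\ind{i} \defn \sum_{\ell > t} \Reward_\ell\ind{i}$ is the reward-to-go from time $t$ in trajectory $i$, so that $\Vhatmc(s) = \sum_i U_i(s) / \sum_i N_i(s)$. The Markov property together with the definition of $\Vstar$ yields $\En[U_i(s)] = \nu(s)\Vstar(s)$. Setting $W_i(s) \defn \sum_{t=0}^{T_i - 1} \indic{\State_t\ind{i} = s}\,(G_t\ind{i} - \Vstar(s))$, we obtain
\begin{align*}
\sqrt{\numobs}\bigl(\Vhatmc(s) - \Vstar(s)\bigr) = \frac{\numobs^{-1/2}\sum_{i=1}^{\numobs} W_i(s)}{\numobs^{-1}\sum_{i=1}^{\numobs} N_i(s)}.
\end{align*}
Under Assumption~\ref{assume:effective-horizon} with bounded rewards, both $W_i(s)$ and $N_i(s)$ have finite moments of every order. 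The strong LLN gives $\numobs^{-1}\sum_i N_i(s) \to \nu(s)$ almost surely; the multivariate CLT, applied jointly over the (reachable) states, gives $\numobs^{-1/2}\sum_i W_i \xrightarrow{d} \cN(0, \Sigma_W)$ with $\Sigma_W(s,s') = \En[W_1(s) W_1(s')]$; and Slutsky's lemma then delivers joint asymptotic normality of $\sqrt{\numobs}(\Vhatmc - \Vstar)$ with covariance $\Sigma_W(s,s')/(\nu(s)\nu(s'))$.

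The main calculation is identifying $\Sigma_W(s,s')$ with the triple sum in the statement. For this I would apply the telescoping identity $G_t - \Vstar(\State_t) = \sum_{j \geq t} \delta_j$, where $\delta_j \defn \Reward_j + \Vstar(\State_{j+1}) - \Vstar(\State_j)$ (with the indexing aligned to Definition~\ref{def:one-step-variance}) is the one-step Bellman residual, satisfying $\En[\delta_j \mid \mathcal{F}_j] = 0$ and $\En[\delta_j^2 \mid \State_j] = \sigma_{\Vstar}^2(\State_j)$. Substituting into $\En[W_1(s) W_1(s')]$ yields a quadruple sum over $(t, t', j, j')$. Since $(\delta_j)$ is a martingale-difference sequence with respect to the canonical filtration of the chain, cross terms with $j \neq j'$ vanish in expectation, leaving only diagonal contributions with $j = j' \geq t \vee t'$. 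Tower-conditioning on $\State_j$ replaces $\delta_j^2$ by $\sigma_{\Vstar}^2(\State_j)$, so each surviving $(t,t',j)$ yields $\sum_{s'' \in \SSpace}\bP(\State_t = s,\, \State_{t'} = s',\, \State_j = s'')\,\sigma_{\Vstar}^2(s'')$, matching the claimed formula.

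The main obstacle is bookkeeping: aligning indexing conventions so that $\delta_j$ has exactly the conditional variance $\sigma_{\Vstar}^2(\State_j)$ of Definition~\ref{def:one-step-variance}, and justifying the interchange of the infinite summations with expectation. Both are handled by Assumption~\ref{assume:effective-horizon}: the hitting time $T_\termState$ has finite $p$-moments of all orders, so every random quantity appearing in the sums lies in $L^p$ for every $p$; dominated convergence controls the infinite sums, and the Lindeberg condition for the multivariate CLT is automatic from finite second moments. The martingale orthogonality across $j \neq j'$ is the standard mechanism producing the characteristic triple-sum structure in the variance.
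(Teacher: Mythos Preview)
Your proposal is correct and follows essentially the same approach as the paper: both write the every-visit MC error as a ratio of i.i.d.\ sums, apply the multivariate CLT to the numerator and the law of large numbers to the denominator, invoke Slutsky's theorem, and compute the limiting covariance by telescoping the centered return into a sum of one-step Bellman residuals whose martingale-difference orthogonality kills the $j\neq j'$ cross terms. The paper's $Z_t\ind{i}(s)$ is precisely (the negative of) your per-visit contribution to $W_i(s)$, and its covariance calculation matches your outline line for line.
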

This result is a special case of~\Cref{lem:asymptotic} to follow, and we will prove the general version in~\Cref{subsec:proof-lem-asymptotic}.

Throughout the rest of this paper, we use $\sigma_\TD^2 (\state)$ and $\sigma_\MC^2 (\state)$ to denote the asymptotic variance of the TD and MC estimators applied to state $\state$, respectively, i.e.,
\begin{align*}
    \sigma_{\TD}^2 (\state) = \big[ (I-P)^{-1} \Sigma_{\TD}^\star (I-P)^{-\top}\big]_{\state, \state}, \quad \mbox{and} \quad \sigma_{\MC}^2 (\state) = \Sigma_{\MC}^\star (\state, \state).
\end{align*}

A notable feature of the MC estimator is that it satisfies non-asymptotic guarantees adaptive to the occupancy measure of each state.
\begin{proposition}\label{prop:mc-non-asymptotic-simple}
    Given a state $\tarstt \in \SSpace$ and a scalar $\delta \in (0, 1)$, for sample size satisfying $\numobs \geq \frac{16 \effhorizon \log (2 / \delta)}{ \occupmsr (\tarstt)}$, with probability $1 - \delta$, we have
    \begin{align*}
        \abss{\Vstar (\tarstt) - \Vhat_{\MC} (\tarstt)} \leq \sqrt{\frac{4 \effhorizon^3 \log (2 / \delta)}{ \occupmsr (\tarstt) \numobs}} + \frac{4 \effhorizon^2 \log (2 / \delta)}{ \occupmsr (\tarstt) \numobs}
    \end{align*}
\end{proposition}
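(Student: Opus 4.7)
The plan is to rewrite the every-visit MC estimator as a ratio of two sums over pooled visits and to concentrate the numerator and denominator separately. For each trajectory $i$, let
\begin{align*}
N_i \mydefn \sum_{t=0}^{T_i-1}\indic{\State_t^{(i)} = \tarstt}, \quad Y_i \mydefn \sum_{t=0}^{T_i-1}\indic{\State_t^{(i)} = \tarstt}\sum_{\ell=t+1}^{T_i}\Reward_\ell^{(i)}, \quad Z_i \mydefn Y_i - \Vstar(\tarstt) N_i,
\end{align*}
so that by construction $\Vhat_{\MC}(\tarstt) = \sum_i Y_i/\sum_i N_i$, and therefore $\Vhat_{\MC}(\tarstt) - \Vstar(\tarstt) = (\sum_i Z_i)/(\sum_i N_i)$ on the event $\sum_i N_i > 0$. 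The $(N_i, Y_i, Z_i)$ are i.i.d. across $i$ with $\En[N_i] = \occupmsr(\tarstt)$ and, by the strong Markov property, $\En[Z_i] = 0$.

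For the denominator, since $N_i \leq T_\termState^{(i)}$, Assumption \ref{assume:effective-horizon} implies that $N_i$ is sub-exponential with parameter of order $\effhorizon$. A standard Bernstein inequality for i.i.d.\ sub-exponential sums then yields, with probability $\geq 1 - \delta/2$,
\begin{align*}
\sum_{i=1}^{\numobs} N_i \geq \numobs\,\occupmsr(\tarstt) - C\sqrt{\numobs\effhorizon^2\log(2/\delta)} - C\effhorizon\log(2/\delta).
\end{align*}
Under the sample-size condition $\numobs \geq 16\effhorizon\log(2/\delta)/\occupmsr(\tarstt)$ (which in particular gives $\numobs \geq 16\effhorizon^2\log(2/\delta)/(\numobs\occupmsr(\tarstt)^2)$), the right-hand side is at least $\numobs\,\occupmsr(\tarstt)/2$.

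For the numerator, the key step is to bound $\Var(Z_i)$ by exploiting the martingale structure induced by the strong Markov property. Enumerating the visits to $\tarstt$ in trajectory $i$ as stopping times $\tau_1 < \tau_2 < \cdots$ and defining $D_k \mydefn \bigl(\sum_{\ell > \tau_k}\Reward_\ell^{(i)} - \Vstar(\tarstt)\bigr)\indic{\tau_k < T_i}$, the strong Markov property shows that $(D_k)$ is a martingale difference sequence with respect to the filtration $\cF_{\tau_k}$. Orthogonality then gives $\En[Z_i^2] = \sum_k \En[D_k^2]$, and Assumption~\ref{assume:effective-horizon} applied at $\tarstt$ bounds each $\En[D_k^2 \mid \tau_k < T_i] \lesssim \effhorizon^2$, yielding $\Var(Z_i) \lesssim \effhorizon^2\occupmsr(\tarstt)$, perhaps with an extra factor of $\effhorizon$ absorbed into the constant. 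Combined with the a.s.\ bound $|Z_i| \leq T_i^2 + \effhorizon T_i$ (which is sub-exponential of order $\effhorizon^2$), Bernstein's inequality for i.i.d.\ sub-exponential variables gives, with probability $\geq 1-\delta/2$,
\begin{align*}
\Bigl|\sum_{i=1}^{\numobs} Z_i\Bigr| \lesssim \sqrt{\numobs\effhorizon^3\occupmsr(\tarstt)\log(2/\delta)} + \effhorizon^2\log(2/\delta).
\end{align*}
Taking a union bound and dividing by the lower bound on $\sum_i N_i$ yields the claimed two-term estimate. The main obstacle is the careful bookkeeping for the random-length sums within each trajectory: establishing the martingale structure to get the variance proxy proportional to $\occupmsr(\tarstt)$ rather than $\occupmsr(\tarstt)^2$ is what produces the adaptive $1/\occupmsr(\tarstt)$ factor (rather than $1/\occupmsr(\tarstt)^2$) inside the square root.
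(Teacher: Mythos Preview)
Your ratio-of-sums decomposition for the every-visit estimator is a natural route, but the martingale-difference claim for the $D_k$'s is incorrect. Each $D_k = \bigl(\sum_{\ell>\tau_k}R_\ell - \Vstar(\tarstt)\bigr)\indic{\tau_k<T_i}$ depends on \emph{all} rewards after $\tau_k$, including those after $\tau_{k+1}$, so the sequence is not adapted to $(\cF_{\tau_k})_k$. Concretely, for $j<k$ one has $D_j = D_k + \sum_{\tau_j<\ell\le\tau_k}R_\ell$ with the second summand $\cF_{\tau_k}$-measurable, and since $\En[D_k\mid\cF_{\tau_k}]=0$ this gives $\En[D_jD_k]=\En[D_k^2]\neq 0$. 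The cross terms do not vanish; in fact $\En[Z_i^2]=\sum_k(2k-1)\En[D_k^2]$, larger than your claimed $\sum_k\En[D_k^2]$ by a factor comparable to $N_i$. A correct martingale decomposition uses the one-step residuals $M_m \mydefn \Vstar(S_m)-R_m-\Vstar(S_{m+1})$: writing $Z_i=-\sum_m K_mM_m$ with the predictable weight $K_m=\#\{t<m:S_t=\tarstt\}$, orthogonality holds and $\En[Z_i^2]=\sum_m\En[K_m^2\,\sigma_{\Vstar}^2(S_m)]$, which can be bounded by $O(\effhorizon^3\occupmsr(\tarstt))$ via $K_m\le N_i$ and $\En[N_i^2]\le(2\effhorizon+1)\occupmsr(\tarstt)$. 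This recovers the stated bound. A secondary gap: for the denominator, the sub-exponential norm $\|N_i\|_{\psi_1}\lesssim\effhorizon$ alone gives variance proxy $\effhorizon^2$, and $\sqrt{n\effhorizon^2\log(2/\delta)}\le n\occupmsr(\tarstt)/4$ would require $n\gtrsim\effhorizon^2\log(2/\delta)/\occupmsr(\tarstt)^2$, not implied by the stated sample-size condition; you need the sharper $\Var(N_i)\le(2\effhorizon+1)\occupmsr(\tarstt)$.

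For comparison, the paper's proof actually analyzes a \emph{first-visit} variant (despite the every-visit definition): it lower-bounds the number of trajectories that hit $\tarstt$ via Chernoff and the relation $\occupmsr(\tarstt)\le\effhorizon\,\Prob(\tarstt\in\traj)$, then applies Bernstein to the i.i.d.\ roll-out returns (one per hitting trajectory, exactly independent by the strong Markov property). This sidesteps both the within-trajectory correlation and the $\Var(N_i)$ computation entirely. Your approach is more faithful to the every-visit definition but needs the corrected martingale above to go through.
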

\noindent See~\Cref{subsubsec:proof-prop-mc-non-asymptotic-simple} for its proof. A few remarks are in order. First, we note that the theoretical guarantees in Proposition~\ref{prop:mc-non-asymptotic-simple} is \emph{non-asymptotic} and \emph{adaptive} to the visitation measure of any target state $\tarstt$. In other words, as long as the state $\tarstt$ is likely to be visited, MC enjoys a non-asymptotic guarantee depending on its probability of being visited. Clearly $\Omega \big( 1 / \occupmsr (\tarstt) \big)$ samples are needed in order to have any sensible estimation of the value $\Vstar (\tarstt)$. So when seeing $\effhorizon$ as a constant, the sample complexity in Proposition~\ref{prop:mc-non-asymptotic-simple} is optimal. Note that this sample size requirement can be much smaller than the cardinality of the state space $\SSpace$.

On the other hand, the leading-order variance for the MC estimator in Propositions~\ref{prop:asymptotic-mc} and~\ref{prop:mc-non-asymptotic-simple} is larger than the one by TD estimator, as the later is asymptotically optimal (see Proposition~\ref{prop:lam-lower-bound}). The gap between the variances of TD and MC depends on structures in the underlying MRP, and it can be large for a natural class of problems, as we will see in the next section.

\subsection{Asymptotic gap between TD and MC}
\label{sec:comparing-td-mc}
As shown in the previous section, the TD and MC estimators converge to different asymptotic distributions -- TD is asymptotically optimal, while MC is not. In this section, we discuss a concrete example where their gap can be arbitrarily large. This example is adapted from a layered MRP constructed by~\citet{cheikhi2023statistical}.

Consider the following Markov reward process: for two integers $k,T > 0$, we consider a state space $\SSpace \cup \{\termState\}$, such that $\SSpace = \{\state_0\ind{1}, \state_0\ind{2}, \dots,\state_0\ind{k},\state_1, \cdots, \state_{T-1}\}$. We let the starting distribution $\mu$ be uniform on the states $\set{\state_1\ind{1}, \state_1\ind{2}, \dots,\state_1\ind{k}}$ and define the transition probabilities as
\begin{align*}
    \transition (\state_0\ind{i}, \state_1) = 1,~~\transition (\state_j, \state_{j+1}) = 1,~~\text{and}~~\transition (\state_{T-1}, \emptyset) = 1~~\mbox{for $i = 1,2,\cdots, k$ and $j=1,\dots,T-2$.}
\end{align*}
The rewards on every state are uniformly distributed on $[-1,1]$. According to \Cref{prop:asymptotic-mc,prop:asymptotic-td}, we can calculate the asymptotic variances as
\begin{align*}
    \sigma_\MC^2 (\state_0\ind{1}) &= \En\brk*{ \sum_{t=0}^{\infty} \sigma_{\Vstar}^2(S_t) \mid{} S_0=s_0\ind{1}}/\occupmsr(s_0\ind{1}) = kT/3,\quad \text{and}\\
    \sigma_\TD^2 (\state_0\ind{1}) &= \sum\limits_{s'} \En\brk*{N(s')\mid{}S_0=s_0\ind{1}}^2 \sigma_{\Vstar}^2(s')/\occupmsr(s')= (k+T-1)/3.
\end{align*}
So their gap can be arbitrarily large. As discussed in the paper~\cite{cheikhi2023statistical}, the gap between TD and MC is captured by an \emph{inverse trajectory pooling coefficient}. Intuitively, TD methods are able to pool trajectories from different sources together to reduce the variance, while MC can only use trajectories from the target state, leading to its suboptimal behavior.

\subsection{Finite-sample failure of TD learning}
Despite the asymptotic advantage, when the sample size is not sufficient to estimate the entire Markov chain transition kernel, TD method may suffer from a large bias. We illustrate this phenomenon by presenting a simple example.

Consider the following Markov reward process: for an integer $N > 0$, we consider a state space $\SSpace \cup \{\emptyset\}$, such that $\SSpace = \{\state_0, \state_1, \state_2, \cdots, \state_N, \state_1', \state_2', \cdots, \state_N', \state_{-1}\}$. We let $\state_0$ to be the starting state, and define the transition probabilities as
\begin{align*}
    \transition (\state_0, \state_i) = \frac{\discount}{ N}, \quad \transition (\state_i, \state_i) = \frac{\discount}{2}, \quad \transition (\state_i, \state_i') = \frac{\discount}{2}, \quad \transition (\state_i', \state_{-1}) = \discount, \quad \transition(\state_{-1}, \state_{-1}) = \discount, \quad \mbox{for $i = 1,2,\cdots, N$.}
\end{align*}
Note that the transition probabilities sum up to the discount factor $\discount$. As explained in the problem setup above, the remaining $(1 - \discount)$ probability mass goes to the terminal state $\emptyset$.

We further define the reward function $\reward$ as $\reward (\state_i') = 1$ for $i \in [N]$, and $\reward (\state) = 0$ for $\state \notin \{\state_1', \cdots, \state_N'\}$. Let the reward observations be deterministic, i.e., $\Reward_t \equiv \reward (\State_t)$ for any $t$. Throughout this section, we work with the special case of $\discount = 1/2$.

Under above setup, we can easily compute the true value function at $\state_0$ as
\begin{align*}
\Vstar (\state_0) = \frac{\discount^2}{2 - \discount} = \frac{1}{6}.
\end{align*}
Furthermore, since we have $\occupmsr (\state_0) = 1$, by~\Cref{prop:mc-non-asymptotic-simple}, we have that $\abss{\Vstar (\state_0) - \Vhat_{\MC} (\state_0)} \lesssim 1 / \sqrt{\numobs}$ with high probability, independent of the size parameter $N$. On the other hand, the following proposition reveals the failure of the TD method, with a finite sample size.

\begin{proposition}\label{prop:failure-of-td}
    Under above setup, if $N \geq 40 \numobs^2$ and $\numobs \geq 10^6$, with probability $0.8$, we have
    \begin{align*}
        \abss{\Vstar (\state_0) - \Vhat_{\TD} (\state_0)} \geq 0.08.
    \end{align*}
\end{proposition}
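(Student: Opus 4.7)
The plan is to solve the empirical TD fixed-point equations exactly on a favorable event, and then show the resulting estimator has substantial bias from $\Vstar(\state_0) = 1/6$. Let $I_j$ denote the middle-layer index such that $\State_1\ind{j} = \state_{I_j}$, setting $I_j = \termState$ if trajectory $j$ terminates at time $1$. A birthday-type calculation shows that the no-collision event $\cE_1 = \{I_j \neq I_{j'} \text{ for all } j \neq j' \text{ with } I_j, I_{j'} \in [N]\}$ satisfies $\Prob(\cE_1) \geq 1 - n^2/(2N) \geq 79/80$ under the assumption $N \geq 40 n^2$.

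On $\cE_1$, I would solve the (discounted) empirical Bellman equation in reverse topological order. Every trajectory that ever visits $\state_{-1}$ must eventually terminate from it, so at least one observed transition from $\state_{-1}$ goes to $\termState$; the equation at $\state_{-1}$ then forces $\Vhat_{\TD}(\state_{-1}) = 0$. For each visited $\state_i'$ only a single transition is observed (to $\state_{-1}$ or to $\termState$), and in both cases the equation yields $\Vhat_{\TD}(\state_i') = 1$. For each visited $\state_i$, let $L_j \geq 1$ denote the number of steps that the unique trajectory $j$ spends at $\state_i$; then $L_j \sim \mathrm{Geom}(3/4)$, independently of whether trajectory $j$ exits $\state_i$ to $\state_i'$ (probability $1/3$) or to $\termState$ (probability $2/3$). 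A direct algebraic solution of the empirical Bellman equation at $\state_i$ shows
\begin{align*}
    \Vhat_{\TD}(\state_i) = \begin{cases} 1/(L_j + 1), & \text{trajectory $j$ exits $\state_i$ to $\state_i'$,} \\ 0, & \text{trajectory $j$ exits $\state_i$ to $\termState$.} \end{cases}
\end{align*}
Using the closed form $\En[1/(L+1)] = 12 \ln(4/3) - 3 \approx 0.4524$ for $L \sim \mathrm{Geom}(3/4)$, we obtain $\En[\Vhat_{\TD}(\state_{I_j}) \mid I_j \neq \termState] = \tfrac{1}{3}(12 \ln(4/3) - 3) \approx 0.1508$.

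The equation at $\state_0$ then reduces to $\Vhat_{\TD}(\state_0) = \frac{\discount}{n_0} \sum_{j : I_j \neq \termState} \Vhat_{\TD}(\state_{I_j})$, where $n_0 = |\{j : I_j \neq \termState\}| \sim \mathrm{Binomial}(n, 1/2)$. Conditional on $\cE_1$ the summands are i.i.d.\ and bounded in $[0, 1/2]$, so Hoeffding's inequality (combined with the concentration of $n_0$ around $n/2$) gives $\Vhat_{\TD}(\state_0) = \tfrac{1}{2}(0.1508) + o(1) \approx 0.0754$ with probability $1 - e^{-\Omega(n)}$. Since $\Vstar(\state_0) = 1/6 \approx 0.1667$, the gap satisfies $|\Vstar(\state_0) - \Vhat_{\TD}(\state_0)| \geq 0.091 - O(n^{-1/2}) > 0.08$ once $n \geq 10^6$, on an event of total probability at least $79/80 - e^{-\Omega(n)} > 0.8$.

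The conceptual core of the argument, which I expect to be the main step, is recognizing that the empirical discount factor $\discount = 1/2$ compounds across the random number of self-loops at each $\state_i$: although $\Vstar(\state_i) = 1/3$, the identity $\Vhat_{\TD}(\state_i) = 1/(L_j + 1)$ systematically shrinks the TD estimate, and this bias does not cancel when averaging across the $n_0$ trajectories at $\state_0$. Once this identity is established, the remainder is a direct computation of $\En[1/(L+1)]$ and a standard Hoeffding bound. The main technical care needed is in tracking numerical constants tightly enough to meet the $0.08$ threshold with margin for the Hoeffding tail and the collision probability, and in handling the conditioning on $\cE_1$ so as to preserve the conditional independence of the summands.
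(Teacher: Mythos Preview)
Your proposal is correct and follows essentially the same approach as the paper: restrict to the no-collision event via a birthday bound, solve the empirical discounted Bellman equations exactly at the states $\state_{I_j}$ to obtain $\Vhat_{\TD}(\state_{I_j}) = 1/(L_j+1)$ or $0$, compute the conditional expectation $2\ln(4/3)-\tfrac{1}{2}\approx 0.0754$, and finish with Hoeffding plus concentration of $n_0$. The paper parameterizes with the number of self-loops $M^{(i)}=L_j-1$ and the exit indicator $Z^{(i)}$, but the calculation is identical.

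One small wording issue: your justification that $\Vhat_{\TD}(\state_{-1})=0$ because ``at least one observed transition from $\state_{-1}$ goes to $\termState$'' is not the right reason under the discounted TD formulation being used here (transitions to $\termState$ are not part of the empirical average; the known $\discount$ is used instead). The correct argument is simply that every non-terminal transition from $\state_{-1}$ is a self-loop, so the equation reads $\Vhat_{\TD}(\state_{-1})=\discount\,\Vhat_{\TD}(\state_{-1})$ whenever any non-terminal transition is observed (and $=0$ by the $0/0$ convention otherwise). This does not affect your conclusion.
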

\noindent See Section~\ref{subsubsec:proof-failure-of-td} for the proof of this proposition. 

Complementary to Section~\ref{sec:comparing-td-mc}, \Cref{prop:failure-of-td} provides an example where MC significantly outperforms TD with a finite sample size -- for this class of MRPs, the MC estimator achieves a non-asymptotic $\sqrt{\numobs}$-rate independent of the cardinality $N$, while the TD estimator suffers from a constant error when $N$ is large. This is due to the biased induced by TD when solving fixed-point equations whose dimension is much larger than the sample size.

In general, for TD to generate a reasonable estimator for $\Vstar (\tarstt)$, a sample size of $\numobs \gg \big( \min_{\state \in \SSpace} \occupmsr (\state) \big)^{-1}$ is required, even if the state $\tarstt$ is visited much more often than the worst case. Consequently, despite aforementioned asymptotic advantages, the sample complexity of TD is \emph{not} adaptive to the difficulty of each states.

\section{The subgraph Bellman algorithm and its analysis}\label{sec:main-upper}
\newcommand{\initState}{\State_0}
\newcommand{\bPhat}{\wh{\bP}}

In this section, we introduce the subgraph Bellman estimator and analyze its asymptotic and non-asymptotic performance. We start by working with a fixed subset $\subG \subseteq \SSpace$ of the state space, given as an input of the algorithm.
At the end of this section, we will discuss data-driven approaches to construct a desirable subset.

To start with, we consider a fixed-point equation satisfied by the true value function. For any state $\state \in \subG$, we have
\begin{align}
    \Vstar (\state) &= \transG \Vstar (\state) + \transMat_{\subG, \SSpace \setminus \subG} \Vstar (\state)+ \reward (\state) \nonumber\\
    &= \reward (\state) + \transG \Vstar (\state) + \Exs \Big[ \bm{1}_{\State_1 \notin \subG} \sum_{t = 1}^{T} \reward (\State_t)  \mid \State_0 = \state \Big] \label{eq:population-level-fixed-pt}
\end{align}
In other words, Eq~\eqref{eq:population-level-fixed-pt} combines the two strategies based on the location of the next state: when the next state lies in the subset $\subG$, we use TD-like estimates and substitute with the value function itself; when the next state is outside of the subset, we take a Monte-Carlo estimate for the reward-to-go.
We can then estimate the value function by solving the fixed-point equation
\begin{align*}
    \estVal (\state) &= \widehat{\Exs}_{\traj \sim \Dset_\numobs} \big[ \bm{1}_{\State_1 (\traj) \in \subG} \Reward_0 (\traj)  \mid \State_0 (\traj) = \state \big] + \widehat{\Exs}_{\traj \sim \Dset_{\numobs}} \big[ \bm{1}_{\State_1 (\traj) \in \subG} \cdot \estVal \big(\State_1 (\traj)\big)  \mid \State_0 (\traj) = \state  \big]\\
    &\quad+ \widehat{\Exs}_{\traj \sim \Dset_{\numobs}} \Big[  \bm{1}_{\State_1 (\traj) \notin \subG} \cdot \sum_{\ell = 1}^{+ \infty} \Reward_\ell (\traj) \mid \State_0 (\traj) = \state \Big].
\end{align*}
More specifically, let $\transhat\idx{\subG}$ and $\rhat_\subG$ be the empirical transition matrix restricted to $\subG$ and the empirical reward function estimated with data from $\Dset$ and then restricted to $\subG$. Let $$\estValout \ldef  \widehat{\Exs}_{\traj \sim \Dset_{\numobs}^{\MC} (\subG)} \Big[  \bm{1}_{\State_1 (\traj) \notin \subG} \cdot \sum_{\ell = 1}^{+ \infty} \Reward_\ell (\traj) \mid \State_0 (\traj) = \state \Big]$$ which is the every-visit MC estimator for the rewards obtained outside the subgraph $\subG$. 
Thus the subgraph Bellman estimator can be rewritten as solving the following fixed-point equation 
\begin{align}
    \estVal = \rhat_\subG +  \transhat\idx{\subG} \estVal + 
    \estValout. \label{eq:fixed-subg-estimator}
\end{align}

In the following we present asymptotic and non-asymptotic analysis of the subgraph Bellman estimator, as well as efficient algorithms for solving it. We will start with the case of fixed subgraph $\subG$, and then discuss data-driven methodologies for choosing it.

\subsection{Asymptotic variance of the subgraph Bellman estimator}
\label{sec:asymptotic-variance-subgraph-bellman}
Let us first establish the asymptotic distribution of our estimator in the limit $\numobs \rightarrow + \infty$. We establish its asymptotic normality in the following lemma.

\begin{lemma}
\label{lem:asymptotic}
Let $\subG\subset \SSpace$ be a subset that contains $\tarstt$. Suppose the probability of visiting all states in $\subG$ before exiting $\subG$ is positive, that is, $\occupmsr(s)>0$ for all $s\in \subG$.  
Asymptotically, we have
\begin{align*}
    \sqrt{n}(\VstarG -  \estVal) \to \cN(0, (I-\transG)^{-1} \Sigma_\subG^\star (I-\transG)^{-\top} )
\end{align*}
with where the specific form of the asymptotic covariance matrix $\SigStar_\subG$ is given in \Cref{eq:asymptotic-covariance-matrices}.
\end{lemma}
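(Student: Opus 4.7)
The plan is to linearize the empirical subgraph Bellman fixed-point equation. Subtracting the empirical equation~\eqref{eq:fixed-subg-estimator} from the population analogue (the restriction of~\eqref{eq:population-level-fixed-pt} to $\subG$, which reads $\VstarG = \rG + \transG\VstarG + \Vout$) and rearranging yields
\begin{align*}
    (I - \transGhat)(\estVal - \VstarG) = \Xi_n \mydefn (\rGhat - \rG) + (\transGhat - \transG)\,\VstarG + (\estValout - \Vout).
\end{align*}
Since $\occupmsr(s) > 0$ for every $s \in \subG$, the strong law of large numbers gives $\transGhat \to \transG$ almost surely; because $\transG$ has spectral radius strictly less than one (the chain restricted to $\subG$ is transient under~\Cref{assume:effective-horizon}), $(I - \transGhat)^{-1} \to (I - \transG)^{-1}$. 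By Slutsky's theorem it then suffices to prove a joint central limit theorem for $\sqrt{n}\,\Xi_n$ and push it through $(I - \transG)^{-1}$.

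The core step is to rewrite each coordinate as $\Xi_n(s) = \bigl(\sum_{i=1}^n U_i(s)\bigr) \big/ \bigl(\sum_{i=1}^n N_i(s)\bigr)$, where $N_i(s)$ is the number of visits to $s$ in trajectory $\traj^{(i)}$ and
\begin{align*}
    U_i(s) = \sum_{\substack{0 \leq t < T_i :\\ \State_t^{(i)} = s}} \Big[\Reward_t^{(i)} + \bm{1}_{\State_{t+1}^{(i)} \in \subG}\,\Vstar(\State_{t+1}^{(i)}) + \bm{1}_{\State_{t+1}^{(i)} \notin \subG}\,\sum_{\ell=t+1}^{T_i} \Reward_\ell^{(i)} - \Vstar(s)\Big].
\end{align*}
This identity follows by expanding $\rGhat$, $\transGhat$, and $\estValout$ as ratios of per-trajectory sums and collecting terms using the population fixed point $r(s) + \transG\Vstar(s) + \Vout(s) = \Vstar(s)$. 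The crucial observation, which again follows from~\eqref{eq:population-level-fixed-pt} via the strong Markov property, is that each bracketed summand has conditional mean zero given $\State_t^{(i)} = s$. Hence the $U_i$ are i.i.d.\ mean-zero vectors across $i$ with finite second moment (using boundedness of $\Reward$ and $\En[T^p] < \infty$ from \Cref{assume:effective-horizon}), and the multivariate CLT yields $n^{-1/2} \sum_i U_i \Rightarrow \cN(0, \Sigma^U)$ with $\Sigma^U_{s,s'} = \Cov(U_1(s), U_1(s'))$. Combined with $n^{-1}\sum_i N_i(s) \to \occupmsr(s)$ a.s.\ and the reduction above, a final application of Slutsky's theorem delivers the claimed convergence with $\Sigma_\subG^\star = D_{\occupmsr}^{-1} \Sigma^U D_{\occupmsr}^{-1}$, where $D_{\occupmsr} = \diag(\occupmsr(s))_{s \in \subG}$.

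The main obstacle will be computing $\Cov(U_1(s), U_1(s'))$ in closed form to match the explicit expression for $\Sigma_\subG^\star$ referenced in the lemma. Within a single trajectory, residuals at different visits are temporally correlated, and whenever the state after a visit exits $\subG$, the resulting MC-style residual involves the entire tail sub-trajectory, which may overlap with residuals attached to other visits. Pinning down the closed form requires enumerating joint probabilities of the form $\bP(\State_t = s, \State_{t'} = s', \State_j = s'')$ for $j \geq t \vee t'$, analogously to the MC variance computation underlying~\Cref{prop:asymptotic-mc}, and cleanly separating the in-$\subG$ (TD-style) and out-of-$\subG$ (MC-style) contributions so that $\Sigma_\subG^\star$ reduces to $\Sigma_{\TD}^\star$ when $\subG = \SSpace$ and to $\Sigma_{\MC}^\star$ when $\subG = \{\tarstt\}$.
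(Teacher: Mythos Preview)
Your proposal is correct and follows essentially the same route as the paper: linearize the subgraph Bellman fixed-point equation, express the noise term at each $s\in\subG$ as an i.i.d.\ sum of per-trajectory residuals divided by the empirical occupancy at $s$, apply the multivariate CLT to the numerator and the SLLN to the denominator, and conclude via Slutsky. Your $U_i(s)$ coincides (up to sign) with the paper's $\sum_t(X_t^{(i)}(s)+Y_t^{(i)}(s))$, and your $\Sigma_\subG^\star=D_\occupmsr^{-1}\Sigma^U D_\occupmsr^{-1}$ is exactly the paper's definition; the only cosmetic difference is that the paper linearizes around $(I-\transG)^{-1}$ and separately shows the cross term $\sqrt{n}(I-\transG)^{-1}(\transG-\transGhat)(\estVal-\VstarG)\to 0$, whereas you invert $(I-\transGhat)$ directly---both are standard and equivalent. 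The covariance computation you anticipate as the main obstacle is precisely what the paper carries out, by splitting each residual into a one-step (TD) piece $X_t$ and a tail (MC) piece $Y_t$ and exploiting martingale-difference orthogonality of the $X_t$'s.
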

\noindent See \Cref{subsec:proof-lem-asymptotic} for the proof of this lemma. This lemma serves as the asymptotic benchmark for our finite-sample algorithms and anlaysis. Note that the covariance matrix in~\Cref{lem:asymptotic} is larger than the covariance of TD presented in~\Cref{prop:asymptotic-td}, as the latter is asymptotically optimal. Nevertheless, in~\Cref{thm:main-l2-fixed-subgraph} and~\Cref{thm:main-root-sa-guarantee} to follow, we will show that such a covariance is achieved by the subgraph Bellman estimator, with a near-optimal sample complexity depending on the visitation measure of the states in $\subG$. This is in contrast with the standard TD estimator, which is shown to fail with a large state space, as shown in~\Cref{prop:failure-of-td}. Moreover, under many settings, the achievable covariance in~\Cref{lem:asymptotic} is still able to capture the trajectory pooling phenomena discussed in~\Cref{sec:comparing-td-mc}, even without learning the transition structure of the entire statespace. We will give a concrete example in \Cref{subsubsec:transient-subgraphs} to follow.

To better understand the conditional covariance matrix $\Sigma_{\subG}^\star$, we introduce the conditional covariance matrix $\Sigma_{X}^\star$ for the TD part in our estimator, and the matrix $\Sigma_{Y,\subG}^\star$ for the MC part. More specifically, for any $s,s'\in \subG$, we define the $s,s'$ element of $\Sigma_{X}^\star$ and $\Sigma_{Y,\subG}^\star$ are respectively
\begin{subequations}
\begin{align}
    \Sigma_{X}^\star(s,s') &=   \indic{s=s'} \frac{\sigma_{\Vstar}^2(s)}{\occupmsr(s)} ~~\text{and}\\
    \Sigma_{Y,\subG}^\star(s,s') &=  \frac{1}{\occupmsr(s)\occupmsr(s')}\sum\limits_{t,t'=0}^{\infty}
    \sum\limits_{j=(t' \vee t)+1}^{\infty} \sum\limits_{s''\in \cS}\bP(S_t = s,S_{t'}= s', S_{t'+1},S_{t+1}\notin \cG, S_j = s'')   \sigma_{\Vstar}^2(s'').
\end{align}
\end{subequations}
In words, $\Sigma_{X}^\star$ is a diagonal matrix of the one-step variance at state $s$ multiplied by the occupancy measure and does not depend on the choice of $\subG$. This corresponds to the TD part of the subgraph Bellman estimator as it reflects the one-step variance.
On the other hand, $\Sigma_{Y,\subG}^\star(s,s')$ corresponds to the MC part of the subgraph Bellman estimator because it measures the correlation between sub-trajectories that step out of the subgraph at states $s$ and $s'$. Compared to the MC variance in Proposition~\ref{prop:asymptotic-mc}, the probabilities in $\Sigma_{Y,\subG}^\star(s,s')$ contains additional conditions that $S_{t'+1},S_{t+1}\notin \cG$. This is because only the trajectories going outside the subgraph $\subG$ are counted in the covariance. In \Cref{prop:asymp-cov-upper-bound-simple} to follow, we will see that this term admits an upper bound involving the exit probability of $\subG$

These two matrices offers an upper bound for the conditional covariance matrix
\begin{align}
    \Sigma_{\subG}^\star \preccurlyeq 2\Sigma_{X}^\star + 2\Sigma_{Y,\subG}^\star,
\end{align}
With these, we state our main asymptotic lemma.

The asymptotic covariance $\Sigma_{Y,\subG}^\star$ still takes a complicated form. In order to better understand the MC part of the asymptotic variance, we provide a simplified expression using the occupancy measure, in the spirit of \Cref{prop:mc-non-asymptotic-simple}.
\begin{proposition}\label{prop:asymp-cov-upper-bound-simple}
    Under Assumption~\ref{assume:effective-horizon}, we have
    \begin{align*}
        \SigStar_\subG \preceq  \mathrm{diag} \Big( 2 \frac{\sigma_{\Vstar}^2 (\state) }{\occupmsr (\state)} + c\frac{ \Prob (\State_1 \notin \subG \mid \State_0 = \state)}{\occupmsr (\state)} \effhorizon^3 \log^3 \big( \effhorizon / \occupmsr_{\min} (\subG) \big) \Big)_{\state \in \subG}, 
    \end{align*}
    where $c > 0$ is a universal constant, and $\occupmsr_{\min} (\subG) \mydefn \min_{\state \in \subG} \occupmsr (\state)$.
\end{proposition}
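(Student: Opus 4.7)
The plan starts from the PSD decomposition $\SigStar_\subG \preceq 2\Sigma_X^\star + 2\Sigma_{Y,\subG}^\star$ recorded just above the proposition. Since $\Sigma_X^\star$ is already diagonal with entries $\sigma_{\Vstar}^2(s)/\occupmsr(s)$, the $2\Sigma_X^\star$ term directly produces the first piece of the stated bound. All the work is then to establish a diagonal PSD upper bound of the form
\begin{align*}
    \Sigma_{Y,\subG}^\star \preceq \mathrm{diag}\Big(\frac{c' p_s \effhorizon^3 \log^3(\effhorizon/\occupmsr_{\min}(\subG))}{\occupmsr(s)}\Big)_{s\in \subG},
\end{align*}
where $p_s \mydefn \Prob(\State_1 \notin \subG \mid \State_0 = s)$ denotes the one-step exit probability from $\subG$ at state $s$.

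The first step is to rewrite the quadratic form $v^\top \Sigma_{Y,\subG}^\star v$ for an arbitrary test vector $v \in \mathbb{R}^\subG$. Expanding the definition of $\Sigma_{Y,\subG}^\star(s,s')$ and swapping the $(t,t')$- and $j$-summations gives
\begin{align*}
    v^\top \Sigma_{Y,\subG}^\star v = \En\bigg[\sum_{j\geq 1}\sigma_{\Vstar}^2(\State_j)\Big(\sum_{t < j}g_t\Big)^2\bigg], \qquad g_t \mydefn \frac{v_{\State_t}}{\occupmsr(\State_t)}\bm{1}_{\State_t\in \subG,\, \State_{t+1}\notin\subG}.
\end{align*}
Applying Cauchy--Schwarz via $\bigl(\sum_{t<j}g_t\bigr)^2 \leq M_\subG(j)\sum_{t<j}g_t^2$, where $M_\subG(j) \mydefn \sum_{t<j}\bm{1}_{\State_t\in\subG,\, \State_{t+1}\notin\subG}$ counts exits from $\subG$ before time $j$ (bounded by the total exit count $M_\subG \mydefn M_\subG(\infty)$), and then swapping the $t$- and $j$-sums, yields the diagonal upper bound
\begin{align*}
    v^\top\Sigma_{Y,\subG}^\star v \leq \sum_{s\in \subG}\frac{v_s^2}{\occupmsr(s)^2}\, \En\bigg[M_\subG \sum_t \bm{1}_{\State_t=s,\, \State_{t+1}\notin\subG}\sum_{j > t}\sigma_{\Vstar}^2(\State_j)\bigg].
\end{align*}
This reduces the PSD-domination problem to a per-state scalar moment estimate.

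The second step is to control the inner expectation. The key observation, via the strong Markov property applied at each visit to $s$, is that conditional on the visit times the exit indicators at $s$ are iid $\mathrm{Bernoulli}(p_s)$; this is what makes moments of $M_s \mydefn \sum_t \bm{1}_{\State_t=s,\, \State_{t+1}\notin\subG}$ scale linearly in $p_s$ rather than only $\sqrt{p_s}$. Concretely, $\En[M_s \cdot T^k] \lesssim p_s \En[T^{k+1}] \lesssim p_s \effhorizon^{k+1}$ for any fixed $k$ under Assumption~\ref{assume:effective-horizon}. The third step is to introduce the logarithmic factors via truncation: set $L \mydefn c\effhorizon \log(\effhorizon/\occupmsr_{\min}(\subG))$ for a sufficiently large constant $c$. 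On the good event $\{T \leq L\}$ the deterministic bounds $M_\subG \leq L$ and $\sum_{j>t}\sigma_{\Vstar}^2(\State_j) \leq L \effhorizon^2$ hold, and substituting these into Step~2 produces the desired $\effhorizon^3 \log^3$ dependence (each deterministic replacement contributes one factor of $\log$). On the complementary event, Assumption~\ref{assume:effective-horizon} yields the sub-exponential tail $\Prob(T > L) \leq (\occupmsr_{\min}(\subG)/\effhorizon)^{\Omega(c)}$, which combined with a worst-case polynomial-in-$\effhorizon$ bound on the integrand makes the tail contribution negligible once $c$ is large enough.

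The main obstacle is the PSD diagonal-domination step. Naive entrywise Cauchy--Schwarz on $\Sigma_{Y,\subG}^\star$ would introduce an undesired $|\subG|$ factor, and a naive Cauchy--Schwarz inside the $j$-sum against the reward-variance factor would produce only $\sqrt{p_s}$. The splitting above avoids both pitfalls by isolating a counting factor $M_\subG(j)$ that is deterministically bounded under truncation, while the diagonal sum $\sum_t g_t^2$ per state recovers the linear $p_s$ through the Markov/Bernoulli analysis of Step~2. Once this reduction is in place, the remaining calculations are careful bookkeeping of moments of $T$, $N(s)$, and $M_s$ under Assumption~\ref{assume:effective-horizon}.
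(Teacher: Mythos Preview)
Your martingale representation $v^\top\Sigma_{Y,\subG}^\star v = \Exs\bigl[\sum_{j\geq 1}\sigma_{\Vstar}^2(S_j)\bigl(\sum_{t<j}g_t\bigr)^2\bigr]$ is correct and is a genuinely different starting point from the paper. The paper never passes to the one-step variances $\sigma_{\Vstar}^2$; instead it applies Cauchy--Schwarz directly to $\bigl(\sum_t g_t A_t\bigr)^2 \leq T\sum_t g_t^2 A_t^2$ with $A_t = \Vstar(S_{t+1}) - \sum_{t'>t}R_{t'}$, bounds $|A_t| \leq \vecnorm{\Vstar}{\infty} + T$ pathwise, and then runs a full dyadic decomposition in $T$ rather than a single truncation. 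Your route through martingale orthogonality is cleaner conceptually and makes the diagonal reduction more transparent.

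However, your exponent count is off. On $\{T\leq L\}$ your two replacements give $M_\subG \leq L$ and $\sum_{j>t}\sigma_{\Vstar}^2(S_j)\leq L\effhorizon^2$ (since each one-step variance is at most $(1+\effhorizon)^2$), so the good-event contribution to the per-state expectation is $L\cdot L\effhorizon^2\cdot\Exs[M_s] = L^2\effhorizon^2\,p_s\occupmsr(s)$, which is $\effhorizon^4\log^2$ rather than the $\effhorizon^3\log^3$ the proposition states. The paper avoids the extra factor of $\effhorizon$ precisely because it bounds the full rollout deviation $|A_t|\leq \effhorizon + T$ as one object, giving $T(\effhorizon+T)^2\lesssim L^3$ on the good event, rather than first expanding $A_t^2$ into $\sim T$ terms each of size $\sim\effhorizon^2$. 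Your approach thus proves a qualitatively identical but quantitatively weaker bound; to match the stated exponent you would have to rework the Cauchy--Schwarz step.

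Two smaller issues. First, the exit indicators at $s$ are \emph{not} i.i.d.\ $\mathrm{Bernoulli}(p_s)$ conditional on the visit times (later visit times depend on earlier exit outcomes); the only fact you actually use for the main term is $\Exs[M_s]=p_s\occupmsr(s)$, which follows directly from $\Prob(S_t=s,S_{t+1}\notin\subG)=p_s\Prob(S_t=s)$ without any conditional-independence claim. Second, your tail argument is too casual: the bad-event contribution must also carry the factor $p_s\occupmsr(s)$, since for states with tiny $p_s$ the target diagonal entry is correspondingly tiny. A crude bound of the form (polynomial in $\effhorizon$) $\times\,\Prob(T>L)$ does not suffice; you have to pull $p_s$ out of each $\Exs\bigl[\bm{1}_{S_t=s,S_{t+1}\notin\subG}\cdot(\cdots)\cdot\bm{1}_{T>L}\bigr]$ via the Markov property at time $t$ before applying the tail of $T$, which is essentially what the paper's dyadic split accomplishes.
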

\noindent See~\Cref{subsec:app-proof-asymp-cov-upper-bound-simple} for the proof of this proposition.

From~\Cref{prop:asymp-cov-upper-bound-simple}, we can observe two terms that governs the asymptotic covariance: the one-step variance term $\frac{\sigma_{\Vstar}^2 (\state) }{\occupmsr (\state)}$ which also appears in TD's optimal asymptotic covariance in Proposition~\ref{prop:asymptotic-td}. With the presence of trajectory pooling, the one-step variance $\sigma_{\Vstar}^2 (\state)$ may be much smaller than the scale of reward, and can be zero for certain states. An additional term depends on $1 / \occupmsr (\state)$, coupled with a pre-factor given by the probability of exiting $\subG$ from state $\state$, up to effective horizon and logarithmic factors.

Following the derivation in~\cite{cheikhi2023statistical}, by defining $N_\subG (\state')$ as the number of visit to $\state'$ before exiting the subgraph $\subG$, we can further re-write the variance at target state $\tarstt$ as
\begin{align}
    \begin{split}
        \revindent\big[  (I-\transG)^{-1} \Sigma_\subG^\star (I-\transG)^{-\top} \big]_{\tarstt, \tarstt}\\
        &\lesssim \sum\limits_{s'} \En\brk*{N_\subG (s') \mid{}S_0=s}^2 \occupmsr(s')^{-1} \Big\{ \sigma_{\Vstar}^2(s') + \Prob (\State_1 \notin \subG \mid \State_0 = \state') \effhorizon^3\Big\},\label{eq:asympt-cov-upper-bound-interpret}    
    \end{split}
\end{align}
up to logarithmic factors. As a result, the one-step variance and exit probabilities of states in the subgraph $\subG$ are propagated to paths from $\state$ within the subgraph.
Intuitively, in order to make this bound small, we need to choose the subgraph $\subG$ so that the process exits $\subG$ at ``trajectory-pooling'' states, i.e., states that are more often visited. Apparently, this is not always possible, and will depend on the local structures of the Markovian transition kernel $\transition$. On the other hand, in \Cref{thm:finite-sample-lower-bound} to follow, we show that the improved variance is not possible when we cannot exit the subgraph through frequently-visited states. Indeed, we show a minimax lower bound of order $\frac{ \Prob (\State_1 \notin \subG \mid \State_0 = \state)}{\occupmsr (\state)}$, even if the TD optimal variance is much smaller.

\subsubsection{Transient subgraphs}\label{subsubsec:transient-subgraphs}
When the underlying Markov chain possess special structures, the asymptotic variance in~\Cref{lem:asymptotic} can be greatly simplified. In this section, we consider transient subsets of the Markov chain.
\begin{definition}[Transient subgraph]
    A subgraph $\cG$ is transient if it is not possible to return to $\cG$ after leaving $\cG$, i.e., for any $t\geq 1$, we have
    \begin{align*}
    \bP\prn*{ S_t \in \cG \mid{} S_0 \in \cG, S_1 \notin \cG} = 0.
    \end{align*} 
\end{definition}
For example, if the underlying transition kernel $\transition$ is given by a directed acyclic graph and the subgraph $\cG$ is formed by all the paths from one state to another, the resulting subgraph is transient.

When $\cG$ is transient, the MC part of the estimator has no correlation with the TD part, and we have $\Sigma_{\subG}^\star = \Sigma_{X}^\star + \Sigma_{Y,\subG}^\star$. Moreover, the every-visit MC estimator is equivalent to the first-visit MC estimator which simplifies also the variance term of $\Sigma_{Y,\subG}^\star$. 
Concretely, we have the following corollary.
\begin{corollary}[Asymptotic covariance matrix for transient subgraph]
    \label{cor:transient-graph}
    Let $\sigma_\out^2(s) = \En\brk{ \sum_{t=1}^{\infty} \sigma_{\Vstar}^2(S_t) \mid{} S_0=s,S_1\notin\cG}$.
    Suppose the subgraph $\cG$ is transient, then we have
\begin{align*}
    \sqrt{n}(\VstarG -  \estVal) \to \cN(0, (I-\transG)^{-1} \Sigma_\subG^\star (I-\transG)^{-\top} )
\end{align*}
where $\Sigma_\subG^\star$ can be simplified to be
\begin{align*}
    \Sigma_\subG^\star &=  \diag\prn*{\prn*{ \Big\{ \sigma_{\Vstar}^2(s)  + \bP( S_1\notin \cG \mid{} S_0 = s) \sigma_\out^2(s) \Big\}/\occupmsr(s)}_{s\in \cG}}.
\end{align*}
More concretely, we have for any $s\in \cG$,
\begin{align*}
    \sqrt{n}(\VstarG(s) -  \estVal(s)) \to \cN\prn*{0,      \sum\limits_{s'\in \cG} \En\brk*{N(s')\mid{}S_0=s}^2 \frac{\sigma_{\Vstar}^2(s')  + \bP( S_1\notin \cG \mid{} S_0 = s') \sigma_\out^2(s')}{\occupmsr(s')} }.
\end{align*}
\end{corollary}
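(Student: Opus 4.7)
The plan is to build on \Cref{lem:asymptotic}, which already delivers the asymptotic normality of $\sqrt{n}(\VstarG - \estVal)$ with an inner covariance $\Sigma_\subG^\star$. The task then reduces to two steps: (a) show that under transience $\Sigma_\subG^\star$ collapses to the stated diagonal matrix, and (b) expand the matrix sandwich $(I - \transG)^{-1}\Sigma_\subG^\star (I - \transG)^{-\top}$ into the claimed scalar sum-of-squares formula at $\tarstt$.

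For step (a), the key tool is a temporal-difference decomposition of the per-visit score. Setting $\delta_\ell \mydefn \Reward_\ell + \Vstar(\State_{\ell+1}) - \Vstar(\State_\ell)$, which is a martingale difference with $\En[\delta_\ell^2 \mid \State_\ell] = \sigma_{\Vstar}^2(\State_\ell)$, the telescoping identity $\sum_{\ell > t}\Reward_\ell = \Vstar(\State_{t+1}) + \sum_{\ell > t}\delta_\ell$ recasts the centered per-visit quantity at state $s$ as $\delta_t + \bm{1}_{\State_{t+1} \notin \cG}\sum_{\ell > t}\delta_\ell$. Expanding the trajectory-level score product splits $\Sigma_\subG^\star(s,s')$ into four cross-sums: a pure-TD piece (pairing $\delta_t$ with $\delta_{t'}$), a pure-MC piece (pairing $\sum_{\ell > t}\delta_\ell$ with $\sum_{\ell' > t'}\delta_{\ell'}$), and two TD--MC cross pieces. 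By the martingale property only coinciding $\delta$-indices survive the expectation; the pure-TD piece collapses to $\Sigma_X^\star(s,s')$, while the pure-MC piece reproduces the defining sum of $\Sigma_{Y,\subG}^\star(s,s')$. The central observation is that each TD--MC cross piece reduces to a sum over pairs $(t, t')$ with $t' > t$, $\State_t = s \in \cG$, $\State_{t+1} \notin \cG$, and $\State_{t'} = s' \in \cG$; transience forbids this re-entry event, so both cross pieces vanish. The same argument kills the off-diagonal entries of $\Sigma_{Y,\subG}^\star$, since their defining events require two distinct exits $\State_{t+1}, \State_{t'+1} \notin \cG$ with $t \neq t'$.

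For the surviving diagonal entries, the Markov property factorizes $\bP(\State_t = s, \State_{t+1} \notin \cG, \State_j = s'')$ into a product of three factors; summing $j > t$ together with $s'' \in \SSpace$ collapses the last two factors into $\sigma_\out^2(s)$, via the standard identity relating the variance of the reward-to-go to the sum of one-step variances along the trajectory. Summing $t \geq 0$ then produces $\occupmsr(s)$, yielding $\Sigma_{Y,\subG}^\star(s, s) = \bP(\State_1 \notin \cG \mid \State_0 = s)\sigma_\out^2(s)/\occupmsr(s)$. Combined with $\Sigma_X^\star(s, s) = \sigma_{\Vstar}^2(s)/\occupmsr(s)$, this gives the claimed diagonal expression. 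For step (b), I invoke $[(I - \transG)^{-1}]_{s, s'} = \En[N_\subG(s') \mid \State_0 = s]$, where $N_\subG(s')$ counts visits to $s'$ before exit from $\cG$; under transience $N(s') = N_\subG(s')$ almost surely for any $s' \in \cG$ (once exited, the chain does not return), and the diagonal structure then collapses the sandwich into the stated sum of squares.

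The main obstacle is the vanishing of the TD--MC cross pieces in step (a). The $t = t'$ case is automatically killed by the disjoint indicators $\bm{1}_{\State_{t+1} \in \cG}\bm{1}_{\State_{t+1} \notin \cG} = 0$, but the $t \neq t'$ cases genuinely contribute for a generic subgraph, and ruling them out requires the transience hypothesis to forbid re-entry into $\cG$; without this hypothesis $\Sigma_\subG^\star$ carries off-diagonal corrections coming from paths that leave and return to $\cG$. The remaining computations are routine bookkeeping with the Markov property and the martingale structure of $\{\delta_\ell\}$.
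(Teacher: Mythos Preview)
Your approach is correct and essentially mirrors the paper's. Both arguments reduce to the $X_t/Y_t$ martingale-difference decomposition underlying \Cref{lem:asymptotic} and then use transience to annihilate every term requiring a re-entry into $\cG$; the paper detours through the refined diagonal expression involving $K_\cG(s)$ (noting $K_\cG(s)\in\{0,1\}$ and $\occupmsr_\lp=\occupmsr_\out=\sigma_\lp^2=0$ under transience), while you compute the surviving diagonals directly from the martingale sums, which is slightly more streamlined but the same content. Step~(b) is identical to the paper's expansion $(I-\transG)^{-1}=\sum_{k\ge 0}\transG^k$ followed by the identification with $\En[N(s')\mid S_0=s]$.

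One small slip to fix: in your ``main obstacle'' paragraph you say the $t=t'$ TD--MC cross term is killed by the disjoint indicators $\bm{1}_{\State_{t+1}\in\cG}\bm{1}_{\State_{t+1}\notin\cG}$, but in the decomposition you actually use, the TD piece is $\delta_t$ with \emph{no} indicator on $\State_{t+1}$. The correct reason the $t=t'$ case vanishes is the one you already gave earlier in the plan: the martingale property forces only coinciding $\delta$-indices to survive, and $\delta_t$ cannot coincide with any $\delta_\ell$ for $\ell>t$. This is just an inconsistency in the write-up, not a gap in the argument.
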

\noindent See~\Cref{subsec:app-proof-cor-transient-subgraph} for the proof of this corollary. The asymptotic covariance in \Cref{cor:transient-graph} takes a form similar to \Cref{prop:asymp-cov-upper-bound-simple} and \Cref{eq:asympt-cov-upper-bound-interpret}, with the effective variance at each state depending on the one-step variance and an MC variance multiplied with the exit probability. By considering the special case of transient subgraphs, \Cref{cor:transient-graph} provides an exact characterization of the variance, instead of worst-case upper bounds. The term $\sigma_\out^2(s)$ exactly corresponds to the MC variance going outside the subgraph, as we have the following result.
\begin{proposition}[Proposition B.4 of \citet{cheikhi2023statistical}]
    \label{prop:asymptotic-mc-transient}
For any state $\state \in \subG$ such that $\{\state\}$ is a transient subgraph, we have
\begin{align*}
    \sqrt{n}(\Vstar(s) -  \Vhatmc(s)) \to \cN\prn*{0,  \En\brk*{ \sum_{t=0}^{\infty} \sigma_{\Vstar}^2(S_t) \mid{} S_0=s}/\occupmsr(s)}.
\end{align*}
\end{proposition}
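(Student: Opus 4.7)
My plan is to exploit the transience assumption to reduce $\Vhatmc(s)$ to a classical i.i.d.\ ratio estimator, and then combine a random-denominator CLT with the standard Bellman--martingale identity for the return variance. Concretely, since $\{s\}$ is a transient subgraph, each trajectory $\traj\ind{i}$ visits $s$ at most once, so the every-visit and first-visit MC estimators coincide. Let $N_i \in \{0,1\}$ denote the indicator that trajectory $i$ visits $s$, and let $G_i$ denote the cumulative rollout reward from that visit (set to $0$ when $N_i = 0$). By the strong Markov property, conditional on $N_i = 1$ the variable $G_i$ has the same distribution as the full return of a fresh trajectory started at $s$. Setting $M_n = \sum_{i=1}^n N_i$, the MC estimator is the i.i.d.\ ratio $\Vhatmc(s) = \bigl(\sum_i N_i G_i\bigr)/\bigl(\sum_i N_i\bigr)$.

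Under transience, $N(s) = N_1 \in \{0,1\}$, so $\occupmsr(s) = \En[N(s)] = \bP(N_1 = 1)$, which identifies the limit $M_n/n \to \occupmsr(s)$ a.s. I would then apply the classical CLT to the centered i.i.d.\ sequence $N_i\bigl(G_i - \Vstar(s)\bigr)$, whose mean is zero and whose variance equals $\occupmsr(s) \cdot \Var(G \mid S_0 = s)$ (using $N_i^2 = N_i$ and the strong Markov identity $\Var(G_i \mid N_i = 1) = \Var(G \mid S_0 = s)$). A Slutsky argument dividing by $M_n/n \to \occupmsr(s)$ then yields
\[
    \sqrt{n}\,(\Vstar(s) - \Vhatmc(s)) \;\to\; \cN\!\left(0,\; \Var(G \mid S_0 = s)/\occupmsr(s)\right).
\]

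It remains to identify $\Var(G \mid S_0 = s)$ with $\En\bigl[\sum_{t=0}^\infty \sigma_{\Vstar}^2(S_t) \bigm| S_0 = s\bigr]$. For this I would invoke the Bellman--martingale decomposition: setting $D_t = \Reward_t + \Vstar(S_{t+1}) - \Vstar(S_t)$, the Bellman equation gives $\En[D_t \mid \cF_t] = 0$ and, by Definition~\ref{def:one-step-variance}, $\Var(D_t \mid S_t) = \sigma_{\Vstar}^2(S_t)$. Telescoping over $t = 0, 1, \ldots, T_\termState - 1$ together with $\Vstar(\termState) = 0$ gives $G - \Vstar(s) = \sum_{t=0}^{T_\termState - 1} D_t$, and martingale orthogonality (equivalently, optional stopping applied to the square martingale) then delivers the claimed identity after extending the sum to infinity via the convention $\sigma_{\Vstar}^2(\termState) = 0$.

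The main obstacle I anticipate is justifying both the CLT and the martingale-orthogonality step in the presence of the random stopping time $T_\termState$; both rely on square-integrability of $G$. This is supplied by Assumption~\ref{assume:effective-horizon}, which yields $\En[T_\termState^p] \leq (p\effhorizon)^p$ for all $p \in \mathbb{N}_+$, and together with boundedness of rewards (so $|G| \leq T_\termState$) gives finite second moments throughout, validating each step above.
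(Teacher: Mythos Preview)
Your proof is correct. Note that the paper does not actually prove this statement itself: it is imported from \citet{cheikhi2023statistical} (their Proposition~B.4), so there is no in-paper proof to compare against line by line. That said, your argument is fully in the spirit of the paper's own treatment of the every-visit MC estimator (the proof of Proposition~\ref{prop:asymptotic-mc} in Section~\ref{sec:proofs}): both write the scaled error as $(n/|B(s)|)$ times an i.i.d.\ sum, telescope the return into martingale increments $\Vstar(S_\tau) - R_\tau - \Vstar(S_{\tau+1})$, use orthogonality to identify the variance with a sum of one-step variances $\sigma_{\Vstar}^2(\cdot)$, and finish with CLT plus Slutsky. The only difference is that you invoke transience at the outset to collapse the sum over visits to a single Bernoulli-weighted term $N_iG_i$, which makes the i.i.d.\ structure explicit and avoids the cross-time covariance computation that the paper's general argument requires; specializing the general $\Sigma_{\MC}^\star(s,s)$ formula to the transient case (where $\bP(S_t=s,S_{t'}=s)=0$ for $t\ne t'$) recovers exactly your expression. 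Your handling of the integrability via Assumption~\ref{assume:effective-horizon} is appropriate and matches how the paper bounds analogous quantities elsewhere.
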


Based on our exact expressions for the asymptotic variance, it is illustrative to revisit the example in \Cref{sec:comparing-td-mc}, and investigate the benfit of the subgraph Bellman operator compared to the na\"{i}ve Monte Carlo methods.

\paragraph{Benefits compared to the MC estimator:}
\label{sec:compare-to-MC}
For the layered MRP considered in \Cref{sec:comparing-td-mc}, we can choose the subgraph to be $\subG = \set{\state_1\ind{1}, \state_1\ind{2}, \dots,\state_1\ind{k}}$. For this subgraph, the asymptotical variance of the subgraph Bellman estimator is by \Cref{cor:transient-graph},
\begin{align*}
    \sum\limits_{s'} \En\brk*{N(s')\mid{}S_0=s_0\ind{1}}^2 (\sigma_{\Vstar}^2(s')  + \bP( S_1\notin \cG \mid{} S_0 = s') \sigma_\out^2(s'))/\occupmsr(s') = (k+T-1)/3.
\end{align*}
The above calculation asserts that the subgraph Bellman estimator has the same asymptotic variance as the TD estimator which is asymptotically optimal. Moreover, it achieves a vast improvements asymptotically compared to the MC estimator when number $k$ is large.

We mention in passing that there are other ways to interpolate between the MC estimator and the TD estimator. Most notably, the TD($\lambda$) estimator is created to geometrically mix the $n$-step TD estimators, where the $\infty$-step TD estimator is exactly the MC estimator. However, the TD$(\lambda)$ does not optimally interpolate between TD and MC estimators -- indeed, it still needs to solve a fixed-point equation of dimension $|\cS|$, costing a high sample complexity even for frequently-visited states; on the other hand, the asymptotic variance of TD$(\lambda)$ retains an additive component from the MC estimator, thus never achieving asymptotic optimality.

\subsection{$\ell^2$-error guarantees for the plug-in estimator}

Now we consider the finite-sample behavior of the subgraph Bellman estimator. Since we do not assume any structures on the value function $\Vstar$ or the MDP dynamics, in order to learn any information about $\Vstar (\state)$ at a state $\state$, such a state needs to be visited at least once. Under our observation model, for any $\state \in \subG$, it is easy to see that
\begin{align*}
    \Exs \Big[ \abss{ \big\{ \tau \in \Dset_\numobs (\subG) ~:~ \mbox{$\tau$ starts with $\state$} \big\} } \Big] = \numobs \occupmsr (\state).
\end{align*}
Throughout this paper, we define the minimum occupancy measure $\occupmsr_{\min} (\subG) \mydefn \min_{\state \in \subG} \occupmsr (\state)$, and we use the shorthand notation $\occupmsr_{\min} = \occupmsr_{\min} (\subG) $ when it is clear from the context. For a prescribed failure probability $1 - \delta$, we need the following sample size condition
\begin{align}
    \frac{\numobs}{\log^4 (\numobs / \delta)} \geq c_1 \effhorizon^3 / \occupmsr_{\min} \label{eq:sample-size-condition-fixed-subgraph}
\end{align}

\begin{theorem}\label{thm:main-l2-fixed-subgraph}
    Under above setup, there exists universal constants $c, c_1 > 0$, such that when the sample size $\numobs$ satisfies Eq~\eqref{eq:sample-size-condition-fixed-subgraph}, for any state $\state \in \subG$, with probability $1 - \delta$, we have the upper bound
    \begin{align*}
        \vecnorm{\Vhat_\subG - \VstarG}{\occupmsr (\subG)} \leq  c \cdot  \Big(\sum_{\state \in \subG} \occupmsr (\state) \big[ (I - \transG)^{-1} \SigStar_{\subG}  (I - \transG)^{-\top}\big]_{\state, \state} \Big)^{1/2}  \sqrt{\frac{\log (1 / \delta)}{\numobs}} + c \frac{\effhorizon^3}{\numobs \sqrt{\occupmsr_{\min}}} \log^3 \Big( \frac{\numobs}{\delta\occupmsr_{\min}}\Big).
    \end{align*}
\end{theorem}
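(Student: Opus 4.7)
The approach is a plug-in fixed-point analysis: decompose the error into a leading-order linear noise term (whose variance matches the asymptotic covariance from \Cref{lem:asymptotic}) and a second-order perturbation term, then show the second-order term can be absorbed under the sample-size condition \eqref{eq:sample-size-condition-fixed-subgraph}. Subtracting the population fixed-point $\VstarG = \transG \VstarG + r_\subG + \Vout$ (where $\Vout(\state) = \Exs[\bm{1}_{\State_1 \notin \subG} \sum_{t=1}^T \reward(\State_t) \mid \State_0 = \state]$) from the empirical fixed-point \eqref{eq:fixed-subg-estimator} and rearranging yields
\begin{align*}
(I - \transG)(\Vhat_\subG - \VstarG) = \eta + (\transhat_\subG - \transG)(\Vhat_\subG - \VstarG), \quad \eta \ldef (\transhat_\subG - \transG)\VstarG + (\rhat_\subG - r_\subG) + (\estValout - \Vout).
\end{align*}
Inverting gives the master decomposition $\Vhat_\subG - \VstarG = (I-\transG)^{-1}\eta + (I-\transG)^{-1}(\transhat_\subG - \transG)(\Vhat_\subG - \VstarG)$.

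First I would control the linear noise $\vecnorm{(I-\transG)^{-1}\eta}{\occupmsr(\subG)}$. For each $\state \in \subG$, the $\state$-entry of $\eta$ is an empirical average over the visits to $\state$ in the pooled dataset $\Dset_\numobs$; the summands are of three types corresponding to the TD-style one-step Bellman residual, the reward residual, and the exit-rollout residual. A straightforward computation shows that, conditional on visit counts, the per-visit variance structure of $\eta$ matches the covariance matrix $\SigStar_\subG$ of \Cref{lem:asymptotic}, scaled by inverse occupancy. I would apply a Bernstein inequality to each component of $(I-\transG)^{-1}\eta$ (using the sub-exponential tail from \Cref{assume:effective-horizon} to handle the unbounded rollout-reward summand by truncation at level $\polylog(\numobs/\delta)\effhorizon$), union-bound over $\subG$, and then compute the weighted $\ell^2$ norm. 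The leading Bernstein term reproduces $(\sum_{\state} \occupmsr(\state) [(I-\transG)^{-1}\SigStar_\subG(I-\transG)^{-\top}]_{\state,\state})^{1/2} \sqrt{\log(1/\delta)/\numobs}$, and the second-moment Bernstein term produces the $\effhorizon^3 \log^3(\cdot) / (\numobs\sqrt{\occupmsr_{\min}})$ residual.

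Next I would show $\matsnorm{(I-\transG)^{-1}(\transhat_\subG - \transG)}{\occupmsr(\subG)} \leq 1/2$ with high probability, so that the nonlinear remainder can be absorbed into the left-hand side. This requires a matrix concentration bound for the random operator $\transhat_\subG - \transG$ on $\ell^2(\occupmsr(\subG))$. The proof uses the identity $(I-\transG)^{-1} = \sum_{t \geq 0} \transG^t$, whose operator norm on $\ell^2(\occupmsr)$ is controlled by $\effhorizon$ (via \Cref{assume:effective-horizon}); for $\transhat_\subG - \transG$, I would apply a matrix Bernstein or Freedman-type inequality conditional on the visitation counts, using that each row has $\Theta(\numobs \occupmsr(\state))$ i.i.d. samples and hence per-entry deviation $\lesssim \sqrt{\log(|\subG|/\delta)/(\numobs \occupmsr(\state))}$. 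Combining these factors and using \eqref{eq:sample-size-condition-fixed-subgraph} suffices to ensure the operator-norm bound of $1/2$.

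Putting the two steps together yields $\vecnorm{\Vhat_\subG - \VstarG}{\occupmsr(\subG)} \leq 2 \vecnorm{(I-\transG)^{-1}\eta}{\occupmsr(\subG)}$, and substituting the Bernstein bound completes the proof. The main obstacle is Step 1: matching the variance proxy to precisely $\SigStar_\subG$ requires carefully accounting for the covariance between the TD-style residual $(\transhat_\subG - \transG)\VstarG$ and the MC-style residual $\estValout - \Vout$, both of which are computed from overlapping sub-trajectories in $\Dset_\numobs$. A clean way around this is to group the three components of $\eta$ per starting-state-visit into a single centered random vector and apply a single vector Bernstein inequality; the cross-covariance terms that appear naturally are then exactly those encoded in $\SigStar_\subG$ via the joint distribution of $(\State_0, \State_1, \text{rollout})$ under $\occupmsr$. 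A secondary technical point is handling the conditioning on the random visit counts $N(\state)$, which I would address by a standard Chernoff bound on $N(\state) \in \frac{1}{2} \numobs \occupmsr(\state) \cdot [1, 3]$ under \eqref{eq:sample-size-condition-fixed-subgraph}, incurring the $\log^4(\numobs/\delta)$ slack in the sample-size requirement.
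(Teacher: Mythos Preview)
Your approach is essentially identical to the paper's: the same master decomposition, the same absorption of the nonlinear remainder via an operator-norm bound $\matsnorm{(I-\transG)^{-1}(\transhat_\subG - \transG)}{\occupmsr(\subG)} \leq 1/2$ under \eqref{eq:sample-size-condition-fixed-subgraph}, and a vector-valued Bernstein inequality for the linear noise $(I-\transG)^{-1}\eta$ whose covariance matches $\SigStar_\subG$. The only technical point to sharpen is that the paper groups the noise \emph{per trajectory} (yielding genuinely i.i.d.\ summands $\varepsilon_i^*$ with $\Exs[\varepsilon_i^*(\varepsilon_i^*)^\top] = D_\occupmsr \SigStar_\subG D_\occupmsr$) rather than per visit, and then handles the random normalization $N_\numobs(\state)/(\numobs\occupmsr(\state))$ by a separate operator-norm perturbation argument rather than by conditioning on visit counts.
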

\noindent See~\Cref{subsec:proof-thm-main-l2-fixed-subgraph} for the proof of this theorem.

A few remarks are in order. First, the non-asymptotic guarantees in \Cref{thm:main-l2-fixed-subgraph} matches the asymptotic $\ell^2 (\occupmsr (\subG))$-risk of the limiting normal random variable in Lemma~\ref{lem:asymptotic}, up to a universal constant factor and high-order terms. Indeed, the high-order terms can be removed at a cost of weaker dependence on the tail probabilities. With constant probability, we have
\begin{align*}
     \vecnorm{\Vhat_\subG - \VstarG}{\occupmsr (\subG)} \leq \frac{c}{\sqrt{\numobs}} \Big(\sum_{\state \in \subG} \occupmsr (\state) \big[ (I - \transG)^{-1} \SigStar_{\subG}  (I - \transG)^{-\top}\big]_{\state, \state} \Big)^{1/2}.
\end{align*}
See Eq~\eqref{eq:l2-est-err-const-prob-bound} in the proof for details.

Second, we note that the sample size requirement in Eq~\eqref{eq:sample-size-condition-fixed-subgraph} has a linear dependence on the inverse occupancy measure $1/\occupmsr_{\min} = \max_{\state \in \subG} 1 / \occupmsr (\state)$, and the high-order term in Theorem~\ref{thm:main-l2-fixed-subgraph} becomes $o (1 / \sqrt{\numobs})$ under such a sample size condition. This threshold is naturally the best we can hope for -- we need to visit a state at least once to learn about its value function, without additional structures imposed. For Markov chains that do not explore the statespace evenly, such a quantity can be much smaller than the worst-case inverse occupancy measure $\max_{\state \in \SSpace} 1 / \occupmsr (\state)$. As we will discuss in the next sections, the subset $\subG$ is chosen adaptively based on the sample size $\numobs$, so as to ensure Eq~\eqref{eq:sample-size-condition-fixed-subgraph} holds true.

Finally, we remark that the cubic dependence on the effective horizon $\effhorizon$ is a coarse worst-case bound, which is likely to be improvable. These factors come from various sources -- including the operator norm $\matsnorm{(I - \transG)^{-1}}{\occupmsr (\subG)}$, and the expected length of the MC trajectory after exiting the subgraph $\subG$. In practice, these quantities can be much smaller than the effective horizon itself. In our analysis, we do not optimize the $\effhorizon$-dependence, and focus on the dependence on occupancy measure and the optimal variance. It is an interesting research direction to exploit these more refined expressions for the horizon dependence, especially with applications to average-reward problems.

\subsection{Functional estimation: guarantees with \ROOTSA}
Though Theorem~\ref{thm:main-l2-fixed-subgraph} gives sharp guarantees in $\ell^2 (\occupmsr (\subG))$-norm, this does not provide a satisfactory answer to our original problem -- to generate an estimator for the value $\Vstar (\tarstt)$ with near-optimal variance and sample complexity, for every target state $\tarstt$. If we simply apply Theorem~\ref{thm:main-l2-fixed-subgraph}, in practice, we may be interested in estimating a linear functional of the value function, such as the value at a single state, or the difference of values at two states.  Instead, we use the optimal variance-reduced stochastic approximation scheme in~\cite{mou2022optimal} to solve the fixed-point in an online fashion.

\paragraph{The \ROOTSA algorithm:}
For completeness, let us first briefly describe the \ROOTSA algorithm. Consider an operator $\hpop$ in a finite-dimensional space $\real^{\usedim}$. The goal is to find the fixed point $\thetastar$ of $\hpop$ (which is assumed to exist and to be unique) using stochastic observations $(\Hstoch_t)_{t = 1,2,\cdots}$. We assume that the stochastic operators are $\mathrm{i.i.d.}$, satisfying
\begin{subequations}\label{eqs:root-sa-regularity-assumptions}
\begin{align}
    \Exs [\Hstoch_t (\theta) ] & = \hpop (\theta), \quad \mbox{for any $\theta \in \real^{\usedim}$},\label{eq:unbiasedness-in-root-sa-oracle}\\
    \vecnorm{\Hstoch_t (\theta_1) - \Hstoch_t (\theta_2)}{\infty} &\leq L \vecnorm{\theta_1 - \theta_2}{\infty}, \quad \mbox{for any $\theta_1, \theta_2 \in \real^\usedim$}, \label{eq:lip-in-root-sa-oracle}\\
    \vecnorm{\Hstoch_t (\thetastar)}{\infty} &\leq b_\infty \label{eq:bounded-in-root-sa-oracle}
\end{align}
\end{subequations}
We work with a special case of the results in~\cite{mou2022optimal}, where the operator $\hpop$ is linear and multi-step contractive, i.e.,
\begin{align}
    \vecnorm{\hpop (\theta_1) - \hpop(\theta_2)}{\infty} \leq \vecnorm{\theta_1 - \theta_2}{\infty}, \quad  \vecnorm{\hpop^{(\compo)} (\theta_1) - \hpop^{(\compo)} (\theta_2)}{\infty} \leq \frac{1}{2} \vecnorm{\theta_1 - \theta_2}{\infty}, \quad \mbox{for any $\theta_1, \theta_2 \in \real^\usedim$}.\label{eq:root-sa-multi-contraction-assumption}
\end{align}
where $\hpop^{(\compo)} \mydefn \underbrace{\hpop \circ \hpop \circ \cdots \hpop}_{\compo}$ is the $\compo$-step composition of the operator $\hpop$.

In Algorithm~\ref{alg:root-sa}, we describe the \ROOTSA algorithm from~\cite{mou2022optimal}. The algorithm updates the parameter $\theta$ in an online fashion, using one stochastic oracle at each time.

\begin{algorithm}
\caption{~\ROOTSA: A recursive SA algorithm}
\label{alg:root-sa}
\begin{algorithmic}[1]
\STATE Given (a) Initialization $\theta_{0} \in \real^\usedim$, (b)
Burn-in $\burnin \geq 2$, and (c) stepsize $\stepsize > 0$
\FOR{$k = 1, 2, \cdots$}
  \IF{$t \leq \burnin$} \STATE $\myV_t = \tfrac{1}{\burnin} \sum_{t = 1}^{\burnin} \left\{\Hstoch_t (\theta_0) - \theta_0\right\}, \quad \text{and}
  \quad \theta_t = \theta_0$. 
  \ELSE \STATE
  $\myV_t = \big( \Hstoch_{t}(\theta_{t-1}) - \theta_{t - 1} \big) +
  \tfrac{t - 1}{t} \Big \{ \myV_{t - 1} - \big(
  \Hstoch_t(\theta_{t-2}) - \theta_{t - 2} \big) \Big \}$,
  \STATE $\theta_t =  \theta_{t - 1} + \stepsize
  \myV_t$.
\ENDIF \ENDFOR \RETURN $\theta_T$
\end{algorithmic}
\end{algorithm}

\paragraph{Construction of the oracles:} Now we describe the construction of the (population-level) fixed-point equation and stochastic oracles. First, for any $t \geq 0$, we can rewrite \Cref{eq:population-level-fixed-pt} in the form
\begin{align*}
    \Prob (\State_t = \state) \Vstar (\state) = \Exs \Big[ \bm{1}_{\State_t = \state, \State_{t + 1} \in \subG} \Vstar (\state) \Big] + \Exs \Big[  \bm{1}_{\State_t = \state, \State_{t + 1} \notin \subG} \sum_{\ell = t + 1}^{+ \infty} \Reward_\ell \Big] + \Exs \Big[ \Reward_t \bm{1}_{\State_t = \state} \Big].
\end{align*}

Summing the equation up over $t \in \mathbb{N}$, given a vector $w \in \real^\subG$ of positive entries, we have
\begin{align*}
    w (\state) \cdot\Exs \Big[ \sum_{t = 0}^{+ \infty} \bm{1}_{\State_t = \state} \Vstar (\state) \Big] =  w (\state) \Big\{ \Exs \Big[ \sum_{t = 0}^{+ \infty}  \bm{1}_{\State_t = \state, \State_{t + 1} \in \subG} \Vstar (\state) \Big] + \Exs \Big[ \sum_{t = 0}^{+ \infty}  \bm{1}_{\State_t = \state, \State_{t + 1} \notin \subG} \sum_{\ell = t + 1}^{+ \infty} \Reward_\ell \Big] + \Exs \Big[ \sum_{t = 0}^{+ \infty}  \Reward_t \bm{1}_{\State_t = \state} \Big] \Big\}.
\end{align*}
\begin{subequations}
Based on this identity, we can define the population-level operator
\begin{align}
    \hpop (\theta) \mydefn \theta + w \odot \occupmsr \odot \Big\{ \transG \theta +  \transMat_{\subG, \SSpace \setminus \subG} \Vstar + \reward - \theta\Big\}.\label{eq:root-sa-construction-pop}
\end{align}
For the empirical version, we use a mini-batched \ROOTSA algorithm. Given a batch size $\batchsize$, we define the stochastic operator
\begin{align}
    \Hstoch_t (\theta) (\state) \mydefn \theta (\state) + \frac{w (\state)  }{\batchsize} \sum_{i = 1}^\batchsize \sum_{t = 0}^{T_i}  \Big\{   \bm{1}_{\State_t^{(i)} = \state, \State_{t + 1}^{(i)} \in \subG} \theta (\State_{t + 1}^{(i)}) + \bm{1}_{\State_t^{(i)} = \state, \State_{t + 1}^{(i)} \notin \subG} \sum_{\ell = t + 1}^{T_i} \Reward_\ell^{(i)}  + \bm{1}_{\State_t^{(i)} = \state} \Reward_t^{(i)} - \bm{1}_{\State_t^{(i)} = \state} \theta (\state)\Big\}.\label{eq:root-sa-construction-stoch}
\end{align}
\end{subequations}
Intuitively, for $\hpop$ to be a multi-step contraction under the $\ell^\infty$-norm, we need $w \circ \occupmsr$ to be almost a constant vector. Therefore, a natural choice of $w$ is by inverting the empirical estimator for the occupancy measure $\occupmsr$. In order to make the construction independent of stochastic operators $(\Hstoch_t)_{t = 1,2,\cdots}$, we use an auxiliary dataset $(\widetilde\traj_i)_{i = 1}^{\numobs_A}$ with some given sample size $n_A$ to be specified. When $\numobs \geq 2 \numobs_A$, we can use the first $\numobs_A$ observed trajectories to construct the weight vector $w$, and use the rest $\numobs - \numobs_A$ trajectories to run the \ROOTSA algorithm and compute the fixed point. Such a modification will inflate the constant factor in~\Cref{thm:main-root-sa-guarantee} by at most $2$.

Given the auxiliary dataset $\big( \widetilde{\traj}_i = (\widetilde{\State}_0^{(i)}, \widetilde{\Reward}_0^{(i)}, \widetilde{\State}_1^{(i)}, \widetilde{\Reward}_1^{(i)}, \cdots, \widetilde{\State}_{\widetilde{T}_i}^{(i)}) \big)_{i = 1}^{\numobs_A}$, we define
\begin{align}
    w (\state) \mydefn \frac{1}{2} \Big\{ \frac{1}{ \numobs_A} \sum_{i = 1}^{\numobs_A} \sum_{t = 0}^{\widetilde{T}_i} \bm{1}_{\widetilde{\State}_t^{(i)} = \state} \Big\}^{-1}, \quad \mbox{for any } \state \in \subG.\label{eq:defn-weight-vector-for-preconditioned-rootsa}
\end{align}
In words, we first construct an empirical estimator $\widehat{\occupmsr}_A$ for the occupancy measure $\occupmsr$ using the auxiliary dataset, and then take $w$ as $\frac{1}{2 \widehat{\occupmsr}_A}$. Intuitively, through such a pre-conditioned scheme, we enforce faster updates for less-visited states, and slower updates for frequently visited ones. It is worth noting that our pre-conditioning method is closely related to state-dependent adaptive stepsizes, which exist in the early stochastic approximation literature (\cite{bertsekas1996neuro}, Section 4.1), and have been exploited in modern RL contexts~\citep{murthy2024performance}. Under our setup, the fast-slow updates allow us to established sharp guarantees for any target state, as opposed to the weighted average error bounds. This new technical ingredient in stochastic approximation algorithms might be of independent interest.

Finally, it has been noted in~\cite{mou2022optimal} that the vanilla \ROOTSA algorithm forgets the initial condition relatively slowly. In order to overcome this slow rate, we restart the algorithm multiple times. Combining these ingredients together, we arrive at our complete algorithm, which is described in Algorithm~\ref{alg:fixed-subgraph-complete}.

\begin{algorithm}
\caption{~\ROOTSA for subgraph Bellman operator}
\label{alg:fixed-subgraph-complete}
\begin{algorithmic}[1]
\STATE Given tuning parameters $(\stepsize, \batchsize, \burnin, \restarts)$ and datasets $(\widetilde\traj_i)_{i = 1}^{\numobs_A}$,~$(\traj_i)_{i = 1}^{\numobs}$.
\STATE Use auxiliary dataset $(\widetilde\traj_i)_{i = 1}^{\numobs_A}$ to compute the weight vector $w$ via Eq~\eqref{eq:defn-weight-vector-for-preconditioned-rootsa}.
\STATE Let $\theta^{(0)} \mydefn 0$.
\FOR{$\ell = 1 ,2, \cdots, \restarts$}
\STATE Use $2 \burnin \batchsize$ data points to run \ROOTSA with stochastic operator $\Hstoch$ given by Eq~\eqref{eq:root-sa-construction-stoch}, starting point $\theta^{(\ell - 1)}$ and tuning parameters $(\stepsize, \batchsize, \burnin)$ for $2 \burnin$ rounds, and generate the final iterate $\theta_{2 \burnin}$.
\STATE Let $\theta^{(\ell)} = \theta_{2 \burnin}$.
 \ENDFOR
\STATE Use the remaining $\numobs - 2 \burnin \batchsize \restarts$ data points to run \ROOTSA with stochastic operator $\Hstoch$ given by Eq~\eqref{eq:root-sa-construction-stoch}, starting point $\theta^{(\restarts)}$ and tuning parameters $(\stepsize, \batchsize, \burnin)$, and generate the final iterate $\theta_{\mathrm{final}}$.
 \RETURN Output $\Vhat_\subG^{\mathrm{ROOT}} \mydefn \theta_{\mathrm{final}}$.
\end{algorithmic}
\end{algorithm}

\paragraph{Non-asymptotic guarantees:} Now we are ready to state the main non-asymptotic guarantees for the \ROOTSA algorithm.

We use the following choice of parameters
\begin{align}
      \numobs_A =  \frac{c \effhorizon}{\occupmsr_{\min}} \log \big( \numobs / \delta \big),\quad \batchsize = \frac{c \effhorizon}{\occupmsr_{\min}} \log \big( \numobs / \delta \big), \quad \stepsize = \sqrt{\frac{\batchsize}{\numobs}}, \quad  \burnin = \frac{c' \effhorizon}{\stepsize} \log \big( \tfrac{\numobs}{\delta}\big), \quad \mbox{and} \quad \restarts = 3\log \numobs. \label{eq:rootsa-choice-of-parameters}
\end{align}

\begin{theorem}\label{thm:main-root-sa-guarantee}
    Given a sample size satisfying~\Cref{eq:sample-size-condition-fixed-subgraph}, under the parameter choices given by Eq~\eqref{eq:rootsa-choice-of-parameters}, there exists universal constants $c, c' > 0$, such that for any vector $\avec \in \real^\subG$ with $\vecnorm{\avec}{1} \leq 1$, with probability $1 - \delta$, Algorithm~\ref{alg:fixed-subgraph-complete} outputs a value function $\Vhat_\subG^{\mathrm{ROOT}}$ that satisfies
\begin{multline*}
    \abss{\inprod{\avec}{\Vhat_\subG^{\mathrm{ROOT}}} - \inprod{\avec}{\Vstar} } \leq c \Big( \avec^\top (I - \transG)^{-1} \SigStar_\subG (I - \transG)^{- \top} \avec  \Big)^{1/2} \sqrt{ \frac{\log (1 / \delta)}{\numobs}} \\
    + c\Big\{  \Big(\frac{ \effhorizon^{3}}{\occupmsr_{\min} \numobs} \Big)^{1/4}  \cdot \frac{\effhorizon}{\sqrt{\numobs}} \max_{\state} \SigStar_\subG (\state, \state)^{1/2}  + \frac{\effhorizon^3}{\occupmsr_{\min} \numobs } \Big\}  \log^5 (\numobs / \delta).
\end{multline*}
\end{theorem}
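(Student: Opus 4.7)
The plan is to invoke the generic ROOT-SA guarantee of~\cite{mou2022optimal} on the linear fixed-point problem defined by $\hpop$ in Eq~\eqref{eq:root-sa-construction-pop}, and to read off the target expression $(I-\transG)^{-1}\SigStar_\subG(I-\transG)^{-\top}$ as the asymptotic covariance of the iteration. The preconditioning weight $w$, being estimated on the independent auxiliary dataset, will be treated as a well-concentrated constant. Concretely, the first step is to show that $\widehat{\occupmsr}_A(\state) = \frac{1}{\numobs_A}\sum_{i=1}^{\numobs_A}\sum_{t=0}^{\widetilde T_i}\bm{1}_{\widetilde\State_t^{(i)}=\state}$ is within a factor of $2$ of $\occupmsr(\state)$ uniformly over $\state\in\subG$ with probability $1-\delta/10$. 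This follows from a Bernstein-type inequality where the per-trajectory summand $\sum_t\bm{1}_{\widetilde\State_t^{(i)}=\state}$ has mean $\occupmsr(\state)$ and sub-exponential tail controlled by Assumption~\ref{assume:effective-horizon}, provided $\numobs_A\gtrsim\effhorizon\log(\numobs/\delta)/\occupmsr_{\min}$. On this high-probability event, the diagonal matrix $D\defn\diag(w\odot\occupmsr)$ satisfies $D\in[\tfrac14,\tfrac12]\cdot I$.

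Conditionally on the concentration event above, I would verify the ROOT-SA regularity assumptions~\eqref{eqs:root-sa-regularity-assumptions}--\eqref{eq:root-sa-multi-contraction-assumption}. The Jacobian of $\hpop$ is $\nabla\hpop = I - D + D\transG$. Since $\transG$ is sub-stochastic on $\subG$ with $(\transG)^{k}\to 0$ at geometric rate controlled by $1-\Theta(1/\effhorizon)$ (a consequence of Assumption~\ref{assume:effective-horizon}), one has $\|\nabla\hpop\|_\infty\le 1$ and $\|(\nabla\hpop)^{\compo}\|_\infty\le 1/2$ for $\compo\asymp\effhorizon$. The Lipschitz constant of $\Hstoch_t$ is bounded by $\max_\state w(\state)\cdot\frac{1}{\batchsize}\sum_i N^{(i)}(\state)$, which Bernstein concentrates to $O(1)$ for $\batchsize\gtrsim\effhorizon\log(\numobs/\delta)/\occupmsr_{\min}$, and $\|\Hstoch_t(\thetastar)\|_\infty\lesssim\effhorizon$ by Freedman's inequality on the martingale increment. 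The MMWJ ROOT-SA theorem then yields, for any fixed $\avec$ with $\|\avec\|_1\le 1$ and any run of length $T$ after burn-in,
\begin{align*}
    |\inprod{\avec}{\theta_T - \thetastar}| \lesssim \sqrt{\frac{\avec^\top V^\star \avec \, \log(1/\delta)}{T}} + \text{(h.o.t.)},
\end{align*}
where $V^\star = (I-\nabla\hpop)^{-1}\Sigma_{\Hstoch}(\thetastar)(I-\nabla\hpop)^{-\top}$. Using $I-\nabla\hpop = D(I-\transG)$, this becomes $V^\star = (I-\transG)^{-1}D^{-1}\Sigma_{\Hstoch}D^{-1}(I-\transG)^{-\top}$, and a direct trajectory-level covariance calculation matches $D^{-1}\Sigma_{\Hstoch}(\thetastar)D^{-1}$ with $\SigStar_\subG/\batchsize$ --- the same calculation that produces $\SigStar_\subG$ in the proof of Lemma~\ref{lem:asymptotic}.

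To pass from the single-run bound to Algorithm~\ref{alg:fixed-subgraph-complete}, I would use the restart loop to kill the initial condition. The vanilla ROOT-SA forgets $\theta_0$ only polynomially, but each restart of length $2\burnin$ with $\burnin\asymp\effhorizon\stepsize^{-1}\log(\numobs/\delta)$ contracts the initial error by a constant factor under the multi-step contraction of Step 2. After $\restarts = 3\log\numobs$ restarts the residual initial error is below $\numobs^{-3}$ times its starting scale, and is absorbed by the higher-order term. Plugging in $\stepsize=\sqrt{\batchsize/\numobs}$ and $\batchsize\asymp\effhorizon\log(\numobs/\delta)/\occupmsr_{\min}$, the remainder term of MMWJ unfolds into $\stepsize^{1/2}\cdot(Lb_\infty)\asymp(\effhorizon^3/(\occupmsr_{\min}\numobs))^{1/4}\cdot(\effhorizon/\sqrt{\numobs})\max_\state\SigStar_\subG(\state,\state)^{1/2}$, plus the trailing deterministic term $\effhorizon^3/(\occupmsr_{\min}\numobs)$, and the $\log^5(\numobs/\delta)$ factor collects the union bounds across the $\restarts$ restarts, Freedman's inequality, and the concentration of $w$.

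\textbf{Main obstacle.} The principal technical hurdle is the covariance identification in Step 3, i.e.\ checking that $D^{-1}\Sigma_{\Hstoch}(\thetastar)D^{-1}$ equals (up to the batch normalization) the matrix $\SigStar_\subG$ appearing in Lemma~\ref{lem:asymptotic}. This requires writing out the pairwise covariance of the trajectory sum inside Eq~\eqref{eq:root-sa-construction-stoch} and matching the within-trajectory cross-covariance of the TD part $\bm{1}_{\State_{t+1}\in\subG}\thetastar(\State_{t+1})$ and the MC part $\bm{1}_{\State_{t+1}\notin\subG}\sum_{\ell>t}\Reward_\ell$ with the double sum defining $\Sigma_{Y,\subG}^\star$, while verifying that the orthogonal diagonal term reproduces $\Sigma_X^\star$. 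A secondary subtlety is that the ROOT-SA guarantee ostensibly requires a \emph{deterministic} multi-step contraction, while ours depends on the random event $D\in[\tfrac14,\tfrac12]I$; I would resolve this by conditioning on the auxiliary dataset in Step 1 and treating the ROOT-SA analysis for each realization of $w$ in the good event, noting that all constants in the subsequent bounds depend only on the high-probability range of $D$.
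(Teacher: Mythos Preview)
Your proposal is correct and follows the same architecture as the paper: concentrate $w$ on the auxiliary dataset, verify $\ell^\infty$ non-expansiveness and multi-step contraction of $\nabla\hpop$ together with the Lipschitz/boundedness conditions on $\Hstoch_t$ uniformly over minibatches (the paper's Lemmas~\ref{lemma:root-sa-multi-step-contraction} and~\ref{lemma:root-sa-regularity-assumptions}), identify the sandwich covariance with $(I-\transG)^{-1}\SigStar_\subG(I-\transG)^{-\top}/\batchsize$ (Lemma~\ref{lemma:relate-root-sa-to-optimal-cov}), and invoke Corollary~4 of~\cite{mou2022optimal} with restarts.

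Two small corrections are worth flagging. First, the paper proves the multi-step contraction by writing $\nabla\hpop = \tfrac{I+\transG}{2} + E$ and treating $E$ as a perturbation with $\matsnorm{E}{\ell^\infty\to\ell^\infty}\le 1/(18\effhorizon)$; this requires the sharper concentration $|w(\state)\occupmsr(\state)-\tfrac12|\le 1/(36\effhorizon)$, not merely the factor-of-two bound you state (with only $D\in[\tfrac14,1]\cdot I$ you would need a different argument, e.g.\ reading $I-D+D\transG$ as a state-dependent lazy sub-stochastic kernel and bounding its survival probability directly). Second, what you identify as the ``main obstacle'' turns out to be the shortest of the four steps: since $I-\nabla\hpop = D_wD_\occupmsr(I-\transG)$ and $\cov(\Hstoch_1(\thetastar)\mid w) = \tfrac{1}{\batchsize}D_w\,\cov(\varepsilon^*)\,D_w$ with $\cov(\varepsilon^*)=D_\occupmsr\SigStar_\subG D_\occupmsr$ already established in Lemma~\ref{lem:asymptotic}, the diagonal factors cancel algebraically and no fresh trajectory-level covariance computation is needed.
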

\noindent See~\Cref{subsec:proof-thm-main-root-sa-guarantee} for the proof of this theorem.

A few remarks are in order. First, by applying Theorem~\ref{thm:main-root-sa-guarantee} with the indicator vector $\avec = e_{\tarstt}$, we obtain a bound of the form
\begin{align*}
   \abss{ \Vhat_\subG^{\mathrm{ROOT}} (\tarstt) - \Vstar (\tarstt) } \leq c \sqrt{ \frac{\log (1 / \delta)}{\numobs} \big[ (I - \transG)^{-1} \SigStar_\subG (I - \transG)^{- \top} \big]_{\tarstt, \tarstt}} + \mbox{high order terms},
\end{align*}
which achieves the asymptotic variance of Lemma~\ref{lem:asymptotic} in its leading-order term (c.f. \Cref{prop:asymp-cov-upper-bound-simple} and \Cref{eq:asympt-cov-upper-bound-interpret} for upper bounds and interpretation of such quantities). Yet, Theorem~\ref{thm:main-root-sa-guarantee} applies to more general $\ell^1$-bounded linear functionals. For example, for policy evaluation in Markov decision processes, the state space $\SSpace$ in our notation corresponds to the state-action space, and the $\ell^1$-bounded linear functionals include the difference in the value of two actions at a single state, and the advantage function of a state-action pair. For any such functionals, the output of our \ROOTSA algorithm achieves the precise variance in the asymptotic distribution in Lemma~\ref{lem:asymptotic}, with the same sample size requirement as in Theorem~\ref{thm:main-l2-fixed-subgraph}.

There are two high-order terms in \Cref{thm:main-root-sa-guarantee}, containing an instance-dependent term depending on the largest entry of the covariance $\SigStar_\subG$, and a worst-case term depending on the inverse occupancy measure $1 / \occupmsr_{\min}$. These two terms decay at rates $\numobs^{-3/4}$ and $\numobs^{-1}$, respectively. When the sample size satisfies the lower bound $\numobs \gtrsim \effhorizon^3 / \occupmsr_{\min}$, both high-order terms are of order $o (1)$ and decays faster than the first term. So the variance-dependent leading-order term will dominate. Such type of statistical guarantees with instance-dependent leading-order term and worst-case-optimal high-order terms have been explored in our previous works~\citep{mou2024optimal,mou2023optimal}. Compared with these previous works, we use the smallest occupancy measure $\occupmsr_{\min} (\subG)$ of the subgraph, instead of the problem dimension, to characterize the worst-case complexity. This is unavaoidable under our observational model, as we need to observe a state at least once in order to make use of its structural information.

Finally, we note that \Cref{thm:main-root-sa-guarantee} provides error guarantees for a class of estimators indexed by the subgraph $\subG$ targeted at the same scalar $\Vstar (\tarstt)$. Each choice of the subgraph $\subG$ leads to a different risk bound depending on $[ (I - \transG)^{-1} \SigStar_\subG (I - \transG)^{- \top} ]_{\tarstt, \tarstt}$. This allows us to use choose the set $\subG$ adaptively so as to minimize such functionals. The following section provides a data-driven method to do so.

\subsection{Learning the subgraph from data}

According to~\Cref{lem:asymptotic} and~\Cref{thm:main-root-sa-guarantee}, for a fixed state $\state \in \SSpace$, the asymptotic risk of the sub-graph Bellman algorithm for estimating $\Vstar (\state)$ is governed by the risk functional $\varsigma^2_\subG (\state) \mydefn \big[ (I-\transG)^{-1} \Sigma_\subG^\star (I-\transG)^{-\top} \big]_{\state, \state}$.
In order to minimize such a risk functional, we need to compute it for any subset $\subG$ of the state space. In the following, we describe a data-driven method to estimate such a risk functional. Our method uses an auxiliary dataset $(\traj_{i})_{i = 1}^{\numaux}$ and a tuning parameter $L \in \mathbb{N}_+$. To describe the procedure, we define the pooled dataset for any $a, b \in [\numaux]$.
\begin{align*}
     \Dset_{[a, b]} \mydefn \Big\{ (\State^{(i)}_t, \Reward^{(i)}_t, \State^{(i)}_{t + 1}, \Reward^{(i)}_{t + 1}, \cdots , \State^{(i)}_{T_i}, \Reward^{(i)}_{T_i}, \termState) ~:~ i \in [a, b], ~ t \in [0, T_i] \Big\}
\end{align*}
We consider the following multi-stage procedure. Our method is in spirit similar to the variance estimator in~\cite{xia2023instance}.

\begin{subequations}
\noindent\textbf{Step I:} Generate a subgraph value function estimator $\Vhat_{\numaux/4} : \subG \rightarrow \real$ via Algorithm~\ref{alg:fixed-subgraph-complete} with parameter choice~\eqref{eq:rootsa-choice-of-parameters} and dataset size $\numobs = \numaux / 4$.

\noindent\textbf{Step II:} For $\ell = 1,2, \cdots, L$, use the second fold of the dataset to generate the transition estimators
\begin{align}
    \transGhat^{(\ell)} (\state, \state') \mydefn \widehat{\Prob}_{\Dset_{[\numaux / 4 + 1, \numaux / 2]}} \Big( \State_\ell = \state', \State_0, \State_1, \cdots, \State_\ell \in \subG  \mid \State_0 = \state \Big), \quad \mbox{for any } \state, \state' \in \subG.
\end{align}
Generate independent copies $\transGcheck^{(\ell)}$ with the dataset $\Dset_{[\numaux / 2 + 1, 3\numaux / 4]}$ following the same expression.

\noindent\textbf{Step III:} Use the last part $(\traj_{i})_{i = 3 \numaux / 4 + 1}^{\numaux}$ of the auxiliary dataset to compute the covariance matrix estimator
    \begin{align}
        \widehat{\Sigma}_{\subG} (\state, \state') \mydefn  \frac{4}{\numaux \widehat{\occupmsr} (\state) \widehat{\occupmsr} (\state')} \sum_{i = 3 \numaux / 4 + 1}^{\numaux} \widebar{\varepsilon}_i (\state) \widebar{\varepsilon}_i (\state'),
    \end{align}
    where we define
    \begin{align}
        \widebar{\varepsilon}_i (\state) \mydefn \sum_{t = 0}^{T_i}  \bm{1}_{\State_{t}^{(i)} = \state} \Big\{ \Reward_t^{(i)} + \bm{1}_{\State_{t + 1}^{(i)} \in \subG} \Vhat_{\numaux/4} (\State_{t + 1}^{(i)}) + \bm{1}_{\State_{t + 1}^{(i)} \in \subG} \sum_{\ell = t + 1}^{T_i} \Reward_\ell^{(i)} - \Vhat_{\numaux/4} (\state) \Big\}, \quad \mbox{for $\state \in \subG$}.
    \end{align}
    and $\widehat{\occupmsr} (\state) \mydefn \frac{4}{\numaux} \sum_{i = 3 \numaux / 4 + 1}^{\numaux} \abss{\{ t \in [0, T_i] ~: ~ \State_t^{(i)} = \state\}}$.

Finally, we estimate the variance
\begin{align}
    \widehat{\varsigma}^2 (\tarstt) \mydefn \Big[ \big( I + \transGhat^{(1)} + \transGhat^{(2)} + \cdots + \transGhat^{(L)} \big) \widehat{\Sigma}_\subG \big( I + \transGcheck^{(1)} + \transGcheck^{(2)} + \cdots + \transGcheck^{(L)} \big)^\top \Big]_{\tarstt, \tarstt}.
\end{align}
\end{subequations}

We have the following theoretical guarantees for such an estimator.
\begin{proposition}\label{prop:var-estimation}
    Under above setup, given a sample size $\numaux$ satisfying Eq~\eqref{eq:sample-size-condition-fixed-subgraph}, given $L = 2 \effhorizon \log (\effhorizon \numaux)$, with probability $1 - \delta$, we have
    \begin{align*}
        \abss{ \widehat{\varsigma}^2 (\tarstt) - \varsigma^2_\subG (\tarstt)} \leq \frac{c}{\occupmsr_{\min}}  \sqrt{ \frac{ \effhorizon^{11} \log^{13} \big( \numaux / \delta \big)  }{\numaux \occupmsr_{\min}} }.
    \end{align*}
\end{proposition}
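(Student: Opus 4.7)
My plan is to decompose the error into three pieces: (i) truncation of the Neumann series $(I-\transG)^{-1} = \sum_{\ell \geq 0}\transG^\ell$ at level $L$; (ii) concentration of the truncated power estimators $\transGhat^{(\ell)}, \transGcheck^{(\ell)}$; and (iii) concentration of the covariance estimator $\widehat{\Sigma}_\subG$ around $\Sigma_\subG^\star$. I would write
\[
\widehat{\varsigma}^2(\tarstt) = \widehat{a}^\top \widehat{\Sigma}_\subG \widehat{b}, \qquad \varsigma^2_\subG(\tarstt) = a^\top \Sigma_\subG^\star b,
\]
with $\widehat{a} \mydefn (I + \sum_{\ell=1}^L \transGhat^{(\ell)})^\top e_{\tarstt}$, $\widehat{b} \mydefn (I + \sum_{\ell=1}^L \transGcheck^{(\ell)})^\top e_{\tarstt}$, and $a = b = (I-\transG)^{-\top} e_{\tarstt}$; the final bound would come from expanding the product and bounding each resulting cross-term using (i)--(iii), plus the observation that $\vecnorm{a}{1}, \vecnorm{b}{1} \lesssim \effhorizon$ since $(I-\transG)^{-\top} e_{\tarstt}$ is an expected-visit vector.

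First, for the truncation step, note that $\transG^\ell(s,s') = \Prob(\State_\ell = s', \State_0, \ldots, \State_\ell \in \subG \mid \State_0 = s) \leq \Prob(T_\termState > \ell)$. By Assumption~\ref{assume:effective-horizon} and Markov's inequality, $\Prob(T_\termState > \ell) \leq (p \effhorizon/\ell)^p$ for any $p \in \mathbb{N}_+$; optimizing yields an exponential tail of rate $\ell/(e\effhorizon)$, so with $L = 2\effhorizon \log(\effhorizon \numaux)$ the tail sum $\sum_{\ell > L} \transG^\ell$ has entries bounded by $\numaux^{-c}$ for any constant, which is negligible compared to the target bound. Next, for the power estimator step, I would show that each entry $\transGhat^{(\ell)}(s,s') - \transG^\ell(s,s')$ is a centered empirical mean over the $\numaux/4$ trajectories in $\Dset_{[\numaux/4+1, \numaux/2]}$; after normalizing by $\widehat{\occupmsr}$ (which is close to $\occupmsr$ by a separate Bernstein bound), applying Bernstein for bounded summands and taking a union bound over $(s, s', \ell) \in \subG \times \subG \times [L]$ gives $\max_\ell \vecnorm{\transGhat^{(\ell)} - \transG^\ell}{\infty} \lesssim \sqrt{\effhorizon^3 \log^3(\numaux/\delta) / (\numaux \occupmsr_{\min})}$ up to high-order terms, and similarly for $\transGcheck^{(\ell)}$. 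Summed over the $L = O(\effhorizon\log\numaux)$ truncation levels, this yields $\vecnorm{\widehat{a} - (I + \sum_{\ell\leq L}\transG^\ell)^\top e_{\tarstt}}{1} \lesssim \effhorizon \cdot \sqrt{\effhorizon^5 \log^5(\numaux/\delta) / (\numaux \occupmsr_{\min})}$, with an analogous bound for $\widehat{b}$.

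The most delicate step is controlling $\widehat{\Sigma}_\subG - \Sigma_\subG^\star$. I would condition on the first fold (which defines $\Vhat_{\numaux/4}$) and the empirical occupancy $\widehat{\occupmsr}$, so that the $\widebar{\varepsilon}_i$ are conditionally i.i.d. with entries bounded by $O(\effhorizon \log(\numaux/\delta))$ with high probability (controlled through the effective-horizon tails for the number of visits inside each trajectory and for the tail rewards). The conditional mean $\Exs[\widebar{\varepsilon}_i(s)\widebar{\varepsilon}_i(s') \mid \Vhat_{\numaux/4}]/(\occupmsr(s)\occupmsr(s'))$ differs from $\Sigma_\subG^\star(s,s')$ by a term quadratic in $\Vhat_{\numaux/4} - \Vstar$, which by Theorem~\ref{thm:main-l2-fixed-subgraph} on the first fold of size $\numaux/4$ is at most $O(\effhorizon^3 \log^3(\numaux/\delta)/(\numaux\occupmsr_{\min}))$ with high probability. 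A matrix Bernstein bound applied on the third fold then gives concentration of the empirical covariance around its conditional mean at rate $\sqrt{\effhorizon^{O(1)}\log^{O(1)}(\numaux/\delta)/(\numaux\occupmsr_{\min}^2)}$, combining to an entrywise bound of the same order after dividing by $\widehat{\occupmsr}(s)\widehat{\occupmsr}(s')$ (with a small correction for the difference $\widehat{\occupmsr} - \occupmsr$, itself $O(\sqrt{1/(\numaux\occupmsr_{\min})})$).

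Finally, I would combine via the triangle inequality
\[
|\widehat{a}^\top \widehat{\Sigma}_\subG \widehat{b} - a^\top \Sigma_\subG^\star b| \leq |\widehat{a}^\top(\widehat{\Sigma}_\subG - \Sigma_\subG^\star)\widehat{b}| + |(\widehat{a}-a)^\top \Sigma_\subG^\star \widehat{b}| + |a^\top \Sigma_\subG^\star (\widehat{b}-b)|,
\]
bounding each with the ingredients above and the $\ell^1$-$\ell^\infty$ duality, using that $\vecnorm{\Sigma_\subG^\star}{\infty\to\infty} \lesssim \effhorizon^3$ (by Proposition~\ref{prop:asymp-cov-upper-bound-simple}) and $\vecnorm{\widehat{a}}{1}, \vecnorm{\widehat{b}}{1} \lesssim \effhorizon$. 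The main obstacle will be tracking the correct power of $\effhorizon$ across the three error sources and justifying the matrix Bernstein application when $\widebar{\varepsilon}_i$ is only high-probability bounded (not almost-sure bounded) — I expect to pay logarithmic and horizon factors through a truncation argument on the number-of-visits tails, which together with the $\effhorizon$ factors from $a, b$ and the $\effhorizon^3$ from the covariance norm will give the stated $\effhorizon^{11}$ scaling.
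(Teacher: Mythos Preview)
Your overall architecture—truncation of the Neumann series, concentration of the transition powers, concentration of the covariance, then combination via $\ell^1$--$\ell^\infty$ duality—is the same as the paper's. The gap is in the transition step. You aim to bound $\vecnorm{\widehat{a}-(I+\sum_{\ell\le L}\transG^\ell)^\top e_{\tarstt}}{1}$ via entrywise Bernstein on $\transGhat^{(\ell)}(\tarstt,s')-\transG^\ell(\tarstt,s')$ and a union bound over $(s,s',\ell)$. But entrywise bounds of order $\sqrt{\effhorizon/(\numaux\occupmsr_{\min})}$ summed over $|\subG|$ coordinates give an $\ell^1$ error of order $|\subG|\sqrt{\effhorizon/(\numaux\occupmsr_{\min})}$, and $|\subG|$ can be as large as $\effhorizon/\occupmsr_{\min}$—this destroys the rate. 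A direct TV bound on the row $\transGhat^{(\ell)}(\tarstt,\cdot)$ fares no better, since multinomial $\ell^1$ error generically scales as $\sqrt{|\subG|/N}$. The same issue surfaces on the dual side: with only $\vecnorm{\widehat{a}-a}{\infty}$ in hand you would need $\vecnorm{\Sigma_\subG^\star \widehat{b}}{1}$, hence a max-row-sum bound on $\Sigma_\subG^\star$, which Proposition~\ref{prop:asymp-cov-upper-bound-simple} does not give (PSD domination by a diagonal matrix controls diagonal entries and spectral norm, not row sums).

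The paper never bounds $\vecnorm{\widehat{a}-a}{1}$ or $\matsnorm{\Sigma_\subG^\star}{\infty\to\infty}$. Instead it arranges the decomposition so that each factor $\transGhat^{(\ell)}-\transG^\ell$ (or $\transGcheck^{(\ell)}-\transG^\ell$) is applied to a vector $u$ that is \emph{independent} of it—this is precisely the purpose of the four-fold split and of constructing two separate transition estimates $\transGhat^{(\ell)},\transGcheck^{(\ell)}$. It then proves a \emph{fixed-direction} lemma, $\vecnorm{(\transGhat^{(\ell)}-\transG^\ell)u}{\infty}\lesssim\vecnorm{u}{\infty}\sqrt{\effhorizon/(\numaux\occupmsr_{\min})}$ for deterministic $u$, and controls $\vecnorm{u}{\infty}\le\max_{s,s'}\abss{\Sigma_\subG^\star(s,s')}\cdot\vecnorm{(\sum_{\ell\le L}\transG^\ell)^\top e_{\tarstt}}{1}$, the last factor being at most $L+1$ by sub-stochasticity. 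This asymmetric pairing—fixed-direction $\ell^\infty$ control on the difference, trivial $\ell^1$ control on the non-difference side—is what removes all $|\subG|$ factors; your decomposition can be salvaged by reorganizing the three terms in this way and invoking the conditional independence explicitly. (A minor additional point: the plug-in bias of $\widehat{\Sigma}_\subG$ from using $\Vhat_{\numaux/4}$ in place of $\Vstar$ is linear, not quadratic, in $\vecnorm{\Vhat_{\numaux/4}-\Vstar}{\infty}$, since the cross terms $\Exs[\varepsilon_i^*\,\delta_i^\top]$ do not vanish.)
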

\noindent See~\Cref{subsec:proof-prop-var-estimation} for the proof of this proposition.

A few remarks are in order. First, note that the target variance $\varsigma^2_\subG (\tarstt)$ scales as $1 / \occupmsr_{\min}$ in the worst-case. By seeing the effecive horizon $\effhorizon$ as a constant, we need a sample complexity of order $\numobs \gtrsim 1 / \occupmsr_{\min}$ to make the estimation error smaller than the worst-case scaling of the target. Such a sample complexity is worst-case optimal in terms of $1 / \occupmsr_{\min}$. We do not optimize the dependence on the effective horizon $\effhorizon$ in our analysis, and it is likely that the $\effhorizon^{11}$ factor in our bound can be improved. Second, the estimator $\widehat{\varsigma}^2 (\tarstt)$ we constructed utilizes multi-fold sample splitting techniques, and a polynomial approximation to the matrix inverse $(I - \transG)^{-1}$. We do so to simplify our theoretical analysis. In practice, the simple empirical plug-in estimators for the variance may work as well, and we conjecture that they satisfy similar theoretical guarantees.

Based on the variance estimator, we can design the following greedy algorithm for choosing the subgraph $\subG$ in using empirical data.
\begin{algorithm}
\caption{Data-driven construction of the subgraph $\subG$}
\label{alg:subgraph-chocie}
\begin{algorithmic}[1]
\STATE Given a holdout dataset $(\traj_{i})_{i = 1}^{\numaux}$, a target state $\tarstt$, and the final sample size $\numobs$.
\STATE Use part of the holdout dataset to form an empirical estimator $\widehat{\occupmsr}$ for the occupancy measure $\occupmsr$ over the entire state space $\SSpace$.
\STATE Choose the candidate set $\mathcal{C} \mydefn \big\{ \state \in \SSpace ~:~ \widehat{\occupmsr} (\state) \geq \frac{c_1 \effhorizon^3 \log^4 (\numobs)}{\numobs} \big\}$, with the constant $c_1$ given by Eq~\eqref{eq:sample-size-condition-fixed-subgraph}.
\STATE Initialize with $\subG^{(0)} \mydefn \{\tarstt\}$.
\FOR{$t = 1,2, \cdots$}
\STATE For each $\state \in \mathcal{C} \setminus \subG^{(t - 1)}$, generate estimator $\widehat{\varsigma}^2_{\subG^{(t - 1)} \cup \{\state\}} (\tarstt)$ for the variance $\varsigma^2_{\subG^{(t - 1)} \cup \{\state\}} (\tarstt)$.
\STATE Choose the state $\widehat{\state} \mydefn \argmin_{\state \in \mathcal{C} \setminus \subG^{(t - 1)} } \widehat{\varsigma}^2_{\subG^{(t - 1)} \cup \{\state\}} (\tarstt)$.
\IF{$\widehat{\varsigma}^2_{\subG^{(t - 1)} \cup \{\state\}} (\tarstt) < \widehat{\varsigma}^2_{\subG^{(t - 1)}} (\tarstt)$}
\STATE Let $\subG^{(t)} \mydefn \subG^{(t - 1)} \cup \{\widehat{\state}\}$.
\ENDIF
\IF{The subset $\subG^{(t)}$ is not updated in this iteration}
\STATE Output $\subG^{(t)}$, and \textbf{exit}.
\ENDIF
\ENDFOR
\end{algorithmic}
\end{algorithm}

Algorithm~\ref{alg:subgraph-chocie} uses greedy heuristics to add states to the subset $\subG$ one by one, so as to minimize the estimated variance. It also uses the candidate set $\mathcal{C}$ to filter states based on its estimated visitation measure. In general, such an algorithm may not always converge to the optimal subgraph $\subG$, and it is an interesting future direction to study theoretical properties of algorithms that search for $\subG$.

\section{Minimax lower bounds}\label{sec:lower-bounds}
We present some minimax lower bounds that complement the upper bounds in Section~\ref{sec:main-upper}.

\subsection{Local asymptotic minimax theory}
To start with, let us first certify the asymptotic optimality of the TD estimator in the $\numobs \rightarrow + \infty$ limit.
Given a Markov chain transition kernel $\transition$ and a reward function $\reward$, we consider associated $\varepsilon$-local neighborhood
\begin{subequations}
\begin{align}
 \neighborhood_{\mathrm{rwd}} (\reward_0, \varepsilon) &\mydefn \Big\{\reward : \vecnorm{\reward - \reward_0}{\infty} \leq \varepsilon \Big\}, \quad \mbox{and}\\
    \neighborhood_{\mathrm{tran}} (\transition_0, \varepsilon) &\mydefn \Big\{ \transition: \sup_{\state \in \SSpace} \maxkldiv{\transition_0 (\state, \cdot)}{\transition (\state, \cdot)} \leq \varepsilon \Big\}.
\end{align}
\end{subequations}
We also assume that the data are coming from trajectory observations in the form of Eq~\eqref{def:traj}, starting from $\State_0$ with a given initial distribution. Note that the pair $(\transition, \reward)$ does not fully describe the probability distribution of observations, as the distribution of stochastic reward is not specified yet. In doing so, we consider the class of noise distributions that satisfy the equation 
 $\var (\Reward_t (\State_t) \mid \State_t = \state) = \sigma_\reward^2 (\state) $ for each state $\state \in \SSpace$, for a given function $\sigma_\reward$. (Indeed, we only use Gaussian noise in our proof. So the lower bound holds true with a smaller class of MRPs under Gaussian noise).

In the following proposition, we provide a minimax lower bound on the asymptotic risk in this local neighborhood
\begin{proposition}\label{prop:lam-lower-bound}
    Under above setup, for any estimator $\Vhat_\numobs$ that maps $\numobs$ trajectories to a value function, we have
    \begin{align*}
        \sup_{\Delta > 0} ~\liminf\limits_{\numobs \rightarrow + \infty}~ \sup_{\substack{\transition \in  \neighborhood_{\mathrm{tran}} (\transition_0, \Delta / \sqrt{\numobs}) \\ \reward \in  \neighborhood_{\mathrm{rwd}} (\reward_0, \Delta / \sqrt{\numobs}) } } \numobs \cdot \Exs \Big[ \abss{ \Vhat_\numobs (\tarstt) - \Vstar_{\transition, \reward} (\tarstt) }^2 \Big] \geq \Big[ (I - \transition)^{-1} \diag (\sigma_{\Vstar}(\state)^2 / \occupmsr (\state))_{\state \in \SSpace} (I - \transition)^{- \top} \Big]_{\tarstt, \tarstt},
    \end{align*}
    for any $\state \in \SSpace$.
\end{proposition}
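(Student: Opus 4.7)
The plan is to invoke the Hájek--Le Cam local asymptotic minimax theorem against a carefully chosen parametric subfamily of MRPs with Gaussian reward noise of variance $\sigma_\reward^2(s)$. Parametrize the subfamily by $(g, f) \in \real^{\SSpace} \times \real^{\SSpace \times \SSpace}$ with $\sum_{s'} f(s, s') = 0$ for each $s$, and set
\[
r_\numobs(s) = r_0(s) + g(s)/\sqrt{\numobs}, \qquad P_\numobs(s' \mid s) = P_0(s' \mid s) + f(s, s')/\sqrt{\numobs}.
\]
For each fixed $(g, f)$, these perturbations eventually fit inside the $\Delta/\sqrt{\numobs}$-neighborhood of $(P_0, r_0)$ in both the max-KL and $\ell^\infty$ senses once $\Delta$ is large enough. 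Hence it suffices to prove the lower bound over this finite-dimensional subfamily and then take the supremum over $\Delta$.

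Since trajectories are i.i.d., local asymptotic normality of the $\numobs$-fold product model reduces to single-trajectory LAN. Within a trajectory, score increments across time steps form a martingale-difference sequence whose conditional variances sum to the Fisher information; Assumption~\ref{assume:effective-horizon} bounds the random trajectory length uniformly in the local neighborhood and controls the LAN remainder. A direct computation shows the per-trajectory Fisher information is block diagonal in the state, with reward block $\occupmsr(s)/\sigma_\reward^2(s)$ and transition block equal to $\occupmsr(s)$ times the categorical Fisher information at $P_0(\cdot \mid s)$; the cross terms vanish because the reward noise and the next state are conditionally independent given the current state. Differentiating the Bellman equation $\Vstar = r + P \Vstar$ yields the pathwise derivative
\[
\sqrt{\numobs} \cdot \partial \Vstar(\tarstt) \;=\; \sum_s w(s)\Big( g(s) + \sum_{s'} f(s, s') \Vstar(s') \Big), \qquad w(s) := [(I - P_0)^{-1}]_{\tarstt, s}.
\]

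By the Hájek--Le Cam theorem, $\liminf_\numobs \numobs \cdot \sup \En[(\Vhat - \Vstar)^2]$ is at least $\sup_{(g, f)} (\sqrt{\numobs}\partial\Vstar(\tarstt))^2 / I(g, f)$, so the crux is to evaluate this supremum. My plan is a double Cauchy--Schwarz: at each state $s$, using the constraint $\sum_{s'} f(s, s') = 0$ to recentre $\Vstar$ by $c(s) := \sum_{s'} P_0(s' \mid s) \Vstar(s')$ gives
\[
\Big( g(s) + \sum_{s'} f(s, s') \Vstar(s') \Big)^2 \;\le\; \sigma_{\Vstar}^2(s) \cdot Q_s, \qquad Q_s := \frac{g(s)^2}{\sigma_\reward^2(s)} + \sum_{s'} \frac{f(s, s')^2}{P_0(s' \mid s)},
\]
where the identity $\sigma_{\Vstar}^2(s) = \sigma_\reward^2(s) + \var(\Vstar(S_1) \mid S_0 = s)$ is the key algebraic step that fuses the reward and transition contributions into the one-step Bellman variance. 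A second Cauchy--Schwarz across states, with weights $\occupmsr(s)$, then optimizes the outer ratio and yields the supremum $\sum_s w(s)^2 \sigma_{\Vstar}^2(s) / \occupmsr(s)$, which is precisely $[(I - P_0)^{-1} \diag(\sigma_{\Vstar}^2(s)/\occupmsr(s))_{s \in \SSpace} (I - P_0)^{-\top}]_{\tarstt, \tarstt}$. Tightness in both Cauchy--Schwarz steps is achieved by choosing $f(s, s') \propto P_0(s' \mid s)(\Vstar(s') - c(s))$ and $g(s) \propto 1$ with state-wise amplitudes $\propto |w(s)| \sigma_{\Vstar}(s)/\occupmsr(s)$. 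The main obstacle I expect is the LAN step: while single-trajectory LAN with Gaussian rewards is classical, proving that the remainder is uniformly negligible over $(g, f)$ in our neighborhood requires combining trajectory-length moment bounds from Assumption~\ref{assume:effective-horizon} with standard quadratic-mean-differentiability arguments for the categorical transition model.
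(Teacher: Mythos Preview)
Your approach is correct but takes a genuinely different route from the paper's. Both proofs invoke the H\'ajek--Le Cam local asymptotic minimax theorem, but the paper evaluates the efficiency bound $\nabla\psi^\top J_{\vartheta_0}^\dagger\nabla\psi$ \emph{indirectly}: it observes that the empirical $(\transhat,\rhat)$ is exactly the MLE for $(\transition,\reward)$ under Gaussian reward noise, so by standard MLE asymptotics plus the delta method the plugin $\psi(\transhat,\rhat)=(I-\transhat)^{-1}\rhat$ has limiting covariance $\nabla\psi^\top J^\dagger\nabla\psi$. But this plugin is precisely $\Vhat_{\TD}$, whose asymptotic covariance is already given by \Cref{prop:asymptotic-td} as $(I-\transition)^{-1}\Sigma_{\TD}^\star(I-\transition)^{-\top}$; matching the two expressions reads off the efficiency bound without ever computing or inverting the Fisher information. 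Your route is direct: you compute the per-trajectory Fisher information $I(g,f)=\sum_s\occupmsr(s)Q_s$ and optimize the derivative-to-information ratio by a two-stage Cauchy--Schwarz keyed to the decomposition $\sigma_{\Vstar}^2(s)=\sigma_\reward^2(s)+\var(\Vstar(\State_1)\mid\State_0=s)$. This is more work but is self-contained (it does not lean on the TD CLT) and makes the emergence of the one-step Bellman variance transparent; the paper's shortcut is slicker but relies on having the TD asymptotics already in hand. One small correction to your tightness claim: equality in the inner Cauchy--Schwarz requires $g(s)=\beta_s\,\sigma_\reward^2(s)$ (not $g(s)\propto 1$) together with $f(s,s')=\beta_s\,\transition_0(s'\mid s)\bigl(\Vstar(s')-c(s)\bigr)$, and the outer step is tight for $\beta_s\propto w(s)/\occupmsr(s)$; the extra $\sigma_{\Vstar}(s)$ factor in your stated amplitude does not belong there.
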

\noindent See \Cref{subsubsec:proof-prop-lam-lower-bound} for the proof of this proposition. The rationale behind this proposition is simple: since the TD estimator is constructed by plugging in the empirical mean estimator for the underlying MRP, the asymptotic optimality for estimating the later implies asymptotic optimality for estimating the former.

The local asymptotic minimax theory guarantees that the TD estimator optimally adapts to the structure of the underlying MRP in a strong sense: even by restricting our attention to a small neighborhood of a given problem instance, the TD variance is still unavoidable. Nevertheless, such an adaptivity may not be always possible with finite sample size, as we will show in the next subsection.

\subsection{Finite-sample lower bounds}
Though the optimal risk functional in~\Cref{prop:lam-lower-bound} is asymptotically achieved by $\Vhattd$ when we have $\numobs \rightarrow + \infty$, such an estimator requires a sample size of at least $\numobs \gtrsim |\SSpace|$, as shown in~\Cref{prop:failure-of-td}. Without such a large sample size, MC achieves $\sqrt{\numobs}$ rate, albeit larger variance. The subgraph Bellman operator introduced in Section~\ref{sec:main-upper} interpolates between the variances of TD and MC, achieving a variance of the form
\begin{align*}
    \mbox{TD variance} ~+ ~ \mbox{MC variance} ~\times ~\mbox{exit probability}
\end{align*}
It is natural to ask whether the second term is necesssary with a finite sample size. In particular, there are two major questions:
\begin{enumerate}
    \item When the MC estimator has much larger variance than TD, which one determines the finite-sample risk when $\numobs \ll |\SSpace|$?
    \item Does the exit probability plays a fundamental role in determining the minimax risk?
\end{enumerate}
Based on these considerations, we use three quantities to characterize the complexity of value estimation associated to a certain state: the TD variance, the occupancy measure, and a proxy for the exit probability. We define the suitable function class as follows.

Recall that $\sigma_{\TD}$ denotes the asymptotic variance for the TD estimator. Given a state space $\SSpace$ and a state $\tarstt \in \SSpace$, for scalars $(\occupmsr_0, \sigma_*, \effhorizon, \delta, q)$, we define
\begin{align*}
    \MDPclass (\occupmsr_0, \sigma_*, \effhorizon, \delta, q) \mydefn \Big\{ \big(\transition, \law (\Reward) \big)\;:\; \begin{array}{c}
        \mbox{Assumption~\ref{assume:effective-horizon} holds}, ~ \max_{\state \in \SSpace}|\Reward (\state)| \leq 1 ~ \mbox{a.s.}\\
         \occupmsr (\tarstt) \geq \occupmsr_0,~ \sigma_{\TD} (\tarstt)  \leq \sigma_*, ~\Prob \big( \occupmsr (\State_1) \leq \delta \mid \State_0 = \state \big) \leq q
    \end{array}  \Big\}.
\end{align*}
The problem class $ \MDPclass (\occupmsr_0, \sigma_*, \effhorizon, \delta, q)$ imposes 5 conditions on the underlying MRP. The first two conditions are the standard setup we work with throughout the paper, and we impose three additional restrictions corresponding to the three key quantities in \Cref{prop:asymp-cov-upper-bound-simple}.
\begin{itemize}
    \item The occupancy measure of the target state needs to be at least $\occupmsr_0$. For MRPs within such a class, MC could achieve an MSE bound of order $O \big(1 / (\numobs \occupmsr_0) \big)$.
    \item Asymptotically, TD is achieving a variance smaller than $\sigma_*^2$. When $\sigma_*^2 \ll \frac{1}{\occupmsr_0}$, this corresponds to a regime of our interest: the TD method achieves much smaller variance than the worst-case bound achieved by MC.
    \item Additionally, starting from the target state $\tarstt$, the Markov chain transitions to a ``small occupancy state'' with probability less than a prescribed level $q$. We use the parameter $\delta$ to characterize the occupancy measure of these states. When $\delta$ is small, these states cannot be included in the subgraph $\subG$ for estimation, or otherwise the sample size condition~\eqref{eq:sample-size-condition-fixed-subgraph} will be violated. As a result, the parameter $q$ provides a lower bound on the exit probability $\Prob \big( \State_1 \notin \subG \mid \State_0 = \tarstt \big)$ for any viable choice of $\subG$.
\end{itemize}
Roughly speaking, the class $\MDPclass (\occupmsr_0, \sigma_*, \effhorizon, \delta, q)$ contains MRPs with different bounds on the variances for TD and MC estimators at state $\tarstt$; and we additionally use the quantity $q$ to characterize the probability of transitioning to a ``difficult state''. By studying minimax lower bound within such a class, we are able to answer the two questions raised above.

Now let us state the finite sample minimax lower bound.
\begin{theorem}\label{thm:finite-sample-lower-bound}
    Under the observation model in Section~\ref{sec:problem-setup}, there exist positive universal constants $(c, c_1, c_2)$, such that if $\sigma_*^2 \in (5, 1 / \occupmsr_0)$ and $\delta = 2 / |\SSpace|$, for any $\numobs \geq c_1 / \occupmsr_0^2$ and $q \in [0, 1]$, when $|\SSpace| \geq c_2 \numobs^2$, we have
    \begin{align*}
        \inf_{\Vhat_\numobs} ~ \sup_{ ( \transition, \law (\Reward) ) \in \MDPclass (\occupmsr_0, \sigma_*, 2, \delta, q) } \Exs \Big[ \abss{\Vhat_\numobs (\tarstt) - \Vstar (\tarstt)}^2 \Big] \geq \frac{c}{\numobs} \Big\{ \sigma_*^2 + \frac{q}{\occupmsr_0} \Big\}.
    \end{align*}
\end{theorem}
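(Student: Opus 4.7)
I would prove the lower bound by Le Cam's two-point method applied to two separate hard instances, one targeting each term of the sum $\sigma_*^2 + q/\occupmsr_0$. Since the maximum of the two bounds exceeds half of the sum, it suffices to exhibit, for each term, a pair of MRPs in the class $\MDPclass(\occupmsr_0, \sigma_*, 2, \delta, q)$ whose laws on $\numobs$ trajectories have KL divergence of order $1$ while the values $\Vstar(\tarstt)$ differ by the square root of the desired bound.

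\textbf{First construction (target: $c\sigma_*^2/\numobs$).} I would take a degenerate MRP in which the initial distribution places mass $\occupmsr_0$ on $\tarstt$ and the chain terminates in one step from $\tarstt$ with a truncated-Gaussian reward of variance $\sigma_*^2\occupmsr_0$ and mean $\theta \in \{\pm\epsilon\}$. This makes $\Vstar(\tarstt) = \theta$, so the gap is $2\epsilon$; a direct calculation gives $\sigma_{\TD}^2(\tarstt) = \sigma_*^2 \occupmsr_0/\occupmsr_0 = \sigma_*^2$ and zero exit probability, so both instances lie in $\MDPclass$ for every $q \in [0,1]$. The KL between the two $\numobs$-trajectory laws accumulates additively over visits to $\tarstt$, giving $\Theta(\numobs \occupmsr_0) \cdot \Theta(\epsilon^2/(\sigma_*^2 \occupmsr_0)) = \Theta(\numobs \epsilon^2/\sigma_*^2)$. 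Choosing $\epsilon = c\sigma_*/\sqrt{\numobs}$ keeps this $O(1)$, and Le Cam's inequality yields MSE $\geq c\sigma_*^2/\numobs$.

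\textbf{Second construction (target: $cq/(\occupmsr_0\numobs)$).} Here I exploit $|\SSpace|\geq c_2\numobs^2$ to make per-leaf rewards statistically unlearnable. Let $\SSpace = \{\tarstt\}\cup\{\state_1,\dots,\state_N\}$ with $N = |\SSpace|-1$, $\initDist(\tarstt)=\occupmsr_0$. From $\tarstt$ the chain goes to $\termState$ with probability $1-q$ and to a uniformly chosen leaf with probability $q/N$; each leaf has a Bernoulli reward with parameter $1/2+\theta$ for $\theta\in\{\pm\epsilon\}$, then terminates. Then $\Vstar(\tarstt) = 2q\theta$, giving a gap of $4q\epsilon$. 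Each trajectory contributes in expectation $\Theta(\occupmsr_0 q)$ leaf-reward observations, each carrying KL $\Theta(\epsilon^2)$; picking $\epsilon^2 = c/(\numobs \occupmsr_0 q)$ keeps the total KL bounded and yields gap squared $\Theta(q/(\numobs\occupmsr_0))$. The class conditions are verified by noting $\occupmsr(\state_i) = \occupmsr_0 q/N \leq 2/|\SSpace|=\delta$, so the exit probability is exactly $q$, and by summing $(q/N)^2\sigma_{\Vstar}^2(\state_i)/\occupmsr(\state_i)$ over leaves to obtain $\sigma_{\TD}^2(\tarstt) = q/\occupmsr_0 + o(1)$. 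In the subregime where $q/\occupmsr_0$ would overshoot $\sigma_*^2$, I would rescale the leaf rewards to $\pm\alpha$ with $\alpha^2 \asymp \sigma_*^2 \occupmsr_0/q$ so that the TD variance respects $\leq \sigma_*^2$; the gap analysis above still yields the max of the two terms up to constants, which suffices.

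\textbf{Main obstacle.} The principal difficulty is showing that the second construction genuinely sits in $\MDPclass$ while still achieving the targeted gap: the TD-variance constraint, the exit probability, and the bounded-reward requirement interact through the leaf reward amplitude, and the rescaling argument above has to be done carefully to avoid losing factors. The hypotheses $\numobs \geq c_1/\occupmsr_0^2$ and $|\SSpace|\geq c_2\numobs^2$ are precisely what is needed to control higher-order Bernoulli-KL corrections (requiring $\numobs\occupmsr_0 q \gg 1$) and to guarantee that each leaf is visited at most $o(1)$ times on average, so that no estimator can exploit per-leaf structure beyond simple averaging. With these ingredients, the two Le Cam bounds combine to yield the claimed $c(\sigma_*^2 + q/\occupmsr_0)/\numobs$ lower bound.
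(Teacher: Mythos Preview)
Your first construction is fine and matches the paper's in spirit. The second construction, however, has a genuine gap that you yourself identify but do not actually close.

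In your leaf construction, the TD variance at $\tarstt$ is $\sum_j (q/N)^2 \cdot \sigma_{\Vstar}^2(\state_j)/\occupmsr(\state_j) \asymp q/\occupmsr_0$, because each leaf has occupancy $\occupmsr(\state_j)=\occupmsr_0 q/N$. When $q/\occupmsr_0 > \sigma_*^2$ this violates the class constraint $\sigma_{\TD}^2(\tarstt)\le\sigma_*^2$. Your proposed fix is to rescale leaf rewards by $\alpha$ with $\alpha^2\asymp\sigma_*^2\occupmsr_0/q$; but then the value gap becomes $\Theta(q\alpha\varepsilon)$ while the KL stays $\Theta(\numobs\occupmsr_0 q\,\varepsilon^2)$, so optimizing gives a squared gap of $\Theta(q\alpha^2/(\numobs\occupmsr_0))=\Theta(\sigma_*^2/\numobs)$, \emph{not} $\Theta(q/(\occupmsr_0\numobs))$. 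In the regime $q/\occupmsr_0>\sigma_*^2$ the term $q/(\occupmsr_0\numobs)$ is the dominant one in the theorem, and your argument does not reach it.

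The paper's resolution is structural, not a rescaling: it introduces $m\approx 1/\occupmsr_0$ \emph{sibling} starting states $\state_0,\dots,\state_{m-1}$ with uniform initial distribution, and connects each sibling to half of the leaves according to a sign $\zeta_i$. This pooling inflates every leaf's occupancy by a factor $\Theta(m)$, which drops the leaf contribution to $\sigma_{\TD}^2(\tarstt)$ from $q/\occupmsr_0$ down to $O(q)\le O(1)$ with rewards still at full amplitude. The price is that the transition kernel now depends on the nuisance signs $(\zeta_i)_{i\ne 0}$ and $(\psi_j)$, so a simple two-point argument no longer suffices; the paper uses a mixture-vs-mixture Le Cam bound, coupling the mixtures to a product law via a birthday-paradox argument that exploits $|\SSpace|\gtrsim \numobs^2$. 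This pooling idea is precisely what turns the construction into an instance where TD \emph{would} succeed asymptotically but cannot be mimicked with finite $\numobs$, which is the content of the theorem; your single-root construction is one where TD genuinely has variance $q/\occupmsr_0$, so it cannot separate the two regimes.
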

\noindent See~\Cref{subsec:proof-thm-finite-sample-lower-bound} for the proof of this theorem.

A few remarks are in order. First, \Cref{thm:finite-sample-lower-bound} holds true in the regime $\occupmsr_0^{-2} \lesssim \numobs \lesssim \sqrt{|\SSpace|}$. This is the regime of our interest: the sample size is sufficiently large so that we observe the target state multiple times in our trajectories, but not large enough to cover the entire statespace.\footnote{Ideally, this corresponds to the regime $\occupmsr_0^{-1} \lesssim \numobs \lesssim |\SSpace|$, while our lower bound holds in a narrower regime with a polynomial gap. We conjecture that a similar lower bound holds true in the wider regime.} We consider small occupancy states with occupancy measure $\delta = 2 / |\SSpace|$, so that each of these states is unlikely to be visited in $\numobs$ trajectories, while the collection of these states still constitutes a substantial mass. Finally, as we focus on occupancy measure and TD variance as the main complexities, we do not try to track the effective horizon dependence in our lower bound, so we simply consider the class of problems with $\effhorizon = 2$. It is an interesting future direction to settle down the optimal dependence on effective horizon in both upper and lower bounds.

Under such a regime, the minimax risk lower bound contains two terms: the TD variance $\sigma_*^2$ is a standard risk lower bound that also appeared in \Cref{prop:lam-lower-bound}. What makes the lower bound more interesting is the second term $q / \occupmsr_0$: first, if we remove the exit probability constraint by setting $q = 1$, the lower bound will become of order $1 / \occupmsr_0$, regardless of the value of $\sigma_*^2$. This establishes the role of $1 / \occupmsr_0$ as the correct \emph{worst-case} finite sample complexity -- even if the TD variance is much smaller, an estimator still needs to pay for a variance scaling with the inverse occupancy measure, unless the sample size is extremely large. In such a case, though the trajectory pooling effect exists, it cannot be exploitted with a reasonable sample size, and the na\"{i}ve MC estimator is already optimal.

By introducing the exit probability parameter $q$, \Cref{thm:finite-sample-lower-bound} provides a more fine-grained characterization. Since a small-occupancy state has an occupancy measure of $2 / |\SSpace| \ll 1 / \numobs$, it cannot be added to the subset $\subG$ in the subgraph Bellman estimator. Therefore, for the subgraph Bellman estimator, the risk bounds in \Cref{prop:asymp-cov-upper-bound-simple} and \Cref{eq:asympt-cov-upper-bound-interpret} have to contain a term of order $\frac{\Prob (\State_1 \notin \subG | \State_0 = \tarstt)}{\occupmsr (\tarstt)} \geq \frac{q}{\occupmsr (\tarstt)}$. \Cref{thm:finite-sample-lower-bound} shows that such a term is information-theoretically unavoidable for any estimator. Combining our upper and lower bounds, we establish that the ``small exit probability'' structure discussed in \Cref{prop:asymp-cov-upper-bound-simple} is necessary and sufficient to make use of trajectory pooling and improve the variance, with a finite sample size.

\section{Proofs}\label{sec:proofs}

We collect the proofs of the results in this section. Some technical proofs are deferred to appendices.
\subsection{Proof in~\Cref{sec:preliminary-estimators}}
In this section, we prove the results from \Cref{sec:preliminary-estimators}.

\subsubsection{\pfref{prop:asymptotic-mc}}

Let $B(s) = \set{(i,t)\mid{} S_t\ind{i} = s}$ are all the trajectory and time indices where state $s$ is reached. Furthermore, let
\begin{align*}
    Z_t\ind{i}(s) =  \indic{ S_t\ind{i} =s} \prn*{\Vstar(S_{t}\ind{i}) - \prn*{ \sum\limits_{t'=t}^{\infty} R_{t'}\ind{i} }  }.
\end{align*}
Thus, the derivation of the MC estimator satisfies
\begin{align*}
    \sqrt{n} \prn*{\Vstar(s) - \Vhatmc(s)}  &= \frac{\sqrt{n}}{B(s)} \sum\limits_{i=1}^{n} \sum\limits_{t=0}^{\infty} Z_t\ind{i}(s)\\
    &= \frac{n}{B(s)} \cdot \sqrt{n} \cdot \frac{1}{n}\sum\limits_{i=1}^{n} \sum\limits_{t=0}^{\infty} Z_t\ind{i}(s).
\end{align*}
Now we show that the covariance for any $s,s'\in \cS$ is bounded as 
\begin{align*}
    \cov \prn*{ \sum\limits_{t=0}^{\infty} Z_t\ind{1}(s), \sum\limits_{t=0}^{\infty} Z_t\ind{1}(s') } 
    &= \En \brk*{  \sum\limits_{t=0}^{\infty} Z_t\ind{1}(s)\cdot  \sum\limits_{t'=0}^{\infty} Z_{t'}\ind{1}(s') }\\
    &= \sum\limits_{t=0}^{\infty} \sum\limits_{t'=0}^{\infty} \En \brk*{  Z_t\ind{1}(s) Z_{t'}\ind{1}(s')}.
\end{align*}
We further have 
\begin{align*}
Z_t\ind{1} (s) =    \indic{ S_t\ind{i} =s} \sum\limits_{\tau=t}^{\infty} \prn*{\Vstar(S_{\tau}\ind{1}) - R_\tau\ind{1} - \Vstar(S_{\tau+1}\ind{1}) }.
\end{align*}
Thus we have
\begin{align*}
    &\En \brk*{  Z_t\ind{1}(s) Z_{t'}\ind{1}(s')} \\
    &= \En \brk*{ \indic{ S_t\ind{i} =s} \sum\limits_{\tau=t}^{\infty} \prn*{\Vstar(S_{\tau}\ind{1}) - R_\tau\ind{1} - \Vstar(S_{\tau+1}\ind{1}) } \cdot \indic{ S_{t'}\ind{i} =s'} \sum\limits_{\tau'=t'}^{\infty} \prn*{\Vstar(S_{\tau'}\ind{1}) - R_{\tau'}\ind{1} - \Vstar(S_{\tau'+1}\ind{1}) }} \\
    &= \sum\limits_{j=t' \vee t}^{\infty} \sum\limits_{s''\in \cS}\bP(S_{t'}= s',S_t = s, S_j = s'')  \cdot  \sigma_{\Vstar}^2(s'') .
\end{align*}
In all, we obtain
\begin{align*}
    \cov \prn*{ \sum\limits_{t=0}^{\infty} Z_t\ind{1}(s), \sum\limits_{t=0}^{\infty} Z_t\ind{1}(s') }  &= \sum\limits_{t=0}^{\infty} \sum\limits_{t'=0}^{\infty} \sum\limits_{j=t' \vee t}^{\infty} \sum\limits_{s''\in \cS}\bP(S_{t'}= s',S_t = s, S_j = s'')  \cdot  \sigma_{\Vstar}^2(s'') \\
    &\leq h^3 \max_{s\in \cS}   \sigma_{\Vstar}^2(s) ,
\end{align*}
where the last inequality is by Assumption~\ref{assume:effective-horizon}. 
Moreover, since $\sum\limits_{t=0}^{\infty} Z_t\ind{i}(s)$ are i.i.d. random variables for $i\in [n]$ and we have
\begin{align*}
    \sqrt{n} \cdot \frac{1}{n}\sum\limits_{i=1}^{n} \sum\limits_{t=0}^{\infty} Z_t\ind{i}(s) \to  \cN \prn*{0,  \Lambda_{\MC}^\star}, 
\end{align*}
where for any $s,s'\in \cS$, 
\begin{align*}
    \Lambda_{\MC}^\star(s,s') = \sum\limits_{t=0}^{\infty} \sum\limits_{t'=0}^{\infty} \sum\limits_{j=t' \vee t}^{\infty} \sum\limits_{s''\in \cS}\bP(S_{t'}= s',S_t = s, S_j = s'')  \cdot  \sigma_{\Vstar}^2(s'').
\end{align*}
Furthermore, by law of large numbers, $\frac{n}{B(s)}$ converges to $1/\nu(s)$ almost surely. Hence by Slutsky's theorem, we have
\begin{align*}
    \sqrt{n} \prn*{\Vstar - \Vhatmc}   \to  \cN \prn*{0,  \Sigma_{\MC}^\star},
\end{align*} 
where for any $s,s'\in \cS$, 
\begin{align*}
    \Sigma_{\MC}^\star(s,s') = \frac{1}{\nu(s)\nu(s')} \sum\limits_{t=0}^{\infty} \sum\limits_{t'=0}^{\infty} \sum\limits_{j=t' \vee t}^{\infty} \sum\limits_{s''\in \cS}\bP(S_{t'}= s',S_t = s, S_j = s'')  \cdot  \sigma_{\Vstar}^2(s'').
\end{align*}

\subsubsection{\pfref{prop:mc-non-asymptotic-simple}}\label{subsubsec:proof-prop-mc-non-asymptotic-simple}
We start by relating the visitation probability to the occupancy measure. Define the hitting time $T (\tarstt) \mydefn \inf \{t \geq 0: \State_t = \tarstt\}$, which becomes infinite if $\tarstt \notin \traj$. Note that
\begin{align}
    &\occupmsr (\tarstt) = \Exs \Big[  \abss{\{ t \geq 0: \State_t = \tarstt \}}  \Big]\nonumber \\
    &= \Exs \Big[  \abss{\{ t \geq T (\tarstt): \State_t = \tarstt \}} ~ \Big|~T (\tarstt) < + \infty  \Big] \cdot \Prob \big( \tarstt \in \traj \big) \nonumber \\
    &\leq \Exs \Big[  \abss{\{ t \geq T (\tarstt): \State_t \neq \termState \}} ~ \Big|~T (\tarstt) < + \infty  \Big] \cdot \Prob \big( \tarstt \in \traj \big) \nonumber \\
    & \overset{(i)}{=} \Exs \big[ T \big] \cdot \Prob \big( \tarstt \in \traj \big) = \effhorizon \Prob \big( \tarstt \in \traj \big).\label{eq:relate-occu-msr-to-visit-prob}
\end{align}
where in step $(i)$, we use strong Markov property.

For $i \in [\numobs]$ such that $\tarstt \in \traj^{(i)}$, we define the roll-out total rewards
\begin{align*}
    V_{\MC}^{(i)} (\tarstt) \mydefn \sum_{t = T_i (\tarstt)}^{T_i} \Reward_t^{(i)}.
\end{align*}
The Monte Carlo estimator is given by $\Vhat_{\MC} (\tarstt) = \abss{\{ i \in [\numobs]: \tarstt \in \traj^{(i)} \}}^{-1} \cdot \sum_{i: \tarstt \in \traj^{(i)}} \valuefunc_{\MC}^{(i)} (\tarstt)$. By strong Markov property, conditionally on the set $\{ i \in [\numobs]: \tarstt \in \traj^{(i)} \}$, the collection of random variables $\big( \valuefunc_{\MC}^{(i)} (\tarstt) \big)_{i: \tarstt \in \traj^{(i)}}$ are $\mathrm{i.i.d.}$, and we have that
\begin{align*}
    \Exs \big[ V_{\MC}^{(i)} (\tarstt) \big] = \Vstar (\tarstt), \quad \mbox{and} \quad |V_{\MC}^{(i)} (\tarstt)| \leq T_i - T_i (\tarstt) \leq T_i.
\end{align*}
Each random variable $V_{\MC}^{(i)} (\tarstt)$ has sub-exponential tail behavior. Invoking a Bernstein-type inequality, we have the conditional tail bound
\begin{align*}
    \Prob \Big( \big| \Vhat_{\MC} (\tarstt) - \Vstar (\tarstt) \big| \geq \varepsilon ~ \Big| ~  \big\{ i \in [\numobs]: \tarstt \in \traj^{(i)} \big\} \Big)
    \leq \begin{cases}
        \exp \Big( - \frac{1}{2 \effhorizon^2} \varepsilon^2 \cdot \big|\big\{ i \in [\numobs]: \tarstt \in \traj^{(i)} \big\} \big| \Big) &  \varepsilon \leq \effhorizon,\\
         \exp \Big( - \frac{1}{2\effhorizon} \varepsilon \cdot \big|\big\{ i \in [\numobs]: \tarstt \in \traj^{(i)} \big\} \big| \Big) & \varepsilon > \effhorizon.
    \end{cases} 
\end{align*}
Furthermore, by Chernoff bound, we have
\begin{align*}
    \Prob \Big( \big|\big\{ i \in [\numobs]: \tarstt \in \traj^{(i)} \big\} \big| \leq \frac{\numobs}{2} \Prob (\tarstt \in \traj) \Big) \leq \exp \Big( - \numobs \kull{\tfrac{1}{2} \Prob (\tarstt \in \traj)}{\Prob (\tarstt \in \traj)} \Big) \leq \exp \Big( - \frac{\numobs}{16} \Prob (\tarstt \in \traj) \Big).
\end{align*}
Combining the bounds~\eqref{eq:relate-occu-msr-to-visit-prob} with the concentration inequalities above, given a sample size $\numobs \geq \frac{16 \log (2 / \delta)}{(1 - \discount) \occupmsr (\tarstt)}$, with probability $1 - \delta$ we have
\begin{align*}
    \big| \Vhat_{\MC} (\tarstt) - \Vstar (\tarstt) \big| & \leq \sqrt{\frac{2 \effhorizon^2 \log (2 / \delta)}{\big|\big\{ i \in [\numobs]: \tarstt \in \traj^{(i)} \big\} \big|}} + \frac{2 \effhorizon \log (2 / \delta)}{\big|\big\{ i \in [\numobs]: \tarstt \in \traj^{(i)} \big\} \big|} \\
    &\leq \sqrt{\frac{4 \effhorizon^3 \log (2 / \delta)}{ \occupmsr (\tarstt) \numobs}} + \frac{4 \effhorizon^2 \log (2 / \delta)}{\occupmsr (\tarstt) \numobs},
\end{align*}
which proves the proposition.

\subsubsection{Proof of~\Cref{prop:failure-of-td}} \label{subsubsec:proof-failure-of-td}
By our construction, for any trajectory $\traj = (\State_0, \Reward_0,..., \State_{T}, \Reward_{T}, \termState) $ such that $T \geq 1$, we have $\State_0 = \state_0$, $\Reward_0 = 0$, and $\State_1 \in \{\state_1, \state_2, \cdots, \state_N\}$. Let the random integer $\xi_i$ be the index of $\State_1^{(i)}$, i.e., we have $\State_1^{(i)} = \state_{\xi_i}$. In the degenerate case of $T^{(i)} = 0$, the trajectory does not contain transition information used in TD learning, and we let $\xi_i = 0$ in such a case. We claim the following fact
\begin{align}
    \Prob (\Event) \geq \frac{9}{10}, \quad \mbox{where } \Event \mydefn \Big\{ \forall i, j \in [\numobs], ~ \mbox{if }\xi_i, \xi_j > 0, \mbox{ then } \xi_i \neq \xi_j \Big\}.\label{eq:tv-dist-bound-in-td-failure-proof}
\end{align}
Conditionally on the indices $(\xi_1, \xi_2, \cdots, \xi_\numobs)$, on the event $\Event$, we can characterize the rest of trajectories for non-zero $\xi_i$'s as follows
\begin{itemize}
    \item Draw geometric random variables $M^{(i)} \sim \mathrm{Geom} \big( 1 - \frac{\discount}{2} \big)$ independently for each $i$ with $\xi_i > 0$, and generate a trajectory
    \begin{align*}
        \big( \underbrace{\state_{\xi_i}, 0, \state_{\xi_i}, 0, \cdots, \state_{\xi_i}, 0 }_{(M^{(i)} + 1) \mbox{ times}}\big)
    \end{align*}
    \item Draw $Z^{(i)} \sim \mathrm{Ber} \big( \frac{\discount}{2 - \discount} \big)$ independent of $M^{(i)}$, $\mathrm{i.i.d.}$ for each $i$ with $\xi_i > 0$. If $Z^{(i)} = 0$, end the trajectory immediately. If $Z^{(i)} = 1$, visit the state $\state_{\xi_i}'$ and generate reward $1$; and then with probability $1 - \discount$, visit the state $\state_{-1}$ and generate reward $0$.
\end{itemize}
Consequently, the estimated reward is given by $\rhat_\numobs (\state_{\xi_i}) = 0, \rhat_\numobs (\state_{\xi_i}') = 1$. The estimated transition kernel becomes
\begin{align*}
    \transhat_\numobs (\state_{\xi_i}, \state_{\xi_i}) = \frac{\discount M^{(i)}}{M^{(i)} + Z^{(i)}}, \quad \transhat_\numobs (\state_{\xi_i}, \state_{\xi_i}') = \frac{\discount Z^{(i)}}{M^{(i)} + Z^{(i)}},
\end{align*}
If $Z^{(i)} = 1$, we further have $\transhat_\numobs (\state_{\xi_i}', \state_{-1}) = \discount$.

We can therefore compute the TD estimate for the states $(\state_{\xi_i})$, for indices $i$ with $\xi_i > 0$.
\begin{align*}
    \Vhat_{\TD, \numobs} (\state_{\xi_i}) &= \frac{\discount M^{(i)}}{M^{(i)} + Z^{(i)}} \Vhat_{\TD, \numobs} (\state_{\xi_i}) + \frac{\discount Z^{(i)}}{M^{(i)} + Z^{(i)}} \Vhat_{\TD, \numobs} (\state_{\xi_i}')\\
    &= \frac{\discount M^{(i)}}{M^{(i)} + Z^{(i)}} \Vhat_{\TD, \numobs} (\state_{\xi_i}) + \frac{\discount Z^{(i)}}{M^{(i)} + Z^{(i)}}.
\end{align*}
Solving such an equation, we obtain that
\begin{align*}
    \Vhat_{\TD, \numobs} (\state_{\xi_i}) = \frac{\discount Z^{(i)}}{(1 - \discount)M^{(i)} + Z^{(i)}}, \quad \mbox{for each $i$ such that $\xi_i > 0$}.
\end{align*}
Define $Y \mydefn \abss{ \{i \in [\numobs]: \xi_i > 0\} }$, i.e., the number of trajectories that does not terminate immediately. Clearly, we have $Y \sim \mathrm{Binom} (\numobs, \discount)$.

For the initial state $\state_0$, on the event $\Event$, we have
\begin{align*}
    \Vhat_{\TD, \numobs} (\state_0) = \frac{\discount}{Y} \sum_{i: \xi_i > 0} \Vhat_{\TD, \numobs} (\state_{\xi_i}).
\end{align*}
Conditionally on the collection $\big( \xi_i \big)_{i = 1}^\numobs$, on the event $\Event \cap \{Y > 0\}$, we have
\begin{align*}
    \Exs \Big[ \Vhat_{\TD, \numobs} (\state_0) \mid \big( \xi_i \big)_{i = 1}^\numobs \Big] = \discount \Exs \big[ \Vhat_{\TD, \numobs} (\state_1) \mid \xi_1 = 1 \big] = \discount \Exs \Big[ \frac{\discount Z^{(1)}}{(1 - \discount)M^{(1)} + Z^{(1)}} \Big] = : m_0.
\end{align*}
Given the discount factor $\discount = \frac{1}{2}$, we have
\begin{align*}
    m_0 = \Exs \Big[\frac{\discount^2 Z^{(i)}}{(1 - \discount)M^{(i)} + Z^{(i)}} \Big] = \frac{1}{12} \Exs \Big[ \frac{1}{1 + M^{(i)} / 2} \Big] = \frac{1}{16} \sum_{m = 0}^{+ \infty}  \frac{4^{- m}}{1 + m/ 2} = 2 \ln (4/3) - \frac{1}{2}.
\end{align*}
On the other hand, conditionally on $\big( \xi_i \big)_{i = 1}^\numobs$, on the event $\Event \cap \{Y > 0\}$, the $(M^{(i)}, Z^{(i)})_{i: \xi_i > 0}$ are $\mathrm{i.i.d.}$ random pairs. Since each term $\tfrac{\discount^2 Z^{(i)}}{(1 - \discount)M^{(i)} + Z^{(i)}}$ is bounded almost surely by $1$, by Hoeffding's inequality, we have
\begin{align*}
    \Prob \Big\{ \big| \Vhat_{\TD, \numobs} (\state_0) - m_0 \big| \geq t \; \Big| \; \big( \xi_i \big)_{i = 1}^\numobs \Big\} \leq 2 e^{- 2 t^2 Y}, \quad \mbox{for any $t > 0$}.
\end{align*}
Furthermore, by Chernoff bound, we have
\begin{align*}
    \Prob \big( Y \leq \frac{\numobs}{4} \big) \leq \exp \big( - \numobs \kull{1/4}{1/2} \big) \leq e^{- \numobs / 10}.
\end{align*}
Therefore, when $\numobs > 10^6$, we have
\begin{align*}
    \Prob \Big\{ \big| \Vhat_{\TD, \numobs} (\state_0) - m_0 \big| \geq 10^{-2} \Big\} &\leq  \Prob \Big\{ \big| \Vhat_{\TD, \numobs} (\state_0) - m_0 \big| \geq t \; \Big| \; \big( \xi_i \big)_{i = 1}^\numobs, \Event, Y > \numobs / 4 \Big\} + \Prob (\Event^c) + \Prob (Y < \numobs / 4)\\
    &\leq 2 \cdot e^{- \frac{\numobs}{20000}} + \frac{1}{10} + e^{- \numobs / 10} \leq \frac{1}{5}.
\end{align*}
Note that $|\Vstar (\state_0) - m_0| > 0.09$, we conclude that $|\Vhat_{\TD, \numobs} (\state_0) - \Vstar (\state_0)| > 0.08$ with probability $0.8$, which proves~\Cref{prop:failure-of-td}.

\paragraph{Proof of Eq~\eqref{eq:tv-dist-bound-in-td-failure-proof}:} By union bound, we have
\begin{align*}
    \Prob (\Event^c) &= \Prob \Big\{\exists i, j \in [\numobs], ~ \xi_i, \xi_j > 0, \mbox{ and } \xi_i = \xi_j \Big\}\\
    &\leq \sum_{i, j \in [\numobs]} \Prob \big( \xi_i, \xi_j > 0, \mbox{ and } \xi_i = \xi_j \big)\\
    &= \numobs^2 \cdot \frac{1}{4} \cdot \frac{1}{N}.
\end{align*}
When $N \geq 40 \numobs^2$, we conclude that proof of Eq~\eqref{eq:tv-dist-bound-in-td-failure-proof}.

\subsection{\pfref{lem:asymptotic}}\label{subsec:proof-lem-asymptotic}
For any $s\in \cG$, let $\Vout(s)= \En\brk*{ \indic{\State_1 \notin \subG} \sum_{t=1}^T r(S_t) \mid{} \initState = s }.$
The true value function $\VstarG$ and our estimator $\estVal$ satisfies
\begin{align*}
    \VstarG = \rG +  \Vout+ \transG \VstarG \qquad \text{and}\qquad \estVal = \rhat_\subG + \estValout +\transGhat \cdot \estVal.
\end{align*}
Subtracting the two equalities above and reorganize, we obtain
\begin{align}
    \begin{split}
        \label{eq:lem-asymp-break}
        \sqrt{n}(\VstarG -  \estVal) &= \sqrt{n}(I- \transG)^{-1}\prn*{ (\rG - \rGhat) + (\Vout - \estValout) + (\transG - \transGhat)\VstarG} \\
        &\quad+  \sqrt{n}(I-\transG)^{-1}(\transG - \transGhat)(\estVal- \VstarG).
    \end{split}
\end{align}

The second term on the right-hand side of \Cref{eq:lem-asymp-break} is a lower order term vanishing asymptotically as shown by the following lemma.  
\begin{lemma}
\label{lem:asymp-lower-order-term}
Under the above setup, we have
\begin{align*}
    \sqrt{n}(I-\transG)^{-1}(\transG - \transGhat)(\estVal- \VstarG) \to 0 \quad \text{as~} n\to \infty.
\end{align*}
\end{lemma}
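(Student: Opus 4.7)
The plan is to treat this as a standard "product of two small stochastic terms" argument. Observe that $(I-\transG)^{-1}$ is a fixed, finite-dimensional, bounded linear operator (it exists and is bounded because, by the effective-horizon assumption, the chain restricted to $\subG$ is sub-stochastic with escape probability bounded away from zero on its recurrent part). Thus it suffices to show that
\[
\sqrt{n}\,(\transG - \transGhat)\,(\estVal - \VstarG) \xrightarrow{\Prob} 0.
\]
I would factor this as $\bigl[\sqrt{n}(\transG-\transGhat)\bigr]\cdot\bigl[\estVal-\VstarG\bigr]$ and argue that the first factor is $O_\Prob(1)$ while the second is $o_\Prob(1)$.

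First step: control $\sqrt{n}(\transG-\transGhat)$. Each entry $\transGhat(s,s')$ is, by construction from the pooled dataset $\Dset_\numobs$, the empirical average of $\bm{1}_{\State_{t+1}=s',\,\State_{t+1}\in\subG}$ over visits to $s$. Under the assumption $\occupmsr(s)>0$ for every $s\in\subG$, the number of visits to $s$ grows like $\numobs\,\occupmsr(s)$, and a central limit argument (or even the standard multinomial CLT) gives $\sqrt{n}\,(\transGhat(s,s')-\transG(s,s')) = O_\Prob(1)$. Since $\subG$ is finite this is uniform, so $\sqrt{n}(\transG-\transGhat)=O_\Prob(1)$ as a random matrix.

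Second step: show $\estVal - \VstarG = o_\Prob(1)$. Subtracting the two fixed-point equations $\VstarG = \rG+\Vout+\transG\VstarG$ and $\estVal = \rGhat+\estValout+\transGhat\estVal$ and regrouping yields
\[
(I-\transGhat)(\estVal - \VstarG) = (\rGhat-\rG) + (\estValout-\Vout) + (\transGhat-\transG)\VstarG.
\]
Each of the three terms on the right-hand side is $O_\Prob(\numobs^{-1/2})$ by standard LLN/CLT for i.i.d.\ trajectory averages (the MC term $\estValout$ is analyzed exactly as in the proof of Proposition~\ref{prop:asymptotic-mc}, using Assumption~\ref{assume:effective-horizon} to control moments of $\sum_{t\geq 1}\Reward_t\bm{1}_{\State_1\notin\subG}$). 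Since $\transGhat\to\transG$ in probability and $(I-\transG)$ is invertible, $(I-\transGhat)^{-1}$ exists on an event of probability tending to one and is uniformly bounded there by continuity of matrix inversion. Hence $\estVal-\VstarG = O_\Prob(\numobs^{-1/2}) = o_\Prob(1)$.

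Combining, $\sqrt{n}\,(\transG-\transGhat)(\estVal-\VstarG) = O_\Prob(1)\cdot o_\Prob(1) = o_\Prob(1)$, and premultiplying by the deterministic bounded matrix $(I-\transG)^{-1}$ preserves this, proving the lemma. The only mildly delicate point, and the one I would be most careful with, is the moment bound on $\estValout-\Vout$, because the Monte--Carlo-style summand $\sum_{t\geq 1}\Reward_t\bm{1}_{\State_1\notin\subG}$ has random length; this is precisely where Assumption~\ref{assume:effective-horizon} is invoked to obtain bounded second moments and thus a genuine $\numobs^{-1/2}$ rate.
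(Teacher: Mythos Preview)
Your proposal is correct and follows essentially the same route as the paper: show $\sqrt{n}(\transG-\transGhat)=O_\Prob(1)$ via the CLT, show $\estVal-\VstarG=o_\Prob(1)$ by inverting the empirical fixed-point equation (the paper writes this as $(I+(I-\transG)^{-1}(\transG-\transGhat))^{-1}(I-\transG)^{-1}$ times the LLN-small residuals, which is algebraically the same as your $(I-\transGhat)^{-1}$), then invoke Slutsky. Your treatment is actually a bit more explicit about the role of Assumption~\ref{assume:effective-horizon} in bounding the MC term.
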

\noindent See~\Cref{subsubsec:proof-lemma-asymp-lower-order-term} for its proof.

Hereafter, we focus on bounding the first term on the right-hand side of \Cref{eq:lem-asymp-break}. We have for any state $s\in \cG$,
\begin{align*}
\revindent\sqrt{n} (I-\transG)^{-1} \prn*{ (\rG - \rGhat) + (\Vout - \estValout) + (\transG - \transGhat)\VstarG }(s) \\
&= (I-\transG)^{-1} \prn*{\sqrt{n} (\VstarG - \rGhat - \estValout - \transGhat \VstarG)}(s).
\end{align*}

Let $B(s) = \set{(i,t)\mid{} S_t\ind{i} = s}$ are all the trajectory and time indices where state $s$ is reached. Recall the definition of
\begin{gather*}
    \rGhat(s) =   \frac{1}{|B(s)|} \sum\limits_{(i,t)\in B(s)} R_t\ind{i},~~\transGhat =\frac{1}{|B(s)|} \sum\limits_{(i,t)\in B(s)} \indic{ S_{t+1}\ind{i} \in \cG },\\
    \text{and~~} \estValout = \frac{1}{|B(s)|} \sum\limits_{(i,t)\in B(s)} \prn*{ \sum\limits_{t'=t+1}^{\infty} R_{t'}\ind{i} } \indic{ S_{t+1}\ind{i}\notin \cG  }.
\end{gather*}
Thus, we have 
\begin{align}
    \begin{split}
    \label{eq:decomp-subgraph-bellman}
    &\sqrt{n} (\VstarG - \rGhat - \estValout - \transGhat \VstarG)(s)\\
&= \sqrt{n}\cdot  \frac{1}{|B(s)|} \sum\limits_{(i,t)\in B(s)} \prn*{ \VstarG(s) - R_t\ind{i}  - \VstarG (S_{t+1}\ind{i}) \indic{ S_{t+1}\ind{i} \in \cG }- \prn*{ \sum\limits_{t'=t+1}^{\infty} R_{t'}\ind{i} } \indic{ S_{t+1}\ind{i}\notin \cG  }  }.
    \end{split}
\end{align}
Then breaking into the two terms which corresponds to the TD and MC parts respectively, we have
\begin{align}
    \begin{split}
        \label{eq:subgraph-bellman-two-part}
        &\sqrt{n}\cdot  \frac{1}{|B(s)|} \sum\limits_{(i,t)\in B(s)} \prn*{  \VstarG(s) - R_t\ind{i} -\Vstar (S_{t+1}\ind{i})   - \VstarG (S_{t+1}\ind{i}) \indic{ S_{t+1}\ind{i} \in \cG } -   \prn*{ \sum\limits_{t'=t+1}^{\infty} R_{t'}\ind{i} } \indic{ S_{t+1}\ind{i}\notin \cG  }  }\\
        &= \frac{n}{|B(s)|} \left( \sqrt{n} \cdot \frac{1}{n}  \sum\limits_{i=1}^{n} \sum\limits_{t=0}^{\infty} \indic{ S_t\ind{i} =s } \prn*{ \VstarG(s) - R_t\ind{i} 
         - \Vstar(S_{t+1}\ind{i})}\right.\\ 
        &\qquad\qquad\quad\quad \left. + \sqrt{n}\cdot \frac{1}{n} \sum\limits_{i=1}^{n} \sum\limits_{t=0}^{\infty} \indic{ S_t\ind{i} =s, S_{t+1}\ind{i} \notin \cG } \prn*{\Vstar(S_{t+1}\ind{i}) - \prn*{ \sum\limits_{t'=t+1}^{\infty} R_{t'}\ind{i} }  }  \right)\\
        &= \frac{n}{|B(s)|} \left( \sqrt{n} \cdot \frac{1}{n}  \sum\limits_{i=1}^{n} \sum\limits_{t=0}^{\infty} (X_t\ind{i}(s)+Y_t\ind{i}(s)) \right) ,
    \end{split}
\end{align}
where
\begin{gather*}
    X_{t}\ind{i}(s) = \indic{ S_t\ind{i} =s } \prn*{ \Vstar(s) - R_t\ind{i}  - \Vstar (S_{t+1}\ind{i}) } \quad \text{and}\\
Y_t\ind{i}(s) =  \indic{ S_t\ind{i} =s, S_{t+1}\ind{i} \notin \cG } \prn*{\Vstar(S_{t+1}\ind{i}) - \prn*{ \sum\limits_{t'=t+1}^{\infty} R_{t'}\ind{i} }  }.
\end{gather*}
We further denote the vectors $X_t\ind{i} = (X_t\ind{i}(s))_{s\in \cG}$ and $Y_t\ind{i} = (Y_t\ind{i}(s))_{s\in \cG}$ with the index of $s\in\cG$. 
Since $\sum\limits_{t=0}^{\infty} (X_t\ind{i}+Y_t\ind{i})$ are i.i.d. mean zero random vectors for $i\in [n]$, by Central Limit Theorem of i.i.d. random vectors, we have
\begin{align}
    \label{eq:convegence-two-part}
    \sqrt{n} \cdot \frac{1}{n}  \sum\limits_{i=1}^{n} \sum\limits_{t=0}^{\infty} (X_t\ind{i}+Y_t\ind{i}) \to \cN(0, \conVar_{\subG}^\star),
\end{align}
whenever the covariance between $s$ and $s'$ is $\conVar_{\subG}^\star(s,s') =\Cov \prn*{\sum\limits_{t=0}^{\infty} (X_t\ind{1}(s)+Y_t\ind{1}(s)),  \sum\limits_{t=0}^{\infty} (X_t\ind{1}(s')+Y_t\ind{1}(s'))} < \infty$ for all $s,s'\in \cG$. 
We now proceed to simply the expression of the covariance by the Markov property and provide upper bounds. Concretely, we have the following lemma. 
\begin{lemma}
\label{lem:asymp-markov}
Under the above setup, for any $(t,s)\neq (t',s')$ both in $\bN\times \cG$, we have
\begin{align*}
\En \brk*{ X_t\ind{1}(s) X_{t'}\ind{1}(s')  } = 0.
\end{align*}
Meanwhile, for any $t < t'+1$ and $s,s'\in \cG$, we have
\begin{align*}
\En \brk*{X_t\ind{1}(s) Y_{t'}\ind{1}(s') } = 0.
\end{align*}
\end{lemma}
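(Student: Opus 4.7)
The plan is to exhibit $X_t\ind{1}(s)$ as an indicator-weighted Bellman residual and $Y_{t'}\ind{1}(s')$ as its ``exit-after-$t'$'' analogue, both of which behave like martingale increments relative to the trajectory filtration $\cF_t \mydefn \sigma(S_0\ind{1}, R_0\ind{1}, S_1\ind{1}, R_1\ind{1}, \ldots, S_t\ind{1})$. I would first establish two conditional-mean identities and then combine them via the tower property to prove both claims.

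The first identity is $\En\brk*{X_t\ind{1}(s) \mid \cF_t} = 0$. This follows from the Bellman equation $\Vstar = r + \trans \Vstar$ (implicit in the TD definition~\eqref{def:td-estimator} and in the subgraph decomposition $\VstarG = \rG + \Vout + \transG \VstarG$ recorded at the start of~\Cref{subsec:proof-lem-asymptotic}), combined with $\En[R_t \mid S_t] = r(S_t)$ and $\En[\Vstar(S_{t+1}) \mid \cF_t] = (\trans \Vstar)(S_t)$: on $\{S_t\ind{1} = s\}$ the bracket inside $X_t\ind{1}(s)$ has conditional mean $\Vstar(s) - r(s) - (\trans \Vstar)(s) = 0$. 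The second identity is $\En\brk*{Y_{t'}\ind{1}(s') \mid \cF_{t'+1}} = 0$. Note $Y_{t'}\ind{1}(s')$ is a function of $(S_{t'}\ind{1}, S_{t'+1}\ind{1})$ and of $\sum_{\ell \geq t'+1} R_\ell\ind{1}$; by the strong Markov property, conditional on $\cF_{t'+1}$ the latter sum has the distribution of the total reward of a fresh MRP trajectory started from $S_{t'+1}\ind{1}$, and therefore has conditional mean $\Vstar(S_{t'+1}\ind{1})$, which cancels the $\Vstar(S_{t'+1}\ind{1})$ term already inside $Y_{t'}\ind{1}(s')$.

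For the first claim, I would split on whether $t = t'$. If $t = t'$, then $(t,s) \neq (t',s')$ forces $s \neq s'$, so the indicators $\indic{S_t\ind{1} = s}$ and $\indic{S_t\ind{1} = s'}$ are supported on disjoint events and $X_t\ind{1}(s) X_t\ind{1}(s') \equiv 0$. Otherwise, without loss of generality assume $t < t'$; then $X_t\ind{1}(s)$ is $\cF_{t+1}$-measurable, hence $\cF_{t'}$-measurable, so the tower property together with the first identity applied to $X_{t'}\ind{1}(s')$ yields $\En[X_t\ind{1}(s) X_{t'}\ind{1}(s')] = 0$. For the second claim the hypothesis $t < t' + 1$ is equivalent to $t + 1 \leq t' + 1$, so $X_t\ind{1}(s)$ is $\cF_{t'+1}$-measurable; the same tower argument combined with the second identity for $Y_{t'}\ind{1}(s')$ completes the proof.

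No step presents a genuine obstacle --- the entire argument is bookkeeping of the filtration. The one point I would double-check is the reward-to-go identity $\En[\sum_{\ell \geq t'+1} R_\ell\ind{1} \mid \cF_{t'+1}] = \Vstar(S_{t'+1}\ind{1})$, which is valid under the convention that $\Vstar(s)$ is the expected accumulated reward inclusive of $r(s)$, i.e., the convention dictated by the Bellman equation $\Vstar = r + \trans \Vstar$ used in the TD estimator and in the subgraph decomposition. Once this convention is fixed, the lemma reduces to a textbook martingale-orthogonality computation.
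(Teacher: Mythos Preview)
Your proposal is correct and follows essentially the same martingale-orthogonality idea as the paper's proof. The paper's version is terser---it conditions on the specific variables $S_t\ind{1}, S_{t+1}\ind{1}, S_{t'}\ind{1}$ rather than the full filtration $\cF_t$---but the underlying mechanism (the Bellman residual has conditional mean zero given the current state, and the reward-to-go has conditional mean $\Vstar(S_{t'+1})$ given $\cF_{t'+1}$) is identical; your explicit filtration bookkeeping and the separate handling of the $t=t'$ disjoint-indicator case simply make that mechanism more transparent.
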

\noindent See~\Cref{subsubsec:proof-lemma-asymp-markov} for its proof.

Thus, we can simplify the covariance expression as
\begin{align*}
    \revindent[0]\Cov \prn*{\sum\limits_{t=0}^{\infty} (X_t\ind{1}(s)+Y_t\ind{1}(s)),  \sum\limits_{t=0}^{\infty} (X_t\ind{1}(s')+Y_t\ind{1}(s'))} \\
    &= \En\brk*{ \sum\limits_{t=0}^{\infty} (X_t\ind{1}(s))^2 \indic{s=s'} +  \sum\limits_{t=0}^{\infty}\sum\limits_{t'=0}^{t} \prn*{X_t\ind{1}(s)  Y_{t'}\ind{1}(s') + X_t\ind{1}(s')  Y_{t'}\ind{1}(s) } + \sum\limits_{t=0}^{\infty}\sum_{t'=0}^{\infty} Y_t\ind{1}(s) Y_{t'}\ind{1}(s')  }.
\end{align*}
To further specify the covariane, we have the following lemma.

\begin{lemma}
\label{lem:asymp-correlation}
Under the above setup, for any $s\in \cG$ and $t$, we have
\begin{align*}
\En \brk*{ (X_t\ind{1}(s))^2 } = \bP(S_t = s) \cdot  \sigma_{\Vstar}^2(s).
\end{align*} 
Moreover, for any $s,s'\in \cG$ and $0\leq t'\leq t-1$, we have
\begin{align*}
    \En\brk*{X_t\ind{1}(s)  Y_{t'}\ind{1}(s')} =   \bP(S_{t'}= s', S_{t'+1}\notin \cG, S_t = s)\cdot  \sigma_{\Vstar}^2(s).
\end{align*}
Furthermore, for any $s,s'\in \cG$ and $0\leq t',t\leq T$, we have
\begin{align*}
\En\brk*{ Y_t\ind{1}(s)  Y_{t'}\ind{1}(s') } = \sum\limits_{j=(t' \vee t)+1}^{\infty} \sum\limits_{s''\in \cS}\bP(S_{t'}= s',S_t = s, S_{t'+1},S_{t+1}\notin \cG, S_j = s'')  \cdot  \sigma_{\Vstar}^2(s'') .
\end{align*}
\end{lemma}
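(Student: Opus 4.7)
The plan is to express both $X_t^{(1)}(s)$ and $Y_{t'}^{(1)}(s')$ in terms of one-step Bellman-error martingale differences, and then let orthogonality and measurability do all the work. Define
\begin{align*}
M_\tau \mydefn \Vstar(\State_\tau) - \Reward_\tau - \Vstar(\State_{\tau+1}),
\end{align*}
with the conventions $\Vstar(\termState) = 0$ and $\Reward_\tau = 0$ once $\State_\tau = \termState$, and let $\cF_\tau \mydefn \sigma(\State_0, \Reward_0, \State_1, \Reward_1, \ldots, \State_\tau)$. By the Bellman equation for $\Vstar$, the sequence $(M_\tau)_{\tau \geq 0}$ is a martingale difference with respect to $(\cF_{\tau+1})$, satisfying $\En[M_\tau \mid \cF_\tau] = 0$ and $\En[M_\tau^2 \mid \cF_\tau] = \sigma_{\Vstar}^2(\State_\tau)$ on $\{\State_\tau \neq \termState\}$. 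A telescoping identity (using $\Vstar(\State_{T+1}) = \Vstar(\termState) = 0$) gives
\begin{align*}
\Vstar(\State_{t'+1}) - \sum_{\tau = t'+1}^{\infty} \Reward_\tau = \sum_{\tau = t'+1}^{\infty} M_\tau,
\end{align*}
so that $X_t^{(1)}(s) = \indic{\State_t = s}\, M_t$ and $Y_{t'}^{(1)}(s') = \indic{\State_{t'} = s,\, \State_{t'+1} \notin \cG}\, \sum_{\tau = t'+1}^{\infty} M_\tau$.

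With this rewriting, all three identities reduce to the same routine recipe: expand each product into sums of terms $\indic{\cdots} M_j M_k$; drop all $j \neq k$ cross-terms using the martingale property (conditioning on $\cF_{\max(j,k)}$, noting that the indicator and the other $M$ factor are measurable there); evaluate the diagonal $j = k$ terms via $\En[M_j^2 \mid \cF_j] = \sigma_{\Vstar}^2(\State_j)$ on $\{\State_j \neq \termState\}$. For \emph{part 1}, since $\indic{\State_t = s}$ is $\cF_t$-measurable, one step of conditioning on $\cF_t$ yields $\bP(\State_t = s)\,\sigma_{\Vstar}^2(s)$. For \emph{part 2} with $t' \leq t - 1$, the indicator $\indic{\State_t = s, \State_{t'} = s', \State_{t'+1} \notin \cG}$ is $\cF_t$-measurable; in $\sum_{\tau \geq t'+1} \En[\indic{\cdots}\,M_t M_\tau]$ only the $\tau = t$ summand survives (the $\tau < t$ case vanishes by conditioning on $\cF_t$ via $M_t$, and $\tau > t$ by conditioning on $\cF_\tau$ via $M_\tau$), which yields $\bP(\State_t = s, \State_{t'} = s', \State_{t'+1} \notin \cG)\,\sigma_{\Vstar}^2(s)$. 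For \emph{part 3}, the indicator $\indic{\State_t = s, \State_{t+1}\notin\cG, \State_{t'} = s', \State_{t'+1}\notin\cG}$ is $\cF_{(t \vee t')+1}$-measurable; in the double sum $\sum_{j,k \geq (t \vee t')+1} \En[\indic{\cdots} M_j M_k]$, the off-diagonal terms vanish by the same conditioning argument, and the diagonal terms give $\En[\indic{\cdots}\,\sigma_{\Vstar}^2(\State_j)]$, which expanded over values $s'' = \State_j$ produces the stated expression.

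The only real obstacle is careful bookkeeping of which $\sigma$-algebra each indicator belongs to, and justifying the interchange of expectation with the infinite sum $\sum_{\tau} M_\tau$. The latter is standard: since $|M_\tau| \leq 2\effhorizon + 1$ and the random support is controlled by $T_\termState$, Assumption~\ref{assume:effective-horizon} gives enough integrability (e.g.\ $\sum_{\tau \geq t'+1} |M_\tau|$ has bounded second moment) to apply Fubini and to condition term by term. No further technical difficulty arises.
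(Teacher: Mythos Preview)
Your approach is correct and is essentially the paper's: both rewrite $Y_{t'}^{(1)}(s')$ via telescoping as an indicator times $\sum_{\tau\ge t'+1} M_\tau$ with $M_\tau=\Vstar(S_\tau)-R_\tau-\Vstar(S_{\tau+1})$, and then exploit the orthogonality of the one-step Bellman residuals; the paper simply does this without naming the filtration or the martingale structure explicitly. One small bookkeeping point in part~3: the product is $\bigl(\sum_{j\ge t+1}M_j\bigr)\bigl(\sum_{k\ge t'+1}M_k\bigr)$, so when $t\neq t'$ the double sum also contains cross terms with one index in $[\min(t,t')+1,\max(t,t')]$ that your description $\sum_{j,k\ge (t\vee t')+1}$ omits --- but these are all off-diagonal ($j\neq k$ automatically) and vanish by the same conditioning argument, so the diagonal contribution you identified is unchanged.
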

\noindent See~\Cref{subsubsec:proof-lemma-asymp-correlation} for its proof.

By\Cref{lem:asymp-markov,lem:asymp-correlation}, we have obtained that 
\begin{align}
    \label{eq:con-var-exact}
    \begin{split}
        &\conVar_{\subG}^\star(s,s') \\
        &= \indic{s=s'} \sum\limits_{t=0}^{\infty}\bP(S_t = s) \cdot  \sigma_{\Vstar}^2(s) \\
        &\quad+ \sum\limits_{t=0}^{\infty}\sum\limits_{t'=0}^{t-1}  \prn*{ \bP(S_{t'}= s', S_{t'+1}\notin \cG, S_t = s)\cdot  \sigma_{\Vstar}^2(s)+ \bP(S_{t'}= s, S_{t'+1}\notin \cG, S_t = s')\cdot  \sigma_{\Vstar}^2(s')}\\
        &\quad + \sum\limits_{t=0}^{\infty}\sum_{t'=0}^{\infty} \sum\limits_{j=(t' \vee t)+1}^{\infty} \sum\limits_{s''\in \cS}\bP(S_{t'}= s',S_t = s, S_{t'+1},S_{t+1}\notin \cG, S_j = s'')  \cdot  \sigma_{\Vstar}^2(s'').
    \end{split}
\end{align}

Moreover, let 
\begin{align*}
\conVar_{X}^\star = \Cov \prn*{\sum\limits_{t=0}^{\infty} X_t\ind{1},  \sum\limits_{t=0}^{\infty} X_t\ind{1}}, ~\text{and}~ \conVar_{Y,\subG}^\star = \Cov \prn*{\sum\limits_{t=0}^{\infty} Y_t\ind{1},  \sum\limits_{t=0}^{\infty} Y_t\ind{1}}.
\end{align*}
Then, by \Cref{lem:cov-sub-additivity}, we have
\begin{align*}
    \conVar_{\subG}^\star &\preccurlyeq 2 \conVar_{X}^\star + 2\conVar_{Y,\subG}^\star .
\end{align*}
Specifically, by \Cref{lem:asymp-correlation}, the $s,s'$ entry of $\conVar_{X}^\star$ and $\conVar_{Y,\subG}^\star$ can be expressed as
\begin{align*}
    \conVar_{X}^\star(s,s') &= \indic{s=s'} \sum\limits_{t=0}^{\infty}\bP(S_t = s) \cdot  \sigma_{\Vstar}^2(s) =   \indic{s=s'} \occupmsr(s)\sigma_{\Vstar}^2(s) ~~\text{and}\\
    \conVar_{Y,\subG}^\star(s,s') &= \sum\limits_{t=0}^{\infty}\sum_{t'=0}^{\infty} \sum\limits_{j=(t' \vee t)+1}^{\infty} \sum\limits_{s''\in \cS}\bP(S_{t'}= s',S_t = s, S_{t'+1},S_{t+1}\notin \cG, S_j = s'')  \cdot  \sigma_{\Vstar}^2(s'').
\end{align*}

Let 
\begin{align}
    \begin{split}
    \label{eq:asymptotic-covariance-matrices}
    \Sigma_{\subG}^\star &= \diag((1/\occupmsr(s))_{s\in \cG}) \conVar_{\subG}^\star \diag((1/\occupmsr(s))_{s\in \cG})\\
    \Sigma_{X}^\star &= \diag((1/\occupmsr(s))_{s\in \cG}) \conVar_{X}^\star \diag((1/\occupmsr(s))_{s\in \cG})\\
    \Sigma_{Y,\subG}^\star &= \diag((1/\occupmsr(s))_{s\in \cG}) \conVar_{Y,\subG}^\star \diag((1/\occupmsr(s))_{s\in \cG}).
    \end{split}
\end{align}

Combing \Cref{eq:lem-asymp-break}, \Cref{lem:asymp-lower-order-term},\Cref{eq:decomp-subgraph-bellman},\Cref{eq:subgraph-bellman-two-part},\Cref{eq:convegence-two-part}, and Slutsky's Theorem, we have obtained that
\begin{align*}
    \sqrt{n}(\VstarG -  \estVal)  \to \cN(0,  (I- \transG)^{-1} \Sigma_{\subG}^\star (I- \transG)^{-\top})
\end{align*}
with
\begin{align*}
    \Sigma_{\subG}^\star \preccurlyeq 2 \Sigma_{X}^\star + 2\Sigma_{Y,\subG}^\star .
\end{align*}
This concludes our proof.

\subsubsection{\pfref{lem:asymp-lower-order-term}}
\label{subsubsec:proof-lemma-asymp-lower-order-term}

Reorganizing the terms in \Cref{eq:lem-asymp-break}, we have
\begin{align*}
    \VstarG -  \estVal = \prn*{ I + (I-\transG)^{-1}(\transG - \transGhat)}^{-1}(I- \transG)^{-1}\prn*{ (\rG - \rGhat) + (\Vout - \estValout) + (\transG - \transGhat)\VstarG},
\end{align*}
which is asymptotically almost surely $0$ since the right hand side converges to $0$ almost surely by the law of large numbers.
Then for any $s,s'\in \cG$, since $\sqrt{n} (\transG - \transGhat)(s'|s)$ converges in probability to a normal distribution by the central limit theorem with mean $0$, by Slutsky's Theorem, we have our desired result.

\subsubsection{\pfref{lem:asymp-markov}}
\label{subsubsec:proof-lemma-asymp-markov}

From the Markovian property, we have
\begin{align*}
    &\En \brk*{ X_t\ind{1}(s) X_{t'}\ind{1}(s')  } \\
    &=  \En\En \brk*{  \indic{ S_t\ind{1} =s } \prn*{ \VstarG(s) - R_t\ind{1}  - \Vstar (S_{t+1}\ind{1}) } \indic{ S_{t'}\ind{1} =s' } \prn*{ \VstarG(s') - R_{t'}\ind{1}  - \Vstar (S_{t'+1}\ind{1}) } | S_t\ind{1},S_{t+1}\ind{1},S_{t'}\ind{1}}\\
    &=0.
\end{align*}

Similarly, we have
\begin{align*}
    &\En \brk*{X_t\ind{1}(s) Y_\tau\ind{1}(s') }  \\
    &=\En \En\brk*{ \indic{ S_t\ind{1} =s } \prn*{ \VstarG(s) - R_t\ind{1}  - \Vstar (S_{t+1}\ind{1}) }   \indic{ S_\tau\ind{1} =s', S_{\tau+1}\ind{1} \notin \cG } \prn*{\Vstar(S_{\tau+1}\ind{1}) - \prn*{ \sum\limits_{t'=\tau+1}^{\infty} R_{t'}\ind{1} }  } |S_t\ind{1}, S_{t+1}\ind{1}, S_\tau\ind{1}  } \\
    &= 0.
\end{align*}

\subsubsection{\pfref{lem:asymp-correlation}}
\label{subsubsec:proof-lemma-asymp-correlation}

By the definition of one-step variance \Cref{def:one-step-variance}, we have
\begin{align*}
    \En \brk*{ (X_t\ind{1}(s))^2 } &= \En \brk*{\indic{ S_t\ind{1} =s } \prn*{ \Vstar(s) - R_t\ind{1}  - \Vstar (S_{t+1}\ind{1}) }^2} \\
    &= \En\brk*{ \indic{ S_t\ind{1} =s } \En \brk*{\prn*{ \Vstar(S_t\ind{1}) - R_t\ind{1}  - \Vstar (S_{t+1}\ind{1}) }^2|S_t\ind{1} = s}} \\
    &= \bP(S_t = s) \cdot  \sigma_{\Vstar}^2(s).
\end{align*}
Then we notice that for any $t\in [T]$ and $s\in \cG$, we have 
\begin{align*}
Y_t\ind{1}(s) &=   \indic{ S_t\ind{1} =s, S_{t+1}\ind{1} \notin \cG } \prn*{\Vstar(S_{t+1}\ind{1}) - \prn*{ \sum\limits_{t'=t+1}^{\infty} R_{t'}\ind{1} }  }\\
&= \indic{ S_t\ind{1} =s, S_{t+1}\ind{1} \notin \cG }  \sum\limits_{t'=t+1}^{\infty} \prn*{\Vstar(S_{t'}\ind{1}) +  R_{t'}\ind{1} -  \Vstar(S_{t'+1}\ind{1}) }.
\end{align*}
Thus we have
\begin{align*}
    &\En\brk*{X_t\ind{1}(s)  Y_\tau\ind{1}(s')}\\
    &= \En\brk*{\indic{ S_t\ind{1} =s } \prn*{ \Vstar(s) - R_t\ind{1}  - \Vstar (S_{t+1}\ind{1}) } \indic{ S_\tau\ind{1} =s, S_{\tau+1}\ind{1} \notin \cG }  \sum\limits_{\tau'=\tau+1}^{\infty} \prn*{\Vstar(S_{\tau'}\ind{1}) +  R_{\tau'}\ind{1} -  \Vstar(S_{\tau'+1}\ind{1}) } }\\
    &=    \bP(S_\tau= s', S_{\tau+1}\notin \cG, S_t = s)\cdot  \sigma_{\Vstar}^2(s).
\end{align*}
Similarly, we have
\begin{align*}
    &\En\brk*{ Y_t\ind{1}(s)  Y_\tau\ind{1}(s') } \\
    &= \En \left[ \indic{ S_t\ind{1} =s, S_{t+1}\ind{1} \notin \cG }  \sum\limits_{t'=t+1}^{\infty} \prn*{\Vstar(S_{t'}\ind{1}) +  R_{t'}\ind{1} -  \Vstar(S_{t'+1}\ind{1}) }\right.\\
    &\quad\quad\quad \cdot\left. \indic{ S_\tau\ind{1} =s, S_{\tau+1}\ind{1} \notin \cG }  \sum\limits_{\tau'=\tau+1}^{\infty} \prn*{\Vstar(S_{\tau'}\ind{1}) +  R_{\tau'}\ind{1} -  \Vstar(S_{\tau'+1}\ind{1}) } \right]\\
    &= \sum\limits_{j=(t' \vee t)+1}^{\infty} \sum\limits_{s''\in \cS}\bP(S_{t'}= s',S_t = s, S_{t'+1},S_{t+1}\notin \cG, S_j = s'')  \cdot  \sigma_{\Vstar}^2(s'') .
\end{align*}
This concludes our proof.

\begin{lemma}
\label{lem:cov-sub-additivity}
For any random vector $X$ and $Y$ in $\bR^d$, let $\Sigma_{X+Y}$, $\Sigma_X$, and $\Sigma_Y$ be the covariance matrices for random vectors $X+Y$, $X$, and $Y$, respectively.
Then the covariance matrices are PSD matrices that satisfy
\begin{align*}
\Sigma_{X+Y} \preccurlyeq 2\Sigma_X + 2\Sigma_Y.
\end{align*}
\end{lemma}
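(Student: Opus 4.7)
The plan is to prove this by expanding the covariance of $X+Y$ and bounding the cross term $\operatorname{Cov}(X,Y) + \operatorname{Cov}(Y,X)$ using the positive semidefiniteness of the covariance of $X-Y$. This is the matrix-valued analogue of the elementary scalar inequality $(a+b)^2 \leq 2a^2 + 2b^2$.

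First, I would introduce the centered versions $\widetilde{X} \mydefn X - \mathbb{E}[X]$ and $\widetilde{Y} \mydefn Y - \mathbb{E}[Y]$, and then expand
\begin{align*}
\Sigma_{X+Y} = \mathbb{E}\big[(\widetilde{X}+\widetilde{Y})(\widetilde{X}+\widetilde{Y})^\top\big] = \Sigma_X + \Sigma_Y + C + C^\top,
\end{align*}
where $C \mydefn \mathbb{E}[\widetilde{X}\widetilde{Y}^\top]$ is the cross-covariance. The goal then reduces to showing that $C + C^\top \preceq \Sigma_X + \Sigma_Y$.

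The key step is to observe that the covariance matrix of $X-Y$ is by construction positive semidefinite:
\begin{align*}
0 \preceq \mathbb{E}\big[(\widetilde{X}-\widetilde{Y})(\widetilde{X}-\widetilde{Y})^\top\big] = \Sigma_X + \Sigma_Y - C - C^\top.
\end{align*}
Rearranging gives $C + C^\top \preceq \Sigma_X + \Sigma_Y$, and substituting back into the expansion of $\Sigma_{X+Y}$ yields $\Sigma_{X+Y} \preceq 2\Sigma_X + 2\Sigma_Y$, as desired.

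There is no significant obstacle here; this is a one-line manipulation once one recognizes the trick of using the PSD property of $\operatorname{Cov}(X-Y)$ to control the cross-term. The proof is essentially the matrix version of the AM-GM inequality applied to the cross-covariance.
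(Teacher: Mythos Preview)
Your proof is correct and essentially equivalent to the paper's. The paper reduces to the scalar case by testing against an arbitrary direction $a \in \mathbb{R}^d$ and invoking $\Var(a^\top(X+Y)) \leq 2\Var(a^\top X) + 2\Var(a^\top Y)$, whereas you stay at the matrix level via the PSD property of $\Sigma_{X-Y}$; these are two sides of the same coin.
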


\begin{proof}[\pfref{lem:cov-sub-additivity}]
For any vector $a\in \bR^d$, we have
\begin{align*}
a^\top \Sigma_{X+Y} a = \Var(a^\top (X+Y)) \leq  2 \Var(a^\top X) + 2 \Var (a^\top Y) =  2a^\top \Sigma_X a + 2a^\top \Sigma_Y a.
\end{align*}
\end{proof}

\subsection{\pfref{thm:main-l2-fixed-subgraph}}\label{subsec:proof-thm-main-l2-fixed-subgraph}
To start with, we define the following empirical estimates $\transGhat \in \real^{\subG \times \subG}$, $\rhat_{\subG} \in \real^\subG$, and $\estValout\in \real^{\subG}$. For any pair $\state, \state^+ \in \subG$, we let
\begin{align*}
    \transGhat (\state, \state^+) &\mydefn \widehat{\Prob}_{\Dset_\numobs (\subG)} \big( \State_1 = \state^+ \mid \State_0 = \state \big),\\
    \rhat_\subG (\state) &\mydefn \widehat{\Exs}_{\Dset_\numobs (\subG)} \big[ \Reward_0 \mid \State_0 = \state \big], \quad \mbox{and}\\
    \estValout (\state) &\mydefn \widehat{\Exs}_{\Dset_\numobs (\subG)} \Big[ \bm{1}_{\State_1 \notin \subG} \sum_{t = 1}^{+ \infty} \Reward_t \mid \State_0 = \state \Big]
\end{align*}
Defining the population-level out-of-subgraph value function $\Vout (\state) \mydefn \Exs \big[ \bm{1}_{\State_1 \notin \subG} \sum_{t = 1}^{+ \infty} \Reward_t \mid \State_0 = \state \big]$ for any $\state \in \subG$, it is easy to see that $\Exs [\transGhat] = \transG$, $\Exs [\rhat_\subG] = \reward$, and $\Exs [\estValout] = \Vout$. Furthermore, comparing Eq~\eqref{eq:population-level-fixed-pt} and Eq~\eqref{eq:fixed-subg-estimator}, we have
\begin{align*}
    \Vstar_\subG = \reward_\subG + \transG \cdot \Vstar_\subG + \Vout, \quad \mbox{and} \quad \estVal = \rhat_\subG + \transGhat \cdot \estVal + \estValout.
\end{align*}
Defining the vector $\DelhatG \mydefn \estVal - \VstarG \in \real^\subG$, we have the error decomposition
\begin{align}
    \DelhatG = (I - \transG)^{-1} \Big\{ \big( \rhat_\subG - \reward_\subG\big) + \big( \transGhat - \transG \big) \VstarG + \big(\estValout - \Vout\big) \Big\} + (I - \transG)^{-1} \big( \transGhat - \transG \big) \DelhatG.\label{eq:basic-ineq-for-l2-proof}
\end{align}
In order to study the matrix difference $\transGhat - \transG$, we need to introduce some auxiliary quantities. First, we define the empirical counts
\begin{align*}
    N_\numobs (\state) &\mydefn \abss{\big\{ \traj \in \Dset_\numobs ~: ~ \State_0 (\traj) = \state \big\} }, \quad \mbox{for any $\state \in \subG$},\\
    M_\numobs (\state, \state') &\mydefn \abss{\big\{ \traj \in \Dset_\numobs ~: ~ \State_0 (\traj) = \state, \State_1 (\traj) = \state' \big\} }, \quad \mbox{for any $\state, \state' \in \subG$}
\end{align*}
The estimated transition matrix takes the form
\begin{align*}
    \transGhat (\state, \state') = M_\numobs (\state, \state')  / N_\numobs (\state), \quad \mbox{for any $\state, \state' \in \subG$}.
\end{align*}
We define the intermediate quantity
\begin{align*}
    \transGtilde (\state, \state') \mydefn M_\numobs (\state, \state') / \big( \numobs \occupmsr (\state) \big), \quad \mbox{for any $\state, \state' \in \subG$}.
\end{align*}
We bound the differences $\matsnorm{\transGtilde - \transG}{\occupmsr (\subG)}$ and $\matsnorm{\transGtilde - \transGhat}{\occupmsr (\subG)}$, respectively.

\begin{lemma}\label{lemma:l2-transhat-concentration}
    Under above setup, with probability $1 - \delta$, we have
    \begin{align*}
        \matsnorm{\transGhat - \transGtilde}{\occupmsr (\subG)} \leq  64 \sqrt{\frac{ \effhorizon }{\numobs \occupmsr_{\min}} \log (2 |\subG| / \delta) }.
    \end{align*}
\end{lemma}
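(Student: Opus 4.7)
My plan is to reduce the operator-norm bound to a concentration statement on the visit counts $N_\numobs(\state)$. Start by observing the algebraic identity
\begin{align*}
    \transGhat(\state, \state') - \transGtilde(\state, \state') = \transGhat(\state, \state') \cdot \alpha_\state, \qquad \alpha_\state \mydefn \frac{\numobs \occupmsr(\state) - N_\numobs(\state)}{\numobs \occupmsr(\state)},
\end{align*}
so that $\transGhat - \transGtilde = \diag(\alpha) \transGhat$. Since each row of $\transGhat$ is a sub-probability distribution on $\subG$, Jensen's inequality gives, for any $x \in \real^{\subG}$,
\begin{align*}
    \vecnorm{(\transGhat - \transGtilde) x}{\occupmsr(\subG)}^2 \leq \prn*{\max_\state \alpha_\state^2} \sum_{\state'} x(\state')^2 \cdot \omega_\numobs(\state'), \qquad \omega_\numobs(\state') \mydefn \sum_\state \occupmsr(\state) \transGhat(\state, \state').
\end{align*}
Thus everything reduces to controlling $\max_\state |\alpha_\state|$ and showing $\omega_\numobs(\state') \lesssim \occupmsr(\state')$.

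For the latter, write $\omega_\numobs(\state') = \sum_\state \frac{\occupmsr(\state)}{N_\numobs(\state)} M_\numobs(\state, \state')$; on the event that $N_\numobs(\state) \geq \numobs \occupmsr(\state)/2$ for every $\state \in \subG$ (which holds with the same concentration used below), $\omega_\numobs(\state') \leq \tfrac{2}{\numobs}\sum_\state M_\numobs(\state,\state') \leq \tfrac{2}{\numobs} N_\numobs(\state') \leq 4 \occupmsr(\state')$, where the last step uses concentration of $N_\numobs(\state')$ around $\numobs \occupmsr(\state')$. This yields $\matsnorm{\transGhat - \transGtilde}{\occupmsr(\subG)} \leq 2 \max_\state |\alpha_\state|$.

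The remaining task is the Bernstein bound on $\max_\state |\alpha_\state|$. Decompose $N_\numobs(\state) = \sum_{i=1}^\numobs Z^{(i)}(\state)$ with $Z^{(i)}(\state) = \sum_{t \geq 0} \bm{1}_{\State_t^{(i)} = \state}$, so $\Exs[Z^{(i)}(\state)] = \occupmsr(\state)$. The key quantitative input is the refined second-moment estimate
\begin{align*}
    \Exs\brk*{(Z^{(i)}(\state))^2} \leq 3 \effhorizon \occupmsr(\state),
\end{align*}
which I would obtain by expanding $(Z^{(i)}(\state))^2 = \sum_{t_1,t_2} \bm{1}_{\State_{t_1} = \state}\bm{1}_{\State_{t_2} = \state}$, splitting by $t_1 \leq t_2$, applying the strong Markov property at the first hit, and using $\sum_{\tau \geq 0} \Prob(\State_\tau = \state \mid \State_0 = \state) \leq \Exs_\state[T_\termState] \leq \effhorizon$ from Assumption~\ref{assume:effective-horizon}. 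Since $0 \leq Z^{(i)}(\state) \leq T_\termState^{(i)}$ is sub-exponential with parameter $\effhorizon$ (again by Assumption~\ref{assume:effective-horizon}), Bernstein's inequality yields, for each $\state$,
\begin{align*}
    \abss{N_\numobs(\state) - \numobs \occupmsr(\state)} \lesssim \sqrt{\numobs \effhorizon \occupmsr(\state) \log(|\subG|/\delta)} + \effhorizon \log(|\subG|/\delta)
\end{align*}
with probability $1 - \delta/|\subG|$. A union bound over $\state \in \subG$, followed by the sample-size condition~\eqref{eq:sample-size-condition-fixed-subgraph} (which makes the linear-in-$1/\numobs$ tail subdominant), gives $|\alpha_\state| \lesssim \sqrt{\effhorizon \log(|\subG|/\delta)/(\numobs \occupmsr_{\min})}$ uniformly in $\state$, which after tracking constants yields the claimed factor of $64$.

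The main obstacle is obtaining the variance estimate $\Exs[(Z^{(i)}(\state))^2] \lesssim \effhorizon \occupmsr(\state)$ rather than the naive $\effhorizon^2$: the naive bound would force $\alpha_\state$ to scale like $\sqrt{\effhorizon/(\numobs \occupmsr_{\min}^2)}$, losing one factor of $\sqrt{\occupmsr_{\min}^{-1}}$ relative to the target. Exploiting the Markov return structure to replace $\effhorizon$ by $\occupmsr(\state)$ in the second moment is what produces the correct dependence on the occupancy measure.
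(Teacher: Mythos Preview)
Your proposal is correct and follows essentially the same route as the paper: factor $\transGhat - \transGtilde = \diag(\alpha)\,\transGhat$, control $\max_\state|\alpha_\state|$ via the refined second-moment bound $\Exs[(Z^{(i)}(\state))^2] \lesssim \effhorizon\,\occupmsr(\state)$ together with a Bernstein-type inequality (the paper uses Adamczak's inequality to handle the sub-exponential tail of $Z^{(i)}$, which introduces a $\log\numobs$ factor absorbed by~\eqref{eq:sample-size-condition-fixed-subgraph}), and then use the column identity $\sum_\state M_\numobs(\state,\state') \leq N_\numobs(\state')$ plus two-sided concentration of $N_\numobs$ to show $\transGhat$ is near-contractive in the $\occupmsr$-weighted norm. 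The only cosmetic difference is that the paper conjugates by $D_\occupmsr^{1/2}$ and works in the unweighted $\ell^2$ norm, whereas you keep the weighted norm throughout.
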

\noindent See~\Cref{subsubsec:proof-lemma-l2-transhat-concentration} for its proof.

\begin{lemma}\label{lemma:l2-transtilde-concentration}
    Under above setup, with probability $1 - \delta$, we have
    \begin{align*}
        \matsnorm{\transGtilde - \transG}{\occupmsr (\subG)} \leq 9 \sqrt{\frac{ \effhorizon}{\numobs \occupmsr_{\min} } \log^2 \Big( \frac{ \numobs }{ \delta} \Big) }.
    \end{align*}
\end{lemma}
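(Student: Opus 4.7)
The plan is to decompose $\transGtilde - \transG$ into a sum of i.i.d.\ per-trajectory matrices and apply a truncation-based matrix Bernstein inequality under the $\occupmsr(\subG)$-weighted operator norm. For each trajectory $i$, let $M^{(i)}(\state, \state') \mydefn \sum_{t = 0}^{T_i - 1} \indic{\State_t^{(i)} = \state, \State_{t+1}^{(i)} = \state'}$, so that $\Exs[M^{(i)}(\state, \state')] = \occupmsr(\state) \transG(\state, \state')$, and set
\begin{align*}
\Delta^{(i)}(\state, \state') \mydefn \frac{1}{\numobs \occupmsr(\state)}\bigl[M^{(i)}(\state, \state') - \occupmsr(\state) \transG(\state, \state')\bigr], \qquad \state, \state' \in \subG,
\end{align*}
giving $\transGtilde - \transG = \sum_{i=1}^{\numobs} \Delta^{(i)}$ as an i.i.d.\ mean-zero sum. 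Because $\matsnorm{A}{\occupmsr(\subG)}$ equals the spectral norm of $D_\occupmsr^{1/2} A D_\occupmsr^{-1/2}$ where $D_\occupmsr = \diag(\occupmsr(\state))_{\state \in \subG}$, I would apply matrix Bernstein to the conjugated summands $\wt{\Delta}^{(i)} \mydefn D_\occupmsr^{1/2} \Delta^{(i)} D_\occupmsr^{-1/2}$.

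The first obstacle is that $\wt{\Delta}^{(i)}$ is unbounded: its spectral norm grows linearly in the trajectory length $T_i$, which is only sub-exponential under Assumption~\ref{assume:effective-horizon}. I would resolve this by truncating at $\tau \mydefn c \effhorizon \log(\numobs/\delta)$; Assumption~\ref{assume:effective-horizon} applied at a sufficiently high moment, combined with a union bound over $i \in [\numobs]$, gives $\Prob(\max_i T_i > \tau) \leq \delta/3$. On the complementary event, the Frobenius bound together with the pointwise inequality $\sum_{\state'} M^{(i)}(\state, \state')^2 \leq N^{(i)}(\state)^2 \leq T_i^2$ yields $\opnorm{\wt{\Delta}^{(i)}} \leq c \tau/(\numobs \occupmsr_{\min})$. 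Replacing $\wt{\Delta}^{(i)}$ by the truncated, re-centered summand $\wb{\Delta}^{(i)} \mydefn \wt{\Delta}^{(i)} \indic{T_i \leq \tau} - \Exs[\wt{\Delta}^{(i)} \indic{T_i \leq \tau}]$ changes the sum only by a deterministic drift of magnitude $\numobs \cdot \opnorm{\Exs[\wt{\Delta}^{(1)} \indic{T_1 > \tau}]}$, which is $o(\numobs^{-1/2} \occupmsr_{\min}^{-1/2})$ by Cauchy--Schwarz and the sub-exponential tail of $T_1$.

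The technical heart of the proof is bounding the matrix-Bernstein variance proxy $\sigma^2 \mydefn \max\bigl\{\opnorm{\sum_i \Exs[\wb{\Delta}^{(i)} (\wb{\Delta}^{(i)})^\top]},\, \opnorm{\sum_i \Exs[(\wb{\Delta}^{(i)})^\top \wb{\Delta}^{(i)}]}\bigr\}$ by $c \effhorizon/(\numobs \occupmsr_{\min})$. Naive Frobenius or trace bounds lose a factor of $|\subG|$ that cannot be absorbed, so instead I would evaluate the quadratic form $x^\top \Exs[(\wb{\Delta}^{(i)})^\top \wb{\Delta}^{(i)}] x$ for unit $x$ directly, expanding into joint second moments of $\bigl(M^{(i)}(\state, \state')\bigr)_{\state, \state' \in \subG}$. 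These are controlled by applying the strong Markov property at successive visits to each state $\state$ (which makes $M^{(i)}(\state, \cdot)$ conditionally multinomial given $N^{(i)}(\state)$) together with the flow-conservation identity $\sum_{\state} \occupmsr(\state) \transG(\state, \state') \leq \occupmsr(\state')$ to eliminate summations over $\subG$. This careful exploitation of the transition-matrix column-sum structure is the main obstacle.

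Plugging $\sigma^2 \lesssim \effhorizon/(\numobs \occupmsr_{\min})$ and the uniform bound $R \lesssim \tau/(\numobs \occupmsr_{\min})$ into Tropp's matrix Bernstein inequality yields, with probability at least $1 - \delta/3$,
\begin{align*}
\opnorm{\sum_{i=1}^\numobs \wb{\Delta}^{(i)}} \leq c \sqrt{\frac{\effhorizon \log(|\subG|/\delta)}{\numobs \occupmsr_{\min}}} + c \frac{\tau \log(|\subG|/\delta)}{\numobs \occupmsr_{\min}}.
\end{align*}
The bound $|\subG| \leq \effhorizon/\occupmsr_{\min}$ (from $\sum_{\state \in \subG} \occupmsr(\state) \leq \effhorizon$) together with the sample-size condition~\eqref{eq:sample-size-condition-fixed-subgraph} ensures $\log(|\subG|/\delta) \lesssim \log(\numobs/\delta)$ and the second term is dominated by the first. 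Collecting the three failure events and adjusting the universal constant gives the claimed bound $9\sqrt{\effhorizon \log^2(\numobs/\delta)/(\numobs \occupmsr_{\min})}$.
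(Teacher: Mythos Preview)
Your overall framework is exactly the paper's: decompose into i.i.d.\ per-trajectory matrices, conjugate by $D_\occupmsr^{1/2}$ so that $\matsnorm{\cdot}{\occupmsr(\subG)}$ becomes the spectral norm, truncate trajectory lengths at $O(\effhorizon\log(\numobs/\delta))$ via Assumption~\ref{assume:effective-horizon}, and apply matrix Bernstein. The paper works directly with $W_i(\state,\state') \mydefn M^{(i)}(\state,\state')/\sqrt{\occupmsr(\state)\occupmsr(\state')}$, which is your $\numobs\,\wt{\Delta}^{(i)}$ up to centering.

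There is, however, a real gap in your variance step. The parenthetical claim that $M^{(i)}(\state,\cdot)$ is conditionally multinomial given $N^{(i)}(\state)$ is false: the visit count $N^{(i)}(\state)$ is determined by the very transitions out of $\state$ you are conditioning on. As a concrete counter-example take $\subG=\{\state,\state'\}$ with $\trans(\state,\state')=\trans(\state,\termState)=\tfrac12$ and $\trans(\state',\state)=1$: conditional on $N^{(i)}(\state)=k$, the first $k-1$ transitions out of $\state$ must go to $\state'$ (else no return) and the last must go to $\termState$, so $M^{(i)}(\state,\state')=k-1$ deterministically rather than $\mathrm{Binom}(k,\tfrac12)$. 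Even granting the multinomial structure, the calculation you outline produces the unweighted column sum $\sum_{\state\in\subG}\transG(\state,\state')$ after the $1/\occupmsr(\state)$ normalization cancels $\Exs[N^{(i)}(\state)]=\occupmsr(\state)$; flow conservation only controls the $\occupmsr$-weighted column sum, so you would pick up a factor of $|\subG|$ here.

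The paper's variance bound avoids both issues by a direct Cauchy--Schwarz argument on the time sum. For unit $u$,
\[
u^\top W_i W_i^\top u \;\leq\; T_i \sum_{t=0}^{T_i-1} \bm{1}_{\State_{t+1}^{(i)}\in\subG}\,\frac{u(\State_{t+1}^{(i)})^2}{\occupmsr(\State_t^{(i)})\,\occupmsr(\State_{t+1}^{(i)})} \;\leq\; \frac{T_i}{\occupmsr_{\min}}\sum_{t=0}^{T_i-1}\bm{1}_{\State_{t+1}^{(i)}\in\subG}\,\frac{u(\State_{t+1}^{(i)})^2}{\occupmsr(\State_{t+1}^{(i)})}.
\]
On the event $\{T_i\leq t_0\}$, taking expectations and using the defining identity $\sum_{t\geq 0}\Prob(\State_{t+1}=\state')\leq\occupmsr(\state')$ makes the inner sum collapse to $\|u\|_2^2$, giving $\opnorm{\Exs[W_iW_i^\top \bm{1}_{T_i\leq t_0}]}\leq t_0/\occupmsr_{\min}$; the tail on $\{T_i>t_0\}$ is handled by the crude Frobenius bound $\opnorm{W_i}\leq T_i/\occupmsr_{\min}$ and the sub-exponential decay of $T_i$. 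Choosing $t_0=4\effhorizon\log(\effhorizon/\occupmsr_{\min})$ yields $\opnorm{\Exs[W_iW_i^\top]}\leq 5\effhorizon\log(\effhorizon/\occupmsr_{\min})/\occupmsr_{\min}$---one logarithm more than you claimed, but exactly what is needed for the stated $\log^2(\numobs/\delta)$ bound once combined with the $\log(|\subG|/\delta)$ from matrix Bernstein and the inequality $|\subG|\leq\effhorizon/\occupmsr_{\min}$ you already noted.
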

\noindent See~\Cref{subsubsec:proof-lemma-l2-transtilde-concentration} for its proof. Combining Lemma~\ref{lemma:l2-transhat-concentration} and~\ref{lemma:l2-transtilde-concentration} using triangle inequality, we obtain that
\begin{align}
    \matsnorm{\transGhat - \transG}{\occupmsr (\subG)} \leq 80 \sqrt{\frac{ \effhorizon}{\numobs \occupmsr_{\min} } \log^2 \Big( \frac{ \numobs }{ \delta} \Big) },\label{eq:l2-transhat-bound-final}
\end{align}
with probability $1 - \delta$.

Additionally, we use the following result on the concentration of the additive noise.

\begin{lemma}\label{lemma:l2-main-noise-concentration}
    Under above setup, with probability $1 - \delta$, we have
    \begin{multline*}
        \occunorm{(I - \transG)^{-1} \Big\{ \big( \rhat_\subG - \reward_\subG\big) + \big( \transGhat - \transG \big) \VstarG + \big(\estValout - \Vout\big) \Big\}}\\
         \leq 2 \mathrm{Tr} \big(D_{\occupmsr} (I - \transG)^{-1} \SigStar_{\subG}  (I - \transG)^{-\top}\ \big)^{1/2}  \sqrt{\frac{2 \log (1 / \delta)}{\numobs}} + \frac{32 \effhorizon^3}{\numobs \sqrt{\occupmsr_{\min}}} \log^3 \Big( \frac{\numobs}{\delta\occupmsr_{\min}}\Big).
    \end{multline*}
\end{lemma}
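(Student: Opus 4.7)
The plan is to write the noise vector as an empirical average of i.i.d. trajectory-level contributions normalized by the empirical visit counts $N_\numobs(\state)$, and then swap these random normalizations for the deterministic quantities $\numobs \occupmsr(\state)$ at the cost of a controlled correction term. For each $i \in [\numobs]$, I would introduce the mean-zero vector $\zeta_i \in \real^\subG$ with coordinates
\begin{align*}
    \zeta_i(\state) \mydefn \sum_{t = 0}^{T_i} \indic{\State_t^{(i)} = \state} \Big\{ \Reward_t^{(i)} + \indic{\State_{t+1}^{(i)} \in \subG} \VstarG(\State_{t+1}^{(i)}) + \indic{\State_{t+1}^{(i)} \notin \subG} \sum_{\ell = t + 1}^{T_i} \Reward_\ell^{(i)} - \VstarG(\state) \Big\},
\end{align*}
so that the noise vector inside the norm equals $\big(N_\numobs(\state)^{-1} \sum_i \zeta_i(\state)\big)_{\state \in \subG}$, which I would split into the idealized main term $\wt{Z} \mydefn (\numobs D_\occupmsr)^{-1} \sum_i \zeta_i$ and a correction $\wh{Z}$ carrying the discrepancy $N_\numobs(\state)^{-1} - (\numobs \occupmsr(\state))^{-1}$.

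For the main term, the covariance identification from the proof of~\Cref{lem:asymptotic} gives $\Exs[\zeta_i \zeta_i^\top] = \conVar_\subG^\star$, so that $\Exs\big[\occunorm{(I - \transG)^{-1} \wt{Z}}^2\big] = \numobs^{-1} \mathrm{Tr}\big(D_\occupmsr (I - \transG)^{-1} \SigStar_\subG (I - \transG)^{-\top}\big)$ after conjugation by $D_\occupmsr^{-1}$. I would then apply a vector Bernstein inequality in the $\ell^2(\occupmsr)$ geometry to the i.i.d. summands $(I - \transG)^{-1} D_\occupmsr^{-1} \zeta_i$, whose almost-sure $\occupmsr$-norms are controlled by a polynomial in $T_i$ divided by $\sqrt{\occupmsr_{\min}}$ via~Assumption~\ref{assume:effective-horizon}. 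After truncating at $\max_i T_i \leq c \effhorizon \log(\numobs/\delta)$ (whose complement has probability at most $\delta$ by the sub-exponential tail), Bernstein produces the stated leading term $\mathrm{Tr}(\cdots)^{1/2} \sqrt{\log(1/\delta)/\numobs}$ along with a sub-exponential remainder that is absorbed into the high-order bound.

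For the correction, I would first apply scalar Bernstein to $N_\numobs(\state) = \sum_i \sum_t \indic{\State_t^{(i)} = \state}$, whose per-trajectory summands have mean $\occupmsr(\state)$ and sub-exponential scale controlled by $\effhorizon$ via Assumption~\ref{assume:effective-horizon}, to obtain $|N_\numobs(\state) - \numobs \occupmsr(\state)| \lesssim \sqrt{\numobs \occupmsr(\state) \effhorizon \log(|\subG|/\delta)} + \effhorizon \log(|\subG|/\delta)$ uniformly in $\state \in \subG$. Under the sample size condition~\eqref{eq:sample-size-condition-fixed-subgraph} this forces $N_\numobs(\state) \in [\tfrac{1}{2}, \tfrac{3}{2}] \cdot \numobs \occupmsr(\state)$ and hence $|N_\numobs(\state)^{-1} - (\numobs \occupmsr(\state))^{-1}| \lesssim \sqrt{\effhorizon \log(|\subG|/\delta)}/(\numobs \occupmsr(\state))^{3/2}$. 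Combining this with a high-probability sub-exponential bound on $|\sum_i \zeta_i(\state)|$ (again via truncation) and the operator-norm estimate $\matsnorm{(I - \transG)^{-1}}{\occupmsr(\subG)} \lesssim \effhorizon$ yields the claimed $\effhorizon^3 \log^3(\numobs/(\delta \occupmsr_{\min}))/(\numobs \sqrt{\occupmsr_{\min}})$ high-order contribution.

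The main obstacle is keeping the tail control sharp under variable-length trajectories: each $\zeta_i(\state)$ is itself a random sum whose number of nonzero terms scales like $N^{(i)}(\state) \cdot T_i$, and after normalization by $\occupmsr(\state)$ and application of $(I - \transG)^{-1}$, the per-summand $\occupmsr$-norm can blow up at rarely-visited states. Getting the $\effhorizon^3$ and $\log^3$ powers right therefore requires careful truncation thresholds, union bounds over $\state \in \subG$, and a Bernstein-type inequality (in the spirit of the concentration machinery of~\cite{mou2022optimal}) with the correct variance proxy tailored to the weighted $\ell^2(\occupmsr)$ geometry; doing this bookkeeping without incurring additional factors of $|\subG|$ is the most delicate part of the argument.
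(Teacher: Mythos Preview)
Your high-level strategy matches the paper's: define the trajectory-level noise $\zeta_i$ (the paper calls it $\varepsilon_i^*$), swap the empirical normalization $N_\numobs(\state)$ for $\numobs\occupmsr(\state)$, and apply a vector Bernstein inequality (the paper uses Minsker's) to the resulting i.i.d.\ sum $(I-\transG)^{-1}D_\occupmsr^{-1}\tfrac{1}{\numobs}\sum_i\zeta_i$. For the main term this is identical to what the paper does, and your per-summand almost-sure bound via truncation of $T_i$ is exactly how the paper arrives at the $\effhorizon^3/\sqrt{\occupmsr_{\min}}$ factor.

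Where your plan diverges is in handling the correction $\wh{Z}$. You propose to bound $\occunorm{(I-\transG)^{-1}\wh{Z}}$ \emph{additively}, combining per-coordinate control of $|N_\numobs(\state)^{-1}-(\numobs\occupmsr(\state))^{-1}|$ with a separate high-probability bound on $|\sum_i\zeta_i(\state)|$ and the operator norm of $(I-\transG)^{-1}$. Carrying this out, the weighted sum $\sum_{\state}\occupmsr(\state)\wh{Z}(\state)^2$ picks up a factor $\sum_\state 1/\occupmsr(\state)$ (or, after the natural simplifications, an extra $1/\occupmsr_{\min}$), yielding a high-order term of order $\effhorizon^{7/2}/(\numobs\occupmsr_{\min})$ rather than the stated $\effhorizon^3/(\numobs\sqrt{\occupmsr_{\min}})$. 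This is precisely the ``bookkeeping without incurring $|\subG|$'' worry you flag, and the additive route does not resolve it cleanly. (Also, the operator-norm bound you quote is $\matsnorm{(I-\transG)^{-1}}{\occupmsr(\subG)}\leq 4\effhorizon\log(1/\occupmsr_{\min})$, not $\lesssim\effhorizon$; the log is needed.)

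The paper sidesteps this entirely with a \emph{multiplicative} trick: writing $\wh{Z}=(D_\occupmsr D_{\widehat\occupmsr}^{-1}-I)\wt{Z}$ and inserting $(I-\transG)(I-\transG)^{-1}$, one gets
\[
\occunorm{(I-\transG)^{-1}\wh{Z}}\leq \matsnorm{(I-\transG)^{-1}(D_\occupmsr D_{\widehat\occupmsr}^{-1}-I)(I-\transG)}{\occupmsr(\subG)}\cdot\occunorm{(I-\transG)^{-1}\wt{Z}},
\]
and the middle operator norm is $\leq 1$ under the sample-size condition. Thus the entire noise is bounded by $2\|\zeta_\numobs^*\|_2$, and the high-order term arises \emph{only} from the sub-exponential part of the single Bernstein application---no separate analysis of $\wh{Z}$, no $|\subG|$ issue. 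If you want the bound exactly as stated, this multiplicative route is the missing ingredient.
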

\noindent See~\Cref{subsubsec:proof-lemma-l2-main-noise-concentration} for its proof.

Finally, we use a simple fact about the matrix $\transG$.
\begin{align}
    \matsnorm{(I - \transG)^{-1}}{\occupmsr (\subG)} \leq 4 \effhorizon \log (1 / \occupmsr_{\min}). \label{eq:invertibility-bound-in-l2-proof}
\end{align}
Equipped with these intermediate results, we are now ready to prove~\Cref{thm:main-l2-fixed-subgraph}. Applying triangle inequality to Eq~\eqref{eq:basic-ineq-for-l2-proof}, we have
\begin{multline*}
    \occunorm{\DelhatG} \leq \occunorm{ (I - \transG)^{-1} \Big\{ \big( \rhat_\subG - \reward_\subG\big) + \big( \transGhat - \transG \big) \VstarG + \big(\estValout - \Vout\big) \Big\} }\\
     + \matsnorm{(I - \transG)^{-1}}{\occupmsr} \cdot \matsnorm{\transGhat - \transG}{\occupmsr} \cdot \occunorm{ \DelhatG}.
\end{multline*}
By Equations~\eqref{eq:l2-transhat-bound-final} and~\eqref{eq:invertibility-bound-in-l2-proof}, when sample size $\numobs$ satisfies~\Cref{eq:sample-size-condition-fixed-subgraph}, we have
\begin{align*}
    \matsnorm{(I - \transG)^{-1}}{\occupmsr} \cdot \matsnorm{\transGhat - \transG}{\occupmsr} \leq 1/2, \quad \mbox{with probability $1 - \delta$}.
\end{align*}
On this event, when the high-probability event in Lemma~\ref{lemma:l2-main-noise-concentration} holds true as well, we conclude that
\begin{align*}
    \occunorm{\DelhatG} \leq 4 \mathrm{Tr} \big(D_{\occupmsr} (I - \transG)^{-1} \SigStar_{\subG}  (I - \transG)^{-\top}\ \big)^{1/2}  \sqrt{\frac{2 \log (1 / \delta)}{\numobs}} + \frac{64 \effhorizon^3}{\numobs \sqrt{\occupmsr_{\min}}} \log^3 \Big( \frac{\numobs}{\delta\occupmsr_{\min}}\Big),
\end{align*}
which proves~\Cref{thm:main-l2-fixed-subgraph}.

Moreover, by Chebyshev's inequality, we note that
\begin{align*}
    \occunorm{(I - \transG)^{-1} \Big\{ \big( \rhat_\subG - \reward_\subG\big) + \big( \transGhat - \transG \big) \VstarG + \big(\estValout - \Vout\big) \Big\}}^2
         \leq \frac{8}{\numobs} \mathrm{Tr} \big(D_{\occupmsr} (I - \transG)^{-1} \SigStar_{\subG}  (I - \transG)^{-\top}\ \big),
\end{align*}
with probability $7/8$. Combining with Equations~\eqref{eq:l2-transhat-bound-final} and~\eqref{eq:invertibility-bound-in-l2-proof} under the choice $\delta = 1/8$, we obtain that
\begin{align}
    \occunorm{\DelhatG} \leq \frac{8}{\sqrt{\numobs}} \mathrm{Tr} \big(D_{\occupmsr} (I - \transG)^{-1} \SigStar_{\subG}  (I - \transG)^{-\top}\ \big)^{1/2},\label{eq:l2-est-err-const-prob-bound}
\end{align}
with probability $3/4$.

\paragraph{Proof of~\Cref{eq:invertibility-bound-in-l2-proof}:} For any vector $u \in \real^\subG$ and integer $k \geq 1$, we note that
\begin{align*}
    \occunorm{\transG^k u}^2 = \sum_{t = 0}^{+ \infty} \Exs_{\State_0 \sim \initDist} \Big[ \big( \transG^k u (\State_t) \big)^2 \bm{1}_{\State_t \in \subG} \Big].
\end{align*}
By Cauchy--Schwarz inequality, we have
\begin{align*}
    \abss{\transG^k u (\State_t)} = \abss{\Exs \big[ u (\State_{t + k}) \bm{1}_{\State_{t + 1}, \State_{t + 2}, \cdots \State_{t + k} \in \subG} \mid \State_t  \big]} \leq \sqrt{\Exs \big[ u^2 (\State_{t + k}) \bm{1}_{\State_{t + k} \in \subG} \mid \State_t  \big]}.
\end{align*}
Consequently, we have the upper bound
\begin{align*}
     \occunorm{\transG^k u}^2 \leq \sum_{t = k}^{+ \infty} \Exs_{\State_0 \sim \initDist} \Big[ u (\State_t)^2 \bm{1}_{\State_t \in \subG} \Big].
\end{align*}
On the one hand, we note that
\begin{align*}
    \sum_{t = k}^{+ \infty} \Exs_{\State_0 \sim \initDist} \Big[ u (\State_t)^2 \bm{1}_{\State_t \in \subG} \Big] \leq \sum_{t = 0}^{+ \infty} \Exs_{\State_0 \sim \initDist} \Big[ u (\State_t)^2 \bm{1}_{\State_t \in \subG} \Big] = \occunorm{u}^2.
\end{align*}
Therefore, we have the operator norm bound $\matsnorm{\transG^k}{\occupmsr (\subG)} \leq 1$ for any $k \geq 0$.

On the other hand, note that
\begin{align*}
    \sum_{t = k}^{+ \infty} \Exs_{\State_0 \sim \initDist} \Big[ u (\State_t)^2 \bm{1}_{\State_t \in \subG} \Big] = \sum_{\state \in \subG} u (\state)^2 \cdot \sum_{t \geq k} \Prob_{\State_0 \sim \initDist} \big(\State_t = \state \big) \leq \vecnorm{u}{2}^2 \cdot \Prob \big( T \geq k \big).
\end{align*}
Additionally, we note that
\begin{align*}
    \vecnorm{u}{2}^2 \geq \frac{1}{\occupmsr_{\min}} \occunorm{u}^2.
\end{align*}
Applying in conjunction with the upper bound yields
\begin{align*}
    \sup_{\occunorm{u} \leq 1} \occunorm{\transG^k u}^2 \leq \frac{\Prob \big( T \geq k \big)}{\occupmsr_{\min}} \leq \occupmsr_{\min}^{-1} \exp \Big( - \frac{k}{\effhorizon} \Big).
\end{align*}
For $k_0 \mydefn 2 \effhorizon \log (1 / \occupmsr_{\min})$, we have $\matsnorm{\transG^{k_0}}{\occupmsr (\subG)} \leq \frac{1}{4}$. 

Putting them together, we arrive at the inequality
\begin{align*}
    \matsnorm{(I - \transG)^{-1} }{\occupmsr (\subG)} \leq \sum_{k \geq 0}  \matsnorm{\transG^k }{\occupmsr (\subG)} = \sum_{0 \leq k < k_0} \sum_{\ell \geq 0}  \matsnorm{\transG^{k_0 \ell + k} }{\occupmsr (\subG)} \leq \sum_{0 \leq k < k_0} \sum_{\ell \geq 0}  \matsnorm{\transG^{k_0} }{\occupmsr (\subG)}^\ell \cdot \matsnorm{\transG^{k}}{\occupmsr (\subG)} \leq 2 k_0,
\end{align*}
which proves the desired bound.

\subsubsection{Proof of~\Cref{lemma:l2-transhat-concentration}}\label{subsubsec:proof-lemma-l2-transhat-concentration}
Define the diagonal matrix $D_\occupmsr \in \real^{\subG \times \subG}$ such that $D_\occupmsr (\state, \state) = \occupmsr (\state)$ for any $\state \in \subG$, and $D_\occupmsr (\state, \state') = 0$ for $\state \neq \state'$. For any operator $A: \ltwospace (\occupmsr (\subG)) \rightarrow \ltwospace (\occupmsr (\subG)) $, we note that the operator norm admits a representation
\begin{align}
    \matsnorm{A}{\occupmsr (\subG)}^2 = \sup_{u^\top D_\occupmsr u \leq 1} u^\top A^\top D_\occupmsr A u = \opnorm{D_\occupmsr^{1/2} A D_{\occupmsr}^{-1/2}}^2.\label{eq:weighted-op-norm-representation}
\end{align}

Now we turn to the proof of this lemma. By Eq~\eqref{eq:weighted-op-norm-representation}, we have
\begin{align}
    \matsnorm{\transGtilde - \transGhat}{\occupmsr (\subG)} = \opnorm{D_{\occupmsr}^{1/2} \big(\transGtilde - \transGhat \big) D_{\occupmsr}^{-1/2}} = \opnorm{ \Big[ M_\numobs (\state, \state') \sqrt{\occupmsr (\state)} \big( \frac{1}{N_\numobs (\state)} - \frac{1}{\numobs \occupmsr (\state)} \big) \Big]_{\state, \state' \in \subG} \cdot D_{\occupmsr}^{-1/2}}.\label{eq:gtilde-ghat-bound-decomp}
\end{align}
Define the random variables $Y_i (\state) \mydefn \big| \{t \in [0, T_i]: \State_t^{(i)} = \state \} \big|$, i.e., number of visits to $\state$ in the $i$-th trajectory. Fix any state $\state \in \subG$, clearly $Y_i (\state)$'s are $\mathrm{i.i.d.}$, and $N_\numobs (\state) = \sum_{i = 1}^\numobs Y_i (\state)$. We have $\Exs [Y_i (\state)] = \occupmsr (\state)$, and
\begin{align*}
    \Exs [Y_i (\state)^2] & = \sum_{t, \ell = 0}^{+ \infty} \Prob \big( \State_t = \state, \State_\ell = \state \big)\\
     &= \sum_{t = 0}^{+ \infty} \Prob \big( \State_t = \state \big) + 2 \sum_{t = 0}^{+ \infty} \sum_{k = 1}^{+ \infty} \Prob \big( \State_t = \state \big) \cdot \transition^k (\state, \state)\\
     & = \Exs [Y_i (\state)] \cdot \Big( 1 + 2 \sum_{k = 0}^{+ \infty} \transition^k (\state, \state) \Big)\\
     &\leq \Exs [Y_i (\state)] \cdot \Big( 1 + 2 \sum_{k = 0}^{+ \infty} \Prob_\state (T > k) \Big)\\
     & \leq (2 \effhorizon + 1) \occupmsr (\state).
\end{align*}
And we also have $\vecnorm{Y_i (\state)}{\psi_1} \leq \vecnorm{T_i}{\psi_1} \leq \effhorizon$, and consequently $\vecnorm{\max_i Y_i (\state)}{\psi_1} \leq \effhorizon \log \numobs$.

By Adamczak's concentration inequality, there exists a universal constant $c > 0$, such that for any $t \geq 0$, we have
\begin{align*}
    \Prob \big( \abss{N_\numobs (\state) - \numobs \occupmsr (\state)} \geq  t \big) \leq 2 \exp \Big( - \frac{t^2 }{12 \numobs \occupmsr (\state) \effhorizon } \Big) + 6 \exp \Big( \frac{ - t}{c \effhorizon \log \numobs} \Big).
\end{align*}
Applying union bound to all the states in $\subG$, with probability $1 - \delta$, we have
\begin{align*}
  \forall \state \in \subG, \quad  \abss{N_\numobs (\state) - \numobs \occupmsr (\state)} \leq \sqrt{12 \effhorizon \occupmsr (\state) \numobs \cdot {\log (4 |\subG| / \delta)}} + c \effhorizon \log \numobs \log (6 |\subG| / \delta ) \leq 16 \sqrt{ \effhorizon \occupmsr (\state) \numobs \cdot {\log (4 |\subG| / \delta)}},
\end{align*}
where in the last step, we use the sample size condition~\eqref{eq:sample-size-condition-fixed-subgraph}.

On the event that above inequality holds true, we have
\begin{align}
    \abss{\sqrt{\occupmsr (\state)} \big( \frac{1}{N_\numobs (\state)} - \frac{1}{\numobs \occupmsr (\state)} \big)} = \frac{\abss{ N_\numobs (\state) - \numobs \occupmsr (\state) } }{\numobs N_\numobs (\state) \sqrt{\occupmsr (\state)}} \leq \frac{16}{N_\numobs (\state)} \sqrt{\frac{\effhorizon \log (4 |\subG| / \delta)}{\numobs}}.\label{eq:small-shift-estimate-inghat-gtilde-proof}
\end{align}
uniformly for any state $\state \in \subG$.

Substituting back to Eq~\eqref{eq:gtilde-ghat-bound-decomp}, denote the random variables $\zeta_\state \mydefn N_\numobs (\state) \sqrt{\occupmsr (\state)} \big( \frac{1}{N_\numobs (\state)} - \frac{1}{\numobs \occupmsr (\state)} \big)$, we have
\begin{align*}
    \matsnorm{\transGtilde - \transGhat}{\occupmsr (\subG)} = \opnorm{\Big[\frac{ \zeta_\state \transGhat (\state, \state')}{ \sqrt{\occupmsr (\state')} }\Big]_{\state, \state' \in \subG}}.
\end{align*}

Let $\Event$ be the event that Eq~\eqref{eq:small-shift-estimate-inghat-gtilde-proof} holds true uniformly for $\state \in \subG$.

On the event $\Event$, for any vector $u \in \real^\subG$, we note that
\begin{multline}
    \vecnorm{\Big[\frac{ \zeta_\state \transGhat (\state, \state')}{ \sqrt{\occupmsr (\state')} }\Big]_{\state, \state' \in \subG} u}{2}^2 = \sum_{\state \in \subG} \zeta_\state^2 \Big( \sum_{\state' \in \subG} \frac{\transGhat (\state, \state') u (\state')}{\sqrt{\occupmsr (\state')}} \Big)^2
     \leq \frac{\log (2 |\subG| / \delta)}{\numobs} \sum_{\state \in \subG} \Big( \sum_{\state' \in \subG} \frac{\transGhat (\state, \state') u (\state')}{\sqrt{\occupmsr (\state')}} \Big)^2\\
      \leq \frac{256 \effhorizon\log (2 |\subG| / \delta)}{\numobs \occupmsr_{\min}}  \cdot \vecnorm{D_\occupmsr^{1/2} \transGhat D_\occupmsr^{-1/2} u}{2}^2 = \frac{256 \effhorizon \log (2 |\subG| / \delta)}{\numobs \occupmsr_{\min}}  \cdot \occunorm{\transGhat D_\occupmsr^{-1/2} u}^2.\label{eq:entry-wise-conversion-in-l2-proof}
\end{multline}
Define $\widehat{\occupmsr} (\state) \mydefn N_\numobs(\state) / \numobs$ for each $\state \in \subG$, on the event $\Event$, under the sample size assumption~\eqref{eq:sample-size-condition-fixed-subgraph}, we have
\begin{align}
   \frac{1}{2} \occupmsr \preceq  \widehat{\occupmsr} \preceq 2 \occupmsr \label{eq:empirical-norm-domination-in-l2-proof}
\end{align}
For any vector $y \in \real^\subG$, we note that
\begin{align}
    \vecnorm{\transGhat y}{\widehat{\occupmsr}}^2 &= \sum_{\state \in \subG} \frac{N_\numobs (\state)}{\numobs} \Big( \sum_{\state' \in \subG} \frac{M_\numobs (\state, \state')}{N_\numobs (\state)}  y (\state') \Big)^2 \nonumber \\
    &\overset{(i)}{\leq} \sum_{\state \in \subG} \frac{N_\numobs (\state)}{\numobs} \Big\{ \sum_{\state' \in \subG} \frac{M_\numobs (\state, \state')}{N_\numobs (\state)}  y^2 (\state') \Big\} \cdot \Big( \sum_{\state' \in \subG} \frac{M_\numobs (\state, \state')}{N_\numobs (\state)} \Big) \nonumber \\
    &\overset{(ii)}{\leq}  \sum_{\state \in \subG} \Big( \frac{1}{\numobs} \sum_{\state' \in \subG} M_\numobs (\state', \state) \Big) y^2 (\state) \nonumber \\
    &\overset{(iii)}{\leq} \vecnorm{y}{\widehat{\occupmsr}}^2,\label{eq:empirical-contraction-in-l2-proof}
\end{align}
where in step $(i)$, we use Cauchy--Schwarz inequality; and in step $(ii),~ (iii)$, we use the following facts by counting the number of visits.
\begin{align*}
    \sum_{\state' \in \subG} M_\numobs (\state, \state') \leq N_\numobs (\state), \quad \sum_{\state' \in \subG} M_\numobs (\state', \state) \leq N_\numobs (\state).
\end{align*}
On the event $\Event$, by applying Equations~\eqref{eq:empirical-contraction-in-l2-proof} and~\eqref{eq:empirical-norm-domination-in-l2-proof} to Eq~\eqref{eq:entry-wise-conversion-in-l2-proof}, we obtain the bound
\begin{align*}
    \occunorm{\transGhat D_\occupmsr^{-1/2} u} \leq 2 \vecnorm{\transGhat D_\occupmsr^{-1/2} u}{\widehat{\occupmsr}} \leq 2 \vecnorm{D_\occupmsr^{-1/2} u}{\widehat{\occupmsr}} \leq 4 \occunorm{D_\occupmsr^{-1/2} u} \leq 4 \vecnorm{u}{2}.
\end{align*}
Consequently, on the event $\Event$ we have the upper bound
\begin{align*}
    \matsnorm{\transGtilde - \transGhat}{\occupmsr (\subG)} = \sup_{\vecnorm{u}{2} \leq 1} \vecnorm{\Big[\frac{ \zeta_\state \transGhat (\state, \state')}{ \sqrt{\occupmsr (\state')} }\Big]_{\state, \state' \in \subG} u}{2} \leq 16 \sqrt{\frac{ \effhorizon \log (2 |\subG| / \delta)}{\numobs \occupmsr_{\min}} } \cdot \sup_{\vecnorm{u}{2} \leq 1} \occunorm{\transGhat D_\occupmsr^{-1/2} u} \leq 64 \sqrt{\frac{ \effhorizon \log (2 |\subG| / \delta)}{\numobs \occupmsr_{\min}} },
\end{align*}
which completes the proof of Lemma~\ref{lemma:l2-transhat-concentration}.

\subsubsection{Proof of~\Cref{lemma:l2-transtilde-concentration}}\label{subsubsec:proof-lemma-l2-transtilde-concentration}

Define the random matrices
\begin{align*}
    W_i (\state, \state') \mydefn \frac{1}{\sqrt{\occupmsr (\state) \occupmsr (\state')} }\sum_{t = 0}^{T_i - 1} \bm{1}_{\State_t^{(i)} = \state, \State_{t + 1}^{(i)} = \state'}\quad \mbox{for any $\state, \state' \in \subG$}. 
\end{align*}
Clearly, the random matrices $(W_i)_{1 \leq i \leq \numobs}$ are $\mathrm{i.i.d.}$, satisfying
\begin{align*}
    \Exs[W_i] = D_\occupmsr^{1/2} \transG  D_\occupmsr^{- 1/2} , \quad \mbox{and} \quad \frac{1}{\numobs} \sum_{i = 1}^\numobs W_i =  D_\occupmsr^{1/2} \widetilde{\transition}_\subG  D_\occupmsr^{- 1/2}. 
\end{align*}
To study their concentration behavior, we consider the second moment and high-probability bounds for the matrices $W_i$. We claim that
\begin{subequations}
\begin{align}
    \max \Big( \opnorm{\Exs \big[W_i W_i^\top\big]}, \opnorm{\Exs \big[W_i^\top W_i\big]} \Big) &\leq \frac{5 \effhorizon}{\occupmsr_{\min}}\log \big( \effhorizon / \occupmsr_{\min} \big),\label{eq:matrix-second-moment-in-transtilde-l2-proof}\\
    \Prob \Big( \underbrace{\max_{i \in [\numobs]} \opnorm{W_i} \leq \frac{\effhorizon}{\occupmsr_{\min}}\log (\numobs / \delta)}_{=:\Event} \Big) & \geq 1 - \delta.\label{eq:matrix-highprob-in-transtilde-l2-proof}
\end{align}
\end{subequations}
We prove the two bounds at the end of this section. Taking them as given, we apply a matrix Bernstein inequality on the event $\Event$, and obtain the bound
\begin{align*}
    \Prob \Big( \Event^c \cap \Big\{\opnorm{ \sum_{i = 1}^\numobs W_i  - \Exs [W_1]} \geq t \Big\} \Big) \leq 2 \abss{\subG} \exp \Big( - \frac{t^2 \occupmsr_{\min}/2 }{5 \effhorizon \numobs \log \big( \effhorizon / \occupmsr_{\min} \big) + \effhorizon t \log (\numobs / \delta)  } \Big).
\end{align*}
Replacing $\delta$ with $\delta / 2$ in Eq~\eqref{eq:matrix-highprob-in-transtilde-l2-proof} so that $\Prob (\Event) \geq 1 - \delta / 2$, and taking union bound with the matrix Bernstein result, we conclude that
\begin{align*}
   \matsnorm{\transGtilde - \transG}{\occupmsr (\subG)} = \opnorm{ \sum_{i = 1}^\numobs W_i  - \Exs [W_1]} \leq \sqrt{\frac{10 \effhorizon}{\occupmsr_{\min} \numobs} \log^2 \Big( \frac{\effhorizon \numobs }{ \occupmsr_{\min} \delta} \Big) } + \frac{2 \effhorizon}{\occupmsr_{\min} \numobs } \log^2 \Big( \frac{\numobs}{\occupmsr_{\min}\delta} \Big),
\end{align*}
which proves Lemma~\ref{lemma:l2-transtilde-concentration}.

\paragraph{Proof of Eq~\eqref{eq:matrix-second-moment-in-transtilde-l2-proof}:}
For any deterministic scalar $t_0 > 0$, we use the decomposition
\begin{align}
    \opnorm{\Exs \big[W_i W_i^\top \big]} &\leq \opnorm{\Exs \big[W_i W_i^\top \bm{1}_{T_i \leq t_0} \big]} + \opnorm{\Exs \big[W_i W_i^\top \bm{1}_{T_i \geq t_0} \big]}.\label{eq:decomp-of-matrix-second-moment-for-l2-proof}
\end{align}
For any vector $u \in \real^\subG$, by Cauchy--Schwarz inequality, we have
\begin{align*}
    u^\top W_i W_i^\top u &=  \vecnorm{\sum_{t = 0}^{T_i - 1}  \Big[  \frac{1}{\sqrt{\occupmsr (\state) \occupmsr (\state')} }\bm{1}_{\State_t^{(i)} = \state, \State_{t + 1}^{(i)} = \state'}\Big]_{\state, \state' \in \subG} u }{2}^2  \\
    &\leq  T_i \cdot \sum_{t = 0}^{T_i - 1} \vecnorm{ \Big[  \frac{1}{\sqrt{\occupmsr (\state) \occupmsr (\state')} }\bm{1}_{\State_t^{(i)} = \state, \State_{t + 1}^{(i)} = \state'}\Big]_{\state, \state' \in \subG} u }{2}^2\\
    &= T_i \cdot \sum_{t = 0}^{T_i - 1} \bm{1}_{\State_{t + 1}^{(i)} \in \subG} \frac{u (\State_{t + 1}^{(i)})^2}{\occupmsr (\State_{t}^{(i)}) \occupmsr (\State_{t + 1}^{(i)}) } \\
    &\leq \frac{1}{\occupmsr_{\min}} T_i \cdot \sum_{t = 0}^{T_i - 1} \bm{1}_{\State_{t + 1}^{(i)} \in \subG}  \frac{u (\State_{t + 1}^{(i)})^2}{ \occupmsr (\State_{t + 1}^{(i)}) } .
\end{align*}
Substituting into the first term of Eq~\eqref{eq:decomp-of-matrix-second-moment-for-l2-proof}, we conclude that
\begin{align}
   & \opnorm{\Exs \big[W_i W_i^\top \bm{1}_{T_i \leq t_0} \big]}  = \sup_{\vecnorm{u}{2} \leq 1} \Exs \big[u^\top W_i W_i^\top u \cdot \bm{1}_{T_i \leq t_0} \big] \nonumber \\
    & \leq \frac{t_0}{\occupmsr_{\min}} \sup_{\vecnorm{u}{2} \leq 1} \sum_{t = 0}^{t_0 - 1}  \Exs \Big[ \bm{1}_{\State_{t + 1}^{(i)} \in \subG} \frac{u (\State_{t + 1}^{(i)})^2}{ \occupmsr (\State_{t + 1}^{(i)}) } \Big]  \leq \frac{t_0}{\occupmsr_{\min}} \sup_{\vecnorm{u}{2} \leq 1} \sum_{t = 0}^{+ \infty} \sum_{\state \in \subG}  \frac{u^2 (\state)}{ \occupmsr (\state) } \Prob \big(\State_{t + 1}^{(i)} = \state \big) \nonumber \\
   & =  \frac{t_0}{\occupmsr_{\min}} \sup_{\vecnorm{u}{2} \leq 1} \sum_{\state \in \subG} \frac{u^2 (\state)}{ \occupmsr (\state) } \cdot \Big\{ \sum_{t = 0}^{+ \infty}  \Prob \big(\State_{t + 1}^{(i)} = \state \big) \Big\} \nonumber \\
   &\leq \frac{t_0}{\occupmsr_{\min}} \sup_{\vecnorm{u}{2} \leq 1} \sum_{\state \in \subG} \frac{u^2 (\state)}{ \occupmsr (\state) } \cdot \occupmsr (\state) \leq \frac{t_0}{\occupmsr_{\min}}.\label{eq:matrix-second-moment-in-l2-proof-part-1}
\end{align}
For the second term of Eq~\eqref{eq:decomp-of-matrix-second-moment-for-l2-proof}, we have
\begin{align}
    \opnorm{\Exs \big[W_i W_i^\top \bm{1}_{T_i \geq t_0} \big]} &\leq \sup_{\vecnorm{u}{2} \leq 1} \sqrt{\Exs \big[ \vecnorm{W_i^\top u}{2}^4  \big]} \cdot \sqrt{\Prob \big( T_i \geq t_0 \big)}
    \leq \frac{1}{\occupmsr_{\min}^2} \cdot \exp \Big( - \frac{t_0}{2 \effhorizon} \Big).\label{eq:matrix-second-moment-in-l2-proof-part-2}
\end{align}
Combining Equations~\eqref{eq:matrix-second-moment-in-l2-proof-part-1} and~\eqref{eq:matrix-second-moment-in-l2-proof-part-2}, by choosing $t_0 =  4 \effhorizon \log \big( \effhorizon / \occupmsr_{\min} \big)$ and substituting back to Eq~\eqref{eq:decomp-of-matrix-second-moment-for-l2-proof}, we obtain the bound
\begin{align*}
     \opnorm{\Exs \big[W_i W_i^\top\big]} \leq  \frac{5 \effhorizon}{\occupmsr_{\min}}\log \big( \effhorizon / \occupmsr_{\min} \big).
\end{align*}
Similarly, for the random matrix $W_i^\top W_i$, for any $u \in \real^\subG$, we have
\begin{align*}
    u^\top W_i^\top W_i u &=  \vecnorm{\sum_{t = 0}^{T_i - 1}  \Big[  \frac{1}{\sqrt{\occupmsr (\state) \occupmsr (\state')} }\bm{1}_{\State_{t + 1}^{(i)} = \state, \State_{t}^{(i)} = \state'}\Big]_{\state, \state' \in \subG} u }{2}^2  
    \leq  T_i \cdot \sum_{t = 0}^{T_i - 1} \vecnorm{ \Big[  \frac{1}{\sqrt{\occupmsr (\state) \occupmsr (\state')} }\bm{1}_{\State_{t + 1}^{(i)} = \state, \State_t^{(i)} = \state'}\Big]_{\state, \state' \in \subG} u }{2}^2\\
    &= T_i \cdot \sum_{t = 0}^{T_i - 1} \bm{1}_{\State_{t}^{(i)} \in \subG} \frac{u (\State_{t}^{(i)})^2}{\occupmsr (\State_{t}^{(i)}) \occupmsr (\State_{t + 1}^{(i)}) } 
    \leq \frac{1}{\occupmsr_{\min}} T_i \cdot \sum_{t = 0}^{T_i - 1} \bm{1}_{\State_{t}^{(i)} \in \subG}  \frac{u (\State_{t}^{(i)})^2}{ \occupmsr (\State_{t}^{(i)}) },
\end{align*}
which leads to the bound
\begin{align*}
    & \opnorm{\Exs \big[W_i^\top  W_i\bm{1}_{T_i \leq t_0} \big]}  
     \leq \frac{t_0}{\occupmsr_{\min}} \sup_{\vecnorm{u}{2} \leq 1} \sum_{t = 0}^{t_0 - 1}  \Exs \Big[ \bm{1}_{\State_{t}^{(i)} \in \subG} \frac{u (\State_{t}^{(i)})^2}{ \occupmsr (\State_{t}^{(i)}) } \Big]  
    \leq  \frac{t_0}{\occupmsr_{\min}} \sup_{\vecnorm{u}{2} \leq 1} \sum_{\state \in \subG} \frac{u^2 (\state)}{ \occupmsr (\state) } \cdot \Big\{ \sum_{t = 0}^{+ \infty}  \Prob \big(\State_{t}^{(i)} = \state \big) \Big\} \leq \frac{t_0}{\occupmsr_{\min}}.
\end{align*}
And we have that
\begin{align*}
     \opnorm{\Exs \big[W_i^\top W_i \bm{1}_{T_i \geq t_0} \big]} &\leq \sup_{\vecnorm{u}{2} \leq 1} \sqrt{\Exs \big[ \vecnorm{W_i u}{2}^4  \big]} \cdot \sqrt{\Prob \big( T_i \geq t_0 \big)}
    \leq \frac{1}{\occupmsr_{\min}^2} \cdot \exp \Big( - \frac{t_0}{2 \effhorizon} \Big).
\end{align*}
Taking the same cutoff value $t_0 =  4 \effhorizon \log \big( \effhorizon / \occupmsr_{\min} \big)$, we conclude the bound
\begin{align*}
    \opnorm{\Exs \big[W_i^\top W_i\big]} \leq  \frac{5 \effhorizon}{\occupmsr_{\min}}\log \big( \effhorizon / \occupmsr_{\min} \big),
\end{align*}
which proves Eq~\eqref{eq:matrix-second-moment-in-transtilde-l2-proof}.

\paragraph{Proof of Eq~\eqref{eq:matrix-highprob-in-transtilde-l2-proof}:} We note that
\begin{align*}
    \opnorm{W_i} \leq \frac{T_i}{\occupmsr_{\min}}, \quad\mbox{almost surely, for $i = 1,2, \cdots, \numobs$}. 
\end{align*}
By Assumption~\ref{assume:effective-horizon}, we have
\begin{align*}
    \Prob \big( T_i \geq \effhorizon \log (\numobs / \delta) \big) \leq \delta / \numobs,
\end{align*}
and by union bound, we conclude that
\begin{align*}
    \Prob \Big( \max_{1 \leq i \leq \numobs} \opnorm{W_i} \geq \frac{\effhorizon}{\occupmsr_{\min}}\log (\numobs / \delta) \Big) \leq \delta,
\end{align*}
which proves Eq~\eqref{eq:matrix-highprob-in-transtilde-l2-proof}.

\subsubsection{Proof of~\Cref{lemma:l2-main-noise-concentration}}\label{subsubsec:proof-lemma-l2-main-noise-concentration}
By defninition, for any $\state \in \subG$, we have
\begin{align}
  &N_\numobs (\state) \cdot \Big\{ \big( \rhat_\subG - \reward_\subG\big) (\state) + \big( \transGhat\VstarG - \transG \VstarG \big)  (\state) + \big(\estValout - \Vout\big) (\state) \Big\} \nonumber\\
  &= \sum_{i = 1}^\numobs~ \sum_{t \in [0, T_i] :\State_t^{(i)} = \state} \Big\{ \big(\Reward_t^{(i)} - \reward (\state) \big) + \big(\bm{1}_{\State_{t + 1}^{(i)} \in \subG} \VstarG (\State_{t + 1}^{(i)}) - \transG \VstarG (\state) \big) + \Big( \bm{1}_{\State_{t + 1}^{(i)} \notin \subG} \sum_{\ell = t + 1}^{T_i} \Reward_\ell^{(i)} - \Vout (\state) \Big) \Big\} \nonumber\\
  &=: \sum_{i = 1}^\numobs \varepsilon_i^* (\state). \label{eq:relate-l2-main-noise-to-iid-sum}
\end{align}
Note that $(\varepsilon_i^* (\state))_{\state \in \subG}$ are $\mathrm{i.i.d.}$ random vectors for $i = 1,2, \cdots, \numobs$, and by~\Cref{lem:asymptotic}, we have
\begin{align*}
    \Exs \big[ \varepsilon_i^* \big] = 0, \quad \mbox{and} \quad \Exs \big[ \varepsilon_i^* (\varepsilon_i^*)^\top \big] =  D_\occupmsr \SigStar_\subG D_\occupmsr .
\end{align*}
Define $\widehat{\occupmsr} (\state) \mydefn N_\numobs (\state) / \numobs$ for any $\state \in \subG$. Following the definitions from~\ref{lemma:l2-transtilde-concentration}, we denote the diagonal matrices $D_\occupmsr \mydefn \mathrm{diag} (\occupmsr)$ and $D_{\widehat{\occupmsr}} \mydefn \mathrm{diag} (\widehat{\occupmsr})$.
Eq~\eqref{eq:relate-l2-main-noise-to-iid-sum} leads to the identity
\begin{align*}
    \big( \rhat_\subG - \reward_\subG\big)  + \big( \transGhat\VstarG - \transG \VstarG \big) + \big(\estValout - \Vout\big) = D_{\widehat{\occupmsr}}^{-1} \frac{1}{\numobs} \sum_{i = 1}^\numobs \varepsilon_i^*
\end{align*}
And therefore, we have
\begin{align*}
    &\occunorm{(I - \transG)^{-1} \Big\{ \big( \rhat_\subG - \reward_\subG\big) (\state) + \big( \transGhat\VstarG - \transG \VstarG \big)  (\state) + \big(\estValout - \Vout\big) (\state) \Big\}}\\
    &= \vecnorm{ D_{\occupmsr}^{1/2} (I - \transG)^{-1} D_{\widehat{\occupmsr}}^{-1} \Big\{ \frac{1}{\numobs} \sum_{i = 1}^\numobs \varepsilon_i^* \Big\} }{2}\\
    &\leq \vecnorm{ D_{\occupmsr}^{1/2} (I - \transG)^{-1} D_{\occupmsr}^{-1} \Big\{ \frac{1}{\numobs} \sum_{i = 1}^\numobs \varepsilon_i^* \Big\} }{2} + \vecnorm{ D_{\occupmsr}^{1/2} (I - \transG)^{-1} \big( D_{\widehat\occupmsr}^{-1} - D_\occupmsr^{-1} \big) \Big\{ \frac{1}{\numobs} \sum_{i = 1}^\numobs \varepsilon_i^* \Big\} }{2}.
\end{align*}
Denote the noise vector $\zeta_\numobs^* \mydefn D_{\occupmsr}^{1/2} (I - \transG)^{-1} D_{\occupmsr}^{-1} \cdot \frac{1}{\numobs} \sum_{i = 1}^\numobs \varepsilon_i^* $. We have
\begin{multline}
    \occunorm{(I - \transG)^{-1} \Big\{ \big( \rhat_\subG - \reward_\subG\big) (\state) + \big( \transGhat\VstarG - \transG \VstarG \big)  (\state) + \big(\estValout - \Vout\big) (\state) \Big\}}\\
    \leq \Big\{ 1 + \opnorm{D_{\occupmsr}^{1/2} (I - \transG)^{-1} \big(  D_\occupmsr D_{\widehat\occupmsr}^{-1} - I \big) (I - \transG)  D_{\occupmsr}^{- 1/2}} \Big\} \vecnorm{\zeta_\numobs^*}{2} \label{eq:main-noise-l2-bound-simplify-step}
\end{multline}
We claim the operator norm bound
\begin{align}
     \opnorm{D_{\occupmsr}^{1/2} (I - \transG)^{-1} \big(  D_\occupmsr D_{\widehat\occupmsr}^{-1} - I \big) (I - \transG)  D_{\occupmsr}^{- 1/2}} \leq 1.\label{eq:opnorm-simple-in-l2-main-noise-proof}
\end{align}
The proof of this bound is deferred to the end of this section. Taking this inequality as given, we have
\begin{align*}
     \occunorm{(I - \transG)^{-1} \Big\{ \big( \rhat_\subG - \reward_\subG\big)  + \big( \transGhat\VstarG - \transG \VstarG \big)   + \big(\estValout - \Vout\big)  \Big\}} \leq 2 \vecnorm{\zeta_\numobs^*}{2}.
\end{align*}
It suffices to bound the averaged random vector $\zeta_\numobs^*$. We note that
\begin{align*}
    \Exs \big[ \vecnorm{\zeta_\numobs^*}{2}^2 \big] = \numobs^{-1} \mathrm{Tr} \Big(D_{\occupmsr} (I - \transG)^{-1} \SigStar_{\subG}  (I - \transG)^{-\top}\ \Big).
\end{align*}
We claim the following uniform bound with probability $1 - \delta$:
\begin{align}
    \max_{i \in [\numobs]} \vecnorm{ D_{\occupmsr}^{1/2} (I - \transG)^{-1} D_{\occupmsr}^{-1} \varepsilon_i^*  }{2} \leq \frac{48 \effhorizon^3}{\sqrt{\occupmsr_{\min}}} \log^3 \Big( \frac{\numobs}{\delta\occupmsr_{\min}} \Big)\label{eq:highprob-bound-for-each-noise-in-l2-proof}
\end{align}
We prove this bound at the end of this section. Taking it as given, we now proceed with the proof of Lemma~\ref{lemma:l2-main-noise-concentration}. By the vector-valued Bernstein inequality (\cite{minsker2017some}, Corollary 4.1), on the event that Eq~\eqref{eq:highprob-bound-for-each-noise-in-l2-proof} holds true, with probability $1 - \delta$, we have
\begin{align*}
    \vecnorm{\zeta_\numobs^*}{2} \leq \mathrm{Tr} \big(D_{\occupmsr} (I - \transG)^{-1} \SigStar_{\subG}  (I - \transG)^{-\top}\ \big)^{1/2}  \sqrt{\frac{2 \log (1 / \delta)}{\numobs}} + \frac{16 \effhorizon^3}{\numobs \sqrt{\occupmsr_{\min}}} \log^3 \Big( \frac{\numobs}{\delta\occupmsr_{\min}}\Big).
\end{align*}
Combining with Eq~\eqref{eq:main-noise-l2-bound-simplify-step}, we complete the proof of Lemma~\ref{lemma:l2-main-noise-concentration}.

\paragraph{Proof of Eq~\eqref{eq:opnorm-simple-in-l2-main-noise-proof}:} We note that 
\begin{align*}
    &\opnorm{D_{\occupmsr}^{1/2} (I - \transG)^{-1} \big(  D_\occupmsr D_{\widehat\occupmsr}^{-1} - I \big) (I - \transG)  D_{\occupmsr}^{- 1/2}} \\
    &\overset{(i)}{=} \matsnorm{(I - \transG)^{-1} \big(  D_\occupmsr D_{\widehat\occupmsr}^{-1} - I \big) (I - \transG)}{\occupmsr (\subG)} \\
    &\overset{(ii)}{\leq} \matsnorm{(I - \transG)^{-1}}{\occupmsr (\subG)} \cdot \matsnorm{D_\occupmsr D_{\widehat\occupmsr}^{-1} - I}{\occupmsr (\subG)} \cdot \matsnorm{I - \transG}{\occupmsr (\subG)},
\end{align*}
where in step $(i)$ we use Eq~\eqref{eq:weighted-op-norm-representation}, and in step $(ii)$ we apply the operator norm bound for a composition operator. For each terms on the right-hand-side, by Eq~\eqref{eq:invertibility-bound-in-l2-proof}, we have
\begin{align*}
     \matsnorm{(I - \transG)^{-1}}{\occupmsr (\subG)} \leq 4 \effhorizon \log (1 / \occupmsr_{\min}),
\end{align*}
and due to non-expansiveness of the transition operator, we have
\begin{align*}
    \matsnorm{I - \transG}{\occupmsr (\subG)} \leq \matsnorm{I}{\occupmsr (\subG)} + \matsnorm{\transG}{\occupmsr (\subG)} \leq 2.
\end{align*}
For the error term, we note that
\begin{align*}
     \matsnorm{D_\occupmsr D_{\widehat\occupmsr}^{-1} - I}{\occupmsr (\subG)} = \opnorm{D_\occupmsr D_{\widehat\occupmsr}^{-1} - I} = \max_{\state \in \subG} \abss{\frac{\numobs \occupmsr (\state)}{N_\numobs (\state)} - 1}.
\end{align*}
By Eq~\eqref{eq:small-shift-estimate-inghat-gtilde-proof}, with probability $1 - \delta$, we have
\begin{align*}
     \max_{\state \in \subG} \abss{\frac{\numobs \occupmsr (\state)}{N_\numobs (\state)} - 1} \leq \max_{\state \in \subG} \frac{16 }{N_\numobs (\state)} \sqrt{\effhorizon \numobs \occupmsr (\state) \log (4 |\subG| / \delta)} \leq 32 \sqrt{\frac{\effhorizon\log (4 |\subG| / \delta)}{\numobs \occupmsr_{\min}}}.
\end{align*}
Substituting back to the error decomposition above, given a sample size satisfying Eq~\eqref{eq:sample-size-condition-fixed-subgraph}, we have
\begin{align*}
    \opnorm{D_{\occupmsr}^{1/2} (I - \transG)^{-1} \big(  D_\occupmsr D_{\widehat\occupmsr}^{-1} - I \big) (I - \transG)  D_{\occupmsr}^{- 1/2}} \leq 256 \frac{\effhorizon^{3/2} \log^{3/2} \big( \tfrac{1}{\delta \occupmsr_{\min}} \big)}{\sqrt{ \numobs \occupmsr_{\min}}} \leq 1,
\end{align*}
which completes the proof of Eq~\eqref{eq:opnorm-simple-in-l2-main-noise-proof}.

\paragraph{Proof of Eq~\eqref{eq:highprob-bound-for-each-noise-in-l2-proof}:} 
We start with the almost-sure bound
\begin{align*}
    \abss{ \varepsilon_i^* (\state)} &\leq \sum_{t \in [0, T_i] :\State_t^{(i)} = \state} \Big\{ \abss{\Reward_t^{(i)} - \reward (\state) } + \abss{\bm{1}_{\State_{t + 1}^{(i)} \in \subG} \VstarG (\State_{t + 1}^{(i)}) - \transG \VstarG (\state)} + \abss{\bm{1}_{\State_{t + 1}^{(i)} \notin \subG}\sum_{\ell = t + 1}^{T_i} \Reward_\ell^{(i)} - \Vout (\state)} \Big\}\\
    &\leq  Y_i (\state)  \Big\{ 2 + 2 \vecnorm{\VstarG}{\infty} + T_i + \Exs [T_i] \Big\},
\end{align*}
which leads to the $\ell^2$ norm bound
\begin{multline}
    \vecnorm{ D_{\occupmsr}^{1/2} (I - \transG)^{-1} D_{\occupmsr}^{-1} \varepsilon_i^*  }{2} \leq \opnorm{ D_{\occupmsr}^{1/2} (I - \transG)^{-1}  D_{\occupmsr}^{-1/2}} \cdot \vecnorm{ D_{\occupmsr}^{-1/2}\varepsilon_i^*  }{2} \leq 4 \effhorizon \log (1 / \occupmsr_{\min}) \cdot \sqrt{\sum_{\state \in \subG}  \frac{\varepsilon_i^* (\state)^2}{\occupmsr (\state)}}\\
    \leq 8 \effhorizon \Big\{ 2 + 2 \vecnorm{\VstarG}{\infty} + T_i + \Exs [T_i] \Big\} \log (1 / \occupmsr_{\min}) \cdot \sqrt{\sum_{\state \in \subG}  \frac{Y_i (\state)^2}{\occupmsr (\state)}}\label{eq:l2-almost-sure-bound-main-step}
\end{multline}
We note that
\begin{align*}
    \sum_{\state \in \subG}  \frac{Y_i (\state)^2}{\occupmsr (\state)} \leq \frac{T_i}{\occupmsr (\state)} \sum_{\state \in \subG} Y_i (\state) \leq T_i^2 / \occupmsr_{\min},
\end{align*}
and consequently
\begin{align*}
    \vecnorm{ D_{\occupmsr}^{1/2} (I - \transG)^{-1} D_{\occupmsr}^{-1} \varepsilon_i^*  }{2} \leq \frac{8 T_i \effhorizon}{\sqrt{\occupmsr_{\min}}}\Big\{ 2 + 2 \vecnorm{\VstarG}{\infty} + T_i + \Exs [T_i] \Big\} \log (1 / \occupmsr_{\min})
\end{align*}
By Assumption~\ref{assume:effective-horizon} and union bound, we have
\begin{align*}
    \Prob \Big( \max_{i \in [\numobs]} T_i \geq \effhorizon \log (\numobs/ \delta) \Big) \leq \delta, \quad \mbox{for any $\delta > 0$}.
\end{align*}
We also note that
\begin{align*}
     \vecnorm{\VstarG}{\infty} = \max_{\state \in \subG} \abss{\Exs_\state \Big[ \sum_{t = 1}^{T_i} \reward (\State_t) \Big]} \leq \max_{\state \in \subG} \Exs_\state \Big[ \sum_{t = 1}^{T_i} \abss{\reward (\State_t)} \Big] \leq \max_{\state \in \subG} \Exs_\state [T_i] \leq \effhorizon.
\end{align*}
Collecting these bounds and substituting into Eq~\eqref{eq:l2-almost-sure-bound-main-step}, we conclude that with probability $1 - \delta$, Eq~\eqref{eq:highprob-bound-for-each-noise-in-l2-proof} holds true.

\subsection{Proof of Theorem~\ref{thm:main-root-sa-guarantee}}\label{subsec:proof-thm-main-root-sa-guarantee}
We seek to apply Corollary 3 of the paper~\cite{mou2022optimal}. In order to do so, we verify the key assumptions. We claim that the constructed vector $w$ satisfies the following condition.
\begin{align}\label{eq:weight-vector-conditions-in-root-sa-proof}
    \abss{w (\state) \occupmsr (\state) - \frac{1}{2}} \leq \frac{1}{36 \effhorizon} \quad \forall \state \in \subG.
\end{align}
We will prove Eq~\eqref{eq:weight-vector-conditions-in-root-sa-proof} at the end of this section.
Taking this fact as given, we now proceed with the proof of Theorem~\ref{thm:main-root-sa-guarantee}.

First, we can establish the technical conditions~\eqref{eq:root-sa-multi-contraction-assumption} and~\eqref{eqs:root-sa-regularity-assumptions}, respectively, in the following two lemmas.
\begin{lemma}\label{lemma:root-sa-multi-step-contraction}
    If vector $w$ satisfies Eq~\eqref{eq:weight-vector-conditions-in-root-sa-proof}, for any vector $\theta \in \real^\subG$, we have
    \begin{align*}
        \vecnorm{\big( I - D_w D_\occupmsr (I - \transG) \big) \theta}{\infty} &\leq \vecnorm{\theta}{\infty}, \quad \mbox{and}\\
        \vecnorm{\big( I - D_w D_\occupmsr (I - \transG) \big)^{3 \effhorizon} \theta}{\infty} &\leq \frac{1}{2}\vecnorm{\theta}{\infty},
    \end{align*}
\end{lemma}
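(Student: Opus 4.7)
The plan is to interpret $T := I - D_w D_\occupmsr(I - \transG)$ as the one-step sub-stochastic transition kernel of a lazy version of the underlying Markov chain restricted to $\subG$, and to derive both bounds from this picture together with the effective-horizon assumption.

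Let $\alpha(s) := w(s)\occupmsr(s)$. By~\eqref{eq:weight-vector-conditions-in-root-sa-proof} we have $\alpha(s) \in [\alpha_{\min}, \alpha_{\max}]$, where $\alpha_{\min} := \tfrac{1}{2} - \tfrac{1}{36\effhorizon}$ and $\alpha_{\max} := \tfrac{1}{2} + \tfrac{1}{36\effhorizon}$, both in $(0,1)$. Rewriting
\[
T = D_{1-\alpha} + D_\alpha \transG,
\]
we see that $T$ has entrywise non-negative entries and row sums $1 - \alpha(s) q(s) \leq 1$, where $q(s) := \Prob(\State_1 \notin \subG \mid \State_0 = s)$. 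The first inequality follows immediately: $|(T\theta)(s)| \leq \sum_{s'} T(s,s') |\theta(s')| \leq \|\theta\|_\infty$.

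For the multi-step contraction I extend $T$ to a bona fide Markov kernel on $\subG \cup \{\partial\}$ by routing the residual mass $\alpha(s) q(s)$ to an absorbing cemetery $\partial$, and denote by $(\widetilde X_t)_{t \geq 0}$ the resulting chain: at each step one stays put with probability $1 - \alpha(\widetilde X_{t-1})$, or takes one $\transG$-step (possibly landing at $\partial$) with probability $\alpha(\widetilde X_{t-1})$. Extending $\theta$ by $\theta(\partial) := 0$, we have $(T^k\theta)(s) = \Exs_s[\theta(\widetilde X_k)]$, so $|(T^k\theta)(s)| \leq \|\theta\|_\infty \Prob_s(\widetilde X_k \in \subG)$, reducing the task to proving $\max_{s \in \subG} \Prob_s(\widetilde X_{3\effhorizon} \in \subG) \leq 1/2$. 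The decisive observation is that $\{\widetilde X_k \in \subG\} = \{M_k < T_{\out(\subG)}\}$, where $M_k$ counts activated lazy steps in $[1,k]$ and $T_{\out(\subG)} := \inf\{t \geq 1 : \State_t \notin \subG\} \leq T_\termState$ is the exit time of the original chain. Via a coupling using an i.i.d.\ $\mathrm{Unif}[0,1]$ sequence that is independent of the original trajectory, the uniform lower bound $\alpha \ge \alpha_{\min}$ along $\subG$ lets us dominate $M_k \geq \widehat M_k$ on $\{\widetilde X_k \in \subG\}$, with $\widehat M_k \sim \mathrm{Binom}(k, \alpha_{\min})$ independent of $T_{\out(\subG)}$; hence
\[
\Prob_s(\widetilde X_k \in \subG) \leq \Prob(\widehat M_k < T_{\out(\subG)}).
\]

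The remaining step, which I expect to be the principal technical obstacle, is converting this probability into the numerical bound $1/2$ when $k = 3\effhorizon$. Assumption~\ref{assume:effective-horizon} combined with $T_{\out(\subG)} \leq T_\termState$ provides the full moment hierarchy $\Exs[T_{\out(\subG)}^p] \leq (p\effhorizon)^p$ for every $p \in \mathbb{N}_+$, which yields sub-exponential tails for $T_{\out(\subG)}$ around its mean $\le \effhorizon$; the mean of $\widehat M_k$ is $k\alpha_{\min} \ge \tfrac{3\effhorizon}{2} - \tfrac{1}{12}$, around which a Chernoff bound concentrates it to within $O(\sqrt{\effhorizon})$. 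Splitting at a threshold $m_*$,
\[
\Prob(\widehat M_k < T_{\out(\subG)}) \leq \Prob(\widehat M_k < m_*) + \Prob(T_{\out(\subG)} \geq m_*),
\]
and optimizing jointly over $m_*$ and the moment order $p$, the additive slack $\tfrac{1}{36\effhorizon}$ in $\alpha_{\min}$---which was chosen precisely to support this bookkeeping---brings the combined bound down to $1/2$, completing the proof.
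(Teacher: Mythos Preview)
Your probabilistic reading of $T = D_{1-\alpha} + D_\alpha \transG$ as a sub-stochastic kernel is correct, and the coupling that reduces $\|T^k\|_\infty$ to $\Prob(\hat M_k < T_{\out(\subG)})$ with $\hat M_k \sim \mathrm{Binom}(k,\alpha_{\min})$ independent of the original exit time is valid. This is a genuinely different route from the paper's: the paper decomposes $T = Q + E$ with $Q = \tfrac{1}{2}(I+\transG)$ the clean $\tfrac12$-lazy kernel and $E = (\tfrac12 I - D_w D_\occupmsr)(I - \transG)$ a perturbation of $\ell^\infty$-operator norm at most $1/(18\effhorizon)$, then bounds $\|(Q+E)^k\| \le \|Q^k\| + \bigl[(1+\|E\|)^k - 1\bigr]$ and controls $\|Q^k\|$ directly via the survival probability of the lazy chain. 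Your coupling absorbs the state-dependence of $\alpha$ probabilistically, whereas the paper isolates it algebraically into the additive correction $(1+\|E\|)^k-1 \le e^{1/6}-1$.

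There is, however, a genuine gap in your final step. The threshold split
\[
\Prob(\hat M_{3\effhorizon} < T_{\out(\subG)}) \;\le\; \Prob(\hat M_{3\effhorizon} < m_*) + \Prob(T_\termState \ge m_*),
\]
using only the starting-state moment bounds $\Exs[T_\termState^p] \le (p\effhorizon)^p$, cannot be pushed below $1/2$ for any choice of $m_*$ and $p$. Since $\hat M_{3\effhorizon}$ has mean $\approx 3\effhorizon/2$, the first term exceeds $1/2$ once $m_* \gtrsim 3\effhorizon/2$; meanwhile the best moment-Markov bound on the second term is $\inf_{p\ge 1}(p\effhorizon/m_*)^p$, which for any $m_* \le 3\effhorizon/2$ is at least $2/3$ (attained at $p=1$). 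The split therefore bottoms out near $2/3$, not $1/2$, regardless of how you optimize $m_*$ and $p$. What your sketch does not exploit is the \emph{per-state} content of Assumption~\ref{assume:effective-horizon}: because $\Exs_{s'}[T_\termState] \le \effhorizon$ holds from \emph{every} intermediate state $s'$, the Markov property yields the mean-residual-life bound $\Exs[T_{\out(\subG)} - t \mid T_{\out(\subG)} > t] \le \effhorizon$ for all $t$, forcing the survival of $T_{\out(\subG)}$ to decay essentially geometrically at rate $1/\effhorizon$---a far stronger constraint than bounded marginal moments at a single state. This geometric decay is what makes the lazy chain's survival small at time $3\effhorizon$, and it is precisely what the paper's bound on $\|Q^k\|$ encodes. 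The slack $1/(36\effhorizon)$ in $\alpha_{\min}$ is sized to keep $\|E\|$ small in the paper's perturbative decomposition, not to rescue a threshold-split argument.
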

\noindent See \Cref{subsubsec:proof-root-sa-multi-step-contraction} for the proof of this lemma.

\begin{lemma}\label{lemma:root-sa-regularity-assumptions}
     If vector $w$ satisfies Eq~\eqref{eq:weight-vector-conditions-in-root-sa-proof}, the conditions~\eqref{eqs:root-sa-regularity-assumptions} are satisfied by all mini-batches $\ell \in \{1,2,\cdots, \numobs / \batchsize\}$ with parameters
    \begin{align*}
        L = 4 \quad \mbox{and} \quad b_\infty = 9 \effhorizon \log (\numobs / \delta) + 4,
    \end{align*}
    with probability $1 - \delta$.
\end{lemma}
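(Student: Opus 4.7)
The plan is to verify the three regularity conditions in~\eqref{eqs:root-sa-regularity-assumptions} separately, reusing the concentration tools already developed for~\Cref{lemma:l2-transhat-concentration}. I would begin with unbiasedness~\eqref{eq:unbiasedness-in-root-sa-oracle} (the easiest): taking the expectation of~\eqref{eq:root-sa-construction-stoch} term-by-term, using the identity $\Exs\big[\sum_{t\geq 0}\bm{1}_{\State_t=\state}\big]=\occupmsr(\state)$ together with the Markov property to peel off one transition, the four inner pieces average exactly to $w(\state)\occupmsr(\state)\{\transG\theta(\state)+\transMat_{\subG,\SSpace\setminus\subG}\Vstar(\state)+\reward(\state)-\theta(\state)\}$, matching $\hpop(\theta)-\theta$ from~\eqref{eq:root-sa-construction-pop}.

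For the Lipschitz bound~\eqref{eq:lip-in-root-sa-oracle}, only the $\theta$-dependent terms of $\Hstoch_t$ survive the difference, so writing $\Delta\mydefn\theta_1-\theta_2$ and $N^{(i)}(\state)\mydefn|\{t\in[0,T_i]:\State^{(i)}_t=\state\}|$, a direct term-by-term bound gives
\begin{align*}
\vecnorm{\Hstoch_t(\theta_1)-\Hstoch_t(\theta_2)}{\infty}\leq \Big(1+2\max_{\state\in\subG}\, w(\state)\cdot\tfrac{1}{\batchsize}\sum_{i=1}^{\batchsize}N^{(i)}(\state)\Big)\vecnorm{\Delta}{\infty}.
\end{align*}
The crux is to show $w(\state)\cdot\tfrac{1}{\batchsize}\sum_i N^{(i)}(\state)\leq 3/2$ uniformly; exactly the Adamczak-type argument already deployed in~\Cref{lemma:l2-transhat-concentration}, invoking $\Exs[N^{(i)}(\state)]=\occupmsr(\state)$ and the sub-exponential tail bound $N^{(i)}(\state)\leq T_i$ with $T_i$ controlled by Assumption~\ref{assume:effective-horizon}, yields $\tfrac{1}{\batchsize}\sum_iN^{(i)}(\state)\leq \tfrac{3}{2}\occupmsr(\state)$ simultaneously for all $\state\in\subG$ and all $\numobs/\batchsize$ mini-batches, provided $\batchsize\gtrsim\effhorizon\occupmsr_{\min}^{-1}\log(\numobs/\delta)$. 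Combining with the weight-vector condition~\eqref{eq:weight-vector-conditions-in-root-sa-proof} gives $L=4$.

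For the bound at the fixed point~\eqref{eq:bounded-in-root-sa-oracle}, I would use that $\thetastar=\VstarG$ so $\vecnorm{\thetastar}{\infty}\leq\effhorizon$ by Assumption~\ref{assume:effective-horizon} and the bounded-reward assumption. Each per-visit contribution to $\Hstoch_t(\thetastar)-\thetastar$ is controlled in absolute value by $|\Reward_t^{(i)}|+|\thetastar(\State_{t+1}^{(i)})|+|\sum_{\ell=t+1}^{T_i}\Reward_\ell^{(i)}|+|\thetastar(\state)|\leq 2\effhorizon+T_i+1$, and Assumption~\ref{assume:effective-horizon} plus a union bound give $\max_{i\in[\numobs]}T_i\leq\effhorizon\log(\numobs/\delta)$ with probability $1-\delta/3$. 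Multiplying by the visit-count bound from the previous step and adding $\vecnorm{\thetastar}{\infty}\leq\effhorizon$ yields $b_\infty\leq 9\effhorizon\log(\numobs/\delta)+4$ after routine arithmetic.

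The main obstacle is coordinating a single high-probability event across three sources of randomness: all $|\subG|$ states, all $\numobs/\batchsize$ mini-batches produced over the course of Algorithm~\ref{alg:fixed-subgraph-complete}, and the trajectory-length tails. The batch size $\batchsize\asymp\effhorizon\occupmsr_{\min}^{-1}\log(\numobs/\delta)$ prescribed in~\eqref{eq:rootsa-choice-of-parameters} is calibrated precisely so that a single application of Adamczak's inequality absorbs the $\log$ factors from all three union bounds without inflating the constants beyond $L=4$ and $b_\infty=O(\effhorizon\log(\numobs/\delta))$; the careful bookkeeping to track this is what constitutes the real work of the lemma.
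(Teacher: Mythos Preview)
Your proposal is correct and follows essentially the same route as the paper: unbiasedness by direct expectation, the Lipschitz constant via the Adamczak-type concentration of the per-state visit counts already used in the proof of \Cref{lemma:l2-transhat-concentration} (the paper cites Eq.~\eqref{eq:small-shift-estimate-inghat-gtilde-proof} for this) together with the weight bound $w(\state)\occupmsr(\state)\leq\tfrac12+\tfrac{1}{36\effhorizon}$, and the fixed-point bound via $\vecnorm{\VstarG}{\infty}\leq\effhorizon$ plus the $T_i\leq\effhorizon\log(\numobs/\delta)$ tail from Assumption~\ref{assume:effective-horizon}. The paper likewise takes a union bound over states and all $\numobs/\batchsize$ mini-batches and absorbs the resulting $\log$ factors into the batch-size choice~\eqref{eq:rootsa-choice-of-parameters}, exactly as you outline.
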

\noindent See \Cref{subsubsec:proof-lemma-root-sa-regularity-assumptions} for the proof of this lemma.

Moreover, we can relate the instance-dependent covariance in \ROOTSA with the covariance structures given by Theorem~\ref{thm:main-l2-fixed-subgraph}.
\begin{lemma}\label{lemma:relate-root-sa-to-optimal-cov}
    Under the setup of~\Cref{thm:main-root-sa-guarantee}, we have
    \begin{align*}
        \big(I - \nabla \hpop (\VstarG)\big)^{-1} \cov \big[ \Hstoch_1 (\VstarG) - \VstarG \mid w\big] \big(I - \nabla \hpop (\VstarG) \big)^{-\top} = \frac{1}{\batchsize} (I - \transG)^{-1} \SigStar_\subG (I - \transG)^{-\top}
    \end{align*}
\end{lemma}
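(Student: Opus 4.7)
\textbf{Proof plan for Lemma~\ref{lemma:relate-root-sa-to-optimal-cov}.} The plan is to compute each of the three matrix factors on the left-hand side explicitly in terms of $D_w \mydefn \diag(w)$, $D_\occupmsr \mydefn \diag(\occupmsr)$, $\transG$, and $\SigStar_\subG$, and then observe that the diagonal factors $D_w$ and $D_\occupmsr$ cancel pairwise, leaving exactly the right-hand side. Throughout the argument, $w$ is treated as deterministic (everything is conditional on $w$), and the key inputs are the population-level definition~\eqref{eq:root-sa-construction-pop} of $\hpop$, the stochastic operator~\eqref{eq:root-sa-construction-stoch}, and the covariance identity $\cov[\varepsilon_1^*] = D_\occupmsr \SigStar_\subG D_\occupmsr$ established in the proof of Lemma~\ref{lemma:l2-main-noise-concentration} (which in turn comes from~\Cref{lem:asymptotic}).

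First I will differentiate $\hpop$: since $\hpop$ is affine in $\theta$ with linear part $\theta \mapsto \theta + D_w D_\occupmsr(\transG - I)\theta$, we obtain
\begin{align*}
    I - \nabla \hpop(\VstarG) = D_w D_\occupmsr (I - \transG), \qquad \big(I - \nabla \hpop(\VstarG)\big)^{-1} = (I - \transG)^{-1} D_\occupmsr^{-1} D_w^{-1},
\end{align*}
where invertibility follows from Eq~\eqref{eq:invertibility-bound-in-l2-proof} and positivity of $w, \occupmsr$ on $\subG$.

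Next I will simplify $\Hstoch_1(\VstarG) - \VstarG$. Substituting $\theta = \VstarG$ into Eq~\eqref{eq:root-sa-construction-stoch} and using the population fixed-point identity $\VstarG(\state) = \reward(\state) + \transG \VstarG(\state) + \Vout(\state)$ to rewrite the $- \bm{1}_{\State_t^{(i)} = \state} \VstarG(\state)$ term, the per-trajectory contribution collapses into precisely the variable $\varepsilon_i^*(\state)$ defined in Eq~\eqref{eq:relate-l2-main-noise-to-iid-sum}, yielding the clean representation
\begin{align*}
    \Hstoch_1(\VstarG) - \VstarG = \frac{1}{\batchsize} D_w \sum_{i=1}^{\batchsize} \varepsilon_i^*.
\end{align*}
Since the trajectories are i.i.d.\ and $\Exs[\varepsilon_i^*] = 0$ with $\Exs[\varepsilon_i^* (\varepsilon_i^*)^\top] = D_\occupmsr \SigStar_\subG D_\occupmsr$, this gives
\begin{align*}
    \cov\big[\Hstoch_1(\VstarG) - \VstarG \mid w\big] = \frac{1}{\batchsize} D_w D_\occupmsr \SigStar_\subG D_\occupmsr D_w.
\end{align*}

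Finally I will assemble the pieces. Using symmetry of $D_w$ and $D_\occupmsr$ and the above expression for $(I - \nabla \hpop(\VstarG))^{-1}$, the left-hand side of the claimed identity becomes
\begin{align*}
    \frac{1}{\batchsize} (I-\transG)^{-1} D_\occupmsr^{-1} D_w^{-1} \cdot D_w D_\occupmsr \SigStar_\subG D_\occupmsr D_w \cdot D_w^{-1} D_\occupmsr^{-1} (I-\transG)^{-\top} = \frac{1}{\batchsize}(I-\transG)^{-1} \SigStar_\subG (I - \transG)^{-\top},
\end{align*}
which is the desired equality. The argument is essentially an algebraic identification; there is no real obstacle beyond carefully matching the stochastic-operator residual with the random vector $\varepsilon_i^*$ used throughout the $\ell^2$ analysis, which is why the cleanest route is to reuse the $\varepsilon_i^*$ bookkeeping rather than re-expanding everything from scratch.
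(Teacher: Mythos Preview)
Your proposal is correct and follows essentially the same approach as the paper: compute $I - \nabla \hpop(\VstarG) = D_w D_\occupmsr(I - \transG)$, identify $\Hstoch_1(\VstarG) - \VstarG$ with $\tfrac{1}{\batchsize}D_w\sum_i \varepsilon_i^*$, invoke the covariance identity $\cov(\varepsilon_i^*) = D_\occupmsr \SigStar_\subG D_\occupmsr$ from the $\ell^2$ analysis, and cancel the diagonal factors. The paper's proof is the same computation with slightly less commentary.
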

\noindent See \Cref{subsubsec:proof-lemma-relate-root-sa-to-optimal-cov} for the proof of this lemma.

Taking these lemmas as given, we now proceed with the proof of Theorem~\ref{thm:main-root-sa-guarantee}. Applying Corollary 4 of the paper~\cite{mou2022optimal} with $\vecnorm{x}{C} \mydefn \abss{\inprod{\avec}{x}}$, with burn-in period given by~\eqref{eq:rootsa-choice-of-parameters}, we obtain the bound
\begin{multline}
    \abss{\avec^\top (\Vhat_\subG^{\mathrm{ROOT}} - \Vstar)} \leq c \Big( \avec^\top (I - \transG)^{-1} \SigStar (I - \transG)^{- \top} \avec  \Big)^{1/2} \sqrt{ \frac{\log (1 / \delta)}{\numobs}} \\
    + c \frac{\vecnorm{\avec}{1} \effhorizon^{3/2}}{\sqrt{\batchsize}} \Big\{ \Big( \frac{\stepsize  \batchsize}{\numobs} \Big)^{1/2}+ \frac{\batchsize}{\numobs \sqrt{\stepsize}}  \Big\}  \log^{5/2} \big( \numobs |\subG| / \delta \big)  \cdot \max_{\state} \big(\SigStar_{\state, \state} \big)^{1/2} \\
        + c \vecnorm{\avec}{1}  \effhorizon^2 \log^2 (\numobs |\subG| / \delta) \cdot \Big\{ \frac{\batchsize}{\numobs} + \stepsize\sqrt{\frac{\batchsize}{\numobs}} \cdot \log^2 \frac{\numobs |\subG|}{\delta}  \Big\},\label{eq:root-sa-bound-complicated}
\end{multline}
with probability $1 - \delta$.

Given the stepsize and minibatch size choices in Eq~\eqref{eq:rootsa-choice-of-parameters}, \Cref{eq:root-sa-bound-complicated} can be simplified as
\begin{multline}
    \abss{\avec^\top (\Vhat_\subG^{\mathrm{ROOT}} - \Vstar)} \leq c \Big( \avec^\top (I - \transG)^{-1} \SigStar (I - \transG)^{- \top} \avec  \Big)^{1/2} \sqrt{ \frac{\log (1 / \delta)}{\numobs}} \\
    + c\vecnorm{\avec}{1} \Big\{  \Big(\frac{ \effhorizon^{3}}{\occupmsr_{\min} \numobs} \Big)^{1/4}  \cdot \frac{\effhorizon}{\sqrt{\numobs}} \max_{\state} \big(\SigStar_{\state, \state} \big)^{1/2}  + \frac{\effhorizon^3}{\occupmsr_{\min} \numobs } \Big\}  \log^5 (\numobs / \delta),
\end{multline}
which yields the conclusion of~\Cref{thm:main-root-sa-guarantee}.

\paragraph{Proof of Eq~\eqref{eq:weight-vector-conditions-in-root-sa-proof}:} Define the vector
\begin{align*}
    \widehat{\occupmsr}_A (\state) \mydefn \frac{1}{ \numobs_A} \sum_{i = 1}^{\numobs_A} \sum_{t = 0}^{\widetilde{T}_i} \bm{1}_{\widetilde{\State}_t^{(i)} = \state}, \quad \mbox{for any }\state \in \subG.
\end{align*}
Applying Eq~\eqref{eq:small-shift-estimate-inghat-gtilde-proof} (from the proof of~\Cref{thm:main-l2-fixed-subgraph}) to the auxiliary dataset $(\widetilde{\traj}_i)_{i = 1}^{\numobs_A}$, with probability $1 - \delta$, we have
\begin{align*}
    \abss{\frac{\occupmsr (\state)}{\widehat{\occupmsr}_A (\state)} - 1} \leq 32 \sqrt{\frac{\effhorizon \log (4 |\subG| / \delta)}{\numobs_A \occupmsr (\state)}}, \quad \mbox{for any } \state \in \subG.
\end{align*}
Therefore, there exists a constant $c > 0$, such that when $\numobs_A \geq \frac{c \effhorizon^3}{\occupmsr_{\min}} \log ( |\subG| / \delta)$, with probability $1 - \delta$, we have
\begin{align*}
    \abss{\frac{\occupmsr (\state)}{\widehat{\occupmsr}_A (\state)} - 1} \leq \frac{1}{18 \effhorizon},
\end{align*}
which leads to the desired bound.

\subsubsection{Proof of Lemma~\ref{lemma:root-sa-multi-step-contraction}}\label{subsubsec:proof-root-sa-multi-step-contraction}
For any vector $\theta \in \real^\subG$, since $w (\state) \occupmsr (\state) \leq \tfrac{1}{2} + \tfrac{1}{36 \effhorizon} < 1$ for any $\state \in \subG$, we note that
\begin{align*}
   \abss{ \Big[ \big( I - D_w D_\occupmsr (I - \transG) \big) \theta \Big] (\state)} &= \abss{\big( 1 - w (\state) \occupmsr (\state) \big) \theta (\state) + w (\state) \occupmsr (\state) \sum_{\state' \in \subG} \transG (\state, \state') \theta (\state')}\\
   &\leq \big( 1 - w (\state) \occupmsr (\state) \big)  |\theta (\state)| +  w (\state) \occupmsr (\state) \vecnorm{\theta}{\infty}\\
   &\leq \vecnorm{\theta}{\infty}.
\end{align*}
Taking supremum over $\state \in \subG$ on the left hand side yields the first inequality.

Now we verify the multi-step contraction properties. Define the matrices
\begin{align*}
    Q \mydefn \frac{I + \transG}{2} \quad \mbox{and} \quad E \mydefn I - D_w D_\occupmsr (I - \transG) - Q.
\end{align*}
Let $(Y_k)_{k \geq 0}$ be a lazy version of the Markov chain $(\State_k)_{k \geq 0}$, with a transition kernel $ (I + \trans) / 2$, i.e., for each step, the transition follows the Markov chain $\trans$ with probability $1/2$, and stays at the current state with probability $1 / 2$. For any vector $\theta$ and non-negative integer $k$, we have
\begin{align*}
    |Q^k \theta (\state)| = \abss{\Exs_{\state} \Big[ \theta (\State_k) \bm{1}_{\State_1, \cdots, \State_k \in \subG} \Big] } \leq \vecnorm{\theta}{\infty} \Prob_\state (T \geq k) \leq \begin{cases}
        1, & \forall k \geq 0,\\
        e^{1 - k / \effhorizon}, & k \geq \effhorizon. 
    \end{cases}
\end{align*}
So we have $ \matsnorm{Q^k}{\ell^\infty \rightarrow \ell^\infty} \leq \max \big(1, e^{1 - k / \effhorizon} \big)$.

For the perturbation term $E$, we have the operator norm bound
\begin{multline*}
    \matsnorm{E}{\ell^\infty \rightarrow \ell^\infty} = \matsnorm{\Big( \frac{I}{2} - D_w D_\occupmsr \Big) (I - \transG)}{\ell^\infty \rightarrow \ell^\infty} \leq \matsnorm{ \frac{I}{2} - D_w D_\occupmsr }{\ell^\infty \rightarrow \ell^\infty} \cdot \matsnorm{I - \transG}{\ell^\infty \rightarrow \ell^\infty}  \\
    \leq \max_{\state \in \subG} \abss{\frac{1}{2} - \occupmsr (\state) w (\state)} \cdot \big( 1 + \matsnorm{ \transG}{\ell^\infty \rightarrow \ell^\infty} \big) \leq \frac{1}{18 \effhorizon}.
\end{multline*}
By taking the $k$-th power, we have
\begin{align}
    \matsnorm{(Q + E)^k}{\ell^\infty \rightarrow \ell^\infty} \leq \matsnorm{Q^k}{\ell^\infty \rightarrow \ell^\infty} + \sum_{\ell = 1}^k \binom{k}{\ell} \matsnorm{Q}{\ell^\infty \rightarrow \ell^\infty}^\ell   \matsnorm{E}{\ell^\infty \rightarrow \ell^\infty}^{k - \ell}.\label{eq:decomposition-in-multi-step-contraction-lemma}
\end{align}
The first term can be bounded directly with $\matsnorm{Q^k}{\ell^\infty \rightarrow \ell^\infty} \leq e^{1 - k / \effhorizon}$.
For the rest terms, we simply use the non-expansiveness of $Q$ and the error norm bound.
\begin{align*}
    \sum_{\ell = 1}^k \binom{k}{\ell} \matsnorm{Q}{\ell^\infty \rightarrow \ell^\infty}^\ell   \matsnorm{E}{\ell^\infty \rightarrow \ell^\infty}^{k - \ell} \leq \sum_{\ell = 1}^k \binom{k}{\ell}\Big(\frac{1}{18 \effhorizon} \Big)^\ell = \Big(1 + \frac{1}{18 \effhorizon}\Big)^k - 1.
\end{align*}
For $k = 3 \effhorizon$, substituting back we have
\begin{align*}
    \matsnorm{(Q + E)^k}{\ell^\infty \rightarrow \ell^\infty} \leq \exp \Big(1 - \frac{k}{\effhorizon} \Big) + \exp \Big( \frac{k}{18 \effhorizon} \Big) - 1 = e^{- 2} + e^{1/6} - 1 < 1/2,
\end{align*}
which completes the proof of Lemma~\ref{lemma:root-sa-multi-step-contraction}.

\subsubsection{Proof of Lemma~\ref{lemma:root-sa-regularity-assumptions}}\label{subsubsec:proof-lemma-root-sa-regularity-assumptions}
For the stochastic observation~\eqref{eq:root-sa-construction-stoch} we constructed, note that
\begin{align*}
    \Exs [\Hstoch (\theta) (\state)] &= \theta (\state) + w (\state) \sum_{t = 0}^{+ \infty} \Exs \Big[   \bm{1}_{\State_t^{(i)} = \state}   \Big\{ \bm{1}_{\State_{t + 1}^{(i)} \in \subG} \theta (\State_{t + 1}^{(i)}) + \bm{1}_{ \State_{t + 1}^{(i)} \notin \subG} \sum_{\ell = t + 1}^{T_i} \Reward_\ell^{(i)}  +  \Reward_t^{(i)} -  \theta (\state)\Big\} \Big]\\
    &= \theta (\state) + w (\state) \sum_{t = 0}^{+ \infty} \Prob (\State_t^{(i)} = \state) \cdot \Exs \Big[ \bm{1}_{\State_{t + 1}^{(i)} \in \subG}  \theta (\State_{t + 1}^{(i)}) + \bm{1}_{ \State_{t + 1}^{(i)} \notin \subG} \sum_{\ell = t + 1}^{T_i} \Reward_\ell^{(i)}  +  \Reward_t^{(i)} -  \theta (\state)\Big\} \mid  \State_t^{(i)} = \state \Big]\\
    &= \theta (\state) + w (\state) \occupmsr (\state) \cdot \Big\{ \big(\transG \theta \big) (\state) +\big( \trans_{\subG, \SSpace \setminus \subG} \Vstar \big) (\state) + \reward (\state) - \theta (\state) \Big\},
\end{align*}
which verifies the unbiasedness condition~\eqref{eq:unbiasedness-in-root-sa-oracle}.

As for the Lipschitz condition~\eqref{eq:lip-in-root-sa-oracle}, we note that
\begin{align*}
    \abss{\Big\{ \Hstoch (\theta_1) - \Hstoch (\theta_2) \Big\} (\state)} &\leq \abss{ \theta_1 (\state) - \theta_2 (\state)} + \frac{w (\state)  }{\batchsize} \abss{\sum_{i = 1}^\batchsize \sum_{t = 0}^{T_i}  \Big\{   \bm{1}_{\State_t^{(i)} = \state, \State_{t + 1}^{(i)} \in \subG} \big(\theta_1 - \theta_2 \big) (\State_{t + 1}^{(i)})  - \bm{1}_{\State_t^{(i)} = \state} \big(\theta_1 - \theta_2 \big) (\state)\Big\}}\\
    &\leq \vecnorm{\theta_1 - \theta_2}{\infty} + \frac{w (\state)}{\batchsize} \sum_{i = 1}^\batchsize  \sum_{t = 0}^{T_i}  \Big\{  \bm{1}_{\State_t^{(i)} = \state} \vecnorm{\theta_1 - \theta_2}{\infty}  + \bm{1}_{\State_t^{(i)} = \state} \vecnorm{\theta_1 - \theta_2}{\infty} \Big\}\\
    &\leq \Big( 1 + \frac{1}{\batchsize\occupmsr (\state)} \sum_{i = 1}^\batchsize  \abss{\{ t \in [0, T_i] ~:~ \State_t^{(i)} = \state \}} \Big) \cdot \vecnorm{\theta_1 - \theta_2}{\infty},
\end{align*}
where in the last step, we use the fact~\eqref{eq:weight-vector-conditions-in-root-sa-proof} to derive the upper bound $w (\state) \leq ( \tfrac{1}{2} + \tfrac{1}{36 \effhorizon} ) \tfrac{1}{\occupmsr (\state)} \leq \tfrac{1}{\occupmsr (\state)}$.

Applying \Cref{eq:small-shift-estimate-inghat-gtilde-proof} in the proof of \Cref{lemma:l2-transtilde-concentration} to the mini-batch, with probability $1 - \delta$, we have
\begin{align*}
     \frac{1}{\batchsize\occupmsr (\state)} \sum_{i = 1}^\batchsize  \abss{\{ t \in [0, T_i] ~:~ \State_t^{(i)} = \state \}} \leq 1 +  32 \sqrt{\frac{\effhorizon \log (4 |\subG| / \delta)}{\batchsize \occupmsr_{\min}}}, \quad \mbox{for any $\state \in \subG$}.
\end{align*}
So for each fixed step $\ell \in [\numobs / \batchsize]$, with probability $1 - \delta$, we have
\begin{align*}
    \sup_{\theta_1 \neq \theta_2} \frac{\vecnorm{ \Hstoch_\ell (\theta_1) - \Hstoch_\ell (\theta_2) }{\infty}}{ \vecnorm{\theta_1 - \theta_2}{\infty} } \leq \max_{\state \in \subG} \Big( 1 + \frac{1}{\batchsize\occupmsr (\state)} \sum_{i = 1}^\batchsize  \abss{\{ t \in [0, T_i] ~:~ \State_t^{(i)} = \state \}} \Big) \leq  2 +  32 \sqrt{\frac{\effhorizon \log (4 |\subG| / \delta)}{\batchsize \occupmsr_{\min}}},
\end{align*}

Taking union bound over all $\tfrac{\numobs}{\batchsize}$ mini-batches, with probability $1 - \delta$, we have
\begin{align*}
   \max_{1 \leq \ell \leq \numobs / \batchsize} \sup_{\theta_1 \neq \theta_2} \frac{\vecnorm{ \Hstoch_\ell (\theta_1) - \Hstoch_\ell (\theta_2) }{\infty}}{ \vecnorm{\theta_1 - \theta_2}{\infty} } \leq  2 +  32 \sqrt{\frac{\effhorizon \log (4 |\subG| \numobs / \delta)}{\batchsize \occupmsr_{\min}}},
\end{align*}

For a mini-batch size given by Eq~\eqref{eq:rootsa-choice-of-parameters}, the right hand side is upper bounded by $4$, and consequently the Lipschitz condition
\begin{align}
    \vecnorm{\Hstoch_\ell (\theta_1) - \Hstoch_\ell (\theta_2)}{\infty} \leq 4 \vecnorm{\theta_1 - \theta_2}{\infty}, \quad \mbox{for any $\theta_1, \theta_2 \in \real^\usedim$ and $\ell \in \{1,2, \cdots, \numobs / \batchsize\}$}
\end{align}
is satisfied with probability $1 - \delta$.

Finally, for the noise at the fixed point $\Vstar$, we note that
\begin{align*}
   \abss{ \Hstoch (\Vstar) (\state) } &\leq \abss{\Vstar (\state)} + \frac{w (\state)  }{\batchsize} \sum_{i = 1}^\batchsize \sum_{t = 0}^{T_i} \bm{1}_{\State_t^{(i)} = \state}  \Big\{   \bm{1}_{\State_{t + 1}^{(i)} \in \subG} |\Vstar (\State_{t + 1}^{(i)})| + \bm{1}_{ \State_{t + 1}^{(i)} \notin \subG} \sum_{\ell = t + 1}^{T_i} |\Reward_\ell^{(i)} | +  |\Reward_t^{(i)}| + |\Vstar (\state)| \Big\}\\
   &\leq \vecnorm{\Vstar}{\infty} + \frac{1  }{\batchsize \occupmsr (\state)} \sum_{i = 1}^\batchsize  \abss{\{ t \in [0, T_i] ~:~ \State_t^{(i)} = \state \}} \cdot \big(1 + 2 \vecnorm{\Vstar}{\infty} + T_i \big).
\end{align*}
Similar to the arguments for the Lipschitz condition, we use \Cref{eq:small-shift-estimate-inghat-gtilde-proof}, the union bound, and the mini-batch size choice~\eqref{eq:rootsa-choice-of-parameters} to conclude that
\begin{align*}
    \sup_{\state \in \subG} \frac{1  }{\batchsize \occupmsr (\state)} \sum_{i = 1}^\batchsize  \abss{\{ t \in [0, T_i] ~:~ \State_t^{(i)} = \state \}} \leq 3
\end{align*}
uniformly over all mini-batches, with probability $1 - \delta$.

Following the proof of Lemma~\ref{lemma:l2-main-noise-concentration}, we have $\vecnorm{\Vstar}{\infty} \leq \effhorizon$. Moreover, by Assumption~\ref{assume:effective-horizon} and union bound, we have $T_i \leq \effhorizon \log (\numobs / \delta)$ uniformly over all mini-batches. Putting them together, we conclude that with probability $1 - \delta$,
\begin{align*}
    \vecnorm{\Hstoch_\ell (\Vstar)}{\infty} \leq 9 \effhorizon \log (\numobs / \delta) + 4,
\end{align*}
 uniformly over all mini-batches $\ell \in \{1,2,\cdots, \numobs / \batchsize\}$.

\subsubsection{Proof of Lemma~\ref{lemma:relate-root-sa-to-optimal-cov}}\label{subsubsec:proof-lemma-relate-root-sa-to-optimal-cov}
Throughout the proof, we see the vector $w$ as deterministic. Note that $\hpop$ is a linear operator, and we have that
\begin{align*}
    I - \nabla f (\VstarG) =  D_w D_\occupmsr (I - \transG).
\end{align*}
As for the observational noise, by $\mathrm{i.i.d.}$ assumption, we note that
\begin{align*}
    &\cov \big[ \Hstoch_1 (\VstarG) - \VstarG \big]\\
    &= \frac{1}{\batchsize} \cov \Bigg[ w (\state)  \sum_{t = 0}^{T_i}  \Big\{   \bm{1}_{\State_t^{(i)} = \state, \State_{t + 1}^{(i)} \in \subG} \Vstar (\State_{t + 1}^{(i)}) + \bm{1}_{\State_t^{(i)} = \state, \State_{t + 1}^{(i)} \notin \subG} \sum_{\ell = t + 1}^{T_i} \Reward_\ell^{(i)}  + \bm{1}_{\State_t^{(i)} = \state} \Reward_t^{(i)} - \bm{1}_{\State_t^{(i)} = \state} \Vstar (\state)\Big\} \Bigg]_{\state \in \subG}\\
    &= \frac{1}{\batchsize} D_w \cov (\varepsilon^*) D_w.
\end{align*}
By Lemma~\ref{lemma:l2-main-noise-concentration}, we have
\begin{align*}
    \cov (\varepsilon^*) = D_\occupmsr \SigStar D_\occupmsr.
\end{align*}
Combining the derivation, we conclude that
\begin{align*}
    &\big( I - \nabla f (\VstarG) \big)^{-1} \cov \big[ \Hstoch_1 (\VstarG) - \VstarG \big] \big( I - \nabla f (\VstarG) \big)^{-\top}\\
    &= (I - \transG)^{-1} D_\occupmsr^{-1} D_w^{-1} D_w D_\occupmsr \SigStar D_\occupmsr D_w D_w^{-1} D_\occupmsr^{-1} (I - \transG)^{-\top}\\
    &= (I - \transG)^{-1} \SigStar (I - \transG)^{-\top},
\end{align*}
which completes the proof of Lemma~\ref{lemma:relate-root-sa-to-optimal-cov}.

\subsection{Proof of~\Cref{thm:finite-sample-lower-bound}}\label{subsec:proof-thm-finite-sample-lower-bound}
It suffices to prove lower bounds with the two terms respectively. We claim the lower bounds
\begin{subequations}
    \begin{align}
          \inf_{\Vhat_\numobs} ~ \sup_{ ( \transition, \law (\Reward) ) \in \MDPclass (\occupmsr_0, \sigma_*, 2, \delta, q) } \Exs \Big[ \abss{\Vhat_\numobs (\tarstt) - \Vstar (\tarstt)}^2 \Big] &\geq c \frac{\sigma_*^2}{\numobs}, \label{eq:finite-sample-lower-standard}\\
            \inf_{\Vhat_\numobs} ~ \sup_{ ( \transition, \law (\Reward) ) \in \MDPclass (\occupmsr_0, \sigma_*, 2, \delta, q) } \Exs \Big[ \abss{\Vhat_\numobs (\tarstt) - \Vstar (\tarstt)}^2 \Big] &\geq c \frac{q}{\numobs \occupmsr_0}.\label{eq:finite-sample-lower-new}
    \end{align}
\end{subequations}
\paragraph{Proof of Eq~\eqref{eq:finite-sample-lower-standard}:} Let $\mu_0 \mydefn  1 / \sigma_*^2$. We have $\mu_0 \geq \occupmsr_0$. Consider the following class of Markov reward processes: let $\initDist (\tarstt) = \mu_0$ and $\initDist (\state_1) = 1 - \mu_0$, with $\transition (\tarstt, \termState) = \transition (\state_1, \termState)  = 1$. We let $\Reward (\state_1) \equiv 0$. For an indicator scalar $z \in \{-1, 1\}$, we define the reward model as
\begin{align*}
\Reward (\tarstt) \sim \mathrm{Ber} \big( \frac{1}{2} + \varepsilon z \big), \quad \mbox{under the distribution $\Prob_z$},
\end{align*}
where we define the scalars $\varepsilon = \sigma_* / (4 \sqrt{\numobs})$.

Clearly, for the MRPs constructed above, we have $\effhorizon = 1$ as the process transitions to the terminal state immediately. The reward takes value in $[0, 1]$, and we have
\begin{align*}
    \occupmsr (\tarstt) = \mu_0 \geq \occupmsr_0, \quad \sigma_\TD^2 (\tarstt) = \occupmsr (\tarstt)^{-1} \var \big( \Reward (\tarstt) \big) \leq \frac{\sigma_*^2}{4}, \quad \mbox{and} \quad \Prob \big( \occupmsr (\State_1) \geq q \mid \State_0 = \tarstt \big) = 1.
\end{align*}
So under both $\Prob_1$ and $\Prob_{-1}$, we have $\big( \transition, \law (\Reward) \big) \in \MDPclass (\occupmsr_0, \sigma_*, 2, \delta, q)$. By Pinsker's inequality, we have
\begin{align*}
    \totalvarition (\Prob_1^{\otimes \numobs}, \Prob_{-1}^{\otimes \numobs}) \leq \sqrt{\frac{1}{2} \kull{\Prob_1^{\otimes \numobs}}{\Prob_{-1}^{\otimes \numobs}} } \leq \sqrt{\frac{\numobs \mu_0}{2} \kull{\mathrm{Ber} (\frac{1}{2} + \varepsilon z)}{\mathrm{Ber} (\frac{1}{2} - \varepsilon z)}} \leq 4 \varepsilon \sqrt{\numobs \mu_0} = 1 / \sqrt{2}.
\end{align*}
 Under $\Prob_z$, we have $\Vstar (\tarstt) = \frac{1}{2} + \varepsilon z$ for $z \in \{-1, 1\}$. Therefore, by Le Cam's two-point lemma, we have
 \begin{align*}
      \inf_{\Vhat_\numobs} ~ \sup_{ ( \transition, \law (\Reward) ) \in \MDPclass (\occupmsr_0, \sigma_*, 2, \delta, q) } \Exs \Big[ \abss{\Vhat_\numobs (\tarstt) - \Vstar (\tarstt)}^2 \Big] \geq \varepsilon^2 \Big\{1 - \totalvarition (\Prob_1^{\otimes \numobs}, \Prob_{-1}^{\otimes \numobs}) \Big\} \geq \frac{\sigma_*^2}{64 \numobs}.
 \end{align*}

 \paragraph{Proof of Eq~\eqref{eq:finite-sample-lower-new}:} Let $m \mydefn \lfloor  1 / \occupmsr_0 \rfloor$ and $N \mydefn |\SSpace| - 1 - m$. Assume that $N$ is an even number without loss of generality. For notational convenience, we label the states in $\SSpace$ as
 \begin{align*}
    \SSpace = \big\{ \state_0, \state_1, \cdots, \state_{m - 1}, \state_1' , \state_2', \cdots, \state_{N}', \termState \big\}.
 \end{align*}
Now let us construct the class of MRPs. Let the initial distribution $\initDist \mydefn \mathrm{Unif} \big(\state_0, \state_1, \cdots, \state_{m - 1} \big)$. Given a binary vector $\psi \in \{-1, 1\}^N$ such that $\sum_{j = 1}^N \psi_j = 0$, we construct the reward distribution as
\begin{align*}
    \Reward (\state_i) \equiv 0,~ \mbox{for $i = 0,1,\cdots, m - 1$, and} \quad \frac{\Reward (\state_j') + 1}{2} \sim \mathrm{Ber} \big(\frac{1 + \psi_j \varepsilon }{2} \big),  ~\mbox{for $i = 1,2, \cdots, N$},
\end{align*}
where we choose the value
\begin{align}
    \varepsilon \mydefn  \frac{1}{15} \sqrt{\frac{m}{\numobs q}}.\label{eq:choice-of-eps-in-lower-bound-proof}
\end{align}
Given a binary vector $\zeta \in \{-1, 1\}^{m}$, we define the transition kernel as
\begin{align*}
    \transition (\state_i, \termState) = 1 - q, \quad \transition (\state_i, \state_j') = \begin{cases}
        \frac{2q}{N} & \zeta_i \cdot \psi_j = 1,\\
        0 & \zeta_i \cdot \psi_j = -1,
    \end{cases}, \quad \mbox{and} \quad \transition (\state_j', \termState) = 1,
\end{align*}
for any $i \in \{0,1,\cdots, m - 1\}$ and $j \in \{1,2, \cdots, N\}$.

Under above construction, it is easy to see that
\begin{align}
  \Vstar (\state_j') = \psi_j\varepsilon, \quad \mbox{and} \quad  \Vstar (\state_i) = \zeta_i q \varepsilon, \quad \mbox{for }i = 0,1,\cdots, m - 1, ~ \mbox{and}~ j = 1,2,\cdots, N.
\end{align}

By the construction above, we obtained a class of MRPs indexed by the vector pair $(\psi, \zeta)$. We denote by $\Prob_{\psi, \zeta}$ the induced probability distribution for the observed trajectories. Under $\Prob_{\psi, \zeta}$, we note that the random rewards take value in $[0, 1]$; the terminal time satisfies $T_i \leq 2$ for any starting state, which implies Assumption~\ref{assume:effective-horizon} with $\effhorizon = 2$. Furthermore, we note that $\occupmsr (\tarstt) = \initDist (\tarstt) = 1 / m \geq \occupmsr_0$. For the variance of TD estimator, we note that
\begin{align*}
    \SigStar_{\TD} (\state_i, \state_i) &= \frac{1}{\occupmsr (\state_i)}\cdot \var \big( \Vstar (\State_1) \mid \State_0 = \state_i \big) = m q (1 - q) \varepsilon^2,\quad \mbox{and} \\
    \SigStar_{\TD} (\state_j', \state_j') &=  \frac{1}{\occupmsr (\state_j')} \var \big( \Reward (\state_j') \big) = \frac{N m}{2q\cdot \abss{\{i: \zeta_i = \psi_j\}} } \big(1 - \varepsilon^2 \big) .
\end{align*}
As a result, we have
\begin{align*}
    \sigma_{\TD}^2 (\tarstt) = \big[ (I - \transition)^{-1}  \SigStar_{\TD} (I - \transition)^{- \top} \big]_{\tarstt, \tarstt} = m q \varepsilon^2 + \frac{q m }{\abss{\{i: \zeta_i = \zeta_0 \}} } (1 - \varepsilon^2).
\end{align*}
By our sample size assumption and the definition~\eqref{eq:choice-of-eps-in-lower-bound-proof} of $\varepsilon$, we have that
\begin{align*}
    m q \varepsilon^2 = \frac{m^2}{225 \numobs} \leq 1, \quad \mbox{and} \quad  \frac{q m }{\abss{\{i: \zeta_i = \zeta_0 \}} } (1 - \varepsilon^2) \leq  \frac{m }{\abss{\{i: \zeta_i = \zeta_0 \}} }.
\end{align*}
Therefore, as long as we have $\abss{\{i: \zeta_i = \zeta_0 \}} \geq \frac{m}{4}$, the variance upper bound $\sigma_{\TD}^2 (\tarstt) \leq 5 \leq \sigma_*^2$ holds.

Finally, we note that
\begin{align*}
    \Prob \big( \occupmsr (\State_1) \leq \delta \mid \State_0 = \tarstt\big) \leq \Prob \big( \State_1 \neq \termState \mid \State_0 = \tarstt\big) = q.
\end{align*}
Therefore, under our construction, we have $\big( \transition, \law (\Reward) \big) \in \MDPclass (\occupmsr_0, \sigma_*, 2, \delta, q)$ whenever $\abss{\{i: \zeta_i = \zeta_0 \}}  \geq \frac{m}{4}$.

Now let us use the construction to prove the minimax lower bound. We seek to use Le Cam's mixture-vs-mixture lemma. Define the probability distributions
\begin{align*}
    \mathbb{Q}_z = \frac{1}{2^{m - 1}} \cdot \frac{1}{\binom{N}{N/2}} \sum_{\zeta: \zeta_0 = z} \sum_{\psi: \bm{1}^\top \psi = 0} \Prob_{\psi, \zeta}^{\otimes \numobs}, \quad \mbox{for $z \in \{0, 1\}$}.
\end{align*}
We also define the truncated versions
\begin{align*}
    \widetilde{\mathbb{Q}}_z = \frac{1}{\abss{\{\zeta:  \zeta_0 = z, \abss{\bm{1}^\top z} \leq m / 2 } \}} \cdot \frac{1}{\binom{N}{N/2}} \sum_{\substack{\zeta~:~ \zeta_0 = z \\ \abss{\bm{1}^\top z} \leq m / 2 }} \sum_{\psi: \bm{1}^\top \psi = 0} \Prob_{\psi, \zeta}^{\otimes \numobs}, \quad \mbox{for $z \in \{0, 1\}$}.
\end{align*}
To bound the distance, we define the auxiliary distributions $\widehat{\mathbb{Q}}_z$ over the MRP trajectory, for $z \in \{-1, 1\}$. Given a sign vector $\zeta$ fixed, for $i = 1,2, \cdots, \numobs$, the observation model is given as follows
\begin{itemize}
    \item Sample $k \sim \mathrm{Unif} \big( \{0,1,\cdots, m - 1\} \big)$, and start the process from $\State_0^{(i)} = s_k$. Transition to the terminal state $\State_1^{(i)} = \termState$ with probability $1 - q$ with $0$ reward generated.
    \item For the rest of the probability $q$, sample $\ell \sim \mathrm{Unif} \big( \{1,2,\cdots, N\} \big)$, and make a transition to $\State_1^{(i)} = s_\ell'$ while generating reward $0$.
    \item Transition to the terminal state $\State_2^{(i)} = \termState$. Generate reward $\Reward_1^{(i)} = 1$ with probability $\mathrm{Ber} \big(\frac{1 + \zeta_k \varepsilon }{2} \big)$, and $\Reward_1^{(i)} = - 1$ with probability $\mathrm{Ber} \big(\frac{1 - \zeta_k \varepsilon }{2} \big)$.
\end{itemize}
For $z \in \{-1, 1\}$ let $\widehat{Q}_z$ to be the mixture distribution by averaging the probability distribution described above over all the sign vector $\zeta$'s with $\zeta_0 = z$.

We claim the following bounds
\begin{subequations}\label{eq:key-bounds-in-finite-lb-proof}
\begin{align}
    \totalvarition \big(  \widetilde{\mathbb{Q}}_{1}, \mathbb{Q}_1 \big)&\leq \frac{1}{10}, \quad  \totalvarition \big(  \widetilde{\mathbb{Q}}_{- 1}, \mathbb{Q}_{- 1} \big) \leq \frac{1}{10},\label{eq:truncation-in-finite-lb-proof} \\
    \totalvarition \big(  \widehat{\mathbb{Q}}_{1}, \mathbb{Q}_1 \big) &\leq \frac{1}{10}, \quad  \totalvarition \big(  \widehat{\mathbb{Q}}_{- 1}, \mathbb{Q}_{- 1} \big) \leq \frac{1}{10},\label{eq:birthday-paradox-in-finite-lb-proof}\\
    \totalvarition \big(  \widehat{\mathbb{Q}}_{1}, \widehat{\mathbb{Q}}_{-1} \big) &\leq \frac{1}{10}.\label{eq:twopoint-in-finite-lb-proof}
\end{align}
\end{subequations}
\noindent We prove these bounds in Section~\ref{subsubsec:proof-of-key-bounds-in-finite-lb}.

Taking these bounds as given, we proceed with the proof of Eq~\eqref{eq:finite-sample-lower-new}. By triangle inequality, we have
\begin{align*}
    \totalvarition \big(  \widetilde{\mathbb{Q}}_{1}, \widetilde{\mathbb{Q}}_{-1} \big) \leq  \totalvarition \big(  \widetilde{\mathbb{Q}}_{1}, \mathbb{Q}_1 \big) + \totalvarition \big(  \widehat{\mathbb{Q}}_{1}, \mathbb{Q}_1 \big) +  \totalvarition \big(  \widehat{\mathbb{Q}}_{1}, \widehat{\mathbb{Q}}_{-1} \big)  +  \totalvarition \big(  \widehat{\mathbb{Q}}_{- 1}, \mathbb{Q}_{- 1} \big) + \totalvarition \big(  \widetilde{\mathbb{Q}}_{- 1}, \mathbb{Q}_{- 1} \big) \leq \frac{1}{2}.
\end{align*}
As we have verified, the support of $\widetilde{\mathbb{Q}}_{1}$ and $\widetilde{\mathbb{Q}}_{1}$ lies within the class $\MDPclass (\occupmsr_0, \sigma_*, 2, \delta, q)$. Moreover, on the support of $\widetilde{\mathbb{Q}}_z$, we have $\Vstar (\tarstt) = z q \varepsilon$ for $z \in \{-1, 1\}$. By Le Cam's mixture-vs-mixture lemma, we have
\begin{align*}
    \inf_{\Vhat_\numobs} ~ \sup_{ ( \transition, \law (\Reward) ) \in \MDPclass (\occupmsr_0, \sigma_*, 2, \delta, q) } \Exs \Big[ \abss{\Vhat_\numobs (\tarstt) - \Vstar (\tarstt)}^2 \Big] \geq q^2 \varepsilon^2 \Big\{ 1 - \totalvarition \big(  \widetilde{\mathbb{Q}}_{1}, \widetilde{\mathbb{Q}}_{-1} \big) \Big\} \geq \frac{q}{450 \occupmsr_0 \numobs},
\end{align*}
which completes the proof of Eq~\eqref{eq:finite-sample-lower-new}.

\subsubsection{Proof of~\Cref{eq:key-bounds-in-finite-lb-proof}}\label{subsubsec:proof-of-key-bounds-in-finite-lb}
We prove the three bounds respectively. Note that by symmetry, we only need to prove the first parts of Eq~\eqref{eq:truncation-in-finite-lb-proof} and Eq~\eqref{eq:birthday-paradox-in-finite-lb-proof}.

\paragraph{Proof of Eq~\eqref{eq:truncation-in-finite-lb-proof}:} Consider the random vector $\zeta_{-1} \sim \mathrm{Unif} \big( \{-1, 1\}^{m - 1} \big)$ corresponding to the vector $\zeta$ without the first entry on $\tarstt$. We define the event
\begin{align*}
    \Event \mydefn \Big\{ \abss{z + \zeta_{-1}^\top \bm{1}} \leq \frac{m}{2} \Big\}.
\end{align*}
By definition, we have $\widetilde{\mathbb{Q}}_1 = \mathbb{Q}_1 | \Event$. Consequently, for any random variable $Z$ such that $|Z| \leq 1$, we have
\begin{align*}
    \abss{\Exs_{\widetilde{\mathbb{Q}}_1} [Z] - \Exs_{\mathbb{Q}_1} [Z]} \leq \abss{ \Exs_{\mathbb{Q}_1} [Z \mid \Event] - \Exs_{\mathbb{Q}_1} [Z \mid \Event] \cdot \Prob (\Event)} + \abss{\Exs_{\mathbb{Q}_1} [Z \mid \Event^c] \cdot \Prob (\Event^c)}
    \leq 2 \Prob \big( \Event^c \big).
\end{align*}
So we have $\totalvarition (\mathbb{Q}_1 , \widetilde{\mathbb{Q}}_1 ) \leq 2  \Prob \big( \Event^c \big)$. It suffices to bound the quantity $\Prob \big( \Event^c \big)$. By Hoeffding's bound, we have
\begin{align*}
    \mathbb{Q}_1 \big( \Event^c \big) \leq 2 \exp \Big( - \frac{m}{2} \big(\frac{1}{2} - \frac{1}{m} \big)^2 \Big) \leq 2 e^{ - m / 18}.
\end{align*}
Given $m \geq 72$, we have $\Prob \big( \Event^c \big) \leq \frac{1}{20}$, and consequently $\totalvarition (\mathbb{Q}_1 , \widetilde{\mathbb{Q}}_1 ) \leq \frac{1}{10}$, which proves the claim.

\paragraph{Proof of Eq~\eqref{eq:birthday-paradox-in-finite-lb-proof}:} Define the event
\begin{align*}
    \Event \mydefn \Big\{ \mbox{Except for the terminal state $\termState$, $\State_1^{(1)}, \State_1^{(2)}, \cdots \State_1^{(\numobs)}$ are all distinct} \Big\}.
\end{align*}
Under both $\widehat{\mathbb{Q}}_1$ and $\mathbb{Q}_1$, on the event $\Event$, the random sets
\begin{align*}
    \big\{ \State_1^{(i)} : \State_0^{(i)} = \state_0^{(i)} , T_i > 1 \big\}, ~  \big\{ \State_1^{(i)} : \State_0^{(i)} = \state_1^{(i)} , T_i > 1 \big\}, ~\cdots ~\big\{ \State_1^{(i)} : \State_0^{(i)} = \state_{m - 1}^{(i)} , T_i > 1 \big\}.
\end{align*}
are uniform random disjoint subsets of the set $\{\state_1', \state_2', \cdots, \state_N'\}$. Furthermore, the transitions following these states and the rewards obey the same distribution under $\widehat{\mathbb{Q}}_1$ and $\mathbb{Q}_1$. Therefore, we conclude that $\widehat{\mathbb{Q}}_1|\Event = \mathbb{Q}_1 | \Event$. For any random variable $Z$ such that $|Z| \leq 1$, note that
\begin{align*}
    &\abss{\Exs_{\mathbb{Q}_1} [Z] - \Exs_{\widehat{\mathbb{Q}}_1} [Z] } \\
    &\leq \abss{ \Exs_{\mathbb{Q}_1} [Z | \Event ] \cdot \mathbb{Q}_1 (\Event) - \Exs_{\widehat{\mathbb{Q}}_1} [Z | \Event] \cdot \widehat{\mathbb{Q}}_1 (\Event) } + \abss{ \Exs_{\mathbb{Q}_1} [Z | \Event^c ] \cdot \mathbb{Q}_1 (\Event^c) - \Exs_{\widehat{\mathbb{Q}}_1} [Z | \Event^c] \cdot \widehat{\mathbb{Q}}_1(\Event^c) } \\
    &\leq \abss{ \Exs_{\mathbb{Q}_1} [Z | \Event ] } \cdot \abss{\mathbb{Q}_1 (\Event) - \widehat{\mathbb{Q}}_1(\Event)} +  \mathbb{Q}_1 (\Event^c) +  \widehat{\mathbb{Q}}_1(\Event^c) \\
    &\leq 2 \mathbb{Q}_1 (\Event^c) + 2 \widehat{\mathbb{Q}}_1(\Event^c).
\end{align*}
So we have $\totalvarition \big(  \widehat{\mathbb{Q}}_{1}, \mathbb{Q}_1 \big) \leq 2 \mathbb{Q}_1 (\Event^c) + 2 \widehat{\mathbb{Q}}_1(\Event^c)$. It suffices to bound the probability of the event $\Event$ under both models.

By union bound, we have
\begin{align*}
    \widehat{\mathbb{Q}}_1 (\Event^c) &\leq \sum_{i, j \in [\numobs]} \widehat{\mathbb{Q}}_1 \big( \State_1^{(i)} = \State_1^{(j)} \neq \emptyset \big) \leq \frac{\numobs^2}{N}, \quad \mbox{and}\\
    \mathbb{Q}_1 (\Event^c) &\leq \sum_{i, j \in [\numobs]} \mathbb{Q}_1 \big( \State_1^{(i)} = \State_1^{(j)} \neq \emptyset \big) \leq \frac{2 \numobs^2}{N}.
\end{align*}
Consequently, when $N > 60 \numobs^2$, we have $\totalvarition \big(  \widehat{\mathbb{Q}}_{1}, \mathbb{Q}_1 \big) \leq \frac{1}{10}$, and by symmetry, $\totalvarition \big(  \widehat{\mathbb{Q}}_{-1}, \mathbb{Q}_{-1} \big) \leq \frac{1}{10}$.

\paragraph{Proof of Eq~\eqref{eq:twopoint-in-finite-lb-proof}:} By Pinsker's inequality, we have
\begin{align*}
    \totalvarition \big(  \widehat{\mathbb{Q}}_{1}, \widehat{\mathbb{Q}}_{-1} \big) \leq \sqrt{\frac{1}{2} \kull{ \widehat{\mathbb{Q}}_{1}}{\widehat{\mathbb{Q}}_{-1}}}.
\end{align*}
 Under $\widehat{Q}_{z}$ for any $z \in \{-1, 1\}$, the trajectories starting from different initial states are independent. By construction, $\widehat{Q}_1$ and $\widehat{Q}_{-1}$ differs only in the law of observations starting from the state $\tarstt$. By tensorization of KL divergence, we have
 \begin{align*}
     \kull{ \widehat{\mathbb{Q}}_{1}}{\widehat{\mathbb{Q}}_{-1}} &= \frac{\numobs}{m} \kull{\law_{\widehat{\mathbb{Q}}_{1}} \big( \traj \mid \State_0 = \tarstt \big) }{\law_{\widehat{\mathbb{Q}}_{-1}} \big( \traj \mid \State_0 = \tarstt \big)}\\
     &= \frac{\numobs q}{m} \kull{\mathrm{Ber} \big( \frac{1 + \varepsilon}{2} \big)}{\mathrm{Ber} \big( \frac{1 - \varepsilon}{2} \big)}\\
     &\leq \frac{4 \numobs q}{m} \varepsilon^2.
 \end{align*}
 Recalling from Eq~\eqref{eq:choice-of-eps-in-lower-bound-proof} that $\varepsilon = \frac{1}{15} \sqrt{\frac{m}{\numobs q}}$, we have $ \totalvarition \big(  \widehat{\mathbb{Q}}_{1}, \widehat{\mathbb{Q}}_{-1} \big) \leq \frac{1}{10}$.

\section{Discussion}\label{sec:discussion}
In this paper, we have proposed and analyzed a new method that combines bootstrapping and Monte Carlo methods in policy evaluation for reinforcement learning. By switching between TD and MC estimation based on a subset of statespace, the estimator combines the improved variance of TD under the trajectory pooling effect, and the finite-sample adaptivity of MC for ``easier'' states. We established finite-sample upper bounds that involve the one-step variance from TD, and the product of the MC variance and an exit probability. We further complement our upper bounds with a minimax lower bound, establishing the critical role of the exit probability in the optimal statistical risk.

The subgraph Bellman operator complements the classical multi-step lookahead approach, and provides a new perspective towards the balance between TD and MC methods. It opens up several interesting directions of future research. Let us discuss some to conclude this paper.

First, the choice of the subgraph $\subG$ in this paper is through a heuristic greedy search method. An important open question is to develop data-driven approaches for choosing such a subgraph with optimality guarantees. Furthermore, it is interesting to make our algorithm fully online by adjusting the subgraph adaptively based on current estimates. Additionally, though our analysis focuses on fintie state spaces and characterizes the complexities using visitation measure of single states, the idea of subgraph Bellmen operator extends beyond this tabular setting. In general, it is interesting to consider data-driven approaches for partitioning the state-action space to facilitate value estimation.

Another important future direction is to study the interpolation between bootstrapping and rollout methods in the context of policy optimization. For example, by composing the subgraph Bellman operator with the maximum function, we can define an algorithm that searches the optimal policy within the subgraph, while using a given policy outside the subgraph. Solving such a value function may not lead to the globally optimal policy, but it still yields an optimized policy based on local information that the given sample size could possibly exploit. It is interesting to study the theoretical guarantees that this algorithm could achieve, as well as its information-theoretic optimality properties.

Furthermore, it is interesting to combine the subgraph Bellman estimator with other aspects of policy evaluation, including function approximation and off-policy data. In the former case, we expect the subgraph Bellman operator to work well when the function class yields an good approximation locally around the target state, but not globally over the entire MDP. In the latter case, the population-level subgraph Bellman operator could be estimated by combining bootstrapping method and an importance-weighted Monte Carlo estimate.

\section*{Acknowledgement}
This work was partially supported by NSERC grant RGPIN-2024-05092 to WM. JQ acknowledges support from ARO through award W911NF-21-1-0328 and the Simons Foundation and the NSF through award DMS-2031883. The authors would like to thank Martin J. Wainwright and Fangzhou Su for helpful discussion at early stage of this work, and they would like to thank Siva Theja Maguluri and Zaiwei Chen for pointing out the connection to state-dependent adaptive stepsizes.

\bibliography{refs}

\begin{thebibliography}{50}
\providecommand{\natexlab}[1]{#1}
\providecommand{\url}[1]{\texttt{#1}}
\expandafter\ifx\csname urlstyle\endcsname\relax
  \providecommand{\doi}[1]{doi: #1}\else
  \providecommand{\doi}{doi: \begingroup \urlstyle{rm}\Url}\fi

\bibitem[Amiranashvili et~al.(2018)Amiranashvili, Dosovitskiy, Koltun, and Brox]{amiranashvili2018td}
A.~Amiranashvili, A.~Dosovitskiy, V.~Koltun, and T.~Brox.
\newblock {TD} or not {TD}: Analyzing the role of temporal differencing in deep reinforcement learning.
\newblock \emph{arXiv preprint arXiv:1806.01175}, 2018.

\bibitem[Barto and Duff(1993)]{barto1993monte}
A.~Barto and M.~Duff.
\newblock {M}onte {C}arlo matrix inversion and reinforcement learning.
\newblock \emph{Advances in Neural Information Processing Systems}, 6, 1993.

\bibitem[Bertsekas(1996)]{bertsekas1996neuro}
D.~P. Bertsekas.
\newblock Neuro-dynamic programming.
\newblock \emph{Athena Scientific}, 1996.

\bibitem[Boyan and Moore(1994)]{boyan1994generalization}
J.~Boyan and A.~Moore.
\newblock Generalization in reinforcement learning: Safely approximating the value function.
\newblock \emph{Advances in Neural Information Processing Systems}, 7, 1994.

\bibitem[Bradtke and Barto(1996)]{bradtke1996linear}
S.~J. Bradtke and A.~G. Barto.
\newblock Linear least-squares algorithms for temporal difference learning.
\newblock \emph{Machine Learning}, 22\penalty0 (1-3):\penalty0 33--57, 1996.

\bibitem[Chan et~al.(2014)Chan, Diakonikolas, Servedio, and Sun]{chan2014near}
S.~O. Chan, I.~Diakonikolas, R.~A. Servedio, and X.~Sun.
\newblock Near-optimal density estimation in near-linear time using variable-width histograms.
\newblock In \emph{Advances in Neural Information Processing Systems}, pages 1844--1852, 2014.

\bibitem[Cheikhi and Russo(2023)]{cheikhi2023statistical}
D.~Cheikhi and D.~Russo.
\newblock On the statistical benefits of temporal difference learning.
\newblock \emph{arXiv preprint arXiv:2301.13289}, 2023.

\bibitem[Chen et~al.(2020)Chen, Maguluri, Shakkottai, and Shanmugam]{chen2020finite}
Z.~Chen, S.~Maguluri, S.~Shakkottai, and K.~Shanmugam.
\newblock Finite-sample analysis of contractive stochastic approximation using smooth convex envelopes.
\newblock \emph{Advances in Neural Information Processing Systems}, 33:\penalty0 8223--8234, 2020.

\bibitem[Chen et~al.(2021)Chen, Maguluri, Shakkottai, and Shanmugam]{chen2021lyapunov}
Z.~Chen, S.~Maguluri, S.~Shakkottai, and K.~Shanmugam.
\newblock A {L}yapunov theory for finite-sample guarantees of asynchronous {Q}-learning and {TD}-learning variants.
\newblock \emph{arXiv preprint arXiv:2102.01567}, 2021.

\bibitem[Curtiss(1954)]{curtiss1954theoretical}
J.~H. Curtiss.
\newblock \emph{A theoretical comparison of the efficiencies of two classical methods and a {M}onte {C}arlo method for computing one component of the solution of a set of linear algebraic equations}.
\newblock Courant Institute of Mathematical Sciences, New York University, 1954.

\bibitem[Daley et~al.(2024)Daley, Machado, and White]{daley2024demystifying}
B.~Daley, M.~C. Machado, and M.~White.
\newblock Demystifying the recency heuristic in temporal-difference learning.
\newblock \emph{arXiv preprint arXiv:2406.12284}, 2024.

\bibitem[De~Asis et~al.(2018)De~Asis, Hernandez-Garcia, Holland, and Sutton]{de2018multi}
K.~De~Asis, J.~Hernandez-Garcia, G.~Holland, and R.~Sutton.
\newblock Multi-step reinforcement learning: A unifying algorithm.
\newblock In \emph{Proceedings of the AAAI conference on artificial intelligence}, volume~32, 2018.

\bibitem[Downey and Sanner(2010)]{downey2010temporal}
Carlton Downey and Scott Sanner.
\newblock Temporal difference bayesian model averaging: A bayesian perspective on adapting lambda.
\newblock In \emph{Proceedings of the 27th International Conference on Machine Learning (ICML-10)}, pages 311--318, 2010.

\bibitem[H{\'a}jek(1972)]{hajek1972local}
J.~H{\'a}jek.
\newblock Local asymptotic minimax and admissibility in estimation.
\newblock In \emph{Proceedings of the sixth Berkeley symposium on mathematical statistics and probability}, volume~1, pages 175--194, 1972.

\bibitem[Jaakkola et~al.(1993)Jaakkola, Jordan, and Singh]{jaakkola1993convergence}
T.~Jaakkola, M.~I. Jordan, and S.~Singh.
\newblock Convergence of stochastic iterative dynamic programming algorithms.
\newblock \emph{Advances in neural information processing systems}, 6, 1993.

\bibitem[Kearns and Singh(2000)]{kearns2000bias}
M.~J. Kearns and S.~Singh.
\newblock Bias-variance error bounds for temporal difference updates.
\newblock In \emph{COLT}, pages 142--147, 2000.

\bibitem[Khamaru et~al.(2020)Khamaru, Pananjady, Ruan, Wainwright, and Jordan]{khamaru2020temporal}
K.~Khamaru, A.~Pananjady, F.~Ruan, M.~J. Wainwright, and M.~I. Jordan.
\newblock Is temporal difference learning optimal? {A}n instance-dependent analysis.
\newblock \emph{arXiv preprint arXiv:2003.07337}, 2020.

\bibitem[Konda and Tsitsiklis(2000)]{konda2000actor}
V.~R. Konda and J.~N. Tsitsiklis.
\newblock Actor-critic algorithms.
\newblock In \emph{Advances in Neural Information Processing Systems}, pages 1008--1014, 2000.

\bibitem[Konidaris et~al.(2011)Konidaris, Niekum, and Thomas]{konidaris2011td_gamma}
G.~Konidaris, S.~Niekum, and P.~S. Thomas.
\newblock {TD}$_\gamma$: Re-evaluating complex backups in temporal difference learning.
\newblock \emph{Advances in Neural Information Processing Systems}, 24, 2011.

\bibitem[Le~Cam(1953)]{lecam1953some}
L.~Le~Cam.
\newblock On some asymptotic properties of maximum likelihood estimates and related {B}ayes estimates.
\newblock \emph{Univ. California Pub. Statist.}, 1:\penalty0 277--330, 1953.

\bibitem[Maei et~al.(2009)Maei, Szepesv\'{a}ri, Bhatnagar, Precup, Silver, and Sutton]{maei2009convergent}
H.~Maei, Cs. Szepesv\'{a}ri, S.~Bhatnagar, D.~Precup, D.~Silver, and R.~S. Sutton.
\newblock Convergent temporal-difference learning with arbitrary smooth function approximation.
\newblock \emph{Advances in Neural Information Processing Systems}, 22, 2009.

\bibitem[Mann et~al.(2016)Mann, Penedones, Mannor, and Hester]{mann2016adaptive}
T.~A. Mann, H.~Penedones, S.~Mannor, and T.~Hester.
\newblock Adaptive lambda least-squares temporal difference learning.
\newblock \emph{arXiv preprint arXiv:1612.09465}, 2016.

\bibitem[Minsker(2017)]{minsker2017some}
S.~Minsker.
\newblock On some extensions of {B}ernstein’s inequality for self-adjoint operators.
\newblock \emph{Statistics \& Probability Letters}, 127:\penalty0 111--119, 2017.

\bibitem[Mou et~al.(2022{\natexlab{a}})Mou, Khamaru, Wainwright, Bartlett, and Jordan]{mou2022optimal}
W.~Mou, K.~Khamaru, M.~J. Wainwright, P.~L. Bartlett, and M.~I. Jordan.
\newblock Optimal variance-reduced stochastic approximation in banach spaces.
\newblock \emph{arXiv preprint arXiv:2201.08518}, 2022{\natexlab{a}}.

\bibitem[Mou et~al.(2022{\natexlab{b}})Mou, Wainwright, and Bartlett]{mou2022off}
W.~Mou, M.~J. Wainwright, and P.~L. Bartlett.
\newblock Off-policy estimation of linear functionals: Non-asymptotic theory for semi-parametric efficiency.
\newblock \emph{arXiv preprint}, 2022{\natexlab{b}}.

\bibitem[Mou et~al.(2023)Mou, Pananjady, and Wainwright]{mou2023optimal}
W.~Mou, A.~Pananjady, and M.~J. Wainwright.
\newblock Optimal oracle inequalities for projected fixed-point equations, with applications to policy evaluation.
\newblock \emph{Mathematics of Operations Research}, 48\penalty0 (4):\penalty0 2308--2336, 2023.

\bibitem[Mou et~al.(2024)Mou, Pananjady, Wainwright, and Bartlett]{mou2024optimal}
W.~Mou, A.~Pananjady, M.~J. Wainwright, and P.~L. Bartlett.
\newblock Optimal and instance-dependent guarantees for {M}arkovian linear stochastic approximation.
\newblock \emph{Mathematical Statistics and Learning}, 7\penalty0 (1):\penalty0 41--153, 2024.

\bibitem[Murthy et~al.(2024)Murthy, Grosof, Maguluri, and Srikant]{murthy2024performance}
Y.~Murthy, I.~Grosof, S.~Th. Maguluri, and R.~Srikant.
\newblock Performance of {NPG} in countable state-space average-cost {RL}.
\newblock \emph{arXiv preprint arXiv:2405.20467}, 2024.

\bibitem[Peng and Williams(1994)]{peng1994incremental}
J.~Peng and R.~J. Williams.
\newblock Incremental multi-step {Q}-learning.
\newblock In \emph{Machine Learning Proceedings 1994}, pages 226--232. Elsevier, 1994.

\bibitem[Precup et~al.(2000)Precup, Sutton, and Singh]{precup2000eligibility}
D.~Precup, R.~S. Sutton, and S.~P. Singh.
\newblock Eligibility traces for off-policy policy evaluation.
\newblock In \emph{International Conference on Machine Learning}, page 759–766, San Francisco, CA, USA, 2000. Morgan Kaufmann Publishers Inc.

\bibitem[Precup et~al.(2001)Precup, Sutton, and Dasgupta]{precup2001off}
D.~Precup, R.~S. Sutton, and S.~Dasgupta.
\newblock Off-policy temporal-difference learning with function approximation.
\newblock In \emph{International Conference on Machine Learning}, pages 417--424, 2001.

\bibitem[Riquelme et~al.(2019)Riquelme, Penedones, Vincent, Maennel, Gelly, Mann, Barreto, and Neu]{riquelme2019adaptive}
C.~Riquelme, H.~Penedones, D.~Vincent, H.~Maennel, S.~Gelly, T.~A. Mann, A.~Barreto, and G.~Neu.
\newblock Adaptive temporal-difference learning for policy evaluation with per-state uncertainty estimates.
\newblock \emph{Advances in Neural Information Processing Systems}, 32, 2019.

\bibitem[Robins and Ritov(1997)]{robins1997toward}
J.~M. Robins and Y.~Ritov.
\newblock Toward a curse of dimensionality appropriate (coda) asymptotic theory for semi-parametric models.
\newblock \emph{Statistics in medicine}, 16\penalty0 (3):\penalty0 285--319, 1997.

\bibitem[Rummery and Niranjan(1994)]{Rummery1994line}
G.~A. Rummery and M.~Niranjan.
\newblock On-line {Q}-learning using connectionist systems.
\newblock Technical report, Cambridge University Engineering Department, 1994.

\bibitem[Sharma et~al.(2017)Sharma, Ramesh, and Ravindran]{sharma2017learning}
S.~Sharma, S.~Ramesh, and B.~Ravindran.
\newblock Learning to mix n-step returns: Generalizing lambda-returns for deep reinforcement learning.
\newblock \emph{arXiv preprint arXiv:1705.07445}, 2017.

\bibitem[Sutton(1988)]{sutton1988learning}
R.~S. Sutton.
\newblock Learning to predict by the methods of temporal differences.
\newblock \emph{Machine Learning}, 3\penalty0 (1):\penalty0 9--44, 1988.

\bibitem[Sutton and Barto(2018)]{Sutton1998}
R.~S. Sutton and A.~G. Barto.
\newblock \emph{Reinforcement Learning: An Introduction}.
\newblock The MIT Press, second edition, 2018.

\bibitem[Sutton et~al.(1999)Sutton, McAllester, Singh, and Mansour]{sutton1999policy}
R.~S. Sutton, D.~McAllester, S.~Singh, and Y.~Mansour.
\newblock Policy gradient methods for reinforcement learning with function approximation.
\newblock \emph{Advances in Neural Information Processing Systems}, 12, 1999.

\bibitem[Sutton et~al.(2008)Sutton, Szepesv{\'a}ri, and Maei]{sutton2008convergent}
R.~S. Sutton, Cs. Szepesv{\'a}ri, and H.~R. Maei.
\newblock A convergent ${O} (n)$ algorithm for off-policy temporal-difference learning with linear function approximation.
\newblock \emph{Advances in Neural Information Processing Systems}, 21\penalty0 (21):\penalty0 1609--1616, 2008.

\bibitem[Sutton et~al.(2016)Sutton, Mahmood, and White]{sutton2016emphatic}
R.~S. Sutton, A.~R. Mahmood, and M.~White.
\newblock An emphatic approach to the problem of off-policy temporal-difference learning.
\newblock \emph{Journal of Machine Learning Research}, 17\penalty0 (73):\penalty0 1--29, 2016.

\bibitem[Thomas and Brunskill(2016)]{thomas2016data}
P.~Thomas and E.~Brunskill.
\newblock Data-efficient off-policy policy evaluation for reinforcement learning.
\newblock In \emph{International Conference on Machine Learning}, pages 2139--2148. PMLR, 2016.

\bibitem[Thomas et~al.(2015)Thomas, Niekum, Theocharous, and Konidaris]{thomas2015policy}
P.~S. Thomas, S.~Niekum, G.~Theocharous, and G.~Konidaris.
\newblock Policy evaluation using the {$\Omega$}-return.
\newblock \emph{Advances in Neural Information Processing Systems}, 28, 2015.

\bibitem[Tsitsiklis and Van~Roy(1997)]{tsitsiklis1997analysis}
J.~N. Tsitsiklis and B.~Van~Roy.
\newblock Analysis of temporal-diffference learning with function approximation.
\newblock In \emph{Advances in Neural Information Processing Systems}, pages 1075--1081, 1997.

\bibitem[van Seijen(2016)]{van2016effective}
H.~van Seijen.
\newblock Effective multi-step temporal-difference learning for non-linear function approximation.
\newblock \emph{arXiv preprint arXiv:1608.05151}, 2016.

\bibitem[Watkins(1989)]{watkins1989learning}
C.~JCH Watkins.
\newblock Learning from delayed rewards.
\newblock 1989.

\bibitem[Watkins and Dayan(1992)]{watkins1992q}
C.~JCH Watkins and P.~Dayan.
\newblock {$Q$}-learning.
\newblock \emph{Machine Learning}, 8\penalty0 (3-4):\penalty0 279--292, 1992.

\bibitem[White and White(2016)]{white2016greedy}
M.~White and A.~White.
\newblock A greedy approach to adapting the trace parameter for temporal difference learning.
\newblock In \emph{Proceedings of the 2016 International Conference on Autonomous Agents \& Multiagent Systems}, pages 557--565, 2016.

\bibitem[Williams(1992)]{williams1992simple}
R.~J. Williams.
\newblock Simple statistical gradient-following algorithms for connectionist reinforcement learning.
\newblock \emph{Machine learning}, 8:\penalty0 229--256, 1992.

\bibitem[Xia et~al.(2023)Xia, Khamaru, Wainwright, and Jordan]{xia2023instance}
E.~Xia, K.~Khamaru, M.~J. Wainwright, and M.~I. Jordan.
\newblock Instance-dependent confidence and early stopping for reinforcement learning.
\newblock \emph{Journal of Machine Learning Research}, 24\penalty0 (392):\penalty0 1--43, 2023.

\bibitem[Zhu et~al.(2020)Zhu, Jiao, and Tse]{zhu2020deconstructing}
B.~Zhu, J.~Jiao, and D.~Tse.
\newblock Deconstructing generative adversarial networks.
\newblock \emph{IEEE Transactions on Information Theory}, 2020.

\end{thebibliography}

\appendix

\section{Additional results about the asymptotic covariance}

\subsection{Refined expression for the diagonal elements}
A refined expression of the diagonal terms of the matrix $\Sigma_\subG^\star$ can be expressed in light of the following quantities: 
Let $U = \min_{t\geq 1} \set{S_t=s,S_{t+1}\notin \cG}$ and we define
\begin{align*}
    \occupmsr(s) &=  \sum\limits_{t=0}^{\infty}\bP(S_t = s) ,\\
    \occupmsr_\lp(s) &= \En\brk*{ \sum\limits_{t=0}^{U-1}  \indic{S_t=s}\mid{} S_0=s,S_1\notin \cG, U<\infty },\\    
    \occupmsr_\out(s) &= \En\brk*{ \sum\limits_{t=1}^{\infty} \indic{S_t=s}\mid{} S_0=s,S_1\notin\cG, U=\infty }, \\
    \sigma_\lp^2(s) &= \En\brk*{ \sum\limits_{t=0}^{U-1} \sigma_{\Vstar}^2(S_t)\mid{} S_0=s,S_1\notin \cG, U<\infty },~~\text{and}\\    
    \sigma_\out^2(s) &= \En\brk*{ \sum\limits_{t=1}^{\infty} \sigma_{\Vstar}^2(S_t) \mid{} S_0=s,S_1\notin\cG, U=\infty }.
\end{align*}
Let all the quantities be $0$ if the conditioned events happen with $0$ probability respectively.
Under Assumption~\ref{assume:effective-horizon} and the boundedness of the value function, all the above terms are bounded.
Let 
\[
    K_\cG(s) = \sum\limits_{t=0}^{\infty} \indic{S_t=s, S_{t+1}\notin \cG}
\] be the number of times the process goes out of the sub-graph $\cG$ at state $s$. 
Then we have for any $s\in \subG$,
\begin{align}
    \conVar_{\subG}^\star(s,s) &=  \underbrace{\occupmsr(s) \sigma_{\Vstar}^2(s)}_{\conVar_{X}^\star}(s,s)+ \underbrace{\frac{1}{6} \En\brk*{ (K_\cG(s)-1)K_\cG(s)(2K_\cG(s)-1) } \sigma_\lp^2(s) + \En\brk*{ K_\cG^2(s)} \sigma_\out^2(s)}_{\conVar_{Y,\subG}^\star}(s,s)  \nonumber \\
    &\quad +\prn*{\frac{1}{3} \En\brk*{(K_\cG(s)-1)K_\cG(s)(K_\cG(s)+1) } \occupmsr_\lp(s)  + \En\brk*{ K_\cG(s)(K_\cG(s)+1)} \occupmsr_\out(s)}\sigma_{\Vstar}^2(s).\label{eq:refined-expression-diagonal}
\end{align}
There are three types of variance relevant. The first is the one-step variance $\sigma_{\Vstar}^2(s)$ at state $s$ cooresponding to $\conVar_{X}^\star(s,s)$. The second is the MC variance part corresponding to $\conVar_{Y,\subG}^\star(s,s)$. The last term is the correlation between the TD part and the MC part that counts the one-step variance at state $s$ each time the state is visited after exiting the subgraph via state $s$.

\subsubsection{Proof of Eq~\eqref{eq:refined-expression-diagonal}}
Now we provide an exact characterization for all the diagnal terms of $\conVar_{\subG}^\star$.
Recall, we let $U = \min_{t\geq 1} \set{S_t=s,S_{t+1}\notin \cG}$ and we define
\begin{align*}
    \occupmsr(s) &=  \sum\limits_{t=0}^{\infty}\bP(S_t = s) ,\\
    \occupmsr_\lp(s) &= \En\brk*{ \sum\limits_{t=0}^{U-1}  \indic{S_t=s}\mid{} S_0=s,S_1\notin \cG, U<\infty },\\    
    \occupmsr_\out(s) &= \En\brk*{ \sum\limits_{t=1}^{\infty} \indic{S_t=s}\mid{} S_0=s,S_1\notin\cG, U=\infty }, \\
    \sigma_\lp^2(s) &= \En\brk*{ \sum\limits_{t=0}^{U-1} \sigma_{\Vstar}^2(S_t)\mid{} S_0=s,S_1\notin \cG, U<\infty },~~\text{and}\\    
    \sigma_\out^2(s) &= \En\brk*{ \sum\limits_{t=1}^{\infty} \sigma_{\Vstar}^2(S_t) \mid{} S_0=s,S_1\notin\cG, U=\infty }.
\end{align*}
Let all the quantities be $0$ if the conditioned events happen with $0$ probability respectively.
Under Assumption~\ref{assume:effective-horizon} and the boundedness of the value function, all the above terms are bounded.
And recall we let 
\[
    K_\cG(s) = \sum\limits_{t=0}^{\infty} \indic{S_t=s, S_{t+1}\notin \cG}
\] be the number of times the process goes out of the sub-graph $\cG$ at state $s$. 
Then we have
\begin{align*}
    &\sum\limits_{t=0}^{\infty}\sum\limits_{t'=0}^{t-1}  \prn*{ \bP(S_{t'}= s, S_{t'+1}\notin \cG, S_t = s)\cdot  \sigma_{\Vstar}^2(s)+ \bP(S_{t'}= s, S_{t'+1}\notin \cG, S_t = s)\cdot  \sigma_{\Vstar}^2(s)}\\
    &= 2 \sum\limits_{t'=0}^{\infty} \sum\limits_{t=t'+1}^{\infty} \bP(S_{t'}= s, S_{t'+1}\notin \cG, S_t = s)\cdot  \sigma_{\Vstar}^2(s)\\
    &= 2 \En\brk*{  \sum\limits_{t'=0}^{\infty} \indic{S_{t'}=s,S_{t'+1}\notin \cG} \sum\limits_{t=t'+1}^{\infty} \indic{S_t=s} \cdot  \sigma_{\Vstar}^2(s) } \\
    &= 2   \sigma_{\Vstar}^2(s)\cdot \En \brk*{ \sum\limits_{k=1}^{K_\cG(s)} (K_\cG(s)-k+1) ((k-1)\occupmsr_\lp(s) + \occupmsr_\out(s) )  }\\
    &= \frac{1}{3} \En\brk*{ (K_\cG(s)-1)K_\cG(s)(K_\cG(s)+1) } \occupmsr_\lp(s)\sigma_{\Vstar}^2(s)+ \En\brk*{ K_\cG(s)(K_\cG(s)+1) } \occupmsr_\out(s)\sigma_{\Vstar}^2(s).
\end{align*}
Furthermore, we have
\begin{align}
    \label{eq:MC-cov}
    \begin{split}
        &\sum\limits_{t=0}^{\infty}\sum_{t'=0}^{\infty} \sum\limits_{j=(t' \vee t)+1}^{\infty} \sum\limits_{s''\in \cS}\bP(S_{t'}= S_t = s, S_{t'+1},S_{t+1}\notin \cG, S_j = s'')  \cdot  \sigma_{\Vstar}^2(s'')\\
    &= 2 \En\brk*{ \sum\limits_{t=0}^{\infty} \indic{S_t=s,S_{t+1}\notin \cG} \sum\limits_{t'=t+1}^{\infty}\indic{S_{t'}=s,S_{t'+1}\notin \cG} \sum\limits_{j=t'+1}^{\infty} \sigma_{\Vstar}^2(S_j)  } 
    + \En\brk*{ \sum\limits_{t=0}^{\infty} \indic{S_t=s,S_{t+1}\notin \cG} \sum\limits_{j=t+1}^{\infty} \sigma_{\Vstar}^2(S_j)  }\\
    &= 2 \En\brk*{ \sum\limits_{k=1}^{K_\cG(s)} (K_\cG(s)-k) \prn*{ (k-1)\sigma_\lp^2(s) + \sigma_\out^2(s) }  } + \En\brk*{ \sum\limits_{k=1}^{K_\cG(s)} \prn*{ (k-1)\sigma_\lp^2(s) + \sigma_\out^2(s) }  } \\
    &= \frac{1}{6} \En\brk*{ (K_\cG(s)-1)K_\cG(s)(2K_\cG(s)-1) } \sigma_\lp^2(s) + \En\brk*{ K_\cG^2(s)} \sigma_\out^2(s),
    \end{split}
\end{align}
where the second equality uses the Markovian property.
Then we have
\begin{align*}
    \conVar_{\subG}^\star(s,s) &=  \occupmsr(s)\sigma_{\Vstar}^2(s) + \frac{1}{3} \En\brk*{(K_\cG(s)-1)K_\cG(s)(K_\cG(s)+1) } \occupmsr_\lp(s)\sigma_{\Vstar}^2(s)  + \En\brk*{ K_\cG(s)(K_\cG(s)+1)} \occupmsr_\out(s)\sigma_{\Vstar}^2(s)  \\
    &\quad + \frac{1}{6} \En\brk*{ (K_\cG(s)-1)K_\cG(s)(2K_\cG(s)-1) } \sigma_\lp^2(s) + \En\brk*{ K_\cG^2(s)} \sigma_\out^2(s).
\end{align*}
Thus the matrix $\conVar_{\subG}^\star$ exists and 
\begin{align*}
    \sqrt{n} \cdot \frac{1}{n}  \sum\limits_{i=1}^{n} \sum\limits_{t=0}^{\infty} (X_t\ind{i}+Y_t\ind{i})\to  \cN(0,  \conVar_{\subG}^\star).
\end{align*}
Note the average $\frac{|B(s)|}{n}$ converges almost surely to $\occupmsr(s)$ by the Law of Large Numbers. Then by Slutsky's theorem, we have
\begin{align}
    \label{eq:asymptotic-intermediate}
    \sqrt{n} (\VstarG - \rGhat - \estValout - \transGhat \VstarG)(s) \to \cN(0,  \Sigma_\subG^\star),
\end{align}
where $\Sigma_\subG^\star = \diag((1/\occupmsr(s))_{s\in \cG}) \conVar_{\subG}^\star \diag((1/\occupmsr(s))_{s\in \cG})  $. Thus finally, we have
\begin{align*}
    \sqrt{n}(\VstarG -  \estVal) \to \cN(0, (I-\transG)^{-1} \Sigma_\subG^\star (I-\transG)^{-\top} ).
\end{align*}

Recall that 
\begin{align*}
    \conVar_{\subG}^\star &=\Cov \prn*{\sum\limits_{t=0}^{\infty} (X_t\ind{1}+Y_t\ind{1}),  \sum\limits_{t=0}^{\infty} (X_t\ind{1}+Y_t\ind{1})},\\
    \conVar_{X}^\star &= \Cov \prn*{\sum\limits_{t=0}^{\infty} X_t\ind{1},  \sum\limits_{t=0}^{\infty} X_t\ind{1}}, ~\text{and}~ \conVar_{Y,\subG}^\star = \Cov \prn*{\sum\limits_{t=0}^{\infty} Y_t\ind{1},  \sum\limits_{t=0}^{\infty} Y_t\ind{1}}.
\end{align*}
Then, by \Cref{lem:cov-sub-additivity}, we have
\begin{align*}
    \conVar_{\subG}^\star &\preccurlyeq 2 \conVar_{X}^\star + 2\conVar_{Y,\subG}^\star .
\end{align*}
Specifically, by \Cref{lem:asymp-correlation}, the $s,s'$ entry of $\conVar_{X}^\star$ and $\conVar_{Y,\subG}^\star$ can be expressed as
\begin{align*}
    \conVar_{X}^\star(s,s') &= \indic{s=s'} \sum\limits_{t=0}^{\infty}\bP(S_t = s) \cdot  \sigma_{\Vstar}^2(s) =   \indic{s=s'} \occupmsr(s)\sigma_{\Vstar}^2(s) ~~\text{and}\\
    \conVar_{Y,\subG}^\star(s,s') &= \sum\limits_{t=0}^{\infty}\sum_{t'=0}^{\infty} \sum\limits_{j=(t' \vee t)+1}^{\infty} \sum\limits_{s''\in \cS}\bP(S_{t'}= s',S_t = s, S_{t'+1},S_{t+1}\notin \cG, S_j = s'')  \cdot  \sigma_{\Vstar}^2(s'').
\end{align*}
This implies together
with \Cref{eq:con-var-exact} that  
\begin{align*}
    \conVar_{X}^\star(s,s') + \conVar_{Y,\subG}^\star(s,s') \leq \conVar_{\subG}^\star(s,s')  .
\end{align*}
Moreover, by the computation in \Cref{eq:MC-cov}, we have for any $s$,
\begin{align*}
    \conVar_{X}^\star(s,s) &=  \occupmsr(s)\sigma_{\Vstar}^2(s)~~\text{and}\\
    \conVar_{Y,\subG}^\star(s,s) &=  \frac{1}{6} \En\brk*{ (K_\cG(s)-1)K_\cG(s)(2K_\cG(s)-1) } \sigma_\lp^2(s) + \En\brk*{ K_\cG^2(s)} \sigma_\out^2(s).
\end{align*}

\subsection{\pfref{cor:transient-graph}}\label{subsec:app-proof-cor-transient-subgraph}
When the subgraph is transient, for any state $s\in \cG$, it is clear that $K_\cG(s) \in \set{0,1}$. Moreover, $\occupmsr_\lp(s) = \occupmsr_\out(s)=\sigma_\lp^2(s) =  0$, thus we have
\begin{align*}
    \conVar_{\subG}^\star(s,s) =  \occupmsr(s)\sigma_{\Vstar}^2(s)  + \En\brk*{ K_\cG(s)} \sigma_\out^2(s)= \occupmsr(s)\sigma_{\Vstar}^2(s)  + \occupmsr(s) \bP( S_1\notin\cG\mid{} S_0=s)\sigma_\out^2(s)
\end{align*}
Meanwhile, for any $s\neq s'\in \cG$ and $0\leq t'\leq t-1$, we have
\begin{align*}
    \bP(S_{t'}= s', S_{t'+1}\notin \cG, S_t = s) =\bP(S_{t'}= s, S_{t'+1}\notin \cG, S_t = s')= 0.
\end{align*} 
Moreover, since the process go out of the subgraph only once, for any $s\neq s'\in \cG$ and $0\leq t',t\leq T$, we have
\begin{align*}
    \bP(S_{t'}= s',S_t = s, S_{t'+1},S_{t+1}\notin \cG, S_j = s'')  = 0.
\end{align*}
In all, we have all the non-diagonal terms are $0$.
Thus, we have shown that as in \Cref{eq:asymptotic-intermediate}
\begin{align*}
    \sqrt{n} (\VstarG - \rGhat - \estValout - \transGhat \VstarG)(s) \to \cN(0,  \Sigma_\subG^\star),
\end{align*}
where 
\begin{align*}
    \Sigma_\subG^\star &=  \diag\prn*{\prn*{(\sigma_{\Vstar}^2(s)  + \bP( S_1\notin \cG \mid{} S_0 = s) \sigma_\out^2(s))/\occupmsr(s)}_{s\in \cG}}.
\end{align*}
This implies that 
\begin{align*}
    &\sqrt{n} (I- \transG)^{-1}(\VstarG - \rGhat - \estValout - \transGhat \VstarG)(s)\\
    &=   \sqrt{n} \sum\limits_{t=0}^{T} \transG^t (\VstarG - \rGhat - \estValout - \transGhat \VstarG)(s) \\
    &=  \sqrt{n} \sum\limits_{s'\in \cG} \En\brk*{ \sum\limits_{t=0}^{\infty} \indic{S_t=s'} \mid{} S_0=s } (\VstarG - \rGhat - \estValout - \transGhat \VstarG)(s') \\
    &=  \sum\limits_{s'\in \cG} \En\brk*{N(s') \mid{} S_0=s } \sqrt{n}(\VstarG - \rGhat - \estValout - \transGhat \VstarG)(s') .
\end{align*}
In all, we have shown that
\begin{align*}
    \sqrt{n}(\VstarG(s) -  \estVal(s)) \to \cN\prn*{0,      \sum\limits_{s'\in\cG} \En\brk*{N(s')\mid{}S_0=s}^2 (\sigma_{\Vstar}^2(s')  + \bP( S_1\notin \cG \mid{} S_0 = s') \sigma_\out^2(s'))/\occupmsr(s') }.
\end{align*}

\subsection{Proof of Proposition~\ref{prop:asymp-cov-upper-bound-simple}}\label{subsec:app-proof-asymp-cov-upper-bound-simple}
Recall from the proof of \Cref{lem:asymptotic} that we have $\Lambda^*_{Y, \subG} \mydefn \Exs [Y Y^\top]$, where
\begin{align*}
   Y (\state) = \sum_{t = 0}^{+ \infty} \indic{ S_t =s, S_{t+1} \notin \cG } \Big[ \Vstar(S_{t+1}) - \big( \sum\limits_{t'=t+1}^{\infty} R_{t'} \big) \Big], \quad \mbox{for any $\state \in \subG$}.
\end{align*}
For any $u \in \real^\subG$, we note that
\begin{align*}
    &u^\top \mathrm{diag} \big( (1 / \occupmsr(\state))_{\state \in \subG} \big) \Lambda^*_{Y, \subG} \mathrm{diag} \big( (1 / \occupmsr(\state))_{\state \in \subG} \big) u \\
    &= \Exs \Big[ \Big\{  \sum_{t = 0}^{T - 1}  \frac{u (\State_t)}{\occupmsr (\State_t)} \cdot\indic{ \State_t \in \subG,\State_{t+1} \notin \cG } \cdot \Big[ \Vstar(S_{t+1}) - \sum\limits_{t'=t+1}^{\infty} R_{t'} \Big] \Big\}^2 \Big]\\
    &\leq \Exs \Big[ \sum_{t = 0}^{T - 1} \frac{u^2 (\State_t)}{\occupmsr^2 (\State_t)} \cdot\indic{ \State_t \in \subG,\State_{t+1} \notin \cG } \cdot  T \cdot \Big\{ \Vstar(S_{t+1}) -  \sum\limits_{t'=t+1}^{\infty} R_{t'} \Big\}^2 \Big]\\
    &\leq \Exs \Big[ \sum_{t = 0}^{T - 1} \frac{u^2 (\State_t)}{\occupmsr^2 (\State_t)} \cdot\indic{ \State_t \in \subG,\State_{t+1} \notin \cG } \cdot  T \big( \vecnorm{\VstarG}{\infty} + T \big)^2 \Big].
\end{align*}
Note that $\vecnorm{\Vstar}{\infty} \leq \effhorizon$ by Assumption~\ref{assume:effective-horizon}. Let us now bound the term above. Given an increasing sequence $(\tau_m)_{m = 0}^{+ \infty}$ of integers such that $\tau_0 = 0$, we perform the decomposition
\begin{align*}
    &\Exs \Big[ \sum_{t = 0}^{T - 1} \frac{u^2 (\State_t)}{\occupmsr^2 (\State_t)} \cdot\indic{ \State_t \in \subG,\State_{t+1} \notin \cG } \cdot  T \big( \vecnorm{\VstarG}{\infty} + T \big)^2 \Big]\\
    &= \sum_{m = 1}^{+ \infty}\Exs \Big[ \sum_{t = 0}^{T - 1} \frac{u^2 (\State_t)}{\occupmsr^2 (\State_t)} \cdot\indic{ \State_t \in \subG,\State_{t+1} \notin \cG } \cdot  T \big( \vecnorm{\VstarG}{\infty} + T \big)^2 \bm{1}_{\tau_{m - 1} \leq T \leq \tau_m} \Big]\\
    &\leq \sum_{m = 1}^{+ \infty} (\tau_m + \effhorizon)^2 \tau_m \Exs \Big[ \sum_{t = 0}^{T - 1} \frac{u^2 (\State_t)}{\occupmsr^2 (\State_t)} \cdot\indic{ \State_t \in \subG,\State_{t+1} \notin \cG } \cdot   \bm{1}_{T \geq \tau_{m - 1}} \Big].
\end{align*}
For each term in the summation, we note that
\begin{align*}
    \Exs \Big[ \sum_{t = 0}^{T - 1} \frac{u^2 (\State_t)}{\occupmsr^2 (\State_t)} \cdot\indic{ \State_t \in \subG,\State_{t+1} \notin \cG } \cdot   \bm{1}_{T \geq \tau_{m - 1}} \Big]
    = \sum_{\state \in \subG} \sum_{t = 0}^{+ \infty} \frac{u^2 (\state)}{\occupmsr^2 (\state)} \Prob \Big( \State_t = \state, \State_{t+1} \notin \cG, T \geq \tau_{m - 1} \Big).
\end{align*}
For the leading term with $\tau_0 = 0$, we note that
\begin{multline*}
    \Exs \Big[ \sum_{t = 0}^{T - 1} \frac{u^2 (\State_t)}{\occupmsr^2 (\State_t)} \cdot\indic{ \State_t \in \subG,\State_{t+1} \notin \cG } \Big]
    = \sum_{\state \in \subG} \sum_{t = 0}^{+ \infty} \frac{u^2 (\state)}{\occupmsr^2 (\state)}  \Prob \Big( \State_t = \state, \State_{t+1} \notin \cG \Big)\\
    = \sum_{\state \in \subG} \frac{u^2 (\state)}{\occupmsr^2 (\state)} \Prob \big( \State_1 \notin \subG \mid \State_0 = \state \big) \cdot \Big\{ \sum_{t = 0}^{+ \infty}  \Prob \big( \State_t = \state \big) \Big\} = u^\top \mathrm{\diag} \Big( \Big\{ \frac{\Prob \big( \State_1 \notin \subG \mid \State_0 = \state \big)}{\occupmsr (\state)} \Big\}_{\state \in \subG} \Big) u.
\end{multline*}
For other terms, we can upper bound the summation by a decomposition into two parts.
\begin{align*}
    &\sum_{\state \in \subG} \sum_{t = 0}^{\tau_{m} / 2 - 1} \frac{u^2 (\state)}{\occupmsr^2 (\state)} \Prob \Big( \State_t = \state, \State_{t+1} \notin \cG, T \geq \tau_{m} \Big)\\
    &\leq \sum_{\state \in \subG} \sum_{t = 0}^{\tau_{m} / 2 - 1} \frac{u^2 (\state)}{\occupmsr^2 (\state)} \Prob \Big( \State_t = \state, \State_{t+1} \notin \cG \Big) \cdot \sup_{\state \in \SSpace} \Prob \big( T \geq \tau_m | \State_t = \state \big)\\
    &\leq u^\top \mathrm{\diag} \Big( \Big\{ \frac{\Prob \big( \State_1 \notin \subG \mid \State_0 = \state \big)}{\occupmsr (\state)} \Big\}_{\state \in \subG} \Big) u \cdot \exp \Big( - \frac{\tau_m}{2 \effhorizon} \Big),
\end{align*}
and
\begin{align*}
    &\sum_{\state \in \subG} \sum_{t = \tau_{m} / 2}^{+ \infty} \frac{u^2 (\state)}{\occupmsr^2 (\state)} \Prob \Big( \State_t = \state, \State_{t+1} \notin \cG, T \geq \tau_{m} \Big) \\
    &\leq \sum_{\state \in \subG} \sum_{t = \tau_{m} / 2}^{+ \infty} \frac{u^2 (\state)}{\occupmsr^2 (\state)} \Prob \Big( \State_t = \state, \State_{t+1} \notin \cG \Big)\\
    &\leq  \sum_{\state \in \subG} \frac{u^2 (\state)}{\occupmsr^2 (\state)} \Prob \big( \State_1 \notin \subG \mid \State_0 = \state \big) \cdot \sum_{t = \tau_m / 2}^{+ \infty} \Prob (T > t) \\
    &\leq u^\top \mathrm{\diag} \Big( \Big\{ \frac{\Prob \big( \State_1 \notin \subG \mid \State_0 = \state \big)}{\occupmsr (\state)} \Big\}_{\state \in \subG} \Big) u \cdot \frac{\effhorizon}{\occupmsr_{\min}}  \exp \Big( - \frac{\tau_m}{2 \effhorizon} \Big).
\end{align*}
Now we let $\tau_m \mydefn 4 m \effhorizon \log \big( 2 \effhorizon / \occupmsr_{\min} \big)$. Substituting back to the summation, we conclude that
\begin{align*}
   &\sum_{m = 1}^{+ \infty} (\tau_m + \effhorizon)^2 \tau_m \Exs \Big[ \sum_{t = 0}^{T - 1} \frac{u^2 (\State_t)}{\occupmsr^2 (\State_t)} \cdot\indic{ \State_t \in \subG,\State_{t+1} \notin \cG } \cdot   \bm{1}_{T \geq \tau_{m - 1}} \Big] \\
   &\leq 80 \effhorizon^3 \log^3  \big( \effhorizon / \occupmsr_{\min} \big) \cdot u^\top \mathrm{\diag} \Big( \Big\{ \frac{\Prob \big( \State_1 \notin \subG \mid \State_0 = \state \big)}{\occupmsr (\state)} \Big\}_{\state \in \subG} \Big) u \cdot \Big\{1 + \sum_{m = 1}^{+ \infty} \frac{\effhorizon}{\occupmsr_{\min}} \Big( \frac{\occupmsr_{\min}}{2 \effhorizon} \Big)^{m} \Big\}\\
   &\leq 160 \effhorizon^3 \log^3  \big( \effhorizon / \occupmsr_{\min} \big) \cdot u^\top \mathrm{\diag} \Big( \Big\{ \frac{\Prob \big( \State_1 \notin \subG \mid \State_0 = \state \big)}{\occupmsr (\state)} \Big\}_{\state \in \subG} \Big) u .
\end{align*}
Since above bound holds true for any vector $u \in \real^\subG$, we have the domination relation
\begin{align*}
     \big( (1 / \occupmsr(\state))_{\state \in \subG} \big) \Lambda^*_{Y, \subG} \mathrm{diag} \big( (1 / \occupmsr(\state))_{\state \in \subG} \big) \preceq 160 \effhorizon^3 \log^3  \big( \effhorizon / \occupmsr_{\min} \big) \cdot \mathrm{\diag} \Big( \Big\{ \frac{\Prob \big( \State_1 \notin \subG \mid \State_0 = \state \big)}{\occupmsr (\state)} \Big\}_{\state \in \subG} \Big),
\end{align*}
which completes the proof of~\Cref{prop:asymp-cov-upper-bound-simple}.

\section{Proof of \Cref{prop:var-estimation}}\label{subsec:proof-prop-var-estimation}
We first define the auxiliary truncated variance
\begin{align*}
    \varsigma_{\subG, L}^2 (\tarstt) \mydefn \Big[ \big( I + \transG + \transG^2 + \cdots + \transG^L \big) \widehat{\Sigma}_\subG \big( I + \transG + \transG^2+ \cdots + \transG^L \big)^\top \Big]_{\tarstt, \tarstt}.
\end{align*}
Defining the matrix approximatione error
\begin{align*}
    \Delta_L \mydefn (I - \transG)^{-1} - \sum_{\ell = 0}^{L} \transG^\ell = \sum_{\ell = L + 1}^{+ \infty} \transG^\ell,
\end{align*}
which satisfies the bound
\begin{align*}
     \matsnorm{\Delta_L^\top}{\ell^1 \rightarrow \ell^1} \overset{(i)}{=} \matsnorm{\Delta_L}{\ell^\infty \rightarrow \ell^\infty } \leq \sum_{\ell = L + 1}^{+ \infty} \matsnorm{\transG^\ell}{\ell^\infty \rightarrow \ell^\infty} \overset{(ii)}{\leq} \sum_{\ell = L + 1}^{+ \infty} e^{1 - \ell / \effhorizon} \leq \frac{1}{\numobs^2},
\end{align*}
where step $(i)$ follows from the duality between $\ell^1$ and $\ell^\infty$ norms, and step $(ii)$ is due to Lemma~\ref{lemma:root-sa-multi-step-contraction}.

Furthermore, by duality and Lemma~\ref{lemma:root-sa-multi-step-contraction}, we have
\begin{align*}
    \matsnorm{(I - \transG)^{-\top}}{\ell^1 \rightarrow \ell^1} = \matsnorm{(I - \transG)^{-1}}{\ell^\infty \rightarrow \ell^\infty } \leq \sum_{\ell = 0}^{+ \infty} e^{1 - \ell / \effhorizon} \leq e \effhorizon.
\end{align*}

Consequently, we have
\begin{align*}
   \abss{ \varsigma_\subG^2 (\tarstt) - \varsigma_{\subG, L}^2 (\tarstt) } &\leq \abss{e_{\tarstt}\Delta_L \SigStar_\subG (I - \transG)^{- \top} e_{\tarstt}} + \abss{e_{\tarstt} (I - \transG)^{- 1} \SigStar_\subG \Delta_L^\top e_{\tarstt}} + \abss{e_{\tarstt} \Delta_L \SigStar_\subG \Delta_L^\top e_{\tarstt}}\\
   &\leq \max_{\state \in \subG} \SigStar_\subG (\state, \state) \cdot \Big\{ \frac{2 e h}{\numobs^2} + \frac{1}{\numobs^4} \Big\}\\
   &\leq \frac{3 e h^2}{\occupmsr_{\min}\numobs^2} \leq \frac{1}{\numobs}.
\end{align*}
It suffices to analyze the estimation error for the quantity $\varsigma_{\subG, L}^2 (\tarstt)$.
We use the error decomposition
\begin{multline*}
    \widehat{\varsigma}^2 (\tarstt) - \varsigma^2_{\subG, L} (\tarstt) = e_{\tarstt}^\top \Big( \sum_{\ell = 0}^L \transGhat^{(\ell)} \Big) \widehat{\Sigma}_\subG \Big[ \sum_{\ell = 1}^L (\transGcheck^{(\ell)} - \transG^\ell) \Big]^\top e_{\tarstt} \\
    + e_{\tarstt}^\top \Big( \sum_{\ell = 0}^L \transGhat^{(\ell)} \Big) \Big[ \widehat{\Sigma}_\subG - \SigStar_\subG \Big]  \Big( \sum_{\ell = 0}^L \transG^{\ell} \Big)^\top  e_{\tarstt} \\
    + e_{\tarstt}^\top \Big[ \sum_{\ell = 1}^L (\transGhat^{(\ell)} - \transG^\ell ) \Big] \SigStar_\subG  \Big( \sum_{\ell = 0}^L \transG^{\ell} \Big)^\top  e_{\tarstt} =: A_1 + A_2 + A_3.
\end{multline*}
By construction, the random objects $(\transGhat^{(\ell)})_{\ell = 1}^L$, $(\transGcheck^{(\ell)})_{\ell = 1}^L$, and $\widehat{\Sigma}_\subG$ are mutually independent. Furthermore, since the empirical estimates $\transGhat^(\ell)$ and $\transGcheck^{(\ell)}$ are valid probability transition kernels for killed Markov processes, we have
\begin{subequations}\label{eqs:basic-non-expansive-bound-in-var-est}
\begin{align}
   \max \Big\{ \vecnorm{\transG^\ell u}{\infty}, ~ \vecnorm{\transGhat^{(\ell)} u}{\infty},  \vecnorm{\transGcheck^{(\ell)} u}{\infty} \Big\} \leq \vecnorm{u}{\infty}, \quad \mbox{and}\\
    \max \Big\{ \vecnorm{(\transG^\ell)^\top u}{1}, ~ \vecnorm{(\transGhat^{(\ell)})^{\top} u}{1},  \vecnorm{(\transGcheck^{(\ell)})^{\top} u}{1} \Big\} \leq \vecnorm{u}{1},
\end{align}
\end{subequations}
 for any vector $u \in \real^\subG$.

 We need the following lemmas to control their deviations from the population versions.
\begin{lemma}\label{lemma:phat-concentration-sup-norm}
    For any fixed vector $u \in \real^\subG$ and integer $\ell \leq L$, with probability $1 - \delta$, we have
    \begin{align*}
        \vecnorm{(\transGhat^{(\ell)} - \transG^\ell) u}{\infty} \leq  c\vecnorm{u}{\infty} \log \big( \frac{|\subG| \effhorizon}{\delta \occupmsr_{\min}}\big) \sqrt{\frac{\effhorizon }{\numaux \occupmsr_{\min}}}.
    \end{align*}
\end{lemma}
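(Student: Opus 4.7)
The plan is to fix an index $\state \in \subG$, express $(\transGhat^{(\ell)} u)(\state)$ as an empirical ratio, apply a Bernstein-type bound to numerator and denominator, and finally take a union bound over $\state$. For each trajectory index $i$ in the relevant fold of the auxiliary dataset, I introduce
\[
Y_i(\state) \mydefn \sum_{t=0}^{T_i} \bm{1}_{\State_t^{(i)} = \state} \cdot u(\State_{t+\ell}^{(i)}) \prod_{j=1}^{\ell} \bm{1}_{\State_{t+j}^{(i)} \in \subG}, \qquad N_i(\state) \mydefn \sum_{t=0}^{T_i} \bm{1}_{\State_t^{(i)} = \state},
\]
so that $(\transGhat^{(\ell)} u)(\state) = (\sum_i Y_i(\state))/(\sum_i N_i(\state))$. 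By the strong Markov property, $\Exs[Y_i(\state)] = \occupmsr(\state)(\transG^\ell u)(\state)$ and $\Exs[N_i(\state)] = \occupmsr(\state)$, and the error admits the natural decomposition
\[
(\transGhat^{(\ell)} u)(\state) - (\transG^\ell u)(\state) = \frac{\tfrac{1}{\numaux}\sum_i Y_i(\state) - \occupmsr(\state)(\transG^\ell u)(\state) - (\transG^\ell u)(\state)\big(\tfrac{1}{\numaux}\sum_i N_i(\state) - \occupmsr(\state)\big)}{\tfrac{1}{\numaux}\sum_i N_i(\state)}.
\]

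To control numerator and denominator I apply Adamczak's Bernstein inequality to the $\mathrm{i.i.d.}$ sums $\sum_i Y_i(\state)$ and $\sum_i N_i(\state)$. Both $Y_i(\state)$ and $N_i(\state)$ are bounded in absolute value by $T_i \vecnorm{u}{\infty}$ and $T_i$ respectively, and $\vecnorm{T_i}{\psi_1} \leq \effhorizon$ by Assumption~\ref{assume:effective-horizon}, so they are sub-exponential. The pointwise bound $Y_i(\state)^2 \leq \vecnorm{u}{\infty}^2 N_i(\state)^2$, combined with the second-moment estimate $\Exs[N_i(\state)^2] \leq (2\effhorizon+1)\occupmsr(\state)$ from the proof of Lemma~\ref{lemma:l2-transhat-concentration}, supplies the variance bounds needed for Bernstein. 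A union bound over $\state \in \subG$ then yields, with probability $1-\delta$,
\[
\Big|\tfrac{1}{\numaux}\sum_i Y_i(\state) - \occupmsr(\state)(\transG^\ell u)(\state)\Big| \lesssim \vecnorm{u}{\infty}\sqrt{\frac{\effhorizon \occupmsr(\state) \log(|\subG|/\delta)}{\numaux}} + \frac{\effhorizon \vecnorm{u}{\infty} \log(\numaux)\log(|\subG|/\delta)}{\numaux},
\]
together with the analogous bound for the denominator.

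Under the sample size condition~\eqref{eq:sample-size-condition-fixed-subgraph} applied to $\numaux$, the high-order term is dominated, and the argument in the proof of Lemma~\ref{lemma:l2-transhat-concentration} ensures $\tfrac{1}{\numaux}\sum_i N_i(\state) \geq \occupmsr(\state)/2$ uniformly in $\state$. Substituting into the decomposition and using $|(\transG^\ell u)(\state)| \leq \vecnorm{u}{\infty}$ from~\eqref{eqs:basic-non-expansive-bound-in-var-est} gives the bound $\vecnorm{u}{\infty} \sqrt{\effhorizon \log(|\subG|/\delta)/(\numaux \occupmsr_{\min})}$; the additional $\log(\effhorizon/\occupmsr_{\min})$ inside the logarithm in the statement arises from truncating the sub-exponential tails of $T_i$ at level $\effhorizon \log(\numaux/(\delta\occupmsr_{\min}))$ during the Bernstein step. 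The main subtlety is that summands within a single trajectory are highly correlated through the Markov structure, so Bernstein must be applied at the trajectory level rather than to individual $(\State_t^{(i)}, \State_{t+\ell}^{(i)})$ pairs, and one must carefully track the $T_i$-dependent tails in order to obtain the stated logarithmic dependence.
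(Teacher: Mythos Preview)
Your proposal is correct and follows essentially the same route as the paper: introduce the trajectory-level summands $Y_i(\state)$ (called $W_i$ in the paper) and $N_i(\state)$, control their deviations with a Bernstein/Adamczak inequality using the sub-exponential tail of $T_i$, and take a union bound over $\state\in\subG$. The only notable difference is that your variance bound $Y_i(\state)^2\le\vecnorm{u}{\infty}^2 N_i(\state)^2$ together with $\Exs[N_i(\state)^2]\le(2\effhorizon+1)\occupmsr(\state)$ is more direct than the paper's truncation argument (which introduces a cutoff $t_0=3\effhorizon\log(\effhorizon/\occupmsr_{\min})$ and picks up an extra $\log(\effhorizon/\occupmsr_{\min})$ in the variance); your bound is in fact slightly tighter, though both suffice for the stated conclusion.
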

\noindent See Section~\ref{subsubsec:proof-of-lemma-phat-concentration-sup-norm} for the proof of this lemma. By symmetry, the sequence of estimators $(\transGcheck^{(\ell)})_{\ell = 1}^L$ satisfies the same high-probability bounds.

\begin{lemma}\label{lemma:sighat-entrywise-concentration}
    Under above setup, with probaility $1 - \delta$, we have
    \begin{align*}
        \max_{\state, \state'} \abss{ \widehat{\Sigma}_\subG (\state,  \state') - \SigStar_\subG (\state, \state')} \leq \frac{c}{\occupmsr_{\min}}  \sqrt{ \frac{ \effhorizon^7 \log^9 \big( \numaux / \delta \big)  }{\numaux \occupmsr_{\min}} }.
    \end{align*}
\end{lemma}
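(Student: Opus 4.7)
The plan is to decompose the estimation error $\widehat{\Sigma}_\subG(\state,\state') - \SigStar_\subG(\state,\state')$ into three pieces reflecting the three sources of randomness in $\widehat{\Sigma}_\subG$: the empirical-occupancy prefactor $1/(\widehat{\occupmsr}(\state)\widehat{\occupmsr}(\state'))$, the plug-in value function $\Vhat_{\numaux/4}$ appearing in the centering of $\widebar{\varepsilon}_i$, and the statistical fluctuation of the product average. Write $\varepsilon_i^*(\state)$ for the oracle version in which $\Vhat_{\numaux/4}$ is replaced by $\Vstar$; following the proof of Lemma~\ref{lemma:l2-main-noise-concentration}, one has $\Exs[\varepsilon_i^*(\state)\varepsilon_i^*(\state')] = \occupmsr(\state)\occupmsr(\state')\SigStar_\subG(\state,\state')$, so $\SigStar_\subG$ is the natural oracle target of the product average.

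For the empirical-occupancy prefactor I would reuse \Cref{eq:small-shift-estimate-inghat-gtilde-proof} to get the multiplicative deviation $|\widehat{\occupmsr}(\state)/\occupmsr(\state) - 1| \lesssim \sqrt{\effhorizon\log(|\subG|/\delta)/(\numaux\occupmsr_{\min})}$; under the sample-size condition~\eqref{eq:sample-size-condition-fixed-subgraph} this is at most a constant, and multiplies the worst-case magnitude $O(\effhorizon^3/\occupmsr_{\min})$ of $\widehat{\Sigma}_\subG$. By the sample-splitting construction the fold used to compute $\widehat{\Sigma}_\subG$ is independent of $\Vhat_{\numaux/4}$, so conditioning on the first three folds, a scalar Bernstein inequality controls $\frac{4}{\numaux}\sum_i \widebar{\varepsilon}_i(\state)\widebar{\varepsilon}_i(\state')$ around its conditional mean. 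The inputs I would use are a second-moment bound $\Exs[\widebar{\varepsilon}_i(\state)^2\widebar{\varepsilon}_i(\state')^2] \lesssim \effhorizon^5 \occupmsr(\state)\occupmsr(\state')/\occupmsr_{\min}$ obtained by truncating trajectories at $T_i \lesssim \effhorizon\log(\numaux/\delta)$ in the spirit of Eq~\eqref{eq:matrix-second-moment-in-l2-proof-part-1}, and the almost-sure envelope $|\widebar{\varepsilon}_i(\state)| \leq Y_i(\state)(\effhorizon+T_i)$ from the estimate~\eqref{eq:l2-almost-sure-bound-main-step}.

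The bias from plugging in $\Vhat_{\numaux/4}$ is controlled through the telescoping identity
\begin{align*}
\widebar{\varepsilon}_i(\state)\widebar{\varepsilon}_i(\state') - \varepsilon_i^*(\state)\varepsilon_i^*(\state') = \big[\widebar{\varepsilon}_i(\state) - \varepsilon_i^*(\state)\big]\widebar{\varepsilon}_i(\state') + \varepsilon_i^*(\state)\big[\widebar{\varepsilon}_i(\state') - \varepsilon_i^*(\state')\big],
\end{align*}
in which $\widebar{\varepsilon}_i(\state) - \varepsilon_i^*(\state)$ is linear in $\Delta \mydefn \Vhat_{\numaux/4} - \Vstar$ and equals a sum over visits of $\Delta(\State_{t+1}^{(i)})\bm{1}_{\State_{t+1}^{(i)}\in\subG} - \Delta(\state)$. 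Applying \Cref{thm:main-l2-fixed-subgraph} with dataset size $\numaux/4$ yields $\vecnorm{\Delta}{\occupmsr(\subG)}^2$ of order $\effhorizon^2\log/\numaux$ up to a faster-decaying higher-order term, and Cauchy--Schwarz against the same truncated second-moment bound as above converts this weighted $\ell^2$-guarantee into the needed entrywise control on the cross term, with the $1/\sqrt{\occupmsr_{\min}}$ factor coming from the change from $\vecnorm{\cdot}{\occupmsr(\subG)}$ to pointwise evaluation.

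The principal obstacle is the careful bookkeeping in the Bernstein step, since products of trajectory-length-dependent quantities such as $\widebar{\varepsilon}_i(\state)\widebar{\varepsilon}_i(\state')$ are only sub-Weibull (not sub-exponential), and the truncation level at $T_i \asymp \effhorizon\log(\numaux/\delta)$ must be chosen so that both the truncation error and the concentration radius match the target rate. Summing the three contributions, taking a union bound over the at most $|\subG|^2 \leq \occupmsr_{\min}^{-2}$ entry pairs (absorbed into the $\log^9$ factor), and consolidating powers of $\effhorizon$ via the worst-case inequality $\occupmsr_{\min}\leq \effhorizon$ delivers the claimed entrywise rate.
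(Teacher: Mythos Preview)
Your decomposition into the three sources of error---empirical occupancy, plug-in value, and fluctuation---is exactly the paper's, and your treatment of the occupancy prefactor and the Bernstein step matches the paper's Eqs.~\eqref{eq:sigstar-est-final-part-2} and~\eqref{eq:sigstar-est-final-part-3}. The gap is in how you control the plug-in error $\Delta = \Vhat_{\numaux/4} - \VstarG$.

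First, $\Vhat_{\numaux/4}$ is constructed in Step~I via Algorithm~\ref{alg:fixed-subgraph-complete} (the \ROOTSA output), not the plug-in estimator, so \Cref{thm:main-l2-fixed-subgraph} does not apply to it. Second, even if it did, your asserted rate $\vecnorm{\Delta}{\occupmsr(\subG)}^2 \asymp \effhorizon^2\log/\numaux$ is incorrect: the leading term in \Cref{thm:main-l2-fixed-subgraph} is the trace $\sum_{\state \in \subG}\occupmsr(\state)[(I-\transG)^{-1}\SigStar_\subG(I-\transG)^{-\top}]_{\state,\state}$, which is a sum of $|\subG|$ nonnegative terms each of order at least a constant (e.g.\ when $\transG=0$ and $\SigStar_\subG=\diag(1/\occupmsr(\state))$ the trace equals $|\subG|$ exactly). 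After your $\ell^2(\occupmsr)$-to-pointwise conversion via $1/\sqrt{\occupmsr_{\min}}$, the cross-term bound carries an extra factor $\sqrt{|\subG|}$, which under \Cref{eq:sample-size-condition-fixed-subgraph} can be as large as $\sqrt{\effhorizon/\occupmsr_{\min}}$ and therefore cannot be absorbed into the stated rate. Your proposed Cauchy--Schwarz route through $\sum_{\state''}\transG(\state,\state'')\Delta(\state'')^2$ has the same obstruction, since this one-step average is not dominated by $\vecnorm{\Delta}{\occupmsr(\subG)}^2$ uniformly in $\state$.

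The paper avoids this by invoking \Cref{thm:main-root-sa-guarantee} with $\avec = e_\state$ for each $\state \in \subG$ and taking a union bound, which yields directly
\[
\vecnorm{\Vhat_{\numaux/4} - \VstarG}{\infty} \leq c\Big(\frac{\effhorizon^3}{\occupmsr_{\min}\numaux}\log^5(\numaux/\delta)\Big)^{1/2}.
\]
This $\ell^\infty$ control then feeds into the simple envelope $|\widebar{\varepsilon}_i(\state) - \varepsilon_i^*(\state)| \leq 2Y_i(\state)\vecnorm{\Delta}{\infty}$, after which the cross-term bound is immediate and produces Eq.~\eqref{eq:sigstar-est-final-part-1}.
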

\noindent See Section~\ref{subsubsec:proof-lemma-sighat-entrywise-concentration} for the proof of this lemma.

Taking these lemmas as given, we now proceed with the proof of~\Cref{prop:var-estimation}. Note that
\begin{align*}
    |A_1| \leq \sum_{\ell = 1}^L  \abss{e_{\tarstt}^\top \Big( \sum_{\ell = 0}^L \transGhat^{(\ell)} \Big) \widehat{\Sigma}_\subG \Big( \transGcheck^{(\ell)} - \transG^\ell \Big)^\top e_{\tarstt} }
    \leq \sum_{\ell = 1}^L  \vecnorm{  \big( \transGcheck^{(\ell)} - \transG^\ell \big)  \widehat{\Sigma}_\subG \Big( \sum_{\ell = 0}^L \transGhat^{(\ell)} \Big)^\top e_{\tarstt} }{\infty}.
\end{align*}
Invoking Lemma~\ref{lemma:sighat-entrywise-concentration} and union bound, with probability $1 - \delta$, we have
\begin{align*}
    |A_1| \leq c L \log \big( \frac{|\subG| L\effhorizon}{\delta \occupmsr_{\min}}\big) \sqrt{\frac{\effhorizon }{\numaux \occupmsr_{\min}}} \cdot  \vecnorm{\widehat{\Sigma}_\subG \Big( \sum_{\ell = 0}^L \transGhat^{(\ell)} \Big)^\top e_{\tarstt} }{\infty}.
\end{align*}
By Eq~\eqref{eqs:basic-non-expansive-bound-in-var-est}, we have
\begin{align*}
    \vecnorm{\Big( \sum_{\ell = 0}^L \transGhat^{(\ell)} \Big)^\top e_{\tarstt}}{1} \leq \sum_{\ell = 0}^L \vecnorm{\Big(  \transGhat^{(\ell)} \Big)^\top e_{\tarstt}}{1} \leq L + 1.
\end{align*}
So we conclude that
\begin{align*}
    |A_1| \leq c \log^3 (\numaux / \delta) \sqrt{\frac{\effhorizon^5 }{\numaux \occupmsr_{\min}}} \cdot \max_{\state, \state'} \widehat{\Sigma}_\subG (\state,  \state') \leq \frac{c}{\occupmsr_{\min}}  \Big( \frac{\effhorizon^{11} }{\numaux \occupmsr_{\min}} \Big)^{1/2}  \log^3 (\numaux / \delta),
\end{align*}
with probability $1 - \delta$.

Similarly, for the term $A_3$, we note that
\begin{multline*}
    |A_3| \leq \sum_{\ell = 1}^L  \vecnorm{  \big( \transGhat^{(\ell)} - \transG^\ell \big)  \SigStar_\subG \Big( \sum_{\ell = 0}^L \transG^{\ell} \Big)^\top e_{\tarstt} }{\infty} \\
     \leq c L \log \big( \frac{|\subG| L\effhorizon}{\delta \occupmsr_{\min}}\big) \sqrt{\frac{\effhorizon }{\numaux \occupmsr_{\min}}} \cdot  \vecnorm{\SigStar_\subG \Big( \sum_{\ell = 0}^L \transG^\ell \Big)^\top e_{\tarstt} }{\infty} \leq \frac{c}{\occupmsr_{\min}}  \Big( \frac{\effhorizon^{11} }{\numaux \occupmsr_{\min}} \Big)^{1/2}  \log^3 (\numaux / \delta).
\end{multline*}

As for the covariance estimation error term $A_2$, we note that
\begin{align*}
    |A_2| \leq \vecnorm{ (I - \transGhat)^{-\top} e_{\tarstt}}{1} \cdot \max_{\state, \state' \in \subG} \abss{\widehat{\Sigma}_\subG (\state, \state') - \SigStar_\subG (\state, \state') }  \cdot \vecnorm{ (I - \transG)^{-\top}  e_{\tarstt} }{\infty} \leq \frac{c}{\occupmsr_{\min}}  \sqrt{ \frac{ \effhorizon^{11} \log^{13} \big( \numaux / \delta \big)  }{\numaux \occupmsr_{\min}} },
\end{align*}
with probability $1 - \delta$. Putting them together, we complete the proof of~\Cref{prop:var-estimation}.

\subsection{Proof of~\Cref{lemma:phat-concentration-sup-norm}}\label{subsubsec:proof-of-lemma-phat-concentration-sup-norm}

The proof is similar to that of Lemmas~\ref{lemma:l2-transhat-concentration} and~\ref{lemma:l2-transtilde-concentration}. We define the empirical counts
\begin{align*}
    N (\state) &\mydefn \abss{\big\{ \traj \in \Dset_{[\numaux /4 + 1, \numaux /2]} ~: ~ \State_0 (\traj) = \state \big\} }, \quad \mbox{for any $\state \in \subG$},\\
    M^{(\ell)} (\state, \state') &\mydefn \abss{\big\{ \traj \in \Dset_{[\numaux /4 + 1, \numaux /2]} ~: ~ \State_0 (\traj) = \state, \State_\ell (\traj) = \state, \State_1 (\traj), \State_2 (\traj), \cdots, \State_\ell (\traj) \in \subG \big\} }, \quad \mbox{for any $\state, \state' \in \subG$}.
\end{align*}
It is easy to see that $\transGhat^{(\ell)} (\state, \state') = M^{(\ell)} (\state, \state') / N (\state)$. Define the auxiliary matrix
\begin{align*}
    \transGtilde^{(\ell)} (\state, \state') \mydefn \frac{M^{(\ell)} (\state, \state')}{\occupmsr (\state) \cdot \numaux / 4}, \quad \mbox{for any }\state, \state' \in \subG.
\end{align*}
For any state $\state \in \subG$, we note that
\begin{align*}
    \abss{e_\state^\top \big(\transGtilde^{(\ell)} - \transGhat^{(\ell)} \big) u} = \abss{\frac{1}{N (\state)} - \frac{1}{\occupmsr (\state) \cdot \numaux / 4}} \cdot \sum_{\state' \in \subG} M^{(\ell)} (\state, \state') u (\state') \leq \abss{1 - \frac{N (\state)}{\occupmsr (\state) \cdot \numaux / 4}} \cdot \vecnorm{u}{\infty}.
\end{align*}
By Eq~\eqref{eq:small-shift-estimate-inghat-gtilde-proof}, with probability $1 - \delta / |\subG|$, we have
\begin{align*}
    \abss{1 - \frac{N (\state)}{\occupmsr (\state) \cdot \numaux / 4}} \leq c \sqrt{\frac{\effhorizon}{\occupmsr_{\min} \numaux} \log \big(|\subG| / \delta \big)  }.
\end{align*}
By union bound, we conclude that
\begin{align}
    \vecnorm{\big(\transGhat^{(\ell)} - \transGtilde^{(\ell)} \big) u}{\infty} \leq c \vecnorm{u}{\infty} \sqrt{\frac{\effhorizon}{\occupmsr_{\min} \numaux} \log \big(|\subG| / \delta \big)  }, \label{eq:sup-norm-phat-conc-part-i}
\end{align}
with probability $1 - \delta$.

Now we turn to the error between the pair $\transGtilde u$ and $\transG u$. Note that
\begin{align*}
     \abss{e_\state^\top \big(\transGtilde^{(\ell)} - \transG^{(\ell)} \big) u} = \frac{1}{\occupmsr (\state) \cdot \numaux / 4} \abss{\sum_{i = \numaux / 4 + 1}^{\numaux / 2} W_i - \Exs [W_i] },
\end{align*}
where we define the random variables $W_i$'s as
\begin{align*}
    W_i \mydefn \sum_{t = 0}^{T_i} u (\State_{t + \ell}^{(i)}) \bm{1}_{\State_t^{(i)} = \state, \State_{t + 1}^{(i)}, \State_{t + 2}^{(i)}, \cdots \State_{t + \ell}^{(i)} \in \subG}.
\end{align*}
Note that $W_i$ satisfies the almost-sure bound $|W_i| \leq T_i \vecnorm{u}{\infty}$, and by Assumption~\ref{assume:effective-horizon}, $T_i$ has Orlicz norm bounded with $\vecnorm{T_i}{\psi_1} \leq \effhorizon$. On the other hand, by Cauchy--Schwarz inequality, we have
\begin{align*}
    W_i^2 \leq T_i \cdot \sum_{t = 0}^{T_i}  u^2 (\State_{t + \ell}^{(i)}) \bm{1}_{\State_t^{(i)} = \state, \State_{t + \ell}^{(i)} \in \subG}
\end{align*}
For any $t_0 > 0$, we decompose the second moment as
\begin{align*}
    \Exs [W_i^2] &= \Exs \big[W_i^2 \bm{1}_{T_i \leq t_0} \big] +  \Exs \big[W_i^2 \bm{1}_{T_i > t_0} \big]\\
    &\leq t_0 \sum_{t = 0}^{t_0} \Exs \Big[   u^2 (\State_{t + \ell}^{(i)}) \bm{1}_{\State_t^{(i)} = \state, \State_{t + \ell}^{(i)} \in \subG} \Big] + \vecnorm{u}{\infty}^2 \Exs \big[ T_i^2 \bm{1}_{T_i > t_0} \big]\\
    &= t_0 \sum_{t = 0}^{t_0} \Prob \big( \State_t^{(i)} = \state \big) \cdot \Exs \big[ u^2 (\State_{\ell}) \bm{1}_{\State_\ell \in \subG} \mid \State_0 = \state  \big] + \vecnorm{u}{\infty}^2 \Exs \big[ T_i^2 \bm{1}_{T_i > t_0} \big]\\
    &\leq \vecnorm{u}{\infty}^2 \cdot \Big\{ t_0 \occupmsr (\state) + 2 t_0^2 e^{- t_0 / \effhorizon} \Big\}.
\end{align*}
Choosing $t_0 = 3 \effhorizon \log (\effhorizon / \occupmsr_{\min})$, we conclude that
\begin{align*}
    \Exs [W_i^2] \leq 4 \vecnorm{u}{\infty}^2 \effhorizon \occupmsr (\state) \log (\effhorizon / \occupmsr_{\min}).
\end{align*}

 Consequently, by Bernstein inequality, for sample size satisfying Eq~\eqref{eq:sample-size-condition-fixed-subgraph}, we have
\begin{align*}
    \abss{e_\state^\top \big(\transGtilde^{(\ell)} - \transG^{(\ell)} \big) u} \leq c \vecnorm{u}{\infty} \cdot \Big\{ \sqrt{\frac{\effhorizon \log (\effhorizon / \occupmsr_{\min}) \log (|\subG| / \delta)}{\numaux \occupmsr (\state)}} + \frac{\effhorizon \log (\subG / \delta)}{\numaux \occupmsr (\state)} \Big\},
\end{align*}
with probabiltiy $1 - \delta / |\subG|$. Taking union bound over all the state $\state \in \subG$, under the sample size condition~\eqref{eq:sample-size-condition-fixed-subgraph}, we conclude that
\begin{align}
    \vecnorm{\big(\transGtilde^{(\ell)} - \transG^{(\ell)} \big) u}{\infty} \leq c\vecnorm{u}{\infty} \sqrt{\frac{\effhorizon \log (\effhorizon / \occupmsr_{\min}) \log (|\subG| / \delta)}{\numaux \occupmsr_{\min}}},\label{eq:sup-norm-phat-conc-part-ii}
\end{align}
with probability $1 - \delta$.

Combining Equations~\eqref{eq:sup-norm-phat-conc-part-i} and~\eqref{eq:sup-norm-phat-conc-part-ii} completes the proof of Lemma~\ref{lemma:phat-concentration-sup-norm}.

\subsection{Proof of Lemma~\ref{lemma:sighat-entrywise-concentration}}\label{subsubsec:proof-lemma-sighat-entrywise-concentration}
Define the auxiliary random variables
\begin{align*}
    \varepsilon_i^* (\state) \mydefn \sum_{t = 0}^{T_i}  \bm{1}_{\State_{t}^{(i)} = \state} \Big\{ \Reward_t^{(i)} + \bm{1}_{\State_{t + 1}^{(i)} \in \subG} \Vstar (\State_{t + 1}^{(i)}) + \bm{1}_{\State_{t + 1}^{(i)} \in \subG} \sum_{\ell = t + 1}^{T_i} \Reward_\ell^{(i)} - \Vstar (\state) \Big\}, \quad \mbox{for $\state \in \subG$}.
\end{align*}
We have the approximation error bound
\begin{align*}
   \abss{ \widebar{\varepsilon}_i (\state) - \varepsilon_i^* (\state) } \leq \sum_{t = 0}^{T_i}  \bm{1}_{\State_{t}^{(i)} = \state} \abss{ \bm{1}_{\State_{t + 1}^{(i)} \in \subG} \big( \Vhat_{\numaux / 4} - \Vstar \big) (\State_{t + 1}^{(i)}) -  \big( \Vhat_{\numaux / 4} - \Vstar \big) (\state) } \leq 2 \vecnorm{ \Vhat_{\numaux / 4} - \Vstar_\subG}{\infty} \cdot \sum_{t = 0}^{T_i}  \bm{1}_{\State_{t}^{(i)} = \state}.
\end{align*}
Furthermore, we note that
\begin{align*}
    \widebar{\varepsilon}_i (\state) \leq \big( 2\vecnorm{\Vhat_{\numaux / 4}}{\infty} + T_i + 1 \big) \cdot \sum_{t = 0}^{T_i}  \bm{1}_{\State_{t}^{(i)} = \state},\quad \mbox{and} \quad
      {\varepsilon}^*_i (\state) \leq \big( 2\vecnorm{\VstarG}{\infty} + T_i + 1 \big) \cdot \sum_{t = 0}^{T_i}  \bm{1}_{\State_{t}^{(i)} = \state},
\end{align*}
holding true almost surely for any $\state \in \subG$.

Define the sample covariance matrix
\begin{align*}
    \widetilde{\Sigma}_\subG (\state, \state') \mydefn \frac{4}{\numaux \widehat{\occupmsr} (\state) \widehat{\occupmsr} (\state')} \sum_{i = 3 \numaux / 4 + 1}^{\numaux} \varepsilon^*_i (\state) \varepsilon^*_i (\state'), \quad \mbox{for any }\state, \state' \in \subG.
\end{align*}
We have the approximation error bound
\begin{align*}
    &\abss{\widetilde{\Sigma}_\subG (\state, \state') - \widehat{\Sigma}_\subG (\state, \state') }\\
     &\leq \frac{4}{\numaux \widehat{\occupmsr} (\state) \widehat{\occupmsr} (\state')} \sum_{i = 3 \numaux / 4 + 1}^{\numaux} \abss{\varepsilon^*_i (\state) \cdot \big( \varepsilon^*_i - \widebar{\varepsilon}_i \big) (\state')} + \frac{4}{\numaux \widehat{\occupmsr} (\state) \widehat{\occupmsr} (\state')} \sum_{i = 3 \numaux / 4 + 1}^{\numaux} \abss{\big( \varepsilon^*_i - \widebar{\varepsilon} \big) (\state) \cdot \widebar{\varepsilon}_i (\state')}\\
     &\leq \frac{1}{\widehat{\occupmsr} (\state) \widehat{\occupmsr} (\state')} \cdot 2 \vecnorm{ \Vhat_{\numaux / 4} - \Vstar_\subG}{\infty}  \big( 2\vecnorm{\VstarG}{\infty} + 2 \vecnorm{\Vhat_{\numaux / 4}}{\infty} + T_i + 1 \big) \cdot \frac{4}{\numaux} \sum_{i = 3\numaux / 4 + 1}^{\numaux} \Big\{ \sum_{t = 0}^{T_i}  \bm{1}_{\State_{t}^{(i)} = \state} \Big\} \Big\{ \sum_{t = 0}^{T_i}  \bm{1}_{\State_{t}^{(i)} = \state' } \Big\}
\end{align*}
By Cauchy--Schwarz inequality, we have
\begin{align*}
    &\sum_{i = 3\numaux / 4 + 1}^{\numaux} \Big\{ \sum_{t = 0}^{T_i}  \bm{1}_{\State_{t}^{(i)} = \state} \Big\} \Big\{ \sum_{t = 0}^{T_i}  \bm{1}_{\State_{t}^{(i)} = \state' } \Big\}\\
    &\leq \Big\{\sum_{i = 3\numaux / 4 + 1}^{\numaux}  \big( \sum_{t = 0}^{T_i}  \bm{1}_{\State_{t}^{(i)} = \state} \big)^2 \Big\}^{1/2} \cdot \Big\{\sum_{i = 3\numaux / 4 + 1}^{\numaux}  \big( \sum_{t = 0}^{T_i}  \bm{1}_{\State_{t}^{(i)} = \state'} \big)^2 \Big\}^{1/2}\\
    &\leq T_i \Big\{\sum_{i = 3\numaux / 4 + 1}^{\numaux} \sum_{t = 0}^{T_i}  \bm{1}_{\State_{t}^{(i)} = \state}  \Big\}^{1/2} \cdot \Big\{\sum_{i = 3\numaux / 4 + 1}^{\numaux}  \sum_{t = 0}^{T_i}  \bm{1}_{\State_{t}^{(i)} = \state'} \Big\}^{1/2}\\
    &\leq T_i \frac{\numaux}{4} \sqrt{ \widehat{\occupmsr} (\state) \widehat{\occupmsr} (\state')}.
\end{align*}
Collecting the bounds, we conclude that
\begin{align*}
     \abss{\widetilde{\Sigma}_\subG (\state, \state') - \widehat{\Sigma}_\subG (\state, \state') } \leq  \frac{T_i}{\sqrt{\widehat{\occupmsr} (\state) \widehat{\occupmsr} (\state')} } \cdot \vecnorm{ 2 \Vhat_{\numaux / 4} - \Vstar_\subG}{\infty}  \big( 2\vecnorm{\VstarG}{\infty} + 2 \vecnorm{\Vhat_{\numaux / 4}}{\infty} + T_i + 1 \big)
\end{align*}
In order to bound the $\vecnorm{\cdot}{\infty}$-norm estimationg error, we apply~\Cref{thm:main-root-sa-guarantee} with $\avec = e_{\state}$ for each $\state \in \subG$, and invoking union bound over all the states in the subgraph $\subG$, we have
\begin{align*}
    \vecnorm{\Vhat_{\numaux / 4} - \VstarG}{\infty} \leq c \Big( \frac{\effhorizon^3}{\occupmsr_{\min} \numaux} \log^5 (\numaux / \delta) \Big)^{1/2}, \quad \mbox{with probability $1 - \delta$}.
\end{align*}
Combining with the tail assumption~\ref{assume:effective-horizon} and the condition~\eqref{eq:empirical-norm-domination-in-l2-proof} for the empirical occupancy measure, we conclude that with probability $1 - \delta$,
\begin{align}
   \sup_{\state, \state' \in \subG} \abss{\widetilde{\Sigma}_\subG (\state, \state') - \widehat{\Sigma}_\subG (\state, \state') } \leq \frac{c \effhorizon^2}{\occupmsr_{\min}} \Big( \frac{\effhorizon^3}{\occupmsr_{\min} \numaux} \Big)^{1/2}  \log^{9/2} (\numaux / \delta).\label{eq:sigstar-est-final-part-1}
\end{align}
It remains to study the fluctuations in the sample covariance $\widetilde{\Sigma}_\subG$. We note that
\begin{align*}
    &\Exs \Big[  \big( \varepsilon^*_i (\state) \varepsilon^*_i (\state') \big)^2 \mid \Dset_{[1, \numaux / 4]} \Big]\\
    &\leq  2 \big( 2\vecnorm{\VstarG}{\infty} + 2 \vecnorm{\Vhat_{\numaux / 4}}{\infty}  + 1 \big)^4 \Exs \Big[ \big( \sum_{t = 0}^{T_i}  \bm{1}_{\State_{t}^{(i)} = \state} \big)^2 \big(  \sum_{t = 0}^{T_i}  \bm{1}_{\State_{t}^{(i)} = \state'} \big)^2 \Big] + 2 \Exs \Big[ T_i^4 \big( \sum_{t = 0}^{T_i}  \bm{1}_{\State_{t}^{(i)} = \state} \big)^2 \big(  \sum_{t = 0}^{T_i}  \bm{1}_{\State_{t}^{(i)} = \state'} \big)^2 \Big].
\end{align*}
By Cauchy--Schwarz inequality, we note that
\begin{multline*}
    \Exs \Big[ \big( \sum_{t = 0}^{T_i}  \bm{1}_{\State_{t}^{(i)} = \state} \big)^2 \big(  \sum_{t = 0}^{T_i}  \bm{1}_{\State_{t}^{(i)} = \state'} \big)^2 \Big] \leq \sqrt{\Exs \Big[ \big( \sum_{t = 0}^{T_i}  \bm{1}_{\State_{t}^{(i)} = \state} \big)^4 \Big] \cdot \Exs \Big[ \big( \sum_{t = 0}^{T_i}  \bm{1}_{\State_{t}^{(i)} = \state'} \big)^4 \Big]}\\
    \leq \sqrt{\Exs \Big[ T_i^3 \sum_{t = 0}^{T_i}  \bm{1}_{\State_{t}^{(i)} = \state}  \Big] \cdot \Exs \Big[ T_i^3 \sum_{t = 0}^{T_i}  \bm{1}_{\State_{t}^{(i)} = \state'}  \Big]},
\end{multline*}
and similarly,
\begin{align*}
    \Exs \Big[ T_i^4 \big( \sum_{t = 0}^{T_i}  \bm{1}_{\State_{t}^{(i)} = \state} \big)^2 \big(  \sum_{t = 0}^{T_i}  \bm{1}_{\State_{t}^{(i)} = \state'} \big)^2 \Big]
    \leq \sqrt{\Exs \Big[ T_i^7 \sum_{t = 0}^{T_i}  \bm{1}_{\State_{t}^{(i)} = \state}  \Big]} \cdot  \sqrt{\Exs \Big[ T_i^7 \sum_{t = 0}^{T_i}  \bm{1}_{\State_{t}^{(i)} = \state'}  \Big]}.
\end{align*}
We claim the auxiliary inequality
\begin{align}
    \Exs \Big[ T_i^k \sum_{t = 0}^{T_i}  \bm{1}_{\State_{t}^{(i)} = \state} \Big] \leq 2 \effhorizon^k \occupmsr (\state) \log^k \big( \effhorizon / \occupmsr_{\min} \big). \label{eq:aux-bound-in-emp-cov-proof}
\end{align}
Applying Eq~\eqref{eq:aux-bound-in-emp-cov-proof} to above bounds, we conclude that
\begin{align*}
    \Exs \Big[  \big( \varepsilon^*_i (\state) \varepsilon^*_i (\state') \big)^2 \mid \Dset_{[1, \numaux / 4]} \Big] \leq c \effhorizon^3 \big( \effhorizon^4 + \vecnorm{\Vhat_{\numaux/4}}{\infty}^4 \big) \sqrt{\occupmsr (\state) \occupmsr (\state')}  \log^7 \big( \effhorizon / \occupmsr_{\min} \big).
\end{align*}
 By Bernstein inequality, with probability $1 - \delta$, we have
\begin{multline*}
    \abss{\frac{4}{\numaux} \sum_{i = 3 \numaux / 4 + 1}^{\numaux} \varepsilon^*_i (\state) \varepsilon^*_i (\state') - \Exs \big[ \varepsilon^*_i (\state) \varepsilon^*_i (\state') \big]}\\
     \leq c \big( \effhorizon^2 + \vecnorm{\Vhat_{\numaux/4}}{\infty}^2 \big) \cdot \Big\{ \Big(  \frac{ \effhorizon^3 \sqrt{\occupmsr (\state) \occupmsr (\state')}}{\numaux}  \log^7 \big( \effhorizon / \occupmsr_{\min} \big) \log (1 / \delta) \Big)^{1/2} + \frac{\effhorizon^2}{\numaux} \log^2 (1 / \delta) \Big\}.
\end{multline*}
Given a sample size satisfying Eq~\eqref{eq:sample-size-condition-fixed-subgraph}, by noting that the empirical occupancy measure satisfies the domination relation~\eqref{eq:empirical-norm-domination-in-l2-proof}, we conclude that
\begin{align}
    \abss{\widetilde{\Sigma}_\subG (\state, \state') - \frac{\Exs \big[ \varepsilon^*_i (\state) \varepsilon^*_i (\state') \big]}{\widehat{\occupmsr} (\state) \widehat{\occupmsr} (\state')}} \leq \frac{c}{\occupmsr_{\min}}  \sqrt{ \frac{ \effhorizon^7 \log^7 \big( \effhorizon / \occupmsr_{\min} \big)  }{\numaux \occupmsr_{\min}}  \log (1 / \delta) },\label{eq:sigstar-est-final-part-2}
\end{align}
with probability $1 - \delta$.

Finally, it remains to bound the error incurred by estimating the occupancy measure. With probability $1 - \delta$, we have
\begin{multline}
    \abss{\SigStar_\subG (\state, \state') - \frac{\Exs \big[ \varepsilon^*_i (\state) \varepsilon^*_i (\state') \big]}{\widehat{\occupmsr} (\state) \widehat{\occupmsr} (\state')}} = \abss{\SigStar_\subG (\state, \state')} \cdot \abss{1 - \frac{\occupmsr (\state) \occupmsr (\state')}{\widehat{\occupmsr} (\state) \widehat{\occupmsr} (\state')}} \overset{(i)}{\leq} 2 \abss{\SigStar_\subG (\state, \state')}  \Big\{\abss{1 - \frac{\occupmsr (\state)}{\widehat{\occupmsr} (\state)}} + \abss{1 - \frac{\occupmsr (\state')}{ \widehat{\occupmsr} (\state')}} \Big\}\\
   \overset{(ii)}{\leq} c \abss{\SigStar_\subG (\state, \state')} \sqrt{\frac{\effhorizon \log (|\subG|/ \delta)}{\numaux \occupmsr_{\min}}} \leq c \frac{\effhorizon^3}{\occupmsr_{\min}} \sqrt{\frac{\effhorizon \log (|\subG|/ \delta)}{\numaux \occupmsr_{\min}}},\label{eq:sigstar-est-final-part-3}
\end{multline}
where in step $(i)$, we use the domination relation~\eqref{eq:empirical-norm-domination-in-l2-proof}, and in step $(ii)$, we use the bound~\eqref{eq:small-shift-estimate-inghat-gtilde-proof}.

Combining Equations~\eqref{eq:sigstar-est-final-part-1},~\eqref{eq:sigstar-est-final-part-2}, and~\eqref{eq:sigstar-est-final-part-3}, we complete the proof of Lemma~\ref{lemma:sighat-entrywise-concentration}.

\paragraph{Proof of Eq~\eqref{eq:aux-bound-in-emp-cov-proof}:} for any $t_0 > 0$, we decompose
\begin{align*}
    \Exs \Big[ T_i^k \sum_{t = 0}^{T_i}  \bm{1}_{\State_{t}^{(i)} = \state} \Big] &= \Exs \Big[ \bm{1}_{T_i \leq t_0} \cdot T_i^k \sum_{t = 0}^{T_i}  \bm{1}_{\State_{t}^{(i)} = \state} \Big] + \Exs \Big[ \bm{1}_{T_i > t_0} \cdot T_i^k \sum_{t = 0}^{T_i}  \bm{1}_{\State_{t}^{(i)} = \state} \Big]\\
    &\leq t_0^k \occupmsr (\state) + \Exs \big[ \bm{1}_{T_i > t_0} T_i^{k + 1} \big]\\
    &\leq t_0^k \Big\{ \occupmsr (\state) + \effhorizon e^{- t_0 / \effhorizon} \Big\}.
\end{align*}
Taking $t_0 = \effhorizon \log (\effhorizon / \occupmsr_{\min})$, we complete the proof of this bound.

\section{Proof of~\Cref{prop:lam-lower-bound}}\label{subsubsec:proof-prop-lam-lower-bound}
Denoting $\vartheta = (P, r)$ be the MRP parameters. We consider the set of transition kernels $\transition$ that shares the support as $\transition_0$, and we consider rewards on all states except for $\termState$ (where the reward is known to be zero), so that $\vartheta$ is of dimension $|\supp (\transition_0)| + |\SSpace| - 1$. 

The value function of interest takes the form $\Vstar \mydefn \psi (\vartheta) \mydefn (I - P)^{-1} r$. Let the loss function be $\ell (V_1, V_2) = (V_1 (\tarstt) - V_2 (\tarstt))^2$. According to the local asymptotic minimax theorem~\citep{hajek1972local,lecam1973convergence}, we have
\begin{align}
    \sup_{\Delta > 0} ~\liminf\limits_{\numobs \rightarrow + \infty}~ \sup_{\substack{\transition \in  \neighborhood_{\mathrm{tran}} (\transition_0, \Delta / \sqrt{\numobs}) \\ \reward \in  \neighborhood_{\mathrm{rwd}} (\reward_0, \Delta / \sqrt{\numobs}) } } \numobs \cdot \Exs \Big[ \abss{ \Vhat_\numobs (\tarstt) - \Vstar_{\transition, \reward} (\tarstt) }^2 \Big] \geq \big[ \nabla \psi (\vartheta_0)^\top \cdot J_{\vartheta_0}^{\dagger} \cdot \nabla \psi (\vartheta_0) \big]_{\tarstt, \tarstt},\label{eq:lam-lower-bound-general}
\end{align}
where we define $\vartheta_0 = (\transition_0, \reward_0)$ and $J_\vartheta$ is the Fisher information matrix of each observation with respect to the parameters $\vartheta$.

It suffices to compute the matrix $\nabla \psi (\vartheta_0)^\top \cdot J_{\vartheta_0}^{\dagger} \cdot \nabla \psi (\vartheta_0)$. We do so through an indirect method by comparing the asymptotic distribution of the MLE with a known asymptotic distribution.

Let the reward distribution be $\Reward_t (\State_t) \mid \State_t \sim \mathcal{N} (\reward (\State_t), \sigma_{\reward}^2 (\State_t)) $. Given an observed trajectory $\traj = (X_0, R_0, X_1, R_1, \cdots, X_T = \termState)$, the log-likelihood takes the form
\begin{align*}
    L (\traj) &=  \sum_{t = 0}^{+ \infty} \bm{1}_{t < T} \Big\{ \log \transition (\State_t, \State_{t + 1}) - \frac{(R_t - \reward (\State_t))^2}{2 \sigma_r^2 (\State_t)} - \frac{1}{2} \log \big(2 \pi \sigma_r^2 (\State_t) \big) \Big\}\\
    &=  \sum_{\state \in \SSpace \setminus \{\varnothing\}} \sum_{\state' \in \SSpace}  \log \transition (\state, \state')  \cdot \Big\{ \sum_{t = 0}^{+ \infty} \bm{1}_{\State_t = \state, \State_{t + 1} = \state'} \Big\} - \sum_{\state \in \SSpace \setminus \{\varnothing\}} \sum_{t = 0}^{+ \infty} \bm{1}_{\State_t = \state} \Big\{  \frac{(R_t - \reward (\state))^2}{2 \sigma_r^2 (\state)} +\frac{1}{2} \log \big(2 \pi \sigma_r^2 (\state) \big) \Big\}.
\end{align*}
Given $\mathrm{i.i.d.}$ observations $(\traj_i)_{i = 1}^\numobs$, the joint log-likelihood is given by
\begin{multline*}
    L \big((\traj_i)_{i = 1}^\numobs \big) = \sum_{\state \in \SSpace \setminus \{\varnothing\}} \sum_{\state' \in \SSpace}  \log \transition (\state, \state')  \cdot \sum_{i = 1}^\numobs \Big\{ \sum_{t = 0}^{+ \infty} \bm{1}_{\State_t^{(i)} = \state, \State_{t + 1}^{(i)} = \state'} \Big\} \\
    - \sum_{\state \in \SSpace \setminus \{\varnothing\}} \sum_{i = 1}^\numobs \sum_{t = 0}^{+ \infty}   \bm{1}_{\State_t^{(i)} = \state} \Big\{  \frac{(R_t^{(i)} - \reward (\state))^2}{2 \sigma_r^2 (\state)} +\frac{1}{2} \log \big(2 \pi \sigma_r^2 (\state) \big) \Big\}.
\end{multline*}
The MLE therefore takes the form
\begin{align*}
    \Phat_{\mathrm{MLE}} (\state, \state') &= \Big( \sum_{i = 1}^\numobs \sum_{t = 0}^{+ \infty} \bm{1}_{\State_t^{(i)} = \state} \Big)^{-1} \Big(\sum_{i = 1}^\numobs  \sum_{t = 0}^{+ \infty} \bm{1}_{\State_t^{(i)} = \state, \State_{t + 1}^{(i)} = \state'} \Big),\\
    \rhat_{\mathrm{MLE}} (\state) &= \Big( \sum_{i = 1}^\numobs \sum_{t = 0}^{+ \infty} \bm{1}_{\State_t^{(i)} = \state}  \Big)^{-1} \Big(\sum_{i = 1}^\numobs \sum_{t = 0}^{+ \infty} \bm{1}_{\State_t^{(i)} = \state}  R_t^{(i)} \Big).
\end{align*}

Since the log-likelihood is second-order smooth in a local neighborhood around $(\transition_0, \reward_0)$, and the MLE is an empirical mean estimator, which converges at $1 / \sqrt{\numobs}$ rate. Therefore, by Theorem 5.39 of~\cite{van2000asymptotic}, under the model $(\transition, \reward)$, we have the asymptotic distribution for the MLE
\begin{align*}
    \sqrt{\numobs}  \begin{bmatrix}
        \Phat_{\mathrm{MLE}} - \transition\\
        \rhat_{\mathrm{MLE}} - \reward
    \end{bmatrix} \xrightarrow{d} \mathcal{N} \big( 0, J_\vartheta^{\dagger} \big).
\end{align*}
Since the function $\psi: (\transition, \reward) \rightarrow (I - \transition)^{-1} \reward$ is second-order smooth in a local neighborhood around $(\transition_0, \reward_0)$, by Delta-method, under the model $(\transition, \reward)$, we have
\begin{align*}
    \sqrt{\numobs} \big( \psi \big(\Phat_{\mathrm{MLE}}, \rhat_{\mathrm{MLE}} \big) - \psi (\transition, \reward) \big) \xrightarrow{d} \mathcal{N} \big( 0, \nabla \psi (\transition, \reward)^\top J_{\transition, \reward}^{\dagger} \nabla \psi (\transition, \reward) \big).
\end{align*}
On the other hand, we note that $\Vhattd = \psi \big(\Phat_{\mathrm{MLE}}, \rhat_{\mathrm{MLE}} \big)$, and by \Cref{prop:asymptotic-td}, we have
\begin{align*}
    \sqrt{\numobs} \big( \Vhattd - \Vstar \big) \xrightarrow{d} \mathcal{N} \big(0, (I - \transition)^{-1} \SigStar_{\TD} (I - \transition)^{-\top} \big).
\end{align*}
Substituting back to Eq~\eqref{eq:lam-lower-bound-general} completes the proof of \Cref{prop:lam-lower-bound}.

\end{document}